\def\eqref#1{equation~\ref{#1}}
\def\1{\bm{1}}
\DeclareMathAlphabet{\mathsfit}{\encodingdefault}{\sfdefault}{m}{sl}
\SetMathAlphabet{\mathsfit}{bold}{\encodingdefault}{\sfdefault}{bx}{n}
\newcommand{\E}{\mathbb{E}}
\newtheorem{proposition}{Proposition}
\newtheorem{lemma}{Lemma}
\def\K{\mathcal{K}}
\def\E{\mathbb{E}}
\newcommand{\myeqref}[1]{(\ref{#1})}
\title{Online Detection of LLM-Generated Texts via 
Sequential Hypothesis Testing by Betting}
\author{ {Can Chen} \\
	University of California San Diego\\
	\texttt{cac024@ucsd.edu} \\
	\And
	{Jun-Kun Wang} \\
	University of California San Diego\\
	\texttt{jkw005@ucsd.edu} \\
}
\date{}
\begin{document}
\maketitle
\thispagestyle{fancy}  

\begin{abstract}
Developing algorithms to differentiate between machine-generated texts and human-written texts has garnered substantial attention in recent years. Existing methods in this direction typically concern an offline setting where a dataset containing a mix of real and machine-generated texts is given upfront, and the task is to determine whether each sample in the dataset is from a large language model (LLM) or a human. However, in many practical scenarios, sources such as news websites, social media accounts, and online forums publish content in a streaming fashion. Therefore, in this online scenario, how to quickly and accurately determine whether the source is an LLM with strong statistical guarantees is crucial for these media or platforms to function effectively and prevent the spread of misinformation and other potential misuse of LLMs. To tackle the problem of \emph{online} detection, we develop an algorithm based on the techniques of sequential hypothesis testing by betting that not only builds upon and complements existing offline detection techniques but also enjoys statistical guarantees, which include a controlled false positive rate and the expected time to correctly identify a source as an LLM. Experiments were conducted to demonstrate the effectiveness of our method. 
\end{abstract}

\section{Introduction}

Over the past few years, there has been growing evidence that LLMs can produce content with qualities on par with human-level writing, including 
writing stories ~\citep{yuan2022wordcraft}, producing educational content~\citep{kasneci2023chatgpt}, and summarizing news~\citep{zhang2023benchmarking}.
On the other hand, concerns about potentially harmful misuses have also accumulated in recent years, such as producing fake news~\citep{zellers2019defending}, misinformation~\citep{lin2021truthfulqa, chen2023llm}, plagiarism~\citep{bommasani2021foundation, lee2023language}, malicious product reviews~\citep{adelani2020generating}, and cheating~\citep{stokelwalker2022aibot, susnjak2024chatgpt}. To tackle the relevant issues associated with the rise of LLMs, a burgeoning body of research has been dedicated to distinguishing between human-written and machine-generated texts 
\citep{jawahar2020automatic,lavergne2008detecting,hashimoto2019unifying,gehrmann2019gltr,Mitchell2023,su2023detectllm,Bao2023,solaiman2019release,bakhtin2019real,zellers2019defending,ippolito2019automatic,tian2023gptzero,uchendu2020authorship,fagni2021tweepfake,adelani2020generating,
abdelnabi2021adversarial,zhao2023provable,kirchenbauer2023watermark,christ2024undetectable}.

While these existing methods can efficiently identify a text source in an offline setting where each text is classified independently, they are not specifically designed to handle scenarios where texts arrive sequentially, and the goal is to quickly detect if the source is an LLM with strong statistical guarantees. A common limitation of most detectors is that their \emph{offline} detection process involves computing a score for each sample to determine how likely it is machine-generated and classify a text as a machine-generated one by determining whether the score passes a certain threshold. The threshold is a critical parameter that needs to be tuned using a validation dataset and must be determined before classifying testing data. Moreover, existing \emph{offline} detectors are unable to control the type-I error rate (false positive rate) in an \emph{online} setting—even for state-of-the-art zero-shot detectors such as Fast-DetectGPT~\citep{Bao2023}, Binoculars~\citep{hans2024spotting},  or other efficient models like RoBERTa-Base/Large~\citep{liu2019RoBERTa}, ReMoDetect~\citep{lee2024remodetect}, and Raidar~\citep{mao2024raidar}, which do not require threshold tuning at inference time.

For example, a naive way to adapt an \emph{offline} detector to an \emph{online} scenario is to declare the source as an LLM as soon as any single text is predicted as such. However, this adaptation leads to a false positive rate that converges to $1$ as the number of texts becomes sufficiently large—unless the detector is perfectly ($100\%$) accurate on human-written texts. Formally, if the detector has accuracy $\delta$ on human-written texts, then under the naive reduction approach to online detection, the false positive rate over $T$ rounds will be $1 - \delta^T$, which approaches $1$ as $T$ grows, even when $\delta$ is very close to $1$.

Therefore, these offline detection methods fall short in the \emph{online} scenarios, where one would like to design efficient algorithms with robust performance metrics, such as being a valid level-$\alpha$ test, having asymptotic power $1$, and enjoying a bound on the expected detection time for LLMs, which can be appealing for time-sensitive applications like monitoring online platforms. For example, the American Federal Communications Commission in 2017 decided to repeal net neutrality rules according to the public opinions collected through an online platform \citep{selyukh2017fcc, weiss2019deepfake}. However, it was ultimately discovered that the overwhelming majority of the total $22$ million comments that support rescinding the rules were machine-generated~\citep{Kao2017}. In 2019, \citet{weiss2019deepfake} used GPT-2 to overwhelm a website for collecting public comments on a medical reform waiver within only four days, where machine-generated comments eventually made up $55.3\%$ of all the comments (more precisely, $1,001$ out of $1,810$ comments). As discussed by \citet{frohling2021feature}, a GPT-J model trained on a politics message board was then deployed on the same forum. It generated posts that included objectionable content and accounted for about $10\%$ of all activity during peak times \citep{Kilcher2022}. Furthermore, other online attacks mentioned by \citet{frohling2021feature} may even manipulate public discourse \citep{ferrara2016rise}, flood news with fake content \citep{belz2019fully}, or fraud by impersonating others on the Internet or via e-mail \citep{solaiman2019release}. However, to the best of our knowledge, existing bot detection methods for social media (e.g., \citet{davis2016botornot,varol2017online,pozzana2020measuring,ferrara2023social} and the references therein) might not be directly applicable to the online setting with strong statistical guarantees, and they often require training on extensive labeled datasets beforehand. 
This highlights the urgent need to develop algorithms with strong statistical guarantees that can quickly and effectively identify machine-generated texts in a timely manner, which has been largely overlooked in the literature.

Our goal, therefore, is to tackle the problem of online detection of LLM-generated texts.
More precisely, building upon existing score functions from those ``offline approaches'', we aim to quickly determine whether the source of a sequence of texts observed in a streaming fashion is an LLM or a human, \emph{with} a concrete bound on the required samples to detect LLMs and \emph{without} the need for tuning a threshold of the affinity score
that is typically used to classify data in those offline approaches. Our algorithm leverages the techniques of sequential hypothesis testing \citep{shafer2021testing,ramdas2023game,shekhar2023nonparametric}.
Specifically, we frame the problem of online LLM detection as a sequential hypothesis testing problem, where at each round $t$, a text from an unknown source is observed, and we aim to infer whether it is generated by an LLM. We also assume that a pool of examples of human-written texts is available, and our algorithm can sample a text from this pool of examples at any time $t$. Our method constructs a null hypothesis $H_0$ (to be elaborated soon), for which correctly rejecting the null hypothesis implies that the algorithm correctly identifies the source as an LLM under a mild assumption. 
Furthermore, since it is desirable to quickly identify an LLM when it is present and avoid erroneously declaring the source as an LLM, we also aim to control the type-I error rate (false positive rate) while maximizing the power to reduce type-II error rate (false negative rate), and to establish an upper bound on the expected total number of rounds to declare that the source is an LLM.
We emphasize that our approach is non-parametric, and hence it does not need to assume that the underlying data of human or machine-generated texts follow a certain distribution \citep{balsubramani2015sequential}. It also avoids the need for assuming that the sample size is fixed or to specify it before the testing starts, and hence it is in contrast with some typical hypothesis testing methods that do not enjoy strong statistical guarantees in the \emph{anytime-valid} fashion \citep{garson2012testing, good2013permutation, tartakovsky2014sequential}. 
The way to achieve these is based on recent developments in sequential hypothesis testing via betting \citep{shafer2021testing,shekhar2023nonparametric}. 
The setting of online testing with \emph{anytime-valid} guarantees could be particularly useful when one seeks substantial savings in both data collection and time without compromising the reliability of their statistical testing. These desiderata might be elusive for approaches that based on collecting data in batch and classifying them offline to achieve.

We evaluate the effectiveness of our method through comprehensive experiments. The code and datasets are available via this link: \url{https://github.com/canchen-cc/online-llm-detection}.

\begin{figure*}[t]
\centering
\includegraphics[width=1\textwidth, trim=55 50 55 30, clip]{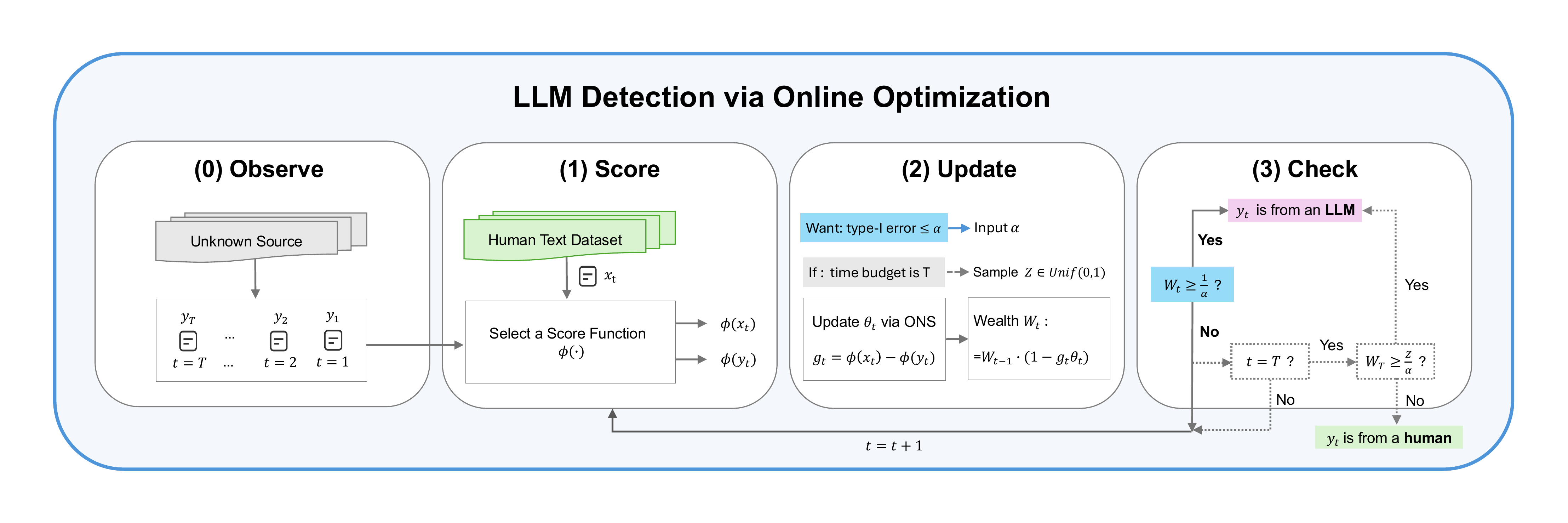}
\caption{Overview of LLM Detection via Online Optimization. (0) (\textbf{Observe}) We sequentially observe text $y_t$ generated by an unknown source starting from time $t=1$ and aim to determine whether these texts are produced by a human or an LLM. The detection process can be divided into three steps. (1) (\textbf{Score})  At each time $t$, text $x_t$ and $y_t$ are evaluated by a selected score function $\phi(\cdot)$, where the sample $x_t$ is drawn from a prepared dataset consisting of human-written text examples.
 (2) (\textbf{Update}) The parameter $\theta_t$ is updated via the Online Newton Step (ONS) to increase the wealth $W_t$ rapidly when $y_t$ is an LLM-generated text. A large value of $W_t$ serves as significant evidence and provides confidence to declare that the unknown source is an LLM.
 (3) (\textbf{Check}) Whether the wealth $W_t\geq 1/\alpha$ is checked. If this event happens, we declare the unknown source of $y_t$ as LLM.
Otherwise, if the time budget $T$ is not yet exhausted or if we have an unlimited time budget, we proceed to $t+1$ and repeat the steps. When $t=T$, if the condition $W_T \geq Z/\alpha$ holds, where $Z$ is drawn from an uniform distribution in $[0,1]$, our algorithm will also declare the source as an LLM.}
\label{fig:flowchart}
\end{figure*}

\section{Preliminaries}

We begin by providing a recap of the background on sequential hypothesis testing.

\textbf{Sequential Hypothesis Testing with Level-$\alpha$ and Asymptotic Power One.}
Let us denote a forward filtration $\mathcal{F} = (\mathcal{F}_t)_{t \geq 0}$, where $\mathcal{F}_t = \sigma(Z_1, \ldots, Z_t)$ represents an increasing sequence that accumulates all the information from the observations $\{Z_i: i \geq 1\}$ up to time point $t$. A process $W := (W_t)_{t \geq 1}$, adapted to $(\mathcal{F}_t)_{t \geq 1}$, is defined as a P-martingale if it satisfies $E_P[W_{t} | \mathcal{F}_{t-1}] = W_{t-1}$ for all $t \geq 1$.
Furthermore, $W$ is a P-supermartingale if 
$E_P[W_{t} | \mathcal{F}_{t-1}] \leq W_{t-1}$ for all $t \geq 1$.
In our algorithm design, we will consider a martingale $W$ and define the event $\{ W_t \geq {1}/{\alpha} \}$ as rejecting the null hypothesis $H_0$, where $\alpha > 0$ is a user-specified ``significance level'' parameter.
We denote the stopping time $\tau := \inf\{t \geq 1 : W_t \geq \tfrac{1}{\alpha} \}$ accordingly.

We further recall that a hypothesis test is a level-$\alpha$ test if
$\sup_{P\in H_0}P\left(\exists t\geq 1, W_t \geq {1}/{\alpha}\right)\leq\alpha$, or alternatively, if $\sup_{P\in H_0}P\left(\tau< \infty\right)\leq\alpha.$
Furthermore, a test has asymptotic power $1-\beta$ if $\sup_{P\in H_1}P\left(\forall t\geq 1, W_t < {1}/{\alpha} \right)\leq\beta$, or if $\sup_{P\in H_1}P\left(\tau=\infty\right)\leq\beta$, where $H_1$ represents the alternative hypothesis. A test with asymptotic power one (i.e., $\beta=0$) means that when the alternative hypothesis $H_1$ is true, the test will eventually rejects the null hypothesis $H_0$. As shown later, our algorithm that will be introduced shortly is a provable sequential hypothesis testing method with level-$\alpha$ and asymptotic power one.

\textbf{Problem Setup.}
We consider a scenario in which, at each round $t$, a text $y_t$ from an unknown source is observed, and additionally, a human-written text $x_t$ can be sampled from a dataset of human-written text samples at our disposal.
The goal is to quickly and correctly determine whether the source that produces the sequence of texts $\{ y_t\}_{t=1}^T$ is an LLM or a human.
We assume that a score function $\phi(\cdot): \text{Text} \rightarrow \mathbb{R}$ is available, which, given a text as input, outputs a score. The score function $\phi(\cdot)$ that we consider in this work are those proposed for detecting LLM-generated texts in offline settings, e.g., 
\citet{Mitchell2023,Bao2023,su2023detectllm,Bao2023,yang2023dnagpt}.
We provide more details on these score functions in the experiments section and in the exposition of the literature in Appendix~\ref{app:related}. 

Following related works on sequential hypothesis testing via online optimization (e.g., \citet{shekhar2023nonparametric,Chugg2023}), we assume that each text $y_t$ is i.i.d.~from a distribution $\rho^y$, and similarly, each human-written text $x_t$ is i.i.d.~from a distribution $\rho^x$. Denote the mean $\mu_x:=\mathbb{E}_{\rho^x}[\phi(x)]$ and $\mu_y:=\mathbb{E}_{\rho^y}[\phi(y)]$ respectively. The task of hypothesis testing that we consider can be formulated as 
\begin{align*}
H_0 \text{ (null hypothesis)}:\quad &\mu_x = \mu_y, \\
\quad  H_1 \text{ (alternative  hypothesis)}:\quad &\mu_x \neq \mu_y. 
\end{align*}
We note that when $H_0$ is true, this is not equivalent to saying that the texts $\{ y_t \}_{t=1}^T$ are human-written, as different distributions can share the same mean. However, under the additional assumption of the existence of a good score function $\phi(\cdot)$ which produces scores for machine-generated texts with a mean $\mu_y$ different from that of human-generated texts $\mu_x$, 
$H_0$ is equivalent to the unknown source being human. 
Therefore, under this additional assumption, when the unknown source $\rho^y$ is an LLM, then rejecting the null hypothesis $H_0$ is equivalent to correctly identifying that the source is indeed an LLM. 
In our experiments, we found that this assumption holds empirically for the score functions we adopt. That is, the empirical mean of $\phi(y_t)$ significantly differs from that of $\phi(x_t)$ when each $y_t$ is generated by an LLM. 
Figure~\ref{fig:flowchart} illustrates the online detection process for LLM-generated texts.

\textbf{Sequential Hypothesis Testing by Betting.}  Consider the scenario that an online learner engages in multiple rounds of a game with an initial wealth $W_0 = 1$. In each round $t$ of the game, the learner plays a point $\theta_t$.
Then, the learner receives a fortune after committing $\theta_t$, which is $-g_t \theta_t  W_{t-1}$, where $W_{t-1}$ is the learner's wealth from the previous round $t-1$, and $g_t$ can be thought of as ``the coin outcome'' at $t$ that the learner is trying to ``bet'' on \citep{orabona2016coin}.
Consequently, the dynamic of the wealth of the learner evolves as: 
\begin{equation} \label{dym}
W_t=W_{t-1}\cdot \left(1- g_t \theta_t \right) = W_0 \cdot \Pi_{s=1}^t \left(1- g_s \theta_s \right).
\end{equation}
To connect the learner's game with sequential hypothesis testing, one of the key techniques that will be used in the algorithm design and analysis is Ville's inequality~\citep{Ville1939}, which states that if $(W_t)_{t\geq 1}$ is a nonnegative supermartingale, then one has $P(\exists t : W_t \geq {1}/{\alpha}) \leq \alpha \mathbb{E}[W_0]$. 
The idea is that if we can guarantee that the learner's wealth $W_t$ remains nonnegative from $W_0=1$, then Ville's inequality can be used to control the type-I error at level $\alpha$ \emph{simultaneously at all time steps} $t$. 
To see this, let $g_t = \phi({x}_t)-\phi({y}_t)$. Then,
when $P\in H_0$ (i.e., the null hypothesis $\mu_x=\mu_y$ holds), the wealth $(W_t)_{t\geq1}$ is a P-supermartingale, because
\begin{align}
   &\mathbb{E}_{P}[W_t|F_{t-1}]=\mathbb{E}_{P}[W_{t-1}(1-\theta_tg_t)|F_{t-1}]\notag\\
   &=\mathbb{E}_{P}[W_{t-1}\cdot (1-\theta_t(\phi({x}_t)-\phi({y}_t)))|F_{t-1}]\notag =W_{t-1}\notag.
\end{align}
Hence, if the learner's wealth $W_t$ can remain nonnegative given the initial wealth $W_0=1$, we can apply Ville's inequality to get a provable level-$\alpha$ test, since $W_t$ is a nonnegative supermartingale in this case. 
Another key technique is \emph{randomized} Ville's inequality \citep{RamdasManole2023}
for a nonnegative supermartingale $(W_t)_{t\geq 1}$,
which states that $P\left(\exists t \leq T : W_t \geq {1}/{\alpha} \text{ or } W_T \geq {Z}/{\alpha}\right) \leq \alpha$, where $T$ is any $\mathcal{F}$-stopping time and $Z$ is randomly drawn from the uniform distribution in $[0,1]$. This inequality becomes particularly handy when there is a time budget $T$ in sequential hypothesis testing
while maintaining a valid level-$\alpha$ test. 

We now switch to discussing the control of the type-II error, which occurs when the wealth $W_t$ is not accumulated enough to reject $H_0$ when $H_1$ is true. 
Therefore, we need a mechanism to enable the online learner in the game quickly increase the wealth under $H_1$. Related works of sequential hypothesis testing by betting \citep{shekhar2023nonparametric,Chugg2023} propose using a no-regret learning algorithm to achieve this. Specifically, 
a no-regret learner aims to obtain a sublinear regret, which is defined as
$\mathrm{Regret}_T(\theta_*):= \sum_{t=1}^T \ell_t(\theta_t) - \sum_{t=1}^T \ell_t(\theta_*)$, where $\theta_*$ is a benchmark. In our case, we will consider the loss function at $t$ to be $\ell_t(\theta):= - \ln (1- g_t \theta)$.
The high-level idea is based on the observation that the first term in the regret definition is the log of the learner's wealth, modulo a minus sign, i.e., $\ln ( W_T) =
\sum_{t=1}^T  \ln (1-g_t \theta_t) = - \sum_{t=1}^T \ell_t(\theta_t)$, while the second term is that of a benchmark. Therefore, if the learner's regret can be upper-bounded, say $C$, then the learner's wealth is lower-bounded as $W_T \geq \left( \Pi_{t=1}^T (1 - g_t \theta_*) \right) \exp(-C)$.  An online learning algorithm with a small regret bound can help increase the wealth quickly under $H_1$. We refer the reader to Appendix~\ref{sec:wealth_bound} for a rigorous argument, where we note that applying a no-regret algorithm to guarantee the learner's wealth is a neat technique that is well-known in online learning, see e.g., Chapter 9 of \citet{orabona2019modern} for more details.  
Following existing works \citep{shekhar2023nonparametric,Chugg2023}, we will adopt Online Newton Steps (ONS) \citep{hazan2007logarithmic} in our algorithm.

\begin{algorithm}[t]
\caption{Online Detection of LLMs via Online Optimization and Betting}
\label{alg:alg2}
\begin{algorithmic}[1] 
\Require a score function $\phi(\cdot): \text{Text} \rightarrow \mathbb{R}$. 
\State \textbf{Init:} $\theta_1\gets 0$, $a_0\gets1$, wealth $W_0 \gets 1$, step size $\gamma$, and significance level parameter $\alpha \in (0, 1)$.
    \State \textit{\footnotesize \# $T$ is the time budget, which can be set to $\infty$ if there is no time constraint}.
\For{$t = 1, 2, \dots, T$}
    \State \parbox[t]{\dimexpr\linewidth-\algorithmicindent}{%
        Observe a text ${y}_t$ from an unknown source and compute $\phi(y_t)$.
    }
    \State \parbox[t]{\dimexpr\linewidth-\algorithmicindent}{%
    Sample ${x}_t$ from a dataset of human-written texts and compute $\phi(x_t)$.}
    \State Set $g_t = \phi({x}_t)-\phi({y}_t)$.
    \State Update wealth $W_t = W_{t-1} \cdot (1-g_t\theta_t)$.
    \If{$W_t \geq 1/\alpha$}
        \State \parbox[t]{\dimexpr\linewidth-\algorithmicindent}{%
        Declare that the source producing the sequence of texts $y_t$ is an LLM.}
    \EndIf
     \State Get a hint $d_{t+1}$ which satisfies $d_{t+1}\geq |g_{t+1}|$.
     \State \parbox[t]{\dimexpr\linewidth-\algorithmicindent}{%
     Specify the decision space $\K_{t+1}$ as
     $\K_{t+1}:= [-\frac{1}{2d_{t+1}}, \frac{1}{2d_{t+1}}]$ to ensure $W_{t+1}\geq0$.}
\State \parbox[t]{\dimexpr\linewidth-\algorithmicindent}{%
\textit{\footnotesize \# Update $\theta_{t+1}\in \mathcal{K}_{t+1}$ via $\mathrm{ONS}$ on the loss function $\ell_t(\theta):=-\ln(1-g_t\theta)$}.}
      \State Compute $z_t = \frac{d \ell_t(\theta_t)}{d\theta}=\frac{g_t}{1-g_t\theta_t}$, $a_t = a_{t-1} + z_t^2$.
      \State Update $\theta_{t+1}=\max\left(\min\left(\theta_t-\frac{1}{\gamma}\frac{z_t}{a_t},\frac{1}{2d_{t+1}}\right),-\frac{1}{2d_{t+1}}\right)$.
\EndFor
\If{the source has not been declared as an LLM}
    \State \parbox[t]{\dimexpr\linewidth-\algorithmicindent}{%
    Sample $Z \sim \text{Unif}(0, 1)$, declare the sequence of texts $y_t$ is from an LLM if $W_T \geq Z/\alpha$.}
\EndIf
\end{algorithmic}
\end{algorithm}

\section{Our Algorithm}

We have covered most of the underlying algorithmic design principles of our online method for detecting LLMs, and we are now ready to introduce our algorithm, which is shown in Algorithm~\ref{alg:alg2}.
Compared to existing works on sequential hypothesis testing via betting (e.g., \citet{shekhar2023nonparametric,Chugg2023}), which assume knowledge of a bound on the magnitude of the ``coin outcome" $g_t$ in the learner's wealth dynamic \myeqref{dym} for all time steps before the testing begins (e.g., assuming for all $t$, 
$\phi(x_t)$ and $\phi(y_t)$ are always in $[0,1]$),
we relax this assumption, which is motivated by the concern that a good estimate of the bound for the coin outcome might not be available before the sequential testing starts. 
Specifically, we consider the scenario where an upper bound on $|g_{t+1}|$ at round $t+1$, which is denoted by $d_{t+1}$, is available before updating $\theta_{t+1}$ at each round $t$. Our algorithm then plays a point in the decision space $\K_{t+1}$ that guarantees the learner's wealth remains a non-negative supermartingale (Step 12 in Algorithm~\ref{alg:alg2}). 
We note that if the bound of the output of the underlying score function $\phi(\cdot)$ is known \textit{a priori}, this scenario holds naturally. Otherwise, we can estimate an upper bound for $|g_t|$ for all $t$ based on the first few time steps and execute the algorithm thereafter.
One approach is to set the estimate as a conservatively large constant, e.g., twice the maximum value observed in the first few time steps. We observe that this estimate works for our algorithm with most of the score functions $\phi(\cdot)$ that we consider in the experiments.  
On the other hand, we note that a tighter bound $d_t$ will lead to a faster time to reject $H_0$ when the unknown source is an LLM, as indicated by the following propositions, where we note that the analysis of the test power requires the assumption that the data are i.i.d.

\begin{proposition}\label{pro:pro2} Algorithm \ref{alg:alg2} is a level-$\alpha$ sequential test with asymptotic power one. Furthermore, if $y_t$ is generated by an LLM, the expected time $\tau$ to declare the unknown source as an LLM is bounded by
\begin{equation*}
    \mathbb{E}[\tau]= \mathcal{O}\left(\frac{d_*^2}{ \Delta^2}\ln\left(\frac{d_*}{\alpha \Delta }\right)
    + \frac{d_*^4}{ \Delta^4}
    \right),
\end{equation*}
where $\Delta := |\mu_x - \mu_y|$, $d_*:=\max\limits_{t\geq 1} d_t$ with $d_t \geq  | g_t|$, and $\gamma = \frac{1}{2} \min \{ \frac{d_t}{G_t}  , 1 \} $ with  $G_t:= \max_{\theta \in \mathcal{K}_{t}} |\nabla \ell_t(\theta)|$ denoting the upper bound of the gradient $\nabla \ell_t(\theta)$.
\end{proposition}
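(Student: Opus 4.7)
My plan is to split Proposition~\ref{pro:pro2} into its two claims---level-$\alpha$ sequential validity with asymptotic power one, and the quantitative bound on the expected detection time---and handle them via Ville's inequality and the exp-concave regret analysis of ONS respectively. First, under $H_0$ I would check that $(W_t)_{t \geq 0}$ is a nonnegative $P$-martingale. Nonnegativity follows from the decision-space constraint in Step~12: since $\theta_t \in [-\tfrac{1}{2 d_t}, \tfrac{1}{2 d_t}]$ and $|g_t| \leq d_t$, we get $|g_t \theta_t| \leq 1/2$, hence $1 - g_t \theta_t \in [\tfrac{1}{2}, \tfrac{3}{2}]$. The martingale property is the calculation already displayed in the preliminaries, using $\mathbb{E}_P[g_t\mid\mathcal{F}_{t-1}] = \mu_x - \mu_y = 0$ and the $\mathcal{F}_{t-1}$-measurability of $\theta_t$. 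Ville's inequality for the stopping rule, and randomized Ville's inequality for the terminal randomized decision at Step~18, immediately yield $\sup_{P\in H_0} P(\tau < \infty) \leq \alpha$.

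For asymptotic power one and the expected stopping time under $H_1$, the strategy is to lower-bound $\ln W_T$ by the difference of a per-round log-wealth term at a fixed comparator and the ONS regret. Writing $\ln W_T = -\sum_{t=1}^T \ell_t(\theta_t)$ and applying the regret inequality gives $\ln W_T \geq \sum_{t=1}^T \ln(1 - g_t \theta^*) - \mathrm{Regret}_T(\theta^*)$. Because $\ell_t$ is exp-concave with parameter $\Omega(1)$ on $\mathcal{K}_t$ and has gradient bound $G_t \leq 2 d_t$, the standard ONS analysis with the prescribed step size $\gamma$ produces $\mathrm{Regret}_T(\theta^*) = \mathcal{O}(d_*^2 \log T)$ against any comparator lying in the common sub-interval $[-\tfrac{1}{2 d_*}, \tfrac{1}{2 d_*}] \subseteq \bigcap_t \mathcal{K}_t$. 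I would then pick $\theta^* = c\,(\mu_y - \mu_x)/d_*^2$ with $c$ chosen so that $\theta^* \mathbb{E}[g_t] < 0$ and $\theta^*$ lies in this interval. A second-order Taylor expansion of $-\ln(1-u)$, combined with $|g_t \theta^*| \leq 1/2$ and the moment bounds $\mathbb{E}[g_t] = \mu_x - \mu_y$ and $\mathbb{E}[g_t^2] \leq d_*^2$, then gives $\mathbb{E}[\ln(1 - g_t \theta^*)] \geq \tilde c \,\Delta^2/d_*^2$ for a universal $\tilde c>0$.

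Combining these pieces with a Hoeffding/Azuma concentration for $\sum_t \bigl(\ln(1 - g_t \theta^*) - \mathbb{E}[\ln(1-g_t\theta^*)]\bigr)$ (whose increments are bounded by $\ln 2$) yields $\ln W_T \geq \tilde c\, T\,\Delta^2/d_*^2 - \mathcal{O}(\sqrt{T \log(1/\delta)}) - \mathcal{O}(d_*^2 \log T)$ with probability at least $1-\delta$. Setting this lower bound equal to $\ln(1/\alpha)$ and solving for $T$ recovers the two dominant scalings $\tfrac{d_*^2}{\Delta^2}\log\!\bigl(\tfrac{d_*}{\alpha\Delta}\bigr)$ and $\tfrac{d_*^4}{\Delta^4}$; asymptotic power one then follows at once from $\ln W_T \to \infty$ almost surely under $H_1$. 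To upgrade the high-probability bound on $\tau$ to the expectation bound, I would use $\mathbb{E}[\tau] = \sum_{T\geq 0} P(\tau > T)$ together with the geometric tails produced by the concentration step. The main obstacle I anticipate is the bookkeeping in the ONS regret analysis under the \emph{time-varying} feasible sets $\mathcal{K}_{t+1}$---textbook ONS bounds assume a fixed convex domain---together with tracking the exp-concavity constant carefully so that the coefficient of $\log T$ is $\mathcal{O}(d_*^2)$ rather than something larger like $\sum_t d_t^2$. Restricting the comparator class to the fixed sub-interval $[-\tfrac{1}{2 d_*}, \tfrac{1}{2 d_*}]$ sidesteps most of the difficulty, but balancing this regret term against the linear-in-$T$ log-wealth growth $\Delta^2/d_*^2$ is precisely what produces the specific form of the $\mathbb{E}[\tau]$ bound in the proposition.
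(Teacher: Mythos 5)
Your proposal follows essentially the same route as the paper's proof: nonnegative supermartingale plus (randomized) Ville's inequality for the level-$\alpha$ claim, and for the power/stopping-time claim the decomposition $\ln W_T = \ln W_T(\theta_*) - \mathrm{Regret}_T(\theta_*)$ with a comparator in $\bigcap_t \mathcal{K}_t$, a second-order expansion giving expected per-round log-wealth $\approx \Delta^2/d_*^2$, Hoeffding concentration, and summation of the resulting geometric tails to bound $\mathbb{E}[\tau]$. The only substantive slip is your regret bound: the ONS guarantee here is $\frac{1}{2\gamma}\ln\left(1+4d_*^2T\right)+\mathcal{O}(1)$ with $1/\gamma$ an absolute constant (since $G_tD_t=2$ for every $t$), i.e.\ $d_*$ enters only inside the logarithm; if one literally used your stated $\mathcal{O}(d_*^2\log T)$, balancing against $T\Delta^2/d_*^2$ would yield $\frac{d_*^4}{\Delta^2}\ln(\cdot)$ rather than the claimed $\frac{d_*^2}{\Delta^2}\ln\left(\frac{d_*}{\alpha\Delta}\right)$ term, so the tighter constant is needed to recover the proposition's rate.
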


\textbf{Remark 1.} 
Under the additional assumption of the existence of a good score function $\phi(\cdot)$ that can generate scores with different means for human-written texts and LLM-generated ones, Proposition~\ref{pro:pro2} implies that when the unknown source is declared by Algorithm \ref{alg:alg2} as an LLM, the probability of this declaration being false will be bounded by $\alpha$. Additionally, if the unknown source is indeed an LLM, then our algorithm can guarantee that it will eventually detect the LLM, since it has asymptotic power one. Moreover, Proposition~\ref{pro:pro2} also provides a non-asymptotic result 
for bounding the expected time to reject the null hypothesis $H_0$, which is also the expected time to declare that the unknown source is an LLM. The bound indicates that 
a larger difference of the means $\Delta$ can lead to a shorter time to reject the null $H_0$.

\textbf{(Composite Hypotheses.)} 
As \citet{Chugg2023}, we also consider the composite hypothesis, which can be formulated as $H_0: | \mu_x - \mu_y | \leq \epsilon $ versus~$H_1: | \mu_x - \mu_y | >\epsilon$ . The hypothesis can be equivalently expressed in terms of two hypotheses, 
\begin{align*}
&H_0^A: \mu_x - \mu_y - \epsilon \leq 0 \text{ vs.~}H_1^A: \mu_x - \mu_y - \epsilon > 0 ,\\
&H_0^B: \mu_y - \mu_x - \epsilon \leq 0 \text{ vs.~}H_1^B: \mu_y - \mu_x - \epsilon > 0.
\end{align*}
Consequently, the dynamic of the wealth evolves as $W_t^A = W_{t-1}^A\cdot \left(1-\theta_t(g_t- \epsilon) \right)$ and $W_t^B = W_{t-1}^B\cdot\left(1-\theta_t(-g_t - \epsilon)\right)$ respectively, where $g_t=\phi({x}_t)-\phi({y}_t)$. We note that both $g_t-\epsilon$ and $ -g_t-\epsilon$ are within the interval $[-d_t - \epsilon, d_t - \epsilon]$. 
The composite hypothesis is motivated by the fact that, in practice, even if both sequences of texts $x_t$ and $y_t$ are human-written, they may have been written by different individuals.
Therefore, it might be more reasonable to allow for a small difference $\epsilon>0$ in their means when defining the null hypothesis $H_0$.

\begin{proposition}\label{pro:pro3}
    Algorithm \ref{alg:alg3} in the appendix is a level-$\alpha$ sequential test with asymptotic power one, where the wealth $W_t^A$ for $H_0^A(H_1^A)$ and $W_t^B$ for $H_0^B(H_1^B)$ are calculated through level-$\alpha/2$ tests. Furthermore, if $y_t$ is generated by an LLM, the expected time $\tau$ to declare the unknown source as an LLM is bounded by
    \begin{equation*}
       \mathbb{E}[\tau] =
 \mathcal{O}
\left( \frac{ (d_*+\epsilon)^2 }{ (\Delta - \epsilon)^2 } \ln \left( \frac{ d_*+\epsilon  }{\alpha (\Delta -\epsilon) }  \right)
+  \frac{ (d_*+\epsilon)^4 }{ (\Delta - \epsilon)^4 } \right),
    \end{equation*}
    where $\Delta := |\mu_x - \mu_y|$, $d_*:=\max\limits_{t\geq 1} d_t$ with $d_t \geq  | g_t|$, $\gamma = \frac{1}{2} \min \{ \frac{2d_t}{G_t}  , 1 \} $ with  $G_t:= \max_{\theta \in \mathcal{K}_{t}} |\nabla \ell_t(\theta)|$ denoting the upper bound of the gradient $\nabla \ell_t(\theta)$, and $\epsilon$ is a nonnegative constant satisfying $\epsilon \leq d_t$ for all $t$.
\end{proposition}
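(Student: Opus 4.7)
The plan is to prove Proposition 2 by reducing the composite two-sided null $H_0:|\mu_x-\mu_y|\le\epsilon$ to the two one-sided sub-nulls $H_0^A$ and $H_0^B$ as displayed in the excerpt, running a betting sequential test in the style of Algorithm 1 at level $\alpha/2$ on each, and combining via a union bound to obtain a composite level-$\alpha$ test. Concretely, the natural composite variant (Algorithm 3) maintains two wealth processes $W_t^A = W_{t-1}^A(1-\theta_t^A(g_t-\epsilon))$ and $W_t^B = W_{t-1}^B(1-\theta_t^B(-g_t-\epsilon))$, where the $\theta_t^{A/B}$ are produced by ONS on the corresponding logarithmic losses, with decision intervals restricted so that $|\theta_t^{A/B}|\le 1/(2(d_t+\epsilon))$ (since both shifted coins $\pm g_t-\epsilon$ have magnitude at most $d_t+\epsilon$), keeping both wealth processes nonnegative.

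For level-$\alpha$ validity, I would verify that each of $W^A, W^B$ is a nonnegative supermartingale under its sub-null: under $H_0^A$, $\mathbb{E}[g_t-\epsilon\mid\mathcal{F}_{t-1}]=\mu_x-\mu_y-\epsilon\le 0$, and combined with the appropriate sign of $\theta_t^A$ this yields $\mathbb{E}[W_t^A\mid\mathcal{F}_{t-1}]\le W_{t-1}^A$, and symmetrically for $B$. Ville's inequality then gives $P_{H_0^A}(\exists t:W_t^A\ge 2/\alpha)\le\alpha/2$ (and likewise for $W^B$), so a union bound yields $P_{H_0}(\exists t:\max(W_t^A,W_t^B)\ge 2/\alpha)\le\alpha$. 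Asymptotic power one follows because under $H_1$ one has $\Delta>\epsilon$, which makes exactly one of $\mathbb{E}[\pm g_t-\epsilon]=\pm(\mu_x-\mu_y)-\epsilon$ equal to $\Delta-\epsilon>0$; the corresponding wealth then tends almost surely to infinity by the same no-regret argument used for Proposition 1.

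For the expected-time bound, I would replay the proof of Proposition 1 on whichever sub-test has strictly positive drift $\Delta-\epsilon$, with two substitutions carried through every intermediate estimate: $d_t\mapsto d_t+\epsilon$ (new coin bound) and $\Delta\mapsto\Delta-\epsilon$ (new signal). Starting from the ONS regret bound on $\ell_t(\theta)=-\ln(1-\theta(g_t-\epsilon))$ over a decision set of diameter $1/(d_t+\epsilon)$, and using $\ln W_t=-\sum_s\ell_s(\theta_s)$, I obtain
\begin{equation*}
\ln W_t \;\ge\; \sum_{s=1}^t \ln\bigl(1-\theta_*(g_s-\epsilon)\bigr) - \mathrm{Regret}_t(\theta_*)
\end{equation*}
for an oracle $\theta_*\asymp(\Delta-\epsilon)/(d_*+\epsilon)^2$. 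A concentration argument on the summand then gives $\sum_s\ln(1-\theta_*(g_s-\epsilon))\gtrsim t(\Delta-\epsilon)^2/(d_*+\epsilon)^2$, and setting $\ln W_\tau\ge\ln(2/\alpha)$ yields the claimed rate with leading term $(d_*+\epsilon)^2/(\Delta-\epsilon)^2\cdot\ln((d_*+\epsilon)/(\alpha(\Delta-\epsilon)))$ plus a quartic correction $(d_*+\epsilon)^4/(\Delta-\epsilon)^4$ absorbing the lower-order ONS regret terms.

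The main obstacle is making sure the $\epsilon$-shift propagates consistently through every constant in the Proposition 1 proof: the diameter $1/(d_t+\epsilon)$ of $\mathcal{K}_{t+1}$, the gradient bound $G_t = \max_{\theta\in\mathcal{K}_t}|(g_t-\epsilon)/(1-\theta(g_t-\epsilon))|$, the step-size $\gamma=\tfrac{1}{2}\min\{2d_t/G_t,1\}$ (the factor $2d_t$ rather than $d_t$ reflects the effectively doubled diameter in the composite setup), and the concentration constant controlling the log-wealth sum. Once these constants are tracked, the bound follows from a rote replay of the Proposition 1 argument applied to the sub-test with positive drift, and the factor-of-two loss from the union bound is absorbed into the $\ln(1/\alpha)$ term.
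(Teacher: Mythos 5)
Your proposal follows essentially the same route as the paper's proof: the same decomposition into the one-sided sub-nulls $H_0^A$ and $H_0^B$, two nonnegative supermartingale wealth processes combined via a union bound at level $\alpha/2$ each with threshold $2/\alpha$ (the paper makes your ``appropriate sign'' explicit by restricting the decision space to the nonpositive interval $[-\tfrac{1}{2d_{t+1}},0]$, which is what guarantees the supermartingale property under the one-sided composite nulls), and a replay of the Proposition~1 analysis with the substitutions $d_*\mapsto d_*+\epsilon$ and $\Delta\mapsto\Delta-\epsilon$. There are no substantive differences in approach.
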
 

\textbf{Remark 2.} 
We note that when the alternative hypothesis $H_1$ is true, we have $\Delta > \epsilon$. Proposition \ref{pro:pro3} indicates that even if there is a difference $\epsilon$ in mean scores between texts written by different humans, the probability that the source is incorrectly declared by Algorithm \ref{alg:alg3} as an LLM can be controlled below $\alpha$.
Furthermore,
the bound on $\E[\tau]$ implies that smaller $\epsilon$ and larger $\Delta$ will result in a shorter time to reject $H_0$.

\section{Experiments} \label{sec:exp}

\textbf{Score Functions.} 
We use $10$ score functions in total from the related works for the experiments. As mentioned earlier, a score function takes a text as an input and 
outputs a score. For example, one of the configurations of our algorithm that we try uses a score function called Likelihood, which is based on the average of the logarithmic probabilities of each token conditioned on its preceding tokens~\citep{solaiman2019release, hashimoto2019unifying}. More precisely, for a text $x$ which consists of $n$ tokens, this score function can be formulated as $\phi(x)=\frac{1}{n}\sum_{j=1}^n \log p_{\theta}(x_j | x_{1:j-1})$, where $x_j$ denotes the $j$-th token of the text $x$, $x_{1:j-1}$ means the first $j-1$ tokens, and $p_{\theta}$ represents the probability computed by a language model used for scoring. The score functions that we considered in the experiments are: 1. DetectGPT: perturbation discrepancy~\citep{Mitchell2023}. 2. Fast-DetectGPT: conditional probability curvature~\citep{Bao2023}. 3.LRR: likelihood log-rank ratio~\citep{su2023detectllm}. 4. NPR: normalized perturbed log-rank~\citep{su2023detectllm}. 5. DNA-GPT: WScore~\citep{yang2023dnagpt}. 6. Likelihood: mean log probabilities \citep{solaiman2019release, hashimoto2019unifying, gehrmann2019gltr}. 7. LogRank: averaged log-rank in descending order by probabilities~\citep{gehrmann2019gltr}. 8. Entropy: mean token entropy of the predictive distribution~\citep{gehrmann2019gltr, solaiman2019release,ippolito2019automatic}. 9. RoBERTa-base: a pre-trained classifier~\citep{liu2019RoBERTa}. 10. RoBERTa-large: a larger pre-trained classifier with more layers and parameters~\citep{liu2019RoBERTa}. 
The first eight score functions calculate scores based on certain statistical properties of texts, with each text's score computed via a language model. The last two score functions compute scores by using some pre-trained classifiers. For the reader's convenience, more details about the implementation of the score functions $\phi(\cdot)$ are provided in Appendix~\ref{sec:score_functions}.

\textbf{LLMs and Datasets.} Our experiments focus on the black-box setting~\citep{Bao2023}, which means that if $x$ is generated by a model $q_s$, i.e., $x\sim q_s$, a different model $p_\theta$ will then be used to evaluate the metrics such as the log-probability $\log p_\theta(x)$ when calculating $\phi(x)$. The models $q_s$ and $p_\theta$ are respectively called the ``source model" and ``scoring model" for clarity. 
The black-box setting is a relevant scenario in practice because 
the source model used for generating the texts to be inferred is likely unknown in practice, which makes it elusive to use the same model to compute the scores.
We construct a dataset that contains some real news and fake ones generated by LLMs for 2024 Olympics. 
Specifically, we collect 500 news about Paris 2024 Olympic Games  from its official website~\citep{Olympics2024} and then use three source models, Gemini-1.5-Flash, Gemini-1.5-Pro~\citep{GoogleCloud2024VertexAI}, and PaLM 2~\citep{GoogleCloud2024VertexAITextModel, chowdhery2023palm} to generate an equal number of fake news based on the first 30 tokens of each real one respectively. Two scoring models for computing the text scores $\phi(\cdot)$ are considered, which are GPT-Neo-2.7B (Neo-2.7)~\citep{Black2021GPTNeoLS} and Gemma-2B~\citep{GoogleGemma2B2024}. The perturbation model that is required for the score function DetectGPT and NPR is T5-3B~\citep{raffel2020exploring}. For Fast-DetectGPT, the sampling model is GPT-J-6B~\citep{wang2021gpt} when scored with Neo-2.7, and Gemma-2B when the scoring model is Gemma-2B. 
We sample human-written text $x_t$ from a pool of 
500 news articles from the XSum dataset~\citep{Narayan2018}.
We emphasize that we also consider existing datasets from \citet{Bao2023} for the experiments. Details can be found in Appendix \ref{sec:app_experiment_bao}, which also demonstrate the effectiveness of our method.

\textbf{Baselines.} 
Our method are compared with two baselines, which adapt the fixed-time permutation test
~\citep{good2013permutation} 
to the scenario of the sequential hypothesis testing.
Specifically, the first baseline conducts a permutation test after collecting every $k$ samples.
If the result does not reject $H_0$, then it will wait and collect another $k$ samples to conduct another permutation test on this new batch of $k$ samples. This process is repeated until $H_0$ is rejected or the time $t$ runs out (i.e., when $t=T$). 
The significance level parameter of the permutation test is set to be the same constant $\alpha$ for each batch, which does not maintain a valid level-$\alpha$ test overall. 
The second baseline is similar to the first one except that the significance level parameter for the $i$-th batch is set to be $\alpha/2^i$, with $i$ starting from $1$, which aims to ensure that the cumulative type-I error is bounded by $\alpha$ via the union bound. The detailed process of the baselines is described in Appendix \ref{sec:app_experiment_olympic}.

\textbf{Parameters of Our Algorithm.} 
All the experiments in the following consider the setting of the composite hypothesis.
For the step size $\gamma$, we simply follow the related works \citep{Cutkosky2018,Chugg2023,shekhar2023nonparametric} and let $1/\gamma=2/(2-\ln 3)$. 
We consider two scenarios of sequential hypothesis testing in the experiments. The first scenario (oracle) assumes that one has prior knowledge of $d_t$ (or $d_*$) and $\epsilon$, and the performance of our algorithm in this case could be considered as an ideal outcome that it can achieve. 
For simulating this ideal scenario in the experiments, we let $\epsilon$ be the absolute difference between the mean scores of XSum texts and 2024 Olympic news,
which are datasets of human-written texts.  
The second scenario considers that we do not have such knowledge a priori, and hence we have to estimate $d_t$ (or $d_*$) and specify the value of $\epsilon$ using the samples collected in the first few times steps, and then the hypothesis testing is started thereafter.
In our experiments, we use the first $10$ samples from each sequence of $x_t$ and $y_t$ and set $d_t$ to be a constant, which is twice the value of $\max_{s \leq 10} \lvert \phi(x_s) - \phi(y_s) \rvert$. For estimating $\epsilon$, we obtain scores for $20$ texts sampled from the XSum dataset and randomly divide them into two groups, and set $\epsilon$ to twice the average absolute difference between the empirical means of these two groups across $1000$ random shuffles.

\textbf{Experimental Results.} The experiments evaluate the performance of our method and baselines under both $H_0$ and $H_1$. As there is inherent randomness from the observed samples of the texts in the online setting, we repeat $1000$ runs and report the average results over these $1000$ runs. 
Specifically, we report the false positive rate (FPR) under $H_0$, which is the number of times the source of $y_t$ is incorrectly declared as an LLM when it is actually human, divided by the total of $1000$ runs. We also report the average time to reject the null under $H_1$ (denoted as Rejection Time $\tau$), which is the time our algorithm takes to reject $H_0$ and correctly identify the source when $y_t$ is indeed generated by an LLM. More precisely, the rejection time $\tau$ is the average time at which either $W_t^A$ or $W_t^B$ exceeds $2/\alpha$ before $T$; otherwise, $\tau$ is set to $T=500$, regardless of whether it rejects $H_0$ at $T$, since the the time budget runs out.
The parameter value $d_t $ in Scenario 1 (oracle) is shown in Table \ref{tab:case1_dt_olympic}, and the value for $\epsilon$ can be found in Table \ref{tab:case1_delta_olympic} in the appendix. For the estimated $\epsilon$ and $d_t$ of each sequential testing in Scenario 2, they are displayed in Table \ref{tab:case2_epsilon_olympic} and Table \ref{tab:case2_dt_olympic} respectively in the appendix. 
Our method and the baselines require specifying the significance level parameter $\alpha$.
In our experiments, we try $20$ evenly spaced values of the significance level parameter $\alpha$ that ranges from $0.005$ to $0.1$ and report the performance of each one. 

\begin{figure}[t!]
    \centering
    \begin{subfigure}{0.49\textwidth}
        \includegraphics[width=\linewidth]{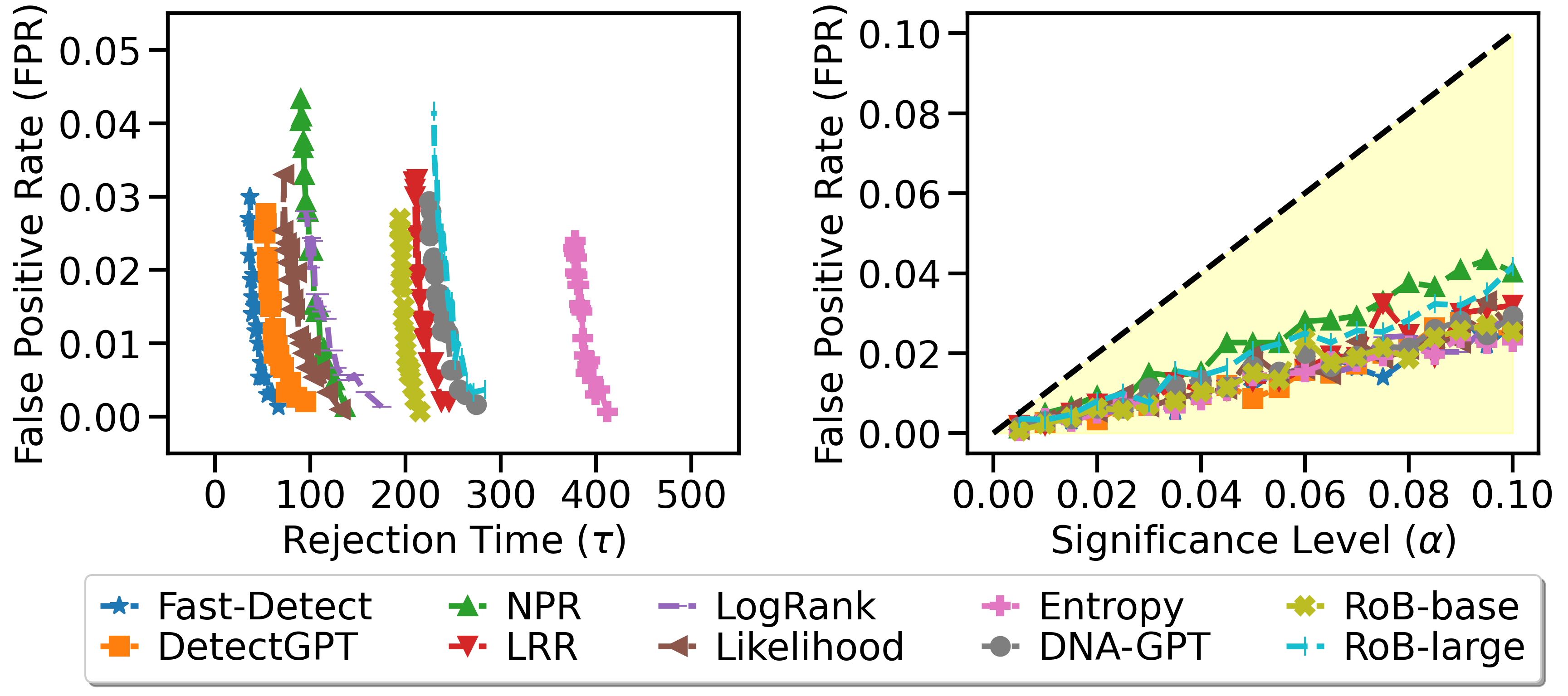}
        \caption{Averaged results with text $x_t$ sampled from XSum and $y_t$ from 2024 Olympic news or machine-generated news, across three source models. The scoring model is Neo-2.7.}
        \label{fig:case1_neo2.7.avg}
    \end{subfigure}\hfill
    \begin{subfigure}{0.49\textwidth}
        \includegraphics[width=\linewidth]{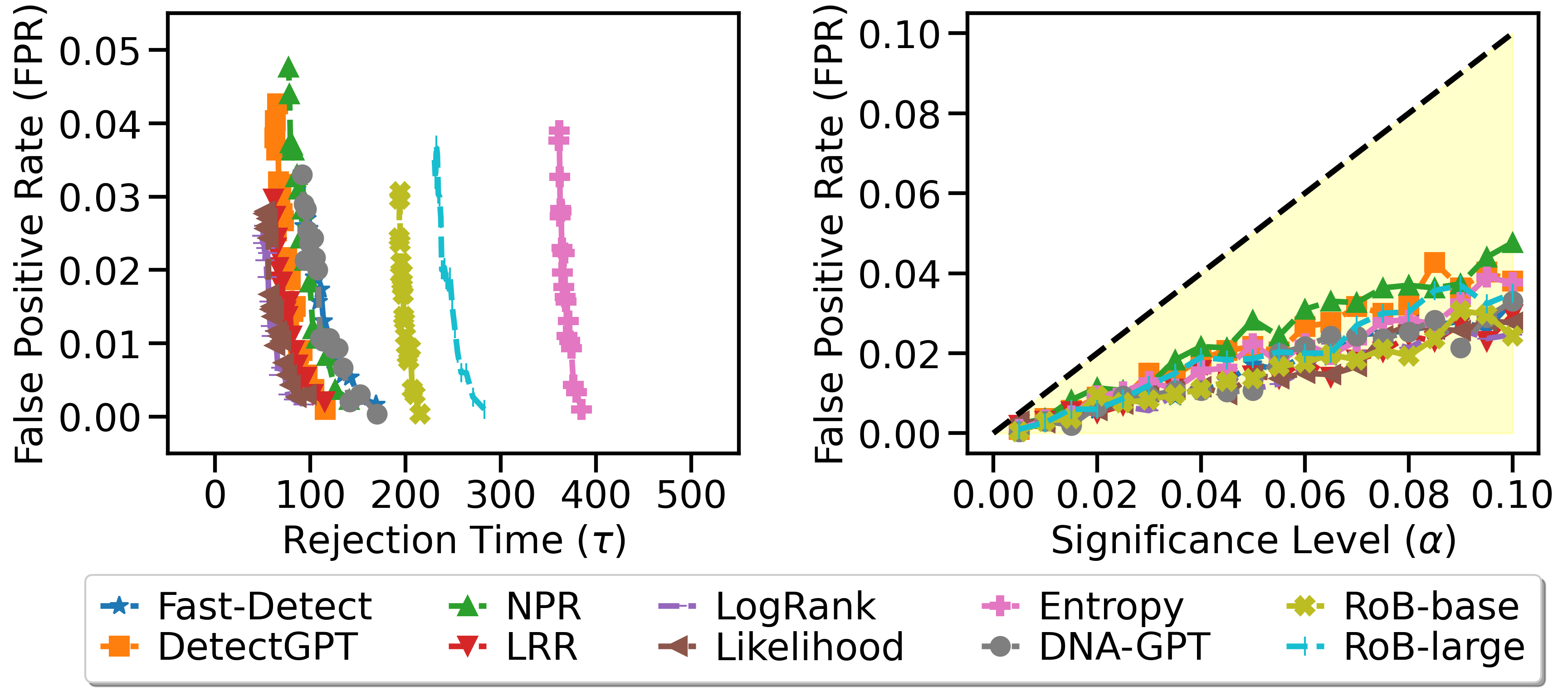}
        \caption{Averaged results with text $x_t$ sampled from XSum and $y_t$ from 2024 Olympic news or machine-generated news, across three source models. The scoring model is Gemma-2B.}
        \label{fig:case1_gemma.avg}
    \end{subfigure}
    \caption{Averaged results of Scenario 1 (oracle), which shows the average of the rejection time under $H_1$ (i.e., the average time to detect LLMs) and the false positive rate under $H_0$ for $10$ different score functions and $20$ different values of the significance level parameter $\alpha$. Here, three source models (Gemini-1.5-Flash, Gemini-1.5-Pro and PaLM 2) are used to generate an equal number of the machine-generated texts, and two different scoring models (Neo-2.7 and Gemma-2B) are used for computing the function value of the score functions. The left subfigure in each panel (a) and (b) shows the average time to correctly declare an LLM versus the average false positive rates over $1000$ runs for each $\alpha$. Thus, plots closer to the bottom-left corner are better, as they indicate correct detection of an LLM with shorter rejection time and a lower FPR. In the right subfigure of each panel, the black dashed line along with the shaded area illustrates the desired FPRs. Our algorithm under various configurations consistently has an FPR smaller than the value of the significance level parameter $\alpha$.}
    \label{fig:case1_avg_olympic}
\end{figure}

Figure~\ref{fig:case1_avg_olympic} shows the performance of our algorithm with different  score functions under Scenario 1 (oracle). Our algorithm consistently controls FPRs below the significance levels $\alpha$ and correctly declare the unknown source as an LLM before $T=500$ for all score functions. This includes using the Neo-2.7 or Gemma-2B scoring models to implement eight of these score functions that require a language model.
On the plots, each marker represents the average results over $1000$ runs of our algorithm with a specific score function $\phi(\cdot)$ under different values of the parameter $\alpha$. The subfigures on the left in Figure~\ref{fig:case1_neo2.7.avg} and ~\ref{fig:case1_gemma.avg} show False Positive Rate (under $H_0$) versus Rejection Time (under $H_1$); therefore, a curve that is closer to the bottom-left corner is more preferred. From the plot, we can see that the configurations of our algorithm with the score function being Fast-DetectGPT, DetectGPT, or Likelihood have the most competitive performance. When the unknown source is an LLM, they can detect it at time around $t= 100$ on average, and the observation is consistent under different language models used for the scoring. The subfigures on the right in Figure~\ref{fig:case1_avg_olympic} show that the FPR is consistently bounded by the chosen value of the significance level parameter $\alpha$.

\begin{figure}[t!]
    \centering
    \begin{subfigure}{0.49\textwidth}
        \includegraphics[width=\linewidth]{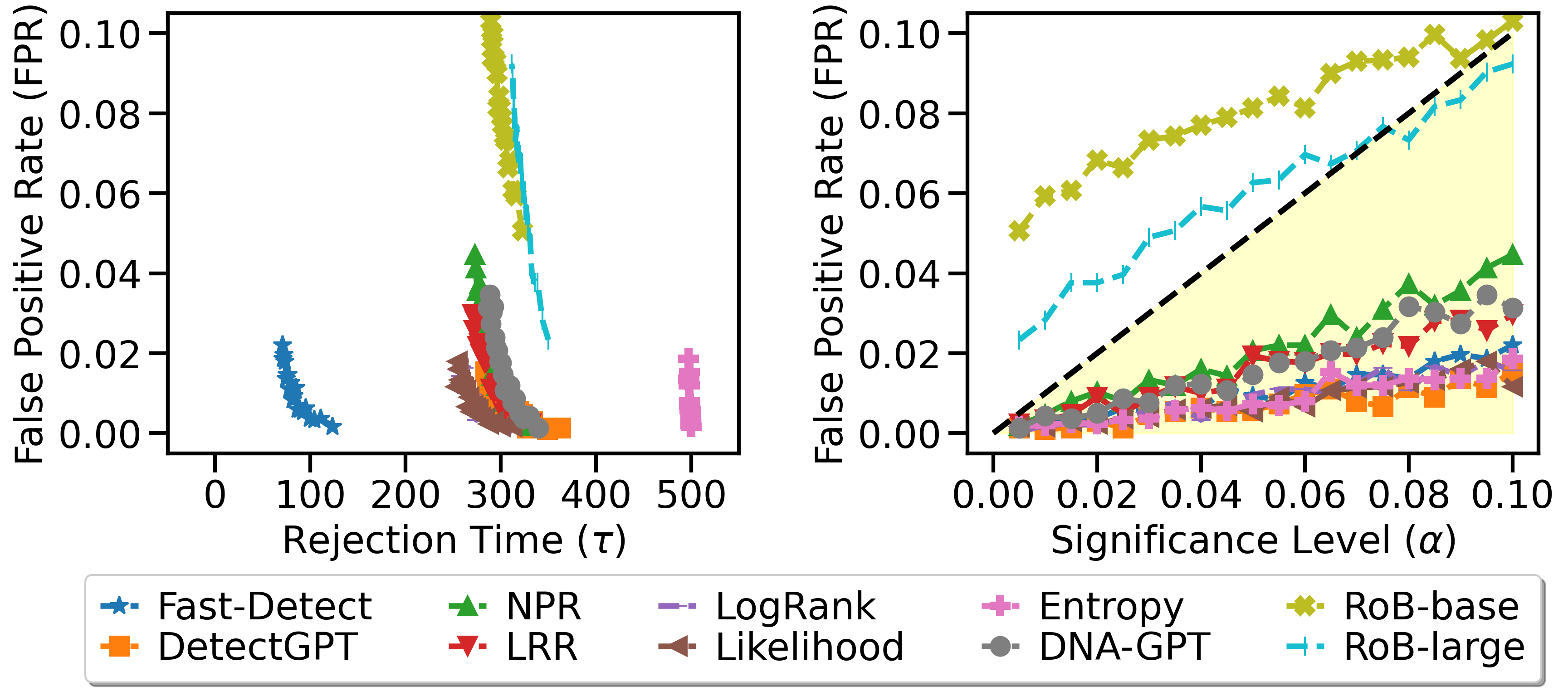}
        \caption{Averaged results with text $x_t$ sampled from XSum and $y_t$ from 2024 Olympic news or machine-generated news, across three source models. The scoring model is Neo-2.7.}
        \label{fig:case2_neo2.7.avg}
    \end{subfigure}\hfill
    \begin{subfigure}{0.49\textwidth}
        \includegraphics[width=\linewidth]{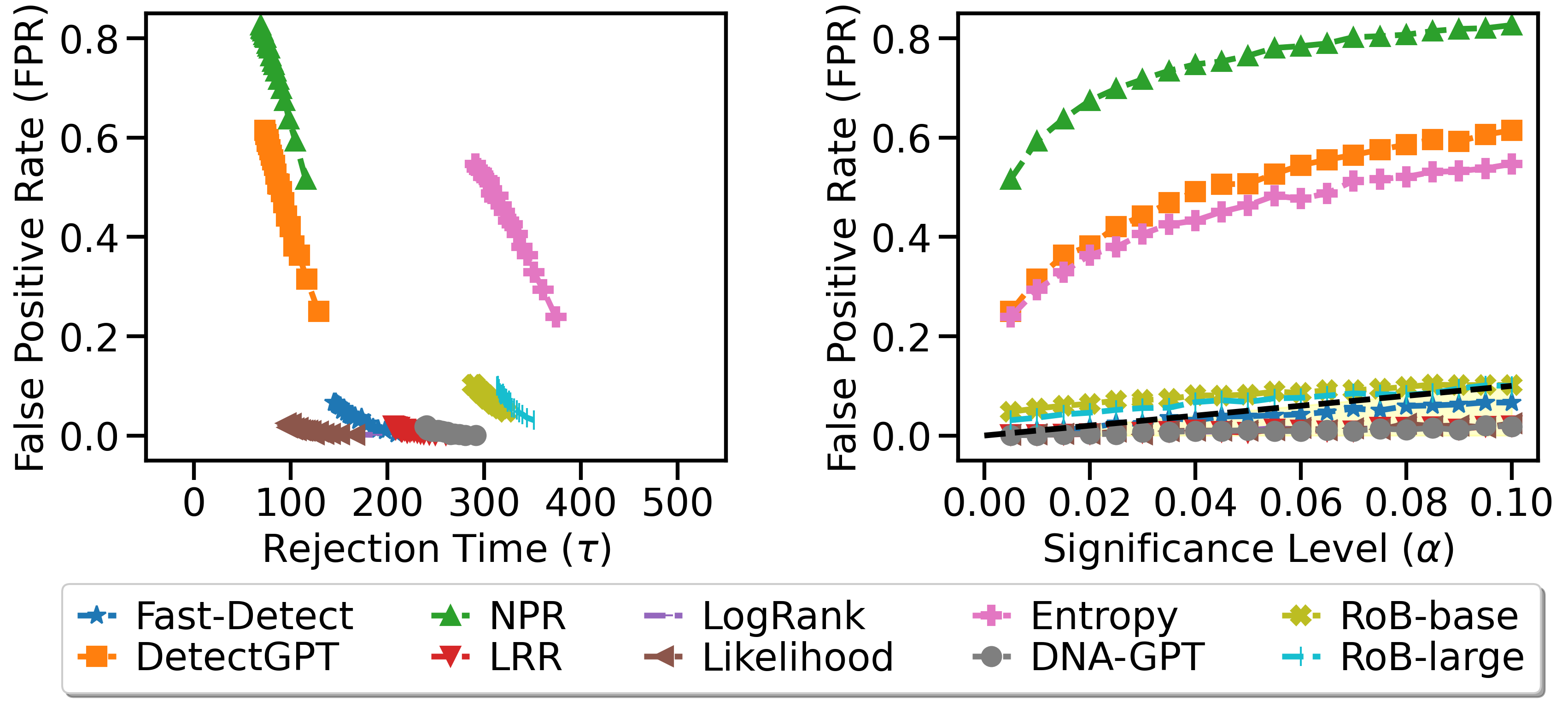}
        \caption{Averaged results with text $x_t$ sampled from XSum and $y_t$ from 2024 Olympic news or machine-generated news, across three source models. The scoring model is Gemma-2B.}
        \label{fig:case2_gemma.avg}
    \end{subfigure}
    \caption{
Averaged results of Scenario 2, where our algorithm has to use the first few samples to 
specify $d_t$ and $\epsilon$ before starting the algorithm.
The plots are about the average of the rejection time under $H_1$ (i.e., the average time to detect LLMs) and the false positive rate under $H_0$ for $10$ different score functions and $20$ different values of the significance level parameter $\alpha$ when using two different scoring models, Neo-2.7 (a) and Gemma-2B (b). 
 }
    \label{fig:case2_avg_olympicQ}
\end{figure}

Figure~\ref{fig:case2_avg_olympicQ} shows the empirical results of our algorithm under Scenario 2, where it has to use the first few samples to 
specify $d_t$ and $\epsilon$ before starting the algorithm.
Under this scenario, our algorithm equipped with most of the score functions still performs effectively. We observe that our algorithm with 1) Fast-DetectGPT as the score function $\phi(\cdot)$ and Neo-2.7 as the language model for computing the score, and with 2) Likelihood as the score function $\phi(\cdot)$ and Gemma-2B for computing the value of $\phi(\cdot)$ have the best performance under this scenario.  
Compared to the first case where the oracle of $d_t$ and $\epsilon$ is available and exploited, these two configurations only result in a slight degradation of the performance under Scenario 2, and we note that our algorithm can only start updating after the first $10$ time steps under this scenario. We observe that the bound of $d_t$ that we estimated using the samples collected from first $10$ time steps is significantly larger than the tightest bound of $d_t$ in most of the runs where we refer the reader to Table \ref{tab:case1_dt_olympic}, \ref{tab:case2_dt_olympic} in the appendix for details, which explains why most of the configurations under Scenario 2 need a longer time to detect LLMs, as predicted by our propositions. 
We also observe that the configurations with two supervised classifiers (RoBERTa-based and RoBERTa-large) and the combinations of a couple of score functions and the scoring model Gemma-2B do not strictly control FPRs across all significance levels. This is because the estimated $d_t$ for these score functions is not large enough to ensure that the wealth $W_t$ remains nonnegative at all time points $t$. That is, we observed $2d_t< \lvert\phi(x_t)-\phi(y_t)\rvert +\epsilon$ for some $t$ in the experiments, and hence the wealth $W_t$ is no longer a non-negative supermartingale, which prevents the application of Ville's inequality to guarantee a level-$\alpha$ test. Nevertheless, our algorithm, when using most score functions that utilize the scoring model Neo-2.7, can still effectively control type-I error and detect LLMs around $t=300$. 

In Appendix \ref{sec:app_experiment_bao}, we provide more experimental results, including those 
using existing datasets from \citet{Bao2023} for simulating the sequential testing,
where our algorithm on these datasets also performs effectively.
Moreover, we found that the rejection time is influenced by the relative magnitude of $\Delta-\epsilon$ and $d_t-\epsilon$, as predicted by our propositions, and the details are provided in Appendix \ref{sec:app_experiment_olympic}. From the experimental results, when the knowledge of $d_t$ and $\epsilon$ is not available beforehand, as long as the estimated $d_t$ and $\epsilon$ guarantee a nonnegative supermartingale, and the estimated $\epsilon$ lies between the actual mean score difference of human-written texts and that of human-written vs LLM-generated texts, our algorithm can maintain a sequential valid level-$\alpha$ test and efficiently detect LLMs.

\textbf{Comparisons with Baselines.} In this part, we use the score function of Fast-DetectGPT and scoring model Neo-2.7 to get text scores, and then compare the performance of our method with two baselines that adapt the fixed-time permutation test to the sequential hypothesis setting.
Batch sizes $k\in\{25,50,100,250,500\}$ are considered for the baselines. We set the estimated $\epsilon$ and $d_t$ values the same as in Scenario 2. The baselines are also implemented in a manner to conduct the composite hypothesis test.  

\begin{figure}[htbp]
    \centering
    \begin{subfigure}{0.49\textwidth}
        \includegraphics[width=\linewidth]{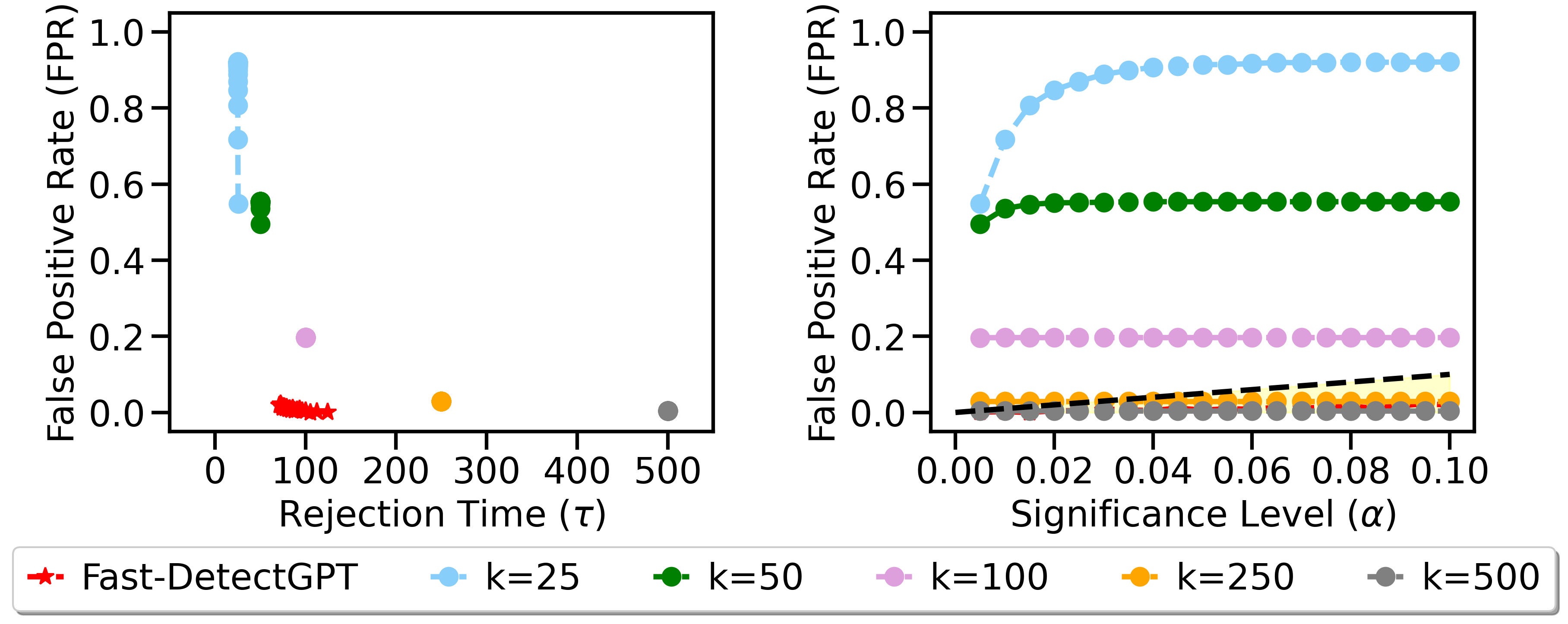}
        \caption{Comparison between our method and the baseline that sets the value of the significance level parameter to be the same constant $\alpha$ for every batch.}
        \label{fig:baseline_neo2.7.perm.avg}
    \end{subfigure}\hfill
    \begin{subfigure}{0.49\textwidth}
        \includegraphics[width=\linewidth]{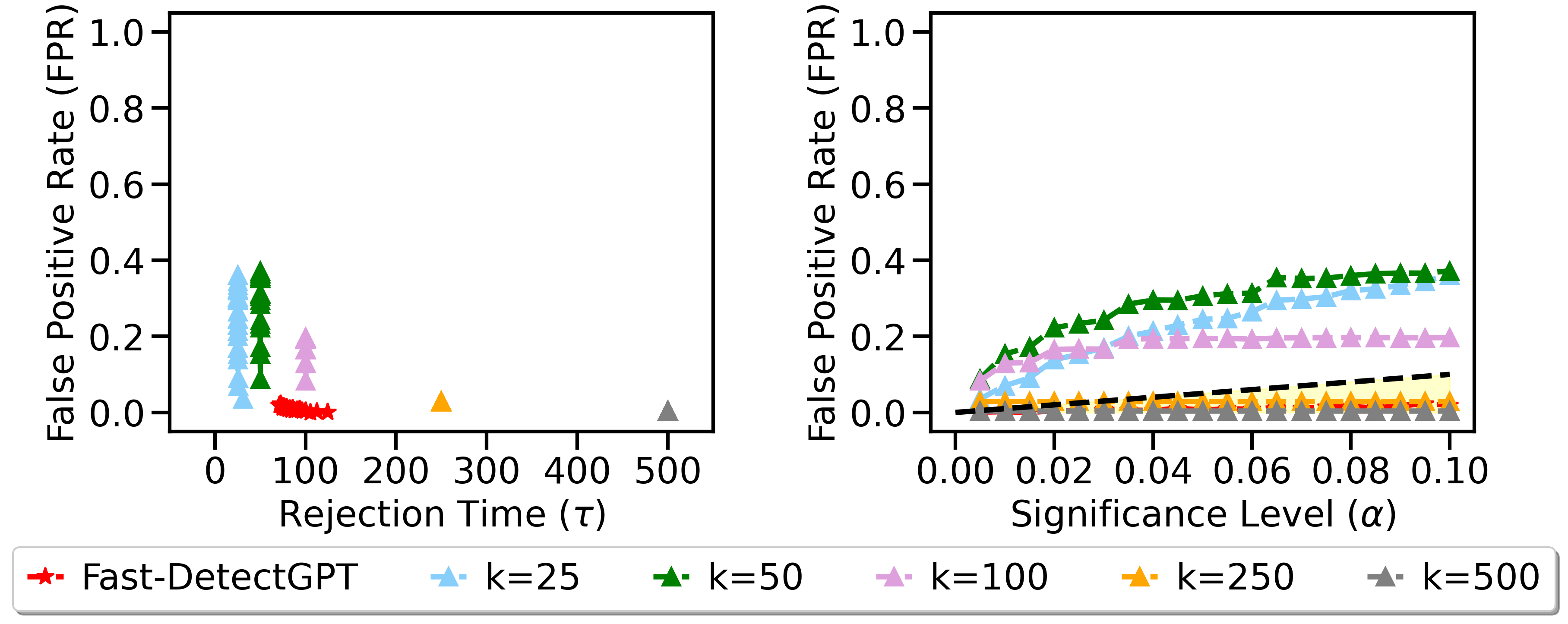}
        \caption{Comparison between our method and the baseline that sets the value of the significance level parameter for the $i$-th batch test to be $\alpha/2^i$.}
\label{fig:baseline_neo2.7.permb.avg}
    \end{subfigure}
    \caption{Comparison of the average results under Scenario 2, where one has to use the first $10$ samples to specify $\epsilon$ and/or $d_t$ before starting the algorithm. 
    Human-written text $x_t$ are sampled from XSum dataset, while $y_t$ is from 2024 Olympic news (under $H_0$) or machine-generated news (under $H_1$). Fake news are generated by three source models: Gemini-1.5-Flash, Gemini-1.5-Pro and PaLM 2. We report the case when the score function is Fast-DetectGPT and the scoring model is Neo-2.7 for our algorithm.}
    \label{fig:case2_avg_olympic}
    \end{figure}
    
We observe a significant difference between the scores of XSum texts and machine-generated news, which causes the baselines of the permutation test to reject the null hypothesis most of the time immediately after receiving the first batch. This in turn results in the nearly vertical lines observed in the left subfigure of Figure \ref{fig:baseline_neo2.7.perm.avg} and Figure \ref{fig:baseline_neo2.7.permb.avg}, where the averaged rejection time across 1000 repeated tests closely approximates the batch size $k$ for each of the 20 significance levels $\alpha$. On the other hand, we observe that when $H_0$ is true, the baselines might not be a valid $\alpha$-test, even with a corrected significance level. This arises from the increased variability of text scores introduced by smaller batch sizes, which results in observed absolute differences in means that may exceed the $\epsilon$ value under $H_0$.
Our method can quickly detect an LLM while keeping the false positive rates (FPRs) below the specified significance levels, which is a delicate balance that can be difficult for the baselines to achieve. Without prior knowledge of the $\epsilon$ value, the baselines of permutation tests may fail to control the type-I error with small batch sizes and cannot quickly reject the null hypothesis while ensuring that FPRs remain below $\alpha$, unlike our method.

\textbf{Extensions.} We further consider (i) an experiment where the sequential texts are produced by various LLMs instead of a single LLM, and (ii) a scenario that the unknown source publishes both human-written and LLM-generated texts, where the null hypothesis becomes that all texts from the unknown source are human-written, and the alternative hypothesis posits that not all texts are human-written. 
We refer the reader to Appendix~\ref{exp:new} for the encouraging results.

\section{Conclusion}
In this paper, we demonstrate that our algorithm, which builds on the score functions of offline detectors, can rapidly determine whether a stream of text is generated by an LLM and provides strong statistical guarantees. Specifically, it controls the type-I error rate below the significance level, ensures that the source of LLM-generated texts can eventually be identified, and guarantees an upper bound on the expected time to correctly declare the unknown source as an LLM under a mild assumption. Although the choice of detector can influence the algorithm's performance and some parameters related to text scores need to be predefined before receiving the text, our experimental results show that most of the existing detectors we use provide effective score functions, and our method performs well in most cases when using estimated values of parameters based on text scores from the first few time steps. To further enhance its efficacy, it may be worthwhile to design score functions tailored to the sequential setting, improve parameter estimations with scores from more time steps, and study the trade-offs between delaying the start of testing and obtaining more reliable estimates. 
Moreover, our algorithm could potentially be used as an effective tool to promptly identify and mitigate the misuse of LLMs for text generation, such as 
monitoring social media accounts that disseminate harmful information generated by LLMs, rapidly detecting sources of LLM-generated fake news on public websites, and identifying users who post machine-generated comments to manipulate public opinion. Exploring these applications could be a promising direction.

Finally, we note that a recent work of \citet{chen2025optimistic} introduces optimistic interior-point methods that enable updates over the full interior of the decision space. Incorporating such advanced betting strategies into the testing-by-betting framework presents a promising future direction for improving online detection of machine-generated texts.

\section*{Acknowledgements}
The authors gratefully acknowledge the support provided by the Gemma Academic Program and Google Cloud Credits.

\bibliography{references}

\begin{thebibliography}{94}
\providecommand{\natexlab}[1]{#1}
\providecommand{\url}[1]{\texttt{#1}}
\expandafter\ifx\csname urlstyle\endcsname\relax
  \providecommand{\doi}[1]{doi: #1}\else
  \providecommand{\doi}{doi: \begingroup \urlstyle{rm}\Url}\fi

\bibitem[Abdelnabi and Fritz(2021)]{abdelnabi2021adversarial}
Sahar Abdelnabi and Mario Fritz.
\newblock Adversarial watermarking transformer: Towards tracing text provenance with data hiding.
\newblock In \emph{2021 IEEE Symposium on Security and Privacy (SP)}, pages 121--140. IEEE, 2021.

\bibitem[Achiam et~al.(2023)Achiam, Adler, Agarwal, Ahmad, Akkaya, Aleman, Almeida, Altenschmidt, Altman, Anadkat, et~al.]{openai2023gpt4}
Josh Achiam, Steven Adler, Sandhini Agarwal, Lama Ahmad, Ilge Akkaya, Florencia~Leoni Aleman, Diogo Almeida, Janko Altenschmidt, Sam Altman, Shyamal Anadkat, et~al.
\newblock Gpt-4 technical report.
\newblock \emph{arXiv preprint arXiv:2303.08774}, 2023.

\bibitem[Adelani et~al.(2020)Adelani, Mai, Fang, Nguyen, Yamagishi, and Echizen]{adelani2020generating}
David~Ifeoluwa Adelani, Haotian Mai, Fuming Fang, Huy~H Nguyen, Junichi Yamagishi, and Isao Echizen.
\newblock Generating sentiment-preserving fake online reviews using neural language models and their human-and machine-based detection.
\newblock In \emph{Advanced information networking and applications: Proceedings of the 34th international conference on advanced information networking and applications (AINA-2020)}, pages 1341--1354. Springer, 2020.

\bibitem[Bakhtin et~al.(2019)Bakhtin, Gross, Ott, Deng, Ranzato, and Szlam]{bakhtin2019real}
Anton Bakhtin, Sam Gross, Myle Ott, Yuntian Deng, Marc'Aurelio Ranzato, and Arthur Szlam.
\newblock Real or fake? learning to discriminate machine from human generated text.
\newblock \emph{arXiv preprint arXiv:1906.03351}, 2019.

\bibitem[Balsubramani and Ramdas(2015)]{balsubramani2015sequential}
Akshay Balsubramani and Aaditya Ramdas.
\newblock Sequential nonparametric testing with the law of the iterated logarithm.
\newblock \emph{arXiv preprint arXiv:1506.03486}, 2015.

\bibitem[Bao et~al.(2023)Bao, Zhao, Teng, Yang, and Zhang]{Bao2023}
Guangsheng Bao, Yanbin Zhao, Zhiyang Teng, Linyi Yang, and Yue Zhang.
\newblock Fast-detectgpt: Efficient zero-shot detection of machine-generated text via conditional probability curvature.
\newblock \emph{arXiv preprint arXiv:2310.05130}, 2023.

\bibitem[Bar et~al.(2024)Bar, Shaer, and Romano]{bar2024protected}
Yarin Bar, Shalev Shaer, and Yaniv Romano.
\newblock Protected test-time adaptation via online entropy matching: A betting approach.
\newblock \emph{NeurIPS}, 2024.

\bibitem[Belz(2019)]{belz2019fully}
Anya Belz.
\newblock Fully automatic journalism: we need to talk about nonfake news generation.
\newblock In \emph{Conference for truth and trust online}, 2019.

\bibitem[Black et~al.(2021)Black, Gao, Wang, Leahy, and Biderman]{Black2021GPTNeoLS}
Sid Black, Leo Gao, Phil Wang, Connor Leahy, and Stella Biderman.
\newblock Gpt-neo: Large scale autoregressive language modeling with mesh-tensorflow.
\newblock 2021.
\newblock URL \url{https://api.semanticscholar.org/CorpusID:245758737}.

\bibitem[Bommasani et~al.(2021)Bommasani, Hudson, Adeli, Altman, Arora, von Arx, Bernstein, Bohg, Bosselut, Brunskill, et~al.]{bommasani2021foundation}
Rishi Bommasani, Drew~A Hudson, Ehsan Adeli, Russ Altman, Simran Arora, Sydney von Arx, Michael~S Bernstein, Jeannette Bohg, Antoine Bosselut, Emma Brunskill, et~al.
\newblock On the opportunities and risks of foundation models.
\newblock \emph{arXiv preprint arXiv:2108.07258}, 2021.

\bibitem[Brown(2020)]{brown2020language}
Tom~B Brown.
\newblock Language models are few-shot learners.
\newblock 2020.

\bibitem[Chen and Wang(2025)]{chen2025optimistic}
Can Chen and Jun-Kun Wang.
\newblock Optimistic interior point methods for sequential hypothesis testing by betting.
\newblock \emph{arXiv preprint arXiv:2502.07774}, 2025.

\bibitem[Chen and Shu(2023)]{chen2023llm}
Canyu Chen and Kai Shu.
\newblock Can {LLM}-generated misinformation be detected?
\newblock \emph{arXiv preprint arXiv:2309.13788}, 2023.

\bibitem[Cho et~al.(2024)Cho, Gan, and Kallus]{cho2024peeking}
Brian Cho, Kyra Gan, and Nathan Kallus.
\newblock Peeking with peak: Sequential, nonparametric composite hypothesis tests for means of multiple data streams.
\newblock \emph{ICML}, 2024.

\bibitem[Chowdhery et~al.(2023)Chowdhery, Narang, Devlin, Bosma, Mishra, Roberts, Barham, Chung, Sutton, Gehrmann, et~al.]{chowdhery2023palm}
Aakanksha Chowdhery, Sharan Narang, Jacob Devlin, Maarten Bosma, Gaurav Mishra, Adam Roberts, Paul Barham, Hyung~Won Chung, Charles Sutton, Sebastian Gehrmann, et~al.
\newblock Palm: Scaling language modeling with pathways.
\newblock \emph{Journal of Machine Learning Research}, 24\penalty0 (240):\penalty0 1--113, 2023.

\bibitem[Christ et~al.(2024)Christ, Gunn, and Zamir]{christ2024undetectable}
Miranda Christ, Sam Gunn, and Or~Zamir.
\newblock Undetectable watermarks for language models.
\newblock In \emph{The Thirty Seventh Annual Conference on Learning Theory}, pages 1125--1139. PMLR, 2024.

\bibitem[Chugg et~al.(2023)Chugg, Cortes-Gomez, Wilder, and Ramdas]{Chugg2023}
Ben Chugg, Santiago Cortes-Gomez, Bryan Wilder, and Aaditya Ramdas.
\newblock Auditing fairness by betting.
\newblock \emph{Advances in Neural Information Processing Systems}, 36:\penalty0 6070--6091, 2023.

\bibitem[Cutkosky and Orabona(2018)]{Cutkosky2018}
Ashok Cutkosky and Francesco Orabona.
\newblock Black-box reductions for parameter-free online learning in banach spaces.
\newblock In \emph{Conference On Learning Theory}, pages 1493--1529. PMLR, 2018.

\bibitem[Dai et~al.(2025)Dai, Gradu, Raji, and Recht]{dai2025individual}
Jessica Dai, Paula Gradu, Inioluwa~Deborah Raji, and Benjamin Recht.
\newblock From individual experience to collective evidence: A reporting-based framework for identifying systemic harms.
\newblock \emph{arXiv preprint arXiv:2502.08166}, 2025.

\bibitem[Davis et~al.(2016)Davis, Varol, Ferrara, Flammini, and Menczer]{davis2016botornot}
Clayton~Allen Davis, Onur Varol, Emilio Ferrara, Alessandro Flammini, and Filippo Menczer.
\newblock Botornot: A system to evaluate social bots.
\newblock In \emph{Proceedings of the 25th international conference companion on world wide web}, pages 273--274, 2016.

\bibitem[Devlin(2018)]{devlin2018bert}
Jacob Devlin.
\newblock Bert: Pre-training of deep bidirectional transformers for language understanding.
\newblock \emph{arXiv preprint arXiv:1810.04805}, 2018.

\bibitem[Fagni et~al.(2021)Fagni, Falchi, Gambini, Martella, and Tesconi]{fagni2021tweepfake}
Tiziano Fagni, Fabrizio Falchi, Margherita Gambini, Antonio Martella, and Maurizio Tesconi.
\newblock Tweepfake: About detecting deepfake tweets.
\newblock \emph{Plos one}, 16\penalty0 (5):\penalty0 e0251415, 2021.

\bibitem[Fan et~al.(2018)Fan, Lewis, and Dauphin]{Fan2018}
Angela Fan, Mike Lewis, and Yann Dauphin.
\newblock Hierarchical neural story generation.
\newblock \emph{arXiv preprint arXiv:1805.04833}, 2018.

\bibitem[Ferrara(2023)]{ferrara2023social}
Emilio Ferrara.
\newblock Social bot detection in the age of chatgpt: Challenges and opportunities.
\newblock \emph{First Monday}, 2023.

\bibitem[Ferrara et~al.(2016)Ferrara, Varol, Davis, Menczer, and Flammini]{ferrara2016rise}
Emilio Ferrara, Onur Varol, Clayton Davis, Filippo Menczer, and Alessandro Flammini.
\newblock The rise of social bots.
\newblock \emph{Communications of the ACM}, 59\penalty0 (7):\penalty0 96--104, 2016.

\bibitem[Fischer and Ramdas(2024)]{fischer2024multiple}
Lasse Fischer and Aaditya Ramdas.
\newblock Multiple testing with anytime-valid monte-carlo p-values.
\newblock \emph{arXiv preprint arXiv:2404.15586}, 2024.

\bibitem[Fr{\"o}hling and Zubiaga(2021)]{frohling2021feature}
Leon Fr{\"o}hling and Arkaitz Zubiaga.
\newblock Feature-based detection of automated language models: tackling gpt-2, gpt-3 and grover.
\newblock \emph{PeerJ Computer Science}, 7:\penalty0 e443, 2021.

\bibitem[Garson(2012)]{garson2012testing}
G~David Garson.
\newblock Testing statistical assumptions, 2012.

\bibitem[Gehrmann et~al.(2019)Gehrmann, Strobelt, and Rush]{gehrmann2019gltr}
Sebastian Gehrmann, Hendrik Strobelt, and Alexander~M Rush.
\newblock Gltr: Statistical detection and visualization of generated text.
\newblock \emph{arXiv preprint arXiv:1906.04043}, 2019.

\bibitem[Good(2013)]{good2013permutation}
Phillip Good.
\newblock \emph{Permutation tests: a practical guide to resampling methods for testing hypotheses}.
\newblock Springer Science \& Business Media, 2013.

\bibitem[Google(2024)]{GoogleGemma2B2024}
Google.
\newblock Gemma-2b model on hugging face, 2024.
\newblock URL \url{https://huggingface.co/google/gemma-2b}.

\bibitem[{Google Cloud}(2024{\natexlab{a}})]{GoogleCloud2024VertexAI}
{Google Cloud}.
\newblock Vertex ai: Machine learning model management, 2024{\natexlab{a}}.
\newblock URL \url{https://cloud.google.com/vertex-ai?hl=zh_cn}.

\bibitem[{Google Cloud}(2024{\natexlab{b}})]{GoogleCloud2024VertexAITextModel}
{Google Cloud}.
\newblock Reference for text models in vertex ai, 2024{\natexlab{b}}.
\newblock URL \url{https://cloud.google.com/vertex-ai/generative-ai/docs/model-reference/text?hl=zh-cn}.

\bibitem[Gr{\"u}nwald et~al.(2024)Gr{\"u}nwald, Henzi, and Lardy]{grunwald2024anytime}
Peter Gr{\"u}nwald, Alexander Henzi, and Tyron Lardy.
\newblock Anytime-valid tests of conditional independence under model-x.
\newblock \emph{Journal of the American Statistical Association}, 119\penalty0 (546):\penalty0 1554--1565, 2024.

\bibitem[Hans et~al.(2024)Hans, Schwarzschild, Cherepanova, Kazemi, Saha, Goldblum, Geiping, and Goldstein]{hans2024spotting}
Abhimanyu Hans, Avi Schwarzschild, Valeriia Cherepanova, Hamid Kazemi, Aniruddha Saha, Micah Goldblum, Jonas Geiping, and Tom Goldstein.
\newblock Spotting llms with binoculars: Zero-shot detection of machine-generated text.
\newblock \emph{arXiv preprint arXiv:2401.12070}, 2024.

\bibitem[Hashimoto et~al.(2019)Hashimoto, Zhang, and Liang]{hashimoto2019unifying}
Tatsunori~B Hashimoto, Hugh Zhang, and Percy Liang.
\newblock Unifying human and statistical evaluation for natural language generation.
\newblock \emph{arXiv preprint arXiv:1904.02792}, 2019.

\bibitem[Hazan et~al.(2007)Hazan, Agarwal, and Kale]{hazan2007logarithmic}
Elad Hazan, Amit Agarwal, and Satyen Kale.
\newblock Logarithmic regret algorithms for online convex optimization.
\newblock \emph{Machine Learning}, 69\penalty0 (2):\penalty0 169--192, 2007.

\bibitem[Hazan et~al.(2016)]{Hazan2016OnlineConvexOptimization}
Elad Hazan et~al.
\newblock Introduction to online convex optimization.
\newblock \emph{Foundations and Trends{\textregistered} in Optimization}, 2\penalty0 (3-4):\penalty0 157--325, 2016.

\bibitem[Ippolito et~al.(2019)Ippolito, Duckworth, Callison-Burch, and Eck]{ippolito2019automatic}
Daphne Ippolito, Daniel Duckworth, Chris Callison-Burch, and Douglas Eck.
\newblock Automatic detection of generated text is easiest when humans are fooled.
\newblock \emph{arXiv preprint arXiv:1911.00650}, 2019.

\bibitem[Jalil and Mirza(2009)]{jalil2009review}
Zunera Jalil and Anwar~M Mirza.
\newblock A review of digital watermarking techniques for text documents.
\newblock In \emph{2009 International Conference on Information and Multimedia Technology}, pages 230--234. IEEE, 2009.

\bibitem[Jawahar et~al.(2020)Jawahar, Abdul-Mageed, and Lakshmanan]{jawahar2020automatic}
Ganesh Jawahar, Muhammad Abdul-Mageed, and Laks~VS Lakshmanan.
\newblock Automatic detection of machine generated text: A critical survey.
\newblock \emph{arXiv preprint arXiv:2011.01314}, 2020.

\bibitem[Jin et~al.(2019)Jin, Dhingra, Liu, Cohen, and Lu]{Jin2019}
Qiao Jin, Bhuwan Dhingra, Zhengping Liu, William~W Cohen, and Xinghua Lu.
\newblock Pubmedqa: A dataset for biomedical research question answering.
\newblock \emph{arXiv preprint arXiv:1909.06146}, 2019.

\bibitem[Kamaruddin et~al.(2018)Kamaruddin, Kamsin, Por, and Rahman]{kamaruddin2018review}
Nurul~Shamimi Kamaruddin, Amirrudin Kamsin, Lip~Yee Por, and Hameedur Rahman.
\newblock A review of text watermarking: theory, methods, and applications.
\newblock \emph{IEEE Access}, 6:\penalty0 8011--8028, 2018.

\bibitem[Kao(2017)]{Kao2017}
J.~Kao.
\newblock More than a million pro-repeal net neutrality comments were likely faked.
\newblock Hacker Noon, Nov 2017.

\bibitem[Kasneci et~al.(2023)Kasneci, Se{\ss}ler, K{\"u}chemann, Bannert, Dementieva, Fischer, Gasser, Groh, G{\"u}nnemann, H{\"u}llermeier, et~al.]{kasneci2023chatgpt}
Enkelejda Kasneci, Kathrin Se{\ss}ler, Stefan K{\"u}chemann, Maria Bannert, Daryna Dementieva, Frank Fischer, Urs Gasser, Georg Groh, Stephan G{\"u}nnemann, Eyke H{\"u}llermeier, et~al.
\newblock Chatgpt for good? on opportunities and challenges of large language models for education.
\newblock \emph{Learning and individual differences}, 103:\penalty0 102274, 2023.

\bibitem[Kelly(1956)]{kelly1956new}
John~L Kelly.
\newblock A new interpretation of information rate.
\newblock \emph{the bell system technical journal}, 35\penalty0 (4):\penalty0 917--926, 1956.

\bibitem[Kilcher(2022)]{Kilcher2022}
Y.~Kilcher.
\newblock This is the worst ai ever, June 2022.
\newblock LinkedIn post.

\bibitem[Kirchenbauer et~al.(2023)Kirchenbauer, Geiping, Wen, Katz, Miers, and Goldstein]{kirchenbauer2023watermark}
John Kirchenbauer, Jonas Geiping, Yuxin Wen, Jonathan Katz, Ian Miers, and Tom Goldstein.
\newblock A watermark for large language models.
\newblock In \emph{International Conference on Machine Learning}, pages 17061--17084. PMLR, 2023.

\bibitem[Kobak et~al.(2024)Kobak, Gonz{\'a}lez-M{\'a}rquez, Horv{\'a}t, and Lause]{kobak2024delving}
Dmitry Kobak, Rita Gonz{\'a}lez-M{\'a}rquez, Em{\H{o}}ke-{\'A}gnes Horv{\'a}t, and Jan Lause.
\newblock Delving into chatgpt usage in academic writing through excess vocabulary.
\newblock \emph{arXiv preprint arXiv:2406.07016}, 2024.

\bibitem[Lavergne et~al.(2008)Lavergne, Urvoy, and Yvon]{lavergne2008detecting}
Thomas Lavergne, Tanguy Urvoy, and Fran{\c{c}}ois Yvon.
\newblock Detecting fake content with relative entropy scoring.
\newblock \emph{Pan}, 8\penalty0 (27-31):\penalty0 4, 2008.

\bibitem[Lee et~al.(2024)Lee, Tack, and Shin]{lee2024remodetect}
Hyunseok Lee, Jihoon Tack, and Jinwoo Shin.
\newblock Remodetect: Reward models recognize aligned llm's generations.
\newblock \emph{arXiv preprint arXiv:2405.17382}, 2024.

\bibitem[Lee et~al.(2023)Lee, Le, Chen, and Lee]{lee2023language}
Jooyoung Lee, Thai Le, Jinghui Chen, and Dongwon Lee.
\newblock Do language models plagiarize?
\newblock In \emph{Proceedings of the ACM Web Conference 2023}, pages 3637--3647, 2023.

\bibitem[Lin et~al.(2021)Lin, Hilton, and Evans]{lin2021truthfulqa}
Stephanie Lin, Jacob Hilton, and Owain Evans.
\newblock Truthfulqa: Measuring how models mimic human falsehoods.
\newblock \emph{arXiv preprint arXiv:2109.07958}, 2021.

\bibitem[Liu et~al.(2019)Liu, Ott, Goyal, Du, Joshi, Chen, Levy, Lewis, Zettlemoyer, and Stoyanov]{liu2019RoBERTa}
Yinhan Liu, Myle Ott, Naman Goyal, Jingfei Du, Mandar Joshi, Danqi Chen, Omer Levy, Mike Lewis, Luke Zettlemoyer, and Veselin Stoyanov.
\newblock Roberta: A robustly optimized bert pretraining approach.
\newblock \emph{arXiv preprint arXiv:1907.11692}, 2019.

\bibitem[Mao et~al.(2024)Mao, Vondrick, Wang, and Yang]{mao2024raidar}
Chengzhi Mao, Carl Vondrick, Hao Wang, and Junfeng Yang.
\newblock Raidar: generative ai detection via rewriting.
\newblock \emph{arXiv preprint arXiv:2401.12970}, 2024.

\bibitem[Mitchell et~al.(2023)Mitchell, Lee, Khazatsky, Manning, and Finn]{Mitchell2023}
Eric Mitchell, Yoonho Lee, Alexander Khazatsky, Christopher~D Manning, and Chelsea Finn.
\newblock Detectgpt: Zero-shot machine-generated text detection using probability curvature.
\newblock In \emph{International Conference on Machine Learning}, pages 24950--24962. PMLR, 2023.

\bibitem[Narayan et~al.(2018)Narayan, Cohen, and Lapata]{Narayan2018}
Shashi Narayan, Shay~B Cohen, and Mirella Lapata.
\newblock Don't give me the details, just the summary! topic-aware convolutional neural networks for extreme summarization.
\newblock \emph{arXiv preprint arXiv:1808.08745}, 2018.

\bibitem[{Olympics}(2024)]{Olympics2024}
{Olympics}.
\newblock Olympic games news, 2024.
\newblock URL \url{https://olympics.com/en/}.

\bibitem[OpenAI(2022)]{openai2022chatgpt}
OpenAI.
\newblock Chatgpt.
\newblock \url{https://chat.openai.com/}, December 2022.

\bibitem[{OpenAI}(2023)]{openai_2023_ai_text_classifier}
{OpenAI}.
\newblock {AI Text Classifier}.
\newblock \url{https://beta.openai.com/ai-text-classifier}, 2023.
\newblock Accessed: 2023-08-30.

\bibitem[Orabona(2019)]{orabona2019modern}
Francesco Orabona.
\newblock A modern introduction to online learning.
\newblock \emph{arXiv preprint arXiv:1912.13213}, 2019.

\bibitem[Orabona and Jun(2023)]{orabona2023tight}
Francesco Orabona and Kwang-Sung Jun.
\newblock Tight concentrations and confidence sequences from the regret of universal portfolio.
\newblock \emph{IEEE Transactions on Information Theory}, 2023.

\bibitem[Orabona and P{\'a}l(2016)]{orabona2016coin}
Francesco Orabona and D{\'a}vid P{\'a}l.
\newblock Coin betting and parameter-free online learning.
\newblock \emph{Advances in Neural Information Processing Systems}, 29, 2016.

\bibitem[Pandeva et~al.(2024)Pandeva, Forr{\'e}, Ramdas, and Shekhar]{pandeva2024deep}
Teodora Pandeva, Patrick Forr{\'e}, Aaditya Ramdas, and Shubhanshu Shekhar.
\newblock Deep anytime-valid hypothesis testing.
\newblock In \emph{International Conference on Artificial Intelligence and Statistics}, pages 622--630. PMLR, 2024.

\bibitem[Podkopaev and Ramdas(2023)]{podkopaev2023sequential}
Aleksandr Podkopaev and Aaditya Ramdas.
\newblock Sequential predictive two-sample and independence testing.
\newblock \emph{Advances in neural information processing systems}, 36:\penalty0 53275--53307, 2023.

\bibitem[Podkopaev et~al.(2023)Podkopaev, Bl{\"o}baum, Kasiviswanathan, and Ramdas]{podkopaev2024sequential}
Aleksandr Podkopaev, Patrick Bl{\"o}baum, Shiva Kasiviswanathan, and Aaditya Ramdas.
\newblock Sequential kernelized independence testing.
\newblock In \emph{International Conference on Machine Learning}, pages 27957--27993. PMLR, 2023.

\bibitem[Podkopaev et~al.(2024)Podkopaev, Xu, and Lee]{PXL24}
Aleksandr Podkopaev, Dong Xu, and Kuang-Chih Lee.
\newblock Adaptive conformal inference by betting.
\newblock \emph{ICML}, 2024.

\bibitem[Pozzana and Ferrara(2020)]{pozzana2020measuring}
Iacopo Pozzana and Emilio Ferrara.
\newblock Measuring bot and human behavioral dynamics.
\newblock \emph{Frontiers in Physics}, 8:\penalty0 125, 2020.

\bibitem[Raffel et~al.(2020)Raffel, Shazeer, Roberts, Lee, Narang, Matena, Zhou, Li, and Liu]{raffel2020exploring}
Colin Raffel, Noam Shazeer, Adam Roberts, Katherine Lee, Sharan Narang, Michael Matena, Yanqi Zhou, Wei Li, and Peter~J Liu.
\newblock Exploring the limits of transfer learning with a unified text-to-text transformer.
\newblock \emph{Journal of machine learning research}, 21\penalty0 (140):\penalty0 1--67, 2020.

\bibitem[Ramdas and Manole(2023)]{RamdasManole2023}
Aaditya Ramdas and Tudor Manole.
\newblock Randomized and exchangeable improvements of markov's, chebyshev's and chernoff's inequalities.
\newblock \emph{arXiv preprint arXiv:2304.02611}, 2023.

\bibitem[Ramdas et~al.(2023)Ramdas, Gr{\"u}nwald, Vovk, and Shafer]{ramdas2023game}
Aaditya Ramdas, Peter Gr{\"u}nwald, Vladimir Vovk, and Glenn Shafer.
\newblock Game-theoretic statistics and safe anytime-valid inference.
\newblock \emph{Statistical Science}, 38\penalty0 (4):\penalty0 576--601, 2023.

\bibitem[Selyukh(2017)]{selyukh2017fcc}
A~Selyukh.
\newblock Fcc repeals ‘net neutrality’rules for internet providers.
\newblock \emph{NPR (accessed 13 October 2020)}, 2017.

\bibitem[Shaer et~al.(2023)Shaer, Maman, and Romano]{shaer2023model}
Shalev Shaer, Gal Maman, and Yaniv Romano.
\newblock Model-x sequential testing for conditional independence via testing by betting.
\newblock In \emph{International Conference on Artificial Intelligence and Statistics}, pages 2054--2086. PMLR, 2023.

\bibitem[Shafer(2021)]{shafer2021testing}
Glenn Shafer.
\newblock Testing by betting: A strategy for statistical and scientific communication.
\newblock \emph{Journal of the Royal Statistical Society Series A: Statistics in Society}, 184\penalty0 (2):\penalty0 407--431, 2021.

\bibitem[Shekhar and Ramdas(2023)]{shekhar2023nonparametric}
Shubhanshu Shekhar and Aaditya Ramdas.
\newblock Nonparametric two-sample testing by betting.
\newblock \emph{IEEE Transactions on Information Theory}, 2023.

\bibitem[Solaiman et~al.(2019)Solaiman, Brundage, Clark, Askell, Herbert-Voss, Wu, Radford, Krueger, Kim, Kreps, et~al.]{solaiman2019release}
Irene Solaiman, Miles Brundage, Jack Clark, Amanda Askell, Ariel Herbert-Voss, Jeff Wu, Alec Radford, Gretchen Krueger, Jong~Wook Kim, Sarah Kreps, et~al.
\newblock Release strategies and the social impacts of language models.
\newblock \emph{arXiv preprint arXiv:1908.09203}, 2019.

\bibitem[Stokel-Walker(2022)]{stokelwalker2022aibot}
Chris Stokel-Walker.
\newblock Ai bot chatgpt writes smart essays-should academics worry?
\newblock \emph{Nature}, 2022.

\bibitem[Su et~al.(2023)Su, Zhuo, Wang, and Nakov]{su2023detectllm}
Jinyan Su, Terry~Yue Zhuo, Di~Wang, and Preslav Nakov.
\newblock Detectllm: Leveraging log rank information for zero-shot detection of machine-generated text.
\newblock \emph{arXiv preprint arXiv:2306.05540}, 2023.

\bibitem[Susnjak and McIntosh(2024)]{susnjak2024chatgpt}
Teo Susnjak and Timothy~R McIntosh.
\newblock Chatgpt: The end of online exam integrity?
\newblock \emph{Education Sciences}, 14\penalty0 (6):\penalty0 656, 2024.

\bibitem[Tartakovsky et~al.(2014)Tartakovsky, Nikiforov, and Basseville]{tartakovsky2014sequential}
Alexander Tartakovsky, Igor Nikiforov, and Michele Basseville.
\newblock \emph{Sequential analysis: Hypothesis testing and changepoint detection}.
\newblock CRC press, 2014.

\bibitem[Teneggi and Sulam(2024)]{teneggi2024bet}
Jacopo Teneggi and Jeremias Sulam.
\newblock I bet you did not mean that: Testing semantic importance via betting.
\newblock \emph{NeurIPS}, 2024.

\bibitem[Tian(2023)]{tian2023gptzero}
Edward Tian.
\newblock Gptzero: An ai text detector, 2023.

\bibitem[Uchendu et~al.(2020)Uchendu, Le, Shu, and Lee]{uchendu2020authorship}
Adaku Uchendu, Thai Le, Kai Shu, and Dongwon Lee.
\newblock Authorship attribution for neural text generation.
\newblock In \emph{Proceedings of the 2020 conference on empirical methods in natural language processing (EMNLP)}, pages 8384--8395, 2020.

\bibitem[Varol et~al.(2017)Varol, Ferrara, Davis, Menczer, and Flammini]{varol2017online}
Onur Varol, Emilio Ferrara, Clayton Davis, Filippo Menczer, and Alessandro Flammini.
\newblock Online human-bot interactions: Detection, estimation, and characterization.
\newblock In \emph{Proceedings of the international AAAI conference on web and social media}, volume~11, pages 280--289, 2017.

\bibitem[Ville(1939)]{Ville1939}
Jean Ville.
\newblock \emph{Etude critique de la notion de collectif}.
\newblock Gauthier-Villars Paris, 1939.

\bibitem[Vovk and Wang(2021)]{vovk2021values}
Vladimir Vovk and Ruodu Wang.
\newblock E-values: Calibration, combination and applications.
\newblock \emph{The Annals of Statistics}, 49\penalty0 (3):\penalty0 1736--1754, 2021.

\bibitem[Wang and Komatsuzaki(2021)]{wang2021gpt}
Ben Wang and Aran Komatsuzaki.
\newblock Gpt-j-6b: A 6 billion parameter autoregressive language model, 2021.

\bibitem[Waudby-Smith and Ramdas(2024)]{waudbysmith2024estimating}
Ian Waudby-Smith and Aaditya Ramdas.
\newblock Estimating means of bounded random variables by betting.
\newblock \emph{Journal of the Royal Statistical Society Series B: Statistical Methodology}, 86\penalty0 (1):\penalty0 1--27, 2024.

\bibitem[Weiss(2019)]{weiss2019deepfake}
Max Weiss.
\newblock Deepfake bot submissions to federal public comment websites cannot be distinguished from human submissions.
\newblock \emph{Technology Science}, 2019121801, 2019.

\bibitem[Yang et~al.(2023)Yang, Cheng, Wu, Petzold, Wang, and Chen]{yang2023dnagpt}
Xianjun Yang, Wei Cheng, Yue Wu, Linda Petzold, William~Yang Wang, and Haifeng Chen.
\newblock Dna-gpt: Divergent n-gram analysis for training-free detection of gpt-generated text.
\newblock \emph{arXiv preprint arXiv:2305.17359}, 2023.

\bibitem[Yuan et~al.(2022)Yuan, Coenen, Reif, and Ippolito]{yuan2022wordcraft}
Ann Yuan, Andy Coenen, Emily Reif, and Daphne Ippolito.
\newblock Wordcraft: story writing with large language models.
\newblock In \emph{Proceedings of the 27th International Conference on Intelligent User Interfaces}, pages 841--852, 2022.

\bibitem[Zellers et~al.(2019)Zellers, Holtzman, Rashkin, Bisk, Farhadi, Roesner, and Choi]{zellers2019defending}
Rowan Zellers, Ari Holtzman, Hannah Rashkin, Yonatan Bisk, Ali Farhadi, Franziska Roesner, and Yejin Choi.
\newblock Defending against neural fake news.
\newblock \emph{Advances in neural information processing systems}, 32, 2019.

\bibitem[Zhang et~al.(2024)Zhang, Ladhak, Durmus, Liang, McKeown, and Hashimoto]{zhang2023benchmarking}
Tianyi Zhang, Faisal Ladhak, Esin Durmus, Percy Liang, Kathleen McKeown, and Tatsunori~B Hashimoto.
\newblock Benchmarking large language models for news summarization.
\newblock \emph{Transactions of the Association for Computational Linguistics}, 12:\penalty0 39--57, 2024.

\bibitem[Zhao et~al.(2023)Zhao, Ananth, Li, and Wang]{zhao2023provable}
Xuandong Zhao, Prabhanjan Ananth, Lei Li, and Yu-Xiang Wang.
\newblock Provable robust watermarking for ai-generated text.
\newblock \emph{arXiv preprint arXiv:2306.17439}, 2023.

\end{thebibliography}
\bibliographystyle{plainnat}

\newpage
\appendix
\onecolumn

\section{More Related Works}

\subsection{Related Works of Detecting Machine-Generated Texts} \label{app:related}

Some methods distinguish between human-written and machine-generated texts by comparing their statistical properties~\citep{jawahar2020automatic}. \citet{lavergne2008detecting} introduced a method, which uses relative entropy scoring to effectively identify texts from Markovian text generators. Perplexity is also a metric for detection, which quantifies the uncertainty of a model in predicting text sequences~\citep{hashimoto2019unifying}. \citet{gehrmann2019gltr} developed GLTR tool, which leverages statistical features such as per-word probability, rank, and entropy, to enhance the accuracy of fake-text detection.  \citet{Mitchell2023} created a novel detector called DetectGPT, which identifies a machine-generated text by noting that it will exhibit higher log-probability than samples where some words of the original text have been rewritten/perturbed. \citet{su2023detectllm} then introduced two advanced methods utilizing two metrics: Log-Likelihood Log-Rank Ratio (LRR) and Normalized Perturbed Log Rank (NPR), respectively. \citet{Bao2023} developed Fast-DetectGPT, which replaces the perturbation step of DetectGPT with a more efficient sampling operation. \citet{solaiman2019release} employed the classic logistic regression model on TF-IDF vectors to detect texts generated by GPT-2, and noted that texts from larger GPT-2 models are more challenging to detect than those from smaller GPT-2 models. Researchers have also trained supervised models on neural network bases. \citet{bakhtin2019real} found that Energy-based models (EBMs) outperform the behavior of using the original language model log-likelihood in real and fake text discrimination. \citet{zellers2019defending} developed a robust detection method named GROVER by using a linear classifier, which can effectively spot AI-generated `neural' fake news. \citet{ippolito2019automatic} showed that BERT-based~\citep{devlin2018bert} classifiers outperform humans in identifying texts characterized by statistical anomalies, such as those where only the top k high-likelihood words are generated, yet humans excel at semantic understanding. \citet{solaiman2019release} fine-tuned RoBERTa~\citep{liu2019RoBERTa} on GPT-2 outputs and achieved approximately $95\%$ accuracy in detecting texts generated by 1.5 billion parameter GPT-2. The effectiveness of RoBERTa-based detectors is further validated across different text types, including machine-generated tweets~\citep{fagni2021tweepfake}, news articles~\citep{uchendu2020authorship}, and product reviews~\citep{adelani2020generating}. Other supervised classifiers, such as GPTZero~\citep{tian2023gptzero} and OpenAI’s Classifier~\citep{openai_2023_ai_text_classifier}, have also proven to be strong detectors. Moreover, some research has explored watermarking methods that embed detectable patterns in LLM-generated texts for identifying, see e.g., \citet{jalil2009review,kamaruddin2018review,abdelnabi2021adversarial,zhao2023provable,kirchenbauer2023watermark,christ2024undetectable}. Recently, \citet{kobak2024delving} introduced ``excess word usage”, a novel data-driven approach that identifies LLM usage in academic writing and avoids biases that could be potentially introduced by generation prompts from traditional human text datasets.

\subsection{Related Works of Sequential Hypothesis Testing by Betting}
\label{sec:related_work_betting}
\citet{kelly1956new} first proposed a strategy for sequential betting with initial wealth on the outcome of each coin flip $g_t$ in round $t$, which can take values of $+1$ (head) or $-1$ (tail), generated i.i.d. with the probability of heads $p\in [0,1]$. It is shown that betting a fraction $\beta_t = 2p - 1$ on heads in each round will yield more accumulated wealth than betting any other fixed fraction of the current wealth in the long run. \citet{orabona2016coin} demonstrated the equivalence between the minimum wealth of betting and the maximum regret in one-dimensional Online Linear Optimization (OLO) algorithms, which introduces the coin-betting abstraction for the design of parameter-free algorithms. Based on this foundation, \citet{Cutkosky2018} developed a coin betting algorithm, which uses an exp-concave optimization approach through the Online Newton Step (ONS). Subsequently, \citet{shekhar2023nonparametric} applied their betting strategy, along with the general principles of testing by betting as clarified by \citet{shafer2021testing}, to nonparametric two-sample hypothesis testing. \citet{Chugg2023} then conducted sequentially audits on both classifiers and regressors within the general two-sample testing framework established by \citet{shekhar2023nonparametric}, which demonstrate that this method remains robust even in the face of distribution shifts. Additionally, other studies~\citep{orabona2023tight, waudbysmith2024estimating} have developed practical strategies that leverage online convex optimization methods, with which the betting fraction can be adaptively selected to provide statistical guarantees for the results.
We also note that sequential hypothesis testing by betting has drawn significant attention in recent years, e.g., \citet{vovk2021values,shaer2023model,ramdas2023game,podkopaev2023sequential,pandeva2024deep,podkopaev2024sequential,teneggi2024bet,grunwald2024anytime,PXL24,cho2024peeking,fischer2024multiple,bar2024protected}.

\section{Related Score Functions}
\label{sec:score_functions}
 The score function $\phi:\text{Text}\rightarrow \mathbb{R}$ will take a text as input and then output a real number. It is designed to maximize the ability to distinguish machine text from human text, that is, we want the score function to maximize the difference in scores between human-written text and machine-generated text.

\textbf{DetectGPT.} Three models: source model, perturbation model and scoring model are considered in the process of calculating the score $\phi(x)$ of text $x$ by the metric of DetectGPT (the normalized perturbation discrepancy)~\citep{Mitchell2023}. Firstly, the original text $x$ is perturbed by a perturbation model $q_\zeta$ to generate $m$ perturbed samples $\tilde{x}^{(i)}\sim q_\zeta(\cdot|x), i\in[1,2,\cdots,m]$, then the scoring model $p_\theta$ is used to calculate the score 
\begin{equation}
    \phi(x)=\frac{\log p_{\theta}(x) - \tilde{\mu}}{\tilde{\sigma}},
\end{equation}
where $\tilde{\mu} = \frac{1}{m} \sum_{i=1}^m \log p_{\theta}(\tilde{x}^{(i)})$, and $\tilde{\sigma}^2 = \frac{1}{m} \sum_{i=1}^m \left[\log p_{\theta}(\tilde{x}^{(i)})- \tilde{\mu}\right]^2 $. We can write $\log p_{\theta}(x)$ as $\sum_{j=1}^n \log p_{\theta}(x_j | x_{1:j-1})$, where $n$ denotes the number of tokens of $x$, $x_j$ denotes the $j$-th token, and $x_{1:j-1}$ means the first $j-1$ tokens. Similarly, $\log p_{\theta}(\tilde{x}^{(i)})=\sum_{j=1}^{\tilde{n}^{(i)}} \log p_{\theta}(\tilde{x}_j^{(i)} | \tilde{x}_{1:j-1}^{(i)})$, where $\tilde{n}^{(i)}$ is the number of tokens of $i$-th perturbed sample $\tilde{x}^{(i)}$.

\textbf{Fast-DetectGPT.} \citet{Bao2023} considered three models: source model, sampling model and scoring model for the metric of Fast-DetectGPT (conditional probability curvature). The calculation is conducted by first using the sampling model $q_{\zeta}$ to generate alternative samples that each consist of $n$ tokens. For each token $\tilde{x}_j$, it is sampled conditionally on $x_{1:j-1}$, that is, $\tilde{x}_j\sim q_{\zeta}(\cdot|x_{1:j-1})$ for $j=1, \cdots,n$. The sampled text is $\tilde{x}=(\tilde{x}_1, \cdots, \tilde{x}_n)$. Then, the scoring model $p_{\theta}$ is used to calculate the logarithmic conditional probability of the text, given by $\sum_{j=1}^n\log p_{\theta}({x}_j | x_{1:j-1})$, and then normalize it, where $n$ denotes the number of tokens of $x$. This score function is quantified as
\begin{equation}
    \phi(x)=\frac{\sum_{j=1}^n\log p_{\theta}({x}_j | x_{1:j-1}) - \tilde{\mu}}{\tilde{\sigma}}.
    \label{eq:1}
\end{equation}

There are two ways to calculate the mean value $\tilde{\mu}$ and the corresponding variance $\tilde{\sigma}^2$, one is to calculate the population mean
\begin{align*}
    \tilde{\mu}&=\mathbb{E}_{\tilde{x}}\left[ \sum_{j=1}^n\log p_{\theta}(\tilde{x}_j^{(i)} | x_{1:j-1})\right]=\sum_{j=1}^n\mathbb{E}_{\tilde{x}}\left[\log p_{\theta}(\tilde{x}_j^{(i)} | x_{1:j-1})\right]\\
&= \sum_{j=1}^n \sum_{i=1}^s q_{\zeta}(\tilde{x}_j^{(i)} | x_{1:j-1})\cdot \log p_{\theta}(\tilde{x}_j^{(i)} | x_{1:j-1}),
\end{align*}
if we denote $\sum_{i=1}^s q_{\zeta}(\tilde{x}_j^{(i)} | x_{1:j-1})\cdot \log p_{\theta}(\tilde{x}_j^{(i)} | x_{1:j-1})$ as $\tilde{\mu}_j$, then the variance is 
\begin{align*}
    \tilde{\sigma}^2 &= \mathbb{E}_{\tilde{x}} \left[ \sum_{j=1}^n\left(\log p_{\theta}(\tilde{x}_j^{(i)} | x_{1:j-1}) - \tilde{\mu}_j\right)^2 \right]= \mathbb{E}_{\tilde{x}} \left[ \sum_{j=1}^n\left(\log^2 p_{\theta}(\tilde{x}_j^{(i)} | x_{1:j-1}) - \tilde{\mu}_j^2\right)\right]\\
&= \sum_{j=1}^n \left( \sum_{i=1}^s q_{\zeta}(\tilde{x}_j^{(i)} | x_{1:j-1})\cdot \log^2 p_{\theta}(\tilde{x}_j^{(i)} | x_{1:j-1}) - \tilde{\mu}_j^2\right),
\end{align*}
where $\tilde{x}_j^{(i)} $ denotes the $i$-th generated sample for the $j_{th}$ token of the text $x$, $q_{\zeta}(\tilde{x}_j^{(i)} | x_{1:j-1})$ is the probability of this sampled token given by the sampling model according to the probability distribution of all possible tokens at position $j$, conditioned on the first $j-1$ tokens of $x$. Besides, $p_{\theta}(\tilde{x}_j^{(i)} | x_{1:j-1})$ is the conditional probability of $\tilde{x}_j^{(i)}$ evaluated by the scoring model, $s$ represents the total number of possible tokens at each position, corresponding to the size of the entire vocabulary used by the sampling model. We can use the same value at each position in the formula, as the vocabulary size remains consistent across positions. The sample mean and the variance can be considered in practice
\begin{align*}
    \tilde{\mu} = \frac{1}{m} \sum_{i=1}^m \sum_{j=1}^n \log p_{\theta}(\tilde{x}_j^{(i)} | x_{1:j-1}), \quad \tilde{\sigma}^2=\frac{1}{m} \sum_{j=1}^n\sum_{i=1}^m \left( \log p_{\theta}(\tilde{x}_j^{(i)} | x_{1:j-1})-\tilde{\mu}_j^2\right),
\end{align*}
where $\tilde{\mu}_j=\frac{1}{m} \sum_{i=1}^m \log p_{\theta}(\tilde{x}_j^{(i)} | x_{1:j-1})$. In this case, the sampling model is just used to get samples $\tilde{x}$. By sampling a substantial number of texts ($m=10,000$), we can effectively map out the distribution of their $\log p_{\theta}(\tilde{x}_j | x_{1:j-1})$ values according to \cite{Bao2023}.

\textbf{NPR.} The definition of Normalized Log-Rank Perturbation (NPR) involves the perturbation operation of DetectGPT~\citep{su2023detectllm}. The scoring function of NPR is 
\begin{align*}
    \phi(x) = \frac{\frac{1}{m} \sum_{i=1}^m \log r_{\theta}(\tilde{x}^{(i)})}{\log r_{\theta}(x)},
\end{align*}
where $r_{\theta}(x)$ represents the rank of the original text evaluated by the scoring model, $m$ perturbed samples $\tilde{x}^{(i)}, i\in[1,2,\cdots,m]$ are generated based on $x$.  The $\log r_{\theta}(x)$ is calculated as $\frac{1}{n}\sum_{j=1}^n \log r_{\theta}(x_j | x_{1:j-1})$, where $n$ denotes the number of tokens of $x$. Similarly, $\log r_{\theta}(\tilde{x}^{(i)})=\frac{1}{\tilde{n}^{(i)}}\sum_{j=1}^{\tilde{n}^{(i)}} \log r_{\theta}(\tilde{x}_j^{(i)} | \tilde{x}_{1:j-1}^{(i)})$, where $\tilde{n}^{(i)}$ is the number of tokens of perturbed sample $\tilde{x}^{(i)}$ generated by the perturbation model $q_\zeta$.

\textbf{LRR.} The score function of Log-Likelihood Log-Rank Ratio (LRR)~\citep{su2023detectllm} consider both logarithmic conditional probability and the logarithmic conditional rank evaluated by the scoring model for text 
\begin{align*}
    \phi(x) = \left| \frac{\frac{1}{n} \sum_{j=1}^n \log p_{\theta}(x_j | x_{1:j-1})}{\frac{1}{n} \sum_{j=1}^n \log r_{\theta}(x_j | x_{1:j-1})} \right| = - \frac{\sum_{j=1}^n \log p_{\theta}(x_j | x_{1:j-1})}{\sum_{j=1}^n \log r_{\theta}(x_j | x_{1:j-1})},
\end{align*}
where $r_{\theta}(x_j | x_{1:j-1})\geq 1$ is the rank of $x_i$, conditioned on its previous $j-1$ tokens. We suppose that the total number of tokens of $x$ is $n$. 

\textbf{Likelihood.} The Likelihood~\citep{solaiman2019release, hashimoto2019unifying, gehrmann2019gltr} for a text $x$ which has $n$ tokens can be computed by averaging the log probabilities of each token conditioned on the previous tokens in the text given its preceding context evaluated by the scoring model:
\begin{align*}
    \phi(x)=\frac{1}{n}\sum_{j=1}^n \log p_{\theta}(x_j | x_{1:j-1}).
\end{align*}

\textbf{LogRank.} The LogRank~\citep{gehrmann2019gltr}, is defined by firstly using the scoring model to determine the rank of each token's probability (with respect to all possible tokens at that position) and then taking the average of the logarithm of these ranks:
\begin{align*}
    \phi(x)=\frac{1}{n}\sum_{j=1}^n \log r_{\theta}(x_j | x_{1:j-1}).
\end{align*}

\textbf{Entropy.} Entropy measures the uncertainty of the predictive distribution for each token~\citep{gehrmann2019gltr}. The score function is defined as:
\begin{align*}
    \phi(x)=-\frac{1}{s}\sum_{j=1}^n\sum_{i=1}^s p_{\theta}(x_j^{(i)} | x_{1:j-1})\cdot\log p_{\theta}(x_j^{(i)} | x_{1:j-1}),
\end{align*}
where $p_{\theta}(x_j^{(i)} | x_{1:j-1})$ denotes the probability of each possible token $x^{(i)}$ at position $j$ evaluated by the scoring model, given the preceding context 
$x_{1:j-1}$. The inner sum computes the entropy for each token's position by summing over all $s$ possible tokens.

\textbf{DNA-GPT.} The score function of DNA-GPT is calculated by WScore, which compares the differences between the original and new remaining parts through probability divergence~\citep{yang2023dnagpt}. Given the truncated context $z$ based on text $x$ and a series of texts sampled by scoring model $p_\theta$ based on $z$, denoted as $\{\tilde{x}^{(1)}, \tilde{x}^{(2)}, \ldots, \tilde{x}^{(m)}\}$. WScore is defined as:
\[
\phi(x) = {\log p_\theta(x)}-\frac{1}{m} \sum_{i=1}^m {\log p_\theta(\tilde{x}^{(i)})}, 
\]
where we need to note that $x\sim q_s(\cdot)$, $\tilde{x}^{(i)}\sim p_\theta(\cdot|z)$ for $i=\{1,2,\cdots,m\}$. This formula calculates the score of $x$ by comparing the logarithmic probability differences between the text $x$ and the averaged results of $m$ samples generated by the scoring model under the context $z$ which is actually the truncated $x$. Here, we can write the $\log p_{\theta}(x)$ as $\sum_{j=1}^n \log p_{\theta}(x_j | x_{1:j-1})$, which is the summation of the logarithm conditional probability of each token conditioned on the previous tokens, assuming the total number of non-padding tokens for $x$ is $n$. The calculation is similar for $\log p_\theta(\tilde{x}^{(i)})$, we just need to replace $x$ in $\log p_\theta(x)$ by $\tilde{x}^{(i)}$.

\textbf{RoBERTa-base/large.} The supervised classifiers RoBERTa-base and RoBERTa-large~\citep{liu2019RoBERTa} use the softmax function to compute a score for text $x$. Two classes are considered: ``class = 0" represents text generated by a GPT-2 model, while ``class = 1" represents text not generated by a GPT-2 model. The score for text $x$ is defined as the probability that it is classified into class $0$ by the classifier, computed as:
\begin{align*}
    \phi(\text{class}=0 | x) = \frac{e^{z_0}}{ e^{z_0}+e^{z_1}},
\end{align*}
where $z_j$ is the logits of $x$ corresponding to class $j\in\{0, 1\}$, provided by the output of the pre-trained model.

\section{Proof of Regret Bound of ONS}
\label{sec:regret_bound}

\begin{algorithm}
\caption{Online Newton Step~\citep{Hazan2016OnlineConvexOptimization}}
\label{alg:alg1}
\begin{algorithmic}[1]
\Require parameter $\gamma$; \textbf{Init:} $\theta_1\gets0$, $a_0\gets1$.
\For{$t = 1$ to $T$}
    \State Receive loss function $\ell_t: \mathcal{K}_{t}\rightarrow\mathbb{R}$.
    \State Compute $z_t=\nabla \ell_t(\theta_{t})$, $a_{t}=a_{t-1}+z_t^2$.
    \State Update via Online Newton Step:
    \begin{equation*}
        \beta_{t+1} = \theta_{t} - \frac{1}{\gamma}\cdot \frac{z_{t}}{a_{t}}.
    \end{equation*}
    \State Get a hint of $d_{t+1}$ to update the decision space $\K_{t+1}\gets[-\frac{1}{2d_{t+1}},\frac{1}{2d_{t+1}}]$.
    \State Project $\beta_{t+1}$ to $\mathcal{K}_{t+1}$:
    \[
    \theta_{t+1} = \text{proj}_{\mathcal{K}_{t+1}}({\beta}_{t+1}) = \arg\min_{\theta \in \mathcal{K}_{t+1}} (\beta_{t+1} - \theta )^2.
    \]
\EndFor
\end{algorithmic}
\end{algorithm}

 Under $H_0$, i.e., $\mu_x=\mu_y$, the wealth is a P-supermartingale. The value of $g_t$ are constrained to the interval $[-1,1]$ in previous works~\citep{orabona2016coin, Cutkosky2018, Chugg2023} with the scores $\phi({x}_t)$ and $\phi({y}_t)$ each range from $[0,1]$. To ensure that wealth $W_t$ remains nonnegative and to establish the regret bound by ONS, $\theta_t$ is always selected within $[-1/2, 1/2]$. In our setting, however, the actual range of score difference between two texts, denoted as $g_t=\phi({x}_t)-\phi({y}_t)$, is typically unknown beforehand. If we assume that the ranges for both $\phi(x_t)$ and $\phi(y_t)$ are $[m_t, n_t]$, then the range of their difference $g_t := \phi(y_t) - \phi(x_t)$ is symmetric about zero, and is given by the interval $\left[-(n_t - m_t),; n_t - m_t\right]$.  We denote an upper bound $d_t \geq |n_t-m_t|$ and express the range as $g_t\in[-d_t, d_t]$, where $d_t\geq 0$. Then, choosing $\theta_t$ within $[-1/2d_t, 1/2d_t]$ can guarantee that the wealth is a nonnegative P-supermartingale. If we consider the condition of either $W_t\geq 1/\alpha \text{ or } W_T \geq Z/\alpha$ for any stopping time $T$ as the indication to ``reject $H_0$" and apply the randomized Ville's inequality~\citep{RamdasManole2023}, the type-I error can be controlled below the significance level $\alpha$ when $H_0$ is true.

Under $H_1$, i.e.,$\mu_x\neq\mu_y$, our goal is to select a proper $\theta_t$ at each round $t$ that can speed up the wealth accumulation. It allows us to declare the detection of an LLM once the wealth reaches the specified threshold $1/\alpha$. We can choose $\theta_t$ recursively following Algorithm \ref{alg:alg1}. This algorithm can guarantee the regret upper bound for exp-concave loss. Following the proof of Theorem 4.6 in \cite{Hazan2016OnlineConvexOptimization}, we can derive the bound on the regret. The regret of choosing $\theta_t\in \mathcal{K}_{t}$ after $T$ time steps by Algorithm \ref{alg:alg1} is defined as 
\begin{equation*}
    \mathrm{Regret}_T(\mathrm{ONS}):=\sum_{t=1}^T \ell_t(\theta_t)-\sum_{t=1}^T \ell_t(\theta^*),
\end{equation*}
where the loss function for each $t$ is $\ell_t: \mathcal{K}_{t}\rightarrow\mathbb{R}$, and the best decision in hindsight is defined as $\theta^*\in \arg \min_{\theta \in \mathcal{K_*}} \sum_{t=1}^T \ell_t(\theta)$, where $\mathcal{K_*}=\bigcap_{t=1}^T\mathcal{K}_t$.

\begin{lemma} Let $\ell_t: \mathcal{K}_t \to \mathbb{R}$ be an $\alpha$-exp-concave function for each $t$. Let $D_t$ represent the diameter of $\mathcal{K}_t$, and $G_t$ be a bound on the (sub)gradients of $\ell_t$. Algorithm \ref{alg:alg1}, with parameter $\gamma = \frac{1}{2} \min \left\{ \frac{1}{G_t D_t}, \alpha \right\}$ and $a_0=1/\gamma^2 D_1^2$, guarantees:
   \begin{equation}
   \mathrm{Regret}_T(\mathrm{ONS}) \leq  \frac{1}{2\gamma} \left( \sum_{t=1}^T \frac{z_t^2}{a_t}+ 1 \right),
\label{eq:regret-ons}
\end{equation}
where $G_tD_t$ is a constant for all $t$, and $z_t$, $a_t$, and $\mathcal{K}_t$ are defined in Algorithm \ref{alg:alg1}.
\label{lemma:lemma_ons_bound}
\end{lemma}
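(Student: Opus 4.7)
The plan is to adapt the standard Online Newton Step regret analysis (as in Chapter 4 of Hazan's book) to this one-dimensional setting with the time-varying interval decision sets $\mathcal{K}_t = [-1/(2d_t),\, 1/(2d_t)]$. The first ingredient is the textbook consequence of $\alpha$-exp-concavity of $\ell_t$: whenever $\gamma \le \tfrac{1}{2}\min\{1/(G_t D_t),\, \alpha\}$, for all $\theta \in \mathcal{K}_t$ one has the quadratic lower bound
\begin{equation*}
\ell_t(\theta) \;\ge\; \ell_t(\theta_t) \;+\; z_t(\theta - \theta_t) \;+\; \tfrac{\gamma}{2}\, z_t^2\, (\theta - \theta_t)^2.
\end{equation*}
Applying this with $\theta = \theta^*$ (valid since $\theta^* \in \mathcal{K}_* \subseteq \mathcal{K}_t$) and rearranging gives the per-round regret bound $\ell_t(\theta_t) - \ell_t(\theta^*) \le z_t(\theta_t - \theta^*) - \tfrac{\gamma}{2} z_t^2 (\theta_t - \theta^*)^2$.

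Next I would expand the linear term $z_t(\theta_t - \theta^*)$ via the pre-projection update $\beta_{t+1} = \theta_t - z_t/(\gamma a_t)$. Squaring and isolating yields
\begin{equation*}
z_t(\theta_t - \theta^*) \;=\; \tfrac{\gamma a_t}{2}\bigl[(\theta_t - \theta^*)^2 - (\beta_{t+1} - \theta^*)^2\bigr] \;+\; \tfrac{z_t^2}{2\gamma a_t}.
\end{equation*}
Because $\theta^* \in \mathcal{K}_{t+1}$, Euclidean projection onto the convex interval $\mathcal{K}_{t+1}$ is nonexpansive toward $\theta^*$, so $(\theta_{t+1} - \theta^*)^2 \le (\beta_{t+1} - \theta^*)^2$. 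Substituting this into the per-round regret bound and using $a_t - z_t^2 = a_{t-1}$ to merge the quadratic-in-$\theta_t$ terms gives
\begin{equation*}
\ell_t(\theta_t) - \ell_t(\theta^*) \;\le\; \tfrac{\gamma a_{t-1}}{2}(\theta_t - \theta^*)^2 \;-\; \tfrac{\gamma a_t}{2}(\theta_{t+1} - \theta^*)^2 \;+\; \tfrac{z_t^2}{2\gamma a_t}.
\end{equation*}
Summing over $t = 1, \ldots, T$ collapses the first two families into the boundary contribution $\tfrac{\gamma a_0}{2}(\theta_1 - \theta^*)^2 - \tfrac{\gamma a_T}{2}(\theta_{T+1} - \theta^*)^2 \le \tfrac{\gamma a_0 D_1^2}{2}$, plus the residual $\sum_t z_t^2/(2\gamma a_t)$. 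With the prescribed initialization $a_0 = 1/(\gamma^2 D_1^2)$ the boundary term equals exactly $1/(2\gamma)$, and factoring out $1/(2\gamma)$ yields the claimed $\tfrac{1}{2\gamma}\bigl(\sum_t z_t^2/a_t + 1\bigr)$.

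The main subtlety, and what I expect to be the only real source of friction relative to the textbook proof, is the interaction between the time-varying decision sets and the two places where membership of $\theta^*$ is used. The exp-concave inequality must be invoked on a set containing both $\theta_t$ and $\theta^*$ on which $\ell_t$ is $\alpha$-exp-concave, and the projection step is nonexpansive only if $\theta^* \in \mathcal{K}_{t+1}$; both conditions follow immediately from defining $\theta^*$ as a minimizer over $\mathcal{K}_* = \bigcap_t \mathcal{K}_t$. A minor bookkeeping point is that the single scalar $\gamma$ must satisfy the exp-concavity prerequisite uniformly in $t$, which is precisely why the statement assumes $G_t D_t$ is constant; otherwise one would take $\gamma \le \tfrac{1}{2}\min_t\{1/(G_t D_t),\, \alpha\}$. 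Once these domain matters are cleared, what remains is the routine telescoping computation sketched above.
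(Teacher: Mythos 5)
Your proposal is correct and follows essentially the same route as the paper's proof: the exp-concavity quadratic lower bound (the paper's Lemma~\ref{lemma:lemma1}), the expansion of $z_t(\theta_t-\theta^*)$ via the pre-projection iterate, nonexpansiveness of the projection onto $\mathcal{K}_{t+1}$ (valid since $\theta^*\in\mathcal{K}_*\subseteq\mathcal{K}_{t+1}$), and the merge via $a_t - z_t^2 = a_{t-1}$ followed by telescoping with $a_0 = 1/(\gamma^2 D_1^2)$. The only difference is cosmetic — you telescope per round while the paper sums first and regroups — and you correctly flag the same two domain subtleties (uniform $\gamma$ from constant $G_tD_t$, and $\theta^*$ living in the intersection of the decision sets) that the paper handles.
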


\textbf{Remark 3.} For any $\alpha$-exp-concave function $\ell_t(\cdot)$, if we let a positive number $\gamma$ be $\frac{1}{2} \min \left\{ \frac{1}{G_t D_t}, \alpha \right\}$, and initialize $a_0=1/\gamma^2 D_1^2$ at the beginning, the above inequality (\ref{eq:regret-ons}) will always hold for choosing the fraction $\theta_t$ by Algorithm \ref{alg:alg1}. That is, the accumulated regret after $T$ time steps, defined as the difference between the cumulative loss from adaptively choosing $\theta_t$ by this ONS algorithm and the minimal cumulative loss achievable by the optimal decision $\theta^*$ at each time step, is bounded by the right-hand side of (\ref{eq:regret-ons}). To prove this lemma, we need to first prove Lemma \ref{lemma:lemma1}.

\begin{lemma} \textnormal{(Lemma 4.3 in \citet{Hazan2016OnlineConvexOptimization})}
    Let $f : \mathcal{K} \to \mathbb{R}$ be an $\alpha$-exp-concave function, and $D, G$ denote the diameter of $\mathcal{K}$ and a bound on the (sub)gradients of $f$ respectively. The following holds for all $\gamma = \frac{1}{2} \min \left\{ \frac{1}{GD}, \alpha \right\}$ and all $\mathbf{\theta}, \mathbf{\beta} \in \mathcal{K}$:
\begin{equation}
    f(\mathbf{\theta}) \geq f(\mathbf{\beta}) + \nabla f(\mathbf{\beta})^\top (\mathbf{\theta} - \mathbf{\beta}) + \frac{\gamma}{2} (\mathbf{\theta} - \mathbf{\beta})^\top \nabla f(\mathbf{\beta})\nabla f(\mathbf{\beta})^\top(\mathbf{\theta} - \mathbf{\beta}).
    \label{eq:lemma1}
\end{equation}
\label{lemma:lemma1}
\end{lemma}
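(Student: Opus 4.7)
The plan is to follow the classical exp-concavity lifting trick. I would introduce the auxiliary function $h(\theta) := \exp(-2\gamma f(\theta))$ on $\mathcal{K}$. Since $f$ is $\alpha$-exp-concave and the hypothesis $\gamma = \tfrac{1}{2}\min\{1/(GD),\alpha\}$ guarantees $2\gamma \leq \alpha$, the function $h$ is concave. The first-order characterization of concavity applied at $\beta$ yields $h(\theta) \leq h(\beta) + \nabla h(\beta)^\top(\theta-\beta)$; computing $\nabla h(\beta) = -2\gamma\exp(-2\gamma f(\beta))\nabla f(\beta)$ and dividing through by the strictly positive quantity $\exp(-2\gamma f(\beta))$, I obtain
\begin{equation*}
\exp\bigl(-2\gamma[f(\theta) - f(\beta)]\bigr) \;\leq\; 1 - 2\gamma\,\nabla f(\beta)^\top(\theta - \beta).
\end{equation*}

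Next, I would take logarithms, solve for $f(\theta) - f(\beta)$, and apply a scalar inequality. Setting $w := 2\gamma\,\nabla f(\beta)^\top(\theta - \beta)$, Cauchy--Schwarz together with the bounds $\|\nabla f(\beta)\| \leq G$ and $\|\theta - \beta\| \leq D$ give $|w| \leq 2\gamma GD \leq 1$ thanks to the choice $\gamma \leq 1/(2GD)$; in particular, $1 - w > 0$ so the logarithm is well-defined, and rearranging produces
\begin{equation*}
f(\theta) - f(\beta) \;\geq\; -\tfrac{1}{2\gamma}\ln(1 - w).
\end{equation*}
The final step is to invoke the scalar bound $-\ln(1-w) \geq w + w^2/4$ valid on $w \in [-1,1)$ and substitute $w^2 = 4\gamma^2\bigl(\nabla f(\beta)^\top(\theta-\beta)\bigr)^2$ back; this produces exactly the claimed inequality \myeqref{eq:lemma1}, because the rank-one quadratic form $(\theta-\beta)^\top \nabla f(\beta)\nabla f(\beta)^\top(\theta-\beta)$ equals $\bigl(\nabla f(\beta)^\top(\theta-\beta)\bigr)^2$.

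The only nontrivial obstacle is the scalar inequality $-\ln(1-w) \geq w + w^2/4$ on $[-1,1)$. For $w \in [0,1)$ it is immediate from the Taylor series $-\ln(1-w) = \sum_{k\geq 1} w^k/k$, since every summand beyond $w$ is nonnegative and $w^2/2 \geq w^2/4$. For $w \in [-1,0)$, substituting $y = -w \in (0,1]$ reduces the claim to $\ln(1+y) \leq y - y^2/4$, which I would establish analytically by setting $g(y) := y - y^2/4 - \ln(1+y)$: one has $g(0) = 0$ and $g'(y) = 1 - y/2 - 1/(1+y)$ vanishes at $y = 0$ and $y = 1$ while being unimodal on $[0,1]$ (since $g''(y) = -1/2 + 1/(1+y)^2$ changes sign only once, at $y = \sqrt{2}-1$), so $g' \geq 0$ and hence $g \geq 0$ throughout $[0,1]$. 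Once this scalar lemma is in hand, the rest of the argument is bookkeeping, and the twin conditions $2\gamma \leq \alpha$ (for concavity of $h$) and $2\gamma GD \leq 1$ (for the scalar inequality's range) together are precisely what the choice of $\gamma$ encodes.
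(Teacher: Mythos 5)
Your proof is correct and follows essentially the same route as the paper's: lift $f$ to the concave auxiliary function $h(\theta)=\exp(-2\gamma f(\theta))$, apply the first-order concavity inequality at $\beta$, bound $|2\gamma\,\nabla f(\beta)^\top(\theta-\beta)|\leq 2\gamma GD\leq 1$, and finish with the scalar bound $-\ln(1-w)\geq w+\tfrac{1}{4}w^2$ on $[-1,1)$. The only difference is that you supply a careful verification of that scalar inequality (split into $w\geq 0$ via the Taylor series and $w<0$ via a monotonicity argument), which the paper simply attributes to a ``Taylor approximation''; your verification is correct.
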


\textbf{Remark 4.} For any $\alpha$-exp-concave function $f(\cdot)$, if we let a positive number $\gamma = \frac{1}{2} \min \left\{ \frac{1}{GD}, \alpha \right\}$, the above equation (\ref{eq:lemma1}) will hold for any two points within the domain $\mathcal{K}$ of $f(\cdot)$. This inequality remains valid even if $\gamma>0$ is set to a smaller value than this minimum, although doing so will result in a looser regret bound. At time $t$ in Algorithm \ref{alg:alg1}, the diameter of the loss function $\ell_t: \mathcal{K}_t\rightarrow\mathbb{R}$ is $D=1/d_t$, since $\mathcal{K}_t=[-1/2d_t, 1/2d_t]$. Additionally, the bound on the gradient of $\ell_t(\theta)$ is $G_t=\max_{\theta\in\mathcal{K}_t} \nabla \ell_t(\theta)$. If $\ell_t(\theta)$ is $\alpha$-exp-concave function and $\gamma=1/2\min\{d_t/G_t, \alpha\}$, then equation (\ref{eq:lemma1}) will hold for any $\theta,\beta\in[-1/2d_t, 1/2d_t]$. 

\textbf{Proof of Lemma \ref{lemma:lemma1}.} The composition of a concave and non-decreasing function with another concave function remains concave. Given that for all $\gamma = \frac{1}{2} \min \left\{ \frac{1}{GD}, \alpha \right\}$, we have $2\gamma \leq \alpha$, the function $g(\mathbf{\theta}) = \mathbf{\theta}^{2\gamma/\alpha}$ composed with $f(\mathbf{\theta}) = \exp(-\alpha f(\mathbf{\theta}))$ is concave. Hence, the function $h(\mathbf{\theta})=\exp(-2\gamma f(\mathbf{\theta}))$ is also concave. Then by the definition of concavity,
\begin{equation}
    h(\mathbf{\theta}) \leq h(\mathbf{\beta}) + \nabla h(\mathbf{\beta})^\top (\mathbf{\theta} - \mathbf{\beta}).
    \label{eq:concavity}
\end{equation}

We plug $\nabla h(\mathbf{\beta}) = -2\gamma \exp(-2\gamma f(\mathbf{\beta})) \nabla f(\mathbf{\beta})$ into equation (\ref{eq:concavity}),

\begin{equation*}
    \exp(-2\gamma f(\theta) \leq \exp(-2\gamma f(\mathbf{\beta})) [1 - 2\gamma \nabla f(\mathbf{\beta})^\top (\mathbf{\theta} - \mathbf{\beta})].
\end{equation*}

Thus,
\begin{equation*}
    f(\mathbf{\theta}) \geq f(\mathbf{\beta}) - \frac{1}{2\gamma} \ln \left( 1 - 2\gamma \nabla f(\mathbf{\beta})^\top (\mathbf{\theta} - \mathbf{\beta}) \right).
\end{equation*}

Since $D, G$ are previously denoted as the diameter of $\mathcal{K}$ and a bound on the (sub)gradients of $f$ respectively, which means that $D\geq \lvert\mathbf{\theta} - \mathbf{\beta}\rvert$, $G\geq \nabla f(\mathbf{\beta})$. Therefore, we have $|2\gamma \nabla f(\mathbf{\beta}) (\mathbf{\theta} - \mathbf{\beta})| \leq 2\gamma GD \leq 1\Rightarrow -1\leq 2\gamma \nabla f(\mathbf{\beta}) (\mathbf{\theta} - \mathbf{\beta}) \leq 1$. According to the Taylor approximation, we know that $-\ln(1 - a) \geq a + \frac{1}{4} a^2$ holds for $a \geq -1$. The lemma is derived by considering $a = 2\gamma \nabla f(\mathbf{\beta}) (\mathbf{\theta} - \mathbf{\beta})$. 

Since our problem is one-dimensional, then we can use Lemma \ref{lemma:lemma1} to get the regret bound. Here shows the proof of Lemma \ref{lemma:lemma_ons_bound}.\\

\textbf{Proof of Lemma \ref{lemma:lemma_ons_bound}.} The best decision in hindsight is $\theta^* \in \arg \min_{\theta \in \mathcal{K}_*} \sum_{t=1}^T \ell_t(\theta)$, where $\mathcal{K}_*=\bigcap_{t=1}^T\mathcal{K}_t$. By Lemma \ref{lemma:lemma1}, we have the inequality (\ref{eq:eq12}) for $\gamma_t = \frac{1}{2} \min \left\{ \frac{1}{G_tD_t}, \alpha \right\}$, which is
\begin{equation}
    \underbrace{\ell_t(\theta_t) - \ell_t(\theta^*)}_{:=\mathrm{Regret}_t(\mathrm{ONS})} \leq \underbrace{z_t (\theta_t - \theta^*) - \frac{\gamma_t}{2} (\theta_t - \theta^*)^2 z_t^2}_{:=R_t},
    \label{eq:eq12}
\end{equation}
where the right hand side of the above inequality is defined as $R_t$, the left hand side is the regret of selecting $\theta_t$ via ONS at time $t$.

We sum both sides of the inequality (\ref{eq:eq12}) from $t=1$ to $T$, then we get
\begin{equation}
     \underbrace{\sum_{t=1}^T \ell_t(\theta_t) - \sum_{t=1}^T \ell_t(\theta^*)}_{:=\mathrm{Regret}_T(\mathrm{ONS})} \leq \sum_{t=1}^T R_t .
    \label{eq:eq13}
\end{equation}

We recall that $D_t$ is defined as the diameter of $\mathcal{K}_t$, i.e., $D_t=\max_{a,b\in \mathcal{K}_t}|a-b|$, and $G_t$ is defined as a bound on the gradients of the loss function $\ell_t(\theta)=-\ln(1-g_t\theta)$ at time $t$, i.e., $G_t=\max_{\theta_t\in\mathcal{K}_t}\lvert\frac{d}{d\theta_t}\ell_t(\theta_t)\rvert$. In our setting,  $\mathcal{K}_t$ is $[-{1}/{2d_t}, {1}/{2d_t}]$, thus $D_t={1}/{d_t}$. The gradient $z_t=\nabla \ell_t(\theta_t)={g_t}/{(1-g_t\theta_t)}$, so we have $G_t = \max_{\theta_t \in \left[-\frac{1}{2d_t}, \frac{1}{2d_t} \right]} \frac{|g_t|}{1 - g_t \theta_t}
$. Since $g_t=\phi({x}_t)-\phi({y}_t)\in[-d_t,d_t]$, and the expression $\frac{|g_t|}{1 - g_t \theta_t}$ increases with the product $g_t \theta_t\in[-1/2,1/2]$, the maximum is attained when $g_t = d_t$ and $\theta_t = \frac{1}{2d_t}$. Thus, we obtain $G_t={d_t}/{(1-d_t\cdot \frac{1}{2d_t})}=2d_t$. Above all, we get $G_tD_t=2d_t \cdot \frac{1}{d_t}=2$ for each $t$, and $\alpha=1$. The value of $\gamma_t=\frac{1}{2}\min\{{1}/{G_tD_t},\alpha\}$ remains a positive constant for all $t$. Therefore, we simply use $\gamma$ in the following proof, given that $\gamma_t$ takes the same constant value for all $t$.

According to the update rule of the algorithm: $\theta_{t+1} = \text{proj}_{\mathcal{K}_{t+1}}(\beta_{t+1})$, and the definition: $\beta_{t+1} = \theta_t - \frac{1}{\gamma}\cdot \frac{z_t}{a_t}$, we get:
\begin{equation}
    \beta_{t+1} - \theta^* = \theta_t - \theta^* - \frac{1}{\gamma} \frac{z_t}{a_t},
    \label{eq:ap1}
\end{equation}
and
\begin{equation}
    a_{t}(\beta_{t+1} - \theta^*) = a_t(\theta_t - \theta^*) - \frac{1}{\gamma} z_t.
    \label{eq:ap2}
\end{equation}

We multiply (\ref{eq:ap1}) by (\ref{eq:ap2}) to get
\begin{equation}
(\beta_{t+1} - \theta^*)^2 a_t = (\theta_t - \theta^*)^2 a_t - \frac{2}{\gamma} z_t (\theta_t - \theta^*) + \frac{1}{\gamma^2} \frac{z_t^2}{a_t}.
\label{eq:ap3}
\end{equation}

Since $\theta_{t+1}$ is the projection of $\beta_{t+1}$ to $\mathcal{K}_{t+1}$, and $\theta_*\in\mathcal{K}_{t+1}$,
\begin{equation}
    (\beta_{t+1} - \theta^*)^2  \geq (\theta_{t+1} - \theta^*)^2.
    \label{eq:ap4}
\end{equation}

Plugging (\ref{eq:ap4}) into (\ref{eq:ap3}) and rearranging terms yields
\begin{equation*}
    z_t (\theta_t - \theta^*) \leq \frac{1}{2\gamma} \frac{z_t^2}{a_t} + \frac{\gamma}{2} (\theta_t - \theta^*)^2 a_t  - \frac{\gamma}{2} (\theta_{t+1} - \theta^*)^2 a_t.
\end{equation*}

Summing up over $t = 1$ to $T$,
\begin{align}
    &\sum_{t=1}^T z_t (\theta_t - \theta^*)\notag\\
    &\leq \sum_{t=1}^T\left(\frac{1}{2\gamma} \frac{z_t^2}{a_t} + \frac{\gamma}{2} (\theta_t - \theta^*)^2 a_t  - \frac{\gamma}{2} (\theta_{t+1} - \theta^*)^2 a_t\right)\notag\\
    &=\sum_{t=1}^T\frac{1}{2\gamma} \frac{z_t^2}{a_t}+\frac{\gamma}{2} (\theta_1 - \theta^*)^2 a_1+\sum_{t=2}^T\frac{\gamma}{2} (\theta_t - \theta^*)^2 a_t-\sum_{t=1}^{T-1}\frac{\gamma}{2} (\theta_{t+1} - \theta^*)^2 a_{t}-\frac{\gamma}{2} (\theta_{T+1} - \theta^*)^2 a_{T}\notag\\
    &=\sum_{t=1}^T\frac{1}{2\gamma} \frac{z_t^2}{a_t}+\frac{\gamma}{2} (\theta_1 - \theta^*)^2 a_1+\sum_{t=2}^T\frac{\gamma}{2} (\theta_t - \theta^*)^2 (a_t-a_{t-1})-\frac{\gamma}{2} (\theta_{T+1} - \theta^*)^2 a_{T} \notag\\
    &\leq\sum_{t=1}^T\frac{1}{2\gamma} \frac{z_t^2}{a_t}+\frac{\gamma}{2} (\theta_1 - \theta^*)^2 a_1+\sum_{t=2}^T\frac{\gamma}{2} (\theta_t - \theta^*)^2 (a_t-a_{t-1})\qquad(\text{since } \gamma>0, a_{T}>0)\notag\\
    &\leq\sum_{t=1}^T\frac{1}{2\gamma} \frac{z_t^2}{a_t}+\frac{\gamma}{2} (\theta_1 - \theta^*)^2 (a_1-z_1^2)+\sum_{t=1}^T\frac{\gamma}{2} (\theta_t - \theta^*)^2 z_t^2\qquad(\text{by } a_t-a_{t-1}=z_t^2).
    \label{eq:sum}
\end{align}

According to the definition: $R_t:=z_t (\theta_t - \theta^*) - \frac{\gamma}{2} (\theta_t - \theta^*)^2 z_t^2$. We move the last term of the right hand side in (\ref{eq:sum}) to the left hand side and get
\begin{equation*}
    \sum_{t=1}^T R_t \leq \frac{1}{2\gamma} \sum_{t=1}^T \frac{z_t^2}{a_t} + \frac{\gamma}{2} (\theta_1 - \theta^*)^2 (a_1-z_1^2) .
\end{equation*}

According to our algorithm, $a_1 - z_1^2 = a_0$. Since $\mathcal{K_*}=\bigcap_{t=1}^T\mathcal{K}_t\subseteq \mathcal{K}_1$, the diameter $(\theta_1 - \theta^*)^2 \leq D_1^2$. We recall the inequality (\ref{eq:eq13}), then 
\begin{equation}
    \text{Regret}_T(\text{ONS}) \leq \sum_{t=1}^T R_t \leq \frac{1}{2\gamma} \sum_{t=1}^T \frac{z_t^2}{a_t} + \frac{\gamma}{2} (\theta_1 - \theta^*)^2 a_0\leq \frac{1}{2\gamma} \sum_{t=1}^T \frac{z_t^2}{a_t} + \frac{\gamma}{2} D_1^2 a_0.
    \label{eq:eq20}
\end{equation}

If we let $a_0 = {1}/{\gamma^2 D_1^2}$, it gives Lemma \ref{lemma:lemma1},
\begin{equation*}
    \text{Regret}_T(\text{ONS}) \leq \frac{1}{2\gamma} \sum_{t=1}^T \frac{z_t^2}{a_t} + \frac{\gamma}{2} D_1^2 a_0=\frac{1}{2\gamma} \sum_{t=1}^T \frac{z_t^2}{a_t} + \frac{\gamma}{2} D_1^2 \cdot \frac{1}{\gamma^2 D_1^2}= \frac{1}{2\gamma}\left(\sum_{t=1}^T \frac{z_t^2}{a_t} +1\right).
\end{equation*}

To get the upper bound of regret for our algorithm, we first show that the term $\sum_{t=1}^T ({z_t^2}/{a_t})$ is upper bounded by a telescoping sum. 
For real numbers $a, b \in \mathbb{R}_+$, the first order Taylor expansion of the natural logarithm of $b$ at $a$ implies $\frac{a - b}{a} \leq \ln \frac{a}{b}$, thus
\begin{equation*}
    \sum_{t=1}^T\frac{z_t^2}{a_t} = \sum_{t=1}^T\frac{1}{a_t} \cdot (a_t - a_{t-1}) \leq \sum_{t=1}^T \ln\left(\frac{a_t}{a_{t-1}}\right) = \ln\left(\frac{a_T}{a_0}\right).
\end{equation*}

In our setting, $a_T = a_0+\sum_{t=1}^T z_t^2$, where $a_0=1$, $z_t=\frac{g_t}{1-g_t\theta_t}$. We recall the inequality (\ref{eq:eq20}), the upper bound of regret is
\begin{align}
    \text{Regret}_T(\text{ONS})& \leq \frac{1}{2\gamma}\cdot\sum_{t=1}^T \frac{z_t^2}{a_t} + \frac{\gamma}{2} (\theta_1-\theta^*)^2\cdot 1\notag\\
    &\leq  \frac{1}{2\gamma}\cdot \ln\left(\frac{a_T}{a_0}\right) + \frac{\gamma}{2} D_1^2\notag\\
    &= \frac{1}{2\gamma}\cdot\ln\left(1+\sum_{t=1}^T\frac{g_t^2}{(1-g_t\theta_t)^2}\right) + \frac{\gamma}{2} D_1^2\notag\\
    &\leq \frac{1}{2\gamma}\cdot\ln\left(1+4\sum_{t=1}^Tg_t^2\right) + \frac{\gamma}{2} D_1^2\notag\\
    &\leq \frac{1}{2\gamma}\cdot\ln\left(1+4d_*^2T\right) + \frac{\gamma}{2} D_1^2. \label{eq:regret_neglect}
\end{align}

Since that ${\gamma D_1^2}$ is a positive constant, $g_t\in[-d_t,d_t]$, $\theta_t\in[-{1}/{2d_t}, {1}/{2d_t}]$, it follows that $(1-g_t\theta_t)^2\in[\frac{1}{4}, \frac{9}{4}]$. We denote $d_*:= \max_{t \geq 1} |d_t|$, which indicates that $g_t^2\leq d_t^2\leq d_*^2$. Consequently, we obtain that $\text{Regret}_T(\text{ONS})=\mathcal{O}\left(\ln T^\frac{1}{2\gamma}\right)$. This conclusion will be used to show that the update of $\theta_t$ in the betting game is to play it against the exp-concave loss $\ell_t(\theta)=-\ln(1-g_t\theta)$ and to get the lower bound of the wealth.

The reason we can obtain the upper bound of regret is that, although the values of $G_t$ and $D_t$ individually unknown, their product is deterministic. Consequently, the value of $\gamma_t = \frac{1}{2} \min \left\{ \frac{1}{G_tD_t}, \alpha \right\}$ for all $t$ remains consistent. When we use Lemma \ref{lemma:lemma1} to establish the regret bound, as illustrated by equation (\ref{eq:sum}), the uniform $\gamma$ helps us simplify and combine terms to achieve the final result.

\section{Lower Bound on the Learner's Log-Wealth} 
\label{sec:wealth_bound}

\begin{lemma}\label{lemma:expected_wt}
    Assume an online learner receives a loss function $\ell_t(\theta):= -\ln (1 - g_t \theta)$
after committing a point $\theta_t \in \K_t$ in its decision space $\K_t$ at time $t$.
Denote $d_*:= \max_{t \geq 1} d_t$ with $d_t \geq  |g_t|$.
Then, if the online learner plays Online Newton Step (Algorithm~\ref{alg:alg1}), 
for any benchmark $u \in K_*:= \bigcap_{t=1}^T \, \mathcal{K}_t$,
the value of the wealth satisfies  
\begin{align*}
\ln W_T & = \ln W_T(u)  - \mathrm{Regret}_T(\mathrm{ONS})
\\ & \geq 
 \ln W_T(u)  - 
\left( \frac{1}{2\gamma}\cdot\ln\left(1+4d_*^2T\right) + \frac{\gamma}{2} (\theta_1 - u)^2 \right),
\end{align*}
\noindent where we define the wealth at time $T$ as $W_T(\theta):= \prod_{t=1}^T (1 - g_t \theta)$,
$\theta_1 \in \K_*$ is the initial point of ONS, and the inverse of the step size $\gamma$ 
satisfies $\gamma = \frac{1}{2} \min \{ \frac{d_t}{G_t} , 1 \} $,  with  $G_t:= \max_{\theta \in \mathcal{K}_t} |\nabla \ell_t(\theta)|$ denoting the upper bound of the gradient $\nabla \ell_t(\theta)$.
\end{lemma}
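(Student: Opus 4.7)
The plan is to derive the equality by reinterpreting $\ln W_T$ as a cumulative negative loss, and then to apply the ONS regret analysis already carried out in Appendix~\ref{sec:regret_bound}, but with the generic benchmark $u$ substituted for the in-hindsight minimizer $\theta^*$ and with the initialization $a_0 = 1$ that Algorithm~\ref{alg:alg2} actually uses.

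First I would unfold the definitions. Since $\ell_t(\theta) = -\ln(1 - g_t \theta)$, one has
\[
\ln W_T = \sum_{t=1}^T \ln(1 - g_t \theta_t) = -\sum_{t=1}^T \ell_t(\theta_t),
\qquad
\ln W_T(u) = -\sum_{t=1}^T \ell_t(u),
\]
so subtracting yields $\ln W_T - \ln W_T(u) = -\bigl(\sum_t \ell_t(\theta_t) - \sum_t \ell_t(u)\bigr) = -\mathrm{Regret}_T(\mathrm{ONS})$, where the regret is understood as being measured against the chosen benchmark $u$. This gives the equality part of the claim with no further work.

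For the inequality I would essentially rerun the proof of Lemma~\ref{lemma:lemma_ons_bound}, only replacing $\theta^*$ with $u$ throughout. The key per-round bound from Lemma~\ref{lemma:lemma1} (which applies because $\ell_t$ is $1$-exp-concave and because the choice $\gamma = \tfrac{1}{2}\min\{d_t/G_t, 1\}$ is valid for every $t$, since $G_t D_t = 2$ is constant) reads
\[
\ell_t(\theta_t) - \ell_t(u) \leq z_t(\theta_t - u) - \tfrac{\gamma}{2}(\theta_t - u)^2 z_t^2.
\]
Combining this with the ONS update $\beta_{t+1} = \theta_t - \gamma^{-1} z_t / a_t$ and the projection inequality $(\beta_{t+1} - u)^2 \geq (\theta_{t+1} - u)^2$, which is valid precisely because $u \in \K_* \subseteq \K_{t+1}$, and telescoping exactly as in the chain leading to Eq.~(\ref{eq:eq20})—but now keeping $a_0 = 1$ rather than specializing $a_0 = 1/(\gamma^2 D_1^2)$—yields
\[
\mathrm{Regret}_T(\mathrm{ONS}) \leq \frac{1}{2\gamma}\sum_{t=1}^T \frac{z_t^2}{a_t} + \frac{\gamma}{2}(\theta_1 - u)^2.
\]

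Finally I would bound the remaining sum by the telescoping trick $\sum_{t=1}^T z_t^2/a_t \leq \ln(a_T/a_0) = \ln a_T$, and then upper-bound $a_T = 1 + \sum_t z_t^2$ using $z_t^2 = g_t^2/(1 - g_t \theta_t)^2 \leq 4 g_t^2 \leq 4 d_*^2$ (which holds because $|g_t \theta_t| \leq 1/2$ makes $(1-g_t\theta_t)^2 \geq 1/4$), giving $a_T \leq 1 + 4 d_*^2 T$. Plugging this in and using the equality already established delivers the claimed lower bound on $\ln W_T$. I expect the only delicate point to be verifying that the projection-based telescoping and the exp-concavity inequality really do hold for an arbitrary $u \in \K_*$ rather than only the in-hindsight minimizer, but this is immediate since both steps use the benchmark solely through its membership in the intersection $\K_*$.
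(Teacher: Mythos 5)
Your proposal is correct and follows essentially the same route as the paper: the equality comes from writing $\ln W_T$ and $\ln W_T(u)$ as negative cumulative losses, and the inequality comes from the ONS regret chain of Appendix~\ref{sec:regret_bound} with $a_0=1$, the projection step justified by $u\in\K_*\subseteq\K_{t+1}$, the telescoping bound $\sum_t z_t^2/a_t\leq\ln(a_T/a_0)$, and $z_t^2\leq 4d_*^2$ from $(1-g_t\theta_t)^2\geq 1/4$. Your explicit check that the per-round exp-concavity and projection inequalities hold for an arbitrary benchmark $u$, not just the in-hindsight minimizer, is a small point of added care over the paper's presentation but does not change the argument.
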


\begin{proof}
Since the update for the wealth is $W_t = W_{t-1}-g_t \theta_t W_{t-1}$, for $t=1, \cdots, T$,
\begin{align*}
    W_1&=W_0(1-g_1 \theta_1),\\
    \vdots\notag\\
    W_T &=W_{T-1}(1-g_T \theta_T).
\end{align*} 

We start with $W_0=1$, then by recursion,
\begin{align*}
    W_T = W_0 \cdot \prod_{t=1}^T(1-g_t \theta_t)= \prod_{t=1}^T(1-g_t \theta_t),
\end{align*} 
thus we can express $\ln(W_T)$ as:
\begin{equation}
    \ln(W_T) = \sum_{t=1}^T \ln(1 - g_t\theta_t ).
    \label{eq:log_wt}
\end{equation}

Similarly, the benchmark $u$ has:
\begin{align}
    \ln(W_T(u)) = \sum_{t=1}^T \ln(1 - g_tu ). \label{eq:log_wt_u}
\end{align}
We subtract equation (\ref{eq:log_wt}) from (\ref{eq:log_wt_u}) on both sides to obtain
\begin{align*}
    \ln(W_T(u)) - \ln(W_T) &= \sum_{t=1}^T \ln(1 - g_tu ) -  \sum_{t=1}^T\ln(1 - g_t\theta_t)\\
    &= -\sum_{t=1}^T\ln(1 - g_t\theta_t)-(- \sum_{t=1}^T \ln(1 - g_tu))\\
    &= \sum_{t=1}^T-\ln(1 - g_t\theta_t)-\sum_{t=1}^T -\ln(1 - g_tu).
\end{align*}
The above can be connected to the regret of the online learner, where the loss function is defined as $\ell_t(\theta) := -\ln(1 - g_t\theta )$.
Consequently, we have
\begin{equation}
    \ln(W_T) = \ln(W_T(u)) - \text{Regret}_T(u). \label{eq:second}
\end{equation}

Now applying the regret bound \eqref{eq:regret_neglect}
leads to the result.
\end{proof}

\section{Proof of Proposition \ref{pro:pro2}
\label{sec:app_pro2}}

Some of the ideas in the proof is adapted from \citet{dai2025individual}, with the key difference that we consider a two-sided test, whereas their analysis focuses on a one-sided test with Bernoulli random variables. Additionally, we consider the range of $g_t$ within a time-varying interval $[-d_t, d_t]$ for each round $t$. We can divide the proof of Proposition \ref{pro:pro2} into three parts as below.

\textbf{1. Level-$\alpha$ Sequential Test.}
In Algorithm \ref{alg:alg2}, We treat the event $\{W_t \geq 1/\alpha\}$ as rejecting the null hypothesis $H_0$.
A level-$\alpha$ sequential test is one that guarantees, under $H_0$, the probability of false rejection is at most $\alpha$, that is.

\begin{equation*}
    \sup_{P \in H_0} P(\exists t \geq 1 : W_t \geq 1/\alpha) \leq \alpha, \quad \text{or equivalently} \quad \sup_{P \in H_0} P(\tau < \infty) \leq \alpha.
\end{equation*}

 When $P\in H_0$, i.e., $\mu_x = \mu_y$, it is true that
 \begin{align}
     \mathbb{E}_P[\phi({x}_t) - \phi({y}_t)] = \mu_x - \mu_y = 0.
     \label{eq:H0}
 \end{align}
 
Wealth process is calculated as $W_t=(1 - g_t\theta_t)\times W_{t-1}$, and the initial wealth $W_0=1$, then:
 \begin{align*}
     W_t=(1 - g_t\theta_t)\times W_{t-1}= \prod_{i=1}^t (1 - g_i\theta_i)\times W_0= \prod_{i=1}^t (1 - g_i\theta_i),
 \end{align*}
 where $g_i=\phi({x}_i) - \phi({y}_i)$. Since $\theta_t$ is $\mathcal{F}_{t-1}$-measurable and according to (\ref{eq:H0}), we have

\begin{equation*}
    \mathbb{E}_P[W_t | \mathcal{F}_{t-1}] = \mathbb{E}_P \left[ (1 - g_t\theta_t)\times W_{t-1} \Bigg| \mathcal{F}_{t-1} \right] =
    \mathbb{E}_P\left[ W_{t-1} \cdot (1 - \theta_t\cdot (\phi({x}_t) - \phi({y}_t) ) )\Bigg| \mathcal{F}_{t-1} \right] = W_{t-1},
\end{equation*}
where the last equality is because $\theta_t$ is fully determined 
by $\mathcal{F}_{t-1}$, and hence, given $\mathcal{F}_{t-1}$,  $\theta_t$ is independent of $\left( \phi({x}_t) - \phi({y}_t) \right)$, where the latter 
has conditional expectation zero under $H_0$.

Thus, $(W_t)_{t\geq 1}$ is a $P$-martingale with $W_0=1$. Given that $g_i\in[-d_i,d_i]$ and $\theta_i\in[-1/2d_i,1/2d_i]$, we have $g_i\theta_i\in[-1/2,1/2]$ for all $t$, then $W_t=\prod_{i=1}^t (1-g_i\theta_i)$ remains non-negative for all $t$. Therefore, we can apply Ville's inequality~\citep{Ville1939} to establish that $P(\exists t \geq 1 : W_t \geq 1/\alpha) \leq \alpha$. This inequality shows that the sequential test: ``reject $H_0$ once the wealth $W_t$ reaches $1/\alpha$" controls the type-I error at level $\alpha$. If there exists a time budget $T$, we will verify the final step $W_T\geq Z/\alpha$ of the algorithm, i.e., we treat $\{W_t \geq 1/\alpha \text{ or } W_T > Z/\alpha\}$ as reject "$H_0$", which is validated by the randomized Ville's inequality of \citet{RamdasManole2023}.

\textbf{2. Asymptotic Power One.} 
Test $\phi$ has asymptotic power $\beta=1$ means that when $H_1$ (i.e., $\mu_x \neq \mu_y$) holds, our algorithm will ensure that wealth $W_t\geq 1/\alpha$ in finite time $t$ to reject $H_0$, that is:
\begin{equation*}
    \sup_{P \in H_1} P(\tau = \infty) \leq 1-\beta = 0.
\end{equation*}

We first recall the relation between regret and wealth in the testing-by-betting game (c.f. \eqref{eq:second}), i.e.,
\begin{equation} \label{eq:second-repeat-1}
    \ln(W_t) = \ln(W_t(\theta_*)) - \text{Regret}_t(\theta_*) ,
\end{equation}
where $W_t$ is the learner's wealth at round $t$, $\theta_*$ is any benchmark which lies in the decision space $\K_*=\left[-\frac{1}{2d_*},\frac{1}{2d_*}\right]$.

Denote $\omega_*:= \E[ \ln (1- g \theta_*) ]$ the expected wealth of the benchmark in a single round.
Taking the expectation on both sides of \eqref{eq:second-repeat-1},
we have
\begin{align*}
\E[ \ln ( W_t  ) ] & = \E[  \ln (W_t(\theta_*)) ]
- \E\left[ \text{Regret}_t(\theta_*)   \right]
\\ & = \E\left[  \sum_{s=1}^t \ln (1 - g_s \theta_*)  \right] - \E[ \text{Regret}_t(\theta_*)   ]
\\ & = t \omega_*  - \E\left[ \text{Regret}_t(\theta_*)   \right],
\end{align*}
where the last equality holds by assuming that the random variables $(g_s)_{s\geq1 }$ are i.i.d. 

We now analyze the probability that the null has not been rejected by time $t$, which is when the event
$\{ W_t< \frac{1}{\alpha}  \}$ holds. 
We have
\begin{align}
 \mathbb{P}\left[ W_t < \frac{1}{\alpha} \right] 
 & = \mathbb{P}\left[  \ln (W_t) < \ln \left( \frac{1}{\alpha} \right) \right] \notag
\\ & = \mathbb{P}\left[  \ln (W_t) - \E[ \ln (W_t(\theta_*)) ]   
 < \ln\left( \frac{1}{\alpha} \right) - \E[ \ln (W_t(\theta_*)) ] 
 \right] \notag
\\ & = \mathbb{P}\left[  \ln(W_t(\theta_*)) - \text{Regret}_t(\theta_*)  - \E[ \ln (W_t(\theta_*)) ]  
 < \ln\left( \frac{1}{\alpha} \right) - \E[ \ln (W_t(\theta_*)) ] 
 \right] \notag
\\ & = \mathbb{P}\left[  \ln(W_t(\theta_*))  - \E[ \ln (W_t(\theta_*)) ]  
 <
 \text{Regret}_t(\theta_*) +
 \ln\left( \frac{1}{\alpha} \right) - t \omega_* 
 \right].  \label{intermediate-a-1}
\end{align}
 Now we are going to show that
$\text{Regret}_t(\theta_*) +
 \ln\left( \frac{1}{\alpha} \right) - t \omega_* \leq \frac{-t \omega_*}{2}$ when $t$ is sufficiently large.  
Then, from \eqref{intermediate-a-1}, we will have
$\mathbb{P}\left[  \ln(W_t(\theta_*))  - \E[ \ln (W_t(\theta_*)) ]  
 <
 \text{Regret}_t(\theta_*) +
 \ln\left( \frac{1}{\alpha} \right) - t \omega_* 
 \right] \leq 
 \mathbb{P}\left[  \ln(W_t(\theta_*))  - \E[ \ln (W_t(\theta_*)) ]  
 < - \frac{t}{2} \omega_* 
 \right]$.

From the regret bound of ONS \eqref{eq:regret_neglect}, it suffices to show that
\begin{align} \label{intermediate-b-1}
\frac{1}{2 \gamma} \ln \left(  1 + \sum_{s=1}^t \frac{g_s^2}{ (1-g_s \theta_s)^2 } \right) 
+ \frac{\gamma}{2} D_1^2
+ \ln\left( \frac{1}{\alpha} \right)
 \leq \frac{t}{2} \omega_*.
\end{align}
We will require the time $t$ to satisfy
\begin{equation} \label{condition:t-1}
t \geq \frac{2 \gamma D_1^2 }{ \omega_* },
\end{equation}
which leads to
$\frac{\gamma}{2} D_1^2 \leq \frac{t \omega_*}{4}$.
Therefore, to guarantee \eqref{intermediate-b-1}, it suffices to find $t$ such that
\begin{equation*}
\frac{t \omega_*}{4} \geq \ln\left( \frac{1}{\alpha} \right) +
\frac{1}{2\gamma} \ln \left(  1 + \sum_{s=1}^t \frac{g_s^2}{ (1-g_s \theta_s)^2 } \right).
\end{equation*}
We further note that
$\frac{g_s^2}{ (1- g_s \theta_s)^2 } \leq 
\frac{(d_*)^2}{ (1 - 1/2)^2 } = 4 (d_*)^2$, which in turns implies a sufficient condition:
\begin{equation*}
\frac{t \omega_*}{4} \geq \ln\left( \frac{1}{\alpha} \right) +
\frac{1}{2\gamma} \ln \left(  1 + 4 t (d_*)^2 \right).
\end{equation*}
The above condition together with \eqref{condition:t-1} suggests that
$\text{Regret}_t(\theta_*) +
 \ln\left( \frac{1}{\alpha} \right) - t \omega_* \leq \frac{-t \omega_*}{2}$ holds when
 \begin{equation} \label{t-condition-2}
t \gtrsim 
\frac{1}{\omega_*} \ln \left( \frac{ (d_*)^{1/\gamma} }{\omega_* \alpha}  \right).
 \end{equation}
Under \eqref{t-condition-2}, we hence have
\begin{align} \label{e6-2}
 \mathbb{P}\left[ W_t < \frac{1}{\alpha} \right] 
 & \leq \mathbb{P}\left[  \ln(W_t(\theta_*))  - \E[ \ln (W_t(\theta_*)) ]  
 <  - \frac{t}{2} \omega_* 
 \right].  
\end{align}
Denote $\psi_t:= \ln ( 1 - g_t \theta_*) - \E[  \ln (1-g_t \theta_*) ] \in [ \ln (1/2) - \omega_*, \ln (3/2) - \omega_*    ] $.
We have that 
$\ln(W_t(\theta_*))  - \E[ \ln (W_t(\theta_*)) ]= \sum_{s=1}^t \psi_s$,
which is a sum of zero-mean i.i.d bounded random variables. Specifically, $\psi_t$ is a sub-Gaussian with parameter $\sigma :=\frac{1}{2}\ln (3)$.
By Hoeffding's inequality, we have
\begin{equation}
\mathbb{P}\left[  
\frac{1}{t} \sum_{s=1}^t \psi_s \leq - c \right]\leq \exp\left( - \frac{ t c^2 }{ 2\sigma^2 }  \right)     , \label{hoeff2}
\end{equation}
for any constant $c>0$.
Then,
\begin{align*}
 \mathbb{P}\left[  \ln(W_t(\theta_*))  - \E[ \ln (W_t(\theta_*)) ]  
 <  - \frac{t}{2} \omega_* 
 \right]
 = 
 \mathbb{P}\left[  \frac{1}{t} \sum_{s=1}^t \psi_s 
 <  - \frac{1}{2} \omega_* 
 \right]
\leq \exp \left(  - \frac{(\omega_*)^2}{ 2 (\ln 3)^2 } t      \right),
\end{align*}
where we let $c \gets \frac{\omega_*}{2}$ 
in \eqref{hoeff2}.
Hence, we obtain
 \begin{align} \label{ea1a}
 \mathbb{P}\left[ W_t < \frac{1}{\alpha} \right] 
\leq 
\exp \left(  - \frac{(\omega_*)^2}{ 2 (\ln 3)^2 } t      \right).
 \end{align}

Let $H_t$ be the event that we stop at time $t$.
Then, 
\begin{align*}
\mathbb{P}[ t = \infty ]
& =
\mathbb{P}\left[ \lim_{t \to \infty} \cap_{s \leq t} \neg H_s \right]
\\ & =  \lim_{t \to \infty}
\mathbb{P}\left[ \cap_{s \leq t} \neg H_s \right]
\\ & \leq 
 \lim_{t \to \infty}
\mathbb{P}\left[  \neg H_t \right]
\\ & = 
 \lim_{t \to \infty}
\mathbb{P}\left[  
W_t < \frac{1}{\alpha} 
\right]
\\ & \leq
\lim_{t \to \infty}
 \exp \left(  - \frac{(\omega_*)^2}{ 2 (\ln 3)^2 } t      \right)
\\ & = 0,
\end{align*}
where the last equality is by \eqref{ea1a}. 
This shows that the test power is one.

\textbf{3. Expected Stopping Time.}

Now we upper-bound the expected stopping time as follows.
Denote $t_*$ the time when
\eqref{e6-2} starts to hold at any $t \geq t_*$. 
We have
\begin{align}
\E[\tau] & = \sum_{t=1}^{\infty} \mathbb{P}\left[ \tau > t \right] \notag
\\ & = \sum_{t=1}^{\infty}  \mathbb{P}\left[  \cap_{s \leq t} \neg H_s \right] \notag
\\ & \leq \sum_{t=1}^{\infty}  \mathbb{P}\left[  \neg H_t \right] \notag
\\ & =  \sum_{t=1}^{\infty}  \mathbb{P}\left[ 
W_t < \frac{1}{\alpha} \right] \notag
\\ & \leq t_* + \sum_{t=1}^{\infty}
 \exp \left(  - \frac{(\omega_*)^2}{ 2 (\ln 3)^2 } t      \right) \notag
\\ & \leq t_* + \frac{1}{ \exp \left( \frac{ (\omega_*)^2}{ 2 (\ln 3)^2 } t  \right) - 1  } \notag
\\ & \leq t_* + \frac{2 (\ln 3)^2}{ \omega_*^2 } , \label{t-nonasymp2}
\end{align}
where the last inequality uses $\exp(z) \geq 1 + z$ for any $z \geq 0$.

To proceed, we recall the notation 
$\omega_*:= \E\left[ \ln (1-g\theta_*) \right]$.
We now let the benchmark $\theta_*$ to be the one that maximizes the expected wealth in one round, i.e., $\E\left[ \ln (1-g \theta) \right]$, in the decision space $\K_*$.

By the inequality $\ln (1+c) \geq c - c^2$ for any $| c | \leq \frac{1}{2}$, we have
\begin{align*}
\omega_* = 
\max_{ \theta \in K_* }
\E\left[ \ln \left(1-g \theta\right) \right] 
& \geq 
\max_{ \theta \in K_* }
\E\left[ - \theta g - \theta^2 g^2     \right]
\\ & =
\max_{ \theta \in K_* }  - \theta (\mu_x - \mu_y) - \theta^2 
\left( \mathrm{Var}[g] + (\mu_x - \mu_y )^2    \right),
\\ & 
\approx \frac{ \Delta^2 }{ 
\mathrm{Var}[ g ] + \Delta^2 }
\\ & 
\approx  \frac{ \Delta^2 }{ 
d_*^2 + \Delta^2 },
\end{align*}
where we note $\E[ g ] = \E[ \phi(x) - \phi(y) ] = \mu_x - \mu_y$,
$\mathrm{Var}[g]= \E[ g^2 ] - (\E[ g ])^2 $,
and $\mathrm{Var}[ g] \leq d_*^2$.

Combining \eqref{t-condition-2} and \eqref{t-nonasymp2}, together with the expression of $\omega_*$ above, we have
\begin{equation}
       \mathbb{E}[\tau]= \mathcal{O}\left(\frac{d_*^2}{ \Delta^2}\ln\left(\frac{d_*}{\alpha \Delta }\right)
    + \frac{d_*^4}{ \Delta^4}
    \right).
    \end{equation}

\section{Proof of Proposition \ref{pro:pro3}}
\label{sec:app_pro3}

\begin{algorithm}[t]
\caption{Online Detection of LLMs via Online Optimization and Betting for the Composite Hypotheses Testing}
\label{alg:alg3}
\begin{algorithmic}[1] 
\Require a score function $\phi(\cdot): \text{Text} \rightarrow \mathbb{R}$. 
\State \textbf{Init:} $\theta_1^A, \theta_1^B\gets 0$, $a_0^A, a_0^B\gets1$, wealth $W_0^A, W_0^B \gets 1$, step size $\gamma$, difference parameter $\epsilon$, and significance level parameter $\alpha \in (0, 1)$.
\For{$t = 1, 2, \dots, T$}
    \State \# \textit{$T$ is the time budget, which can be $\infty$ if their is no time constraint.}
    \State Observe a text ${y}_t$ from an unknown source and compute $\phi(y_t)$.
    \State Sample ${x}_t$ from a dataset of human-written texts and compute $\phi(x_t)$.
    \State Set $g_t^A = \phi({x}_t)-\phi({y}_t)-\epsilon, g_t^B = \phi({y}_t)-\phi({x}_t)-\epsilon$.
    \State Update wealth $W_t^A = W_{t-1}^A \cdot (1-g_t^A\theta_t^A), W_t^B = W_{t-1}^B \cdot (1-g_t^B\theta_t^B)$.
    \If{$W_t^A \geq 2/\alpha \text{ or } W_t^B \geq 2/\alpha$}
        \State Declare that the source producing the sequence of texts $y_t$ is an LLM.
    \EndIf
     \State Get a hint $d_{t+1}$ and specify the decision space $\K_{t+1}:= [-\frac{1}{2d_{t+1}},0]$.
      \State //\textit{ Update $\theta_{t+1}^A, \theta_{t+1}^B\in \mathcal{K}_{t+1}$ via $\mathrm{ONS}$ on the loss function $\ell_t^A(\theta):=-\ln(1-g_t^A\theta)$, and $\ell_t^B(\theta):=-\ln(1-g_t^B\theta)$}.
      \State Compute $z_t^A = \frac{d \ell_t(\theta_t^A)}{d\theta}=\frac{g_t^A}{1-g_t^A\theta_t^A}, z_t^B = \frac{d \ell_t(\theta_t^B)}{d\theta}=\frac{g_t^B}{1-g_t^B\theta_t^B}$.
      \State Compute $a_t^A = a_{t-1}^A + (z_t^A)^2, a_t^B = a_{t-1}^B + (z_t^B)^2$.
      \State Compute $\theta_{t+1}^A=\max\left(\min\left(\theta_t^A-\frac{1}{\gamma}\frac{z_t^A}{a_t^A},0\right),-\frac{1}{2d_{t+1}}\right)$, $\theta_{t+1}^B=\max\left(\min\left(\theta_t^B-\frac{1}{\gamma}\frac{z_t^B}{a_t^B},0\right),-\frac{1}{2d_{t+1}}\right)$.
\EndFor
\If{the source has not been declared as an LLM}
    \State Sample $Z \sim \text{Unif}(0, 1)$, declare the sequence of texts $y_t$ is from an LLM if $W_T^A \geq 2Z/\alpha$ or $W_T^B \geq 2Z/\alpha$.
\EndIf
\end{algorithmic}
\end{algorithm}

Recall we formulated two one-sided hypotheses:

\begin{equation*}
    H_0^A: \mu_x - \mu_y - \epsilon \leq 0 \text{ vs. } H_1^A: \mu_x - \mu_y - \epsilon > 0,
\end{equation*}
and
\begin{equation*}
    H_0^B: \mu_y - \mu_x - \epsilon \leq 0 \text{ vs. }  H_1^B: \mu_y - \mu_x - \epsilon > 0.
\end{equation*}

\noindent
\textbf{1. Level-$\alpha$ Sequential Test.} In contrast to the betting interval employed in \citet{Chugg2023} for composite hypotheses test, which includes both positive and negative values, we specify a nonpositive decision space $\theta_t\in[-1/2 d_t, 0]$ in order to ensure that the wealth process is a supermartingale under $H_0$, while still enabling fast wealth growth under the alternative $H_1$. The resulting wealth process is now given by:
\begin{align*}
W_t^A = W_0 \cdot \prod_{s=1}^t\left(1-\theta_s (g_s-\epsilon)\right)= \prod_{s=1}^t\left(1-\theta_s(g_s-\epsilon)\right)
\end{align*} 
and
\begin{align*}
W_t^B = W_0 \cdot \prod_{s=1}^t\left(1-\theta_s (-g_s-\epsilon)\right)= \prod_{s=1}^t\left(1-\theta_s(-g_s-\epsilon)\right),
\end{align*} 
where $g_s=\phi(x_s)-\phi(y_s)$. Recall that $\epsilon$ is a nonnegative constant satisfying $\epsilon \leq d_t$ for all $t$. Consequently, the range of $\theta_t (g_t-\epsilon)$ and $\theta_t (-g_t-\epsilon)$, given by $ [-(d_t-\epsilon)/2d_t, (d_t+\epsilon)/2d_t]$,  always lies within the interval $[-1,1]$. This guarantees that the wealth process remains nonnegative. Moreover, the supermartingale property of the wealth processes $W_t^A$ and $W_t^B$ can still be preserved. Take $W_t^A$ as an example, under the corresponding null hypothesis $H_0^A:\mu_x-\mu_y \leq \epsilon$, i.e., $\mathbb{E}_{P}\left[ \phi({x}_t) - \phi({y}_t) \right] \leq \epsilon$ for $P\in H_0^A$. Then,
\begin{equation*}
    \mathbb{E}_P[W_t^A | \mathcal{F}_{t-1}] = \mathbb{E}_P\left[ W_{t-1}^A \left(1 - \underbrace{ \theta_t }_{ \leq 0} \cdot \left( \phi({x}_t) - \phi({y}_t) -\epsilon\right) \right) \Bigg| \mathcal{F}_{t-1} \right] \leq W_{t-1}^A,
\end{equation*}
where the inequality is because $\theta_t$ is fully determined 
by $\mathcal{F}_{t-1}$, and hence, given $\mathcal{F}_{t-1}$,  $\theta_t$ is independent of $\left( \phi({x}_t) - \phi({y}_t) -\epsilon\right)$, where the latter 
has conditional expectation less than or equal to zero under  $H_0^A:\mu_x-\mu_y \leq \epsilon$.

As for the other null hypothesis  $H_0^B:\mu_y-\mu_x \leq \epsilon$, i.e., $\mathbb{E}_{P}\left[ \phi({y}_t) - \phi({x}_t) \right] \leq \epsilon$, it can be shown that the wealth process  $W_t^B$ also forms a nonnegative supermartingale. 

Applying the randomized Ville's inequality gives us
\begin{equation}
     P(\exists t \geq 1 : W_t^A \geq 2/\alpha ) \leq \alpha/2,
     \label{eq:union1}
\end{equation}
and
\begin{equation}
     P(\exists t \geq 1 : W_t^B \geq 2/\alpha) \leq \alpha/2.
     \label{eq:union2}
\end{equation}

Thus we can get the union bound of (\ref{eq:union1}) and (\ref{eq:union2})
\begin{equation*}
     P(\exists t \geq 1 : (W_t^A \geq 2/\alpha) \cup (W_t^B \geq 2/\alpha)) \leq \alpha,
\end{equation*}
which indicates that rejecting the null hypothesis when either $W_t^A\geq 2/\alpha$ or $W_t^B\geq 2/\alpha$ is a level-$\alpha$ sequential test. The detection process for the composite hypothesis is shown as Algorithm \ref{alg:alg3}.

\noindent
\textbf{2. Asymptotic Power One.}
Now let us switch to analyze the test power of the proposed algorithm.
We first recall the relation between regret and wealth in the testing-by-betting game (c.f. \eqref{eq:second}), i.e.,
\begin{equation} \label{eq:second-repeat}
    \ln(W_t^A) = \ln(W_t^A(\theta_*)) - \text{Regret}_t(\theta_*) ,
\end{equation}
where $W_t^A$ is the learner's wealth at round $t$, $\theta_*$ is any benchmark which lies in the decision space $\K_*'=[-\frac{1}{2d_*'},0]$ with $d_*'=d_*+\epsilon$. 

Denote $\omega_*^A:= \E[ \ln (1- (g-\epsilon) \theta_*) ]$ the expected wealth of the benchmark in a single round. For the notation simplicity,
in the following, we will drop the superscript $A$
in $\omega_*^A$ when it is clear to the context.
Taking the expectation on both sides of \eqref{eq:second-repeat},
we have
\begin{align*}
\E[ \ln ( W_t^A  ) ] & = \E[  \ln (W_t^A(\theta_*)) ]
- \E\left[ \text{Regret}_t(\theta_*)   \right]
\\ & = \E\left[  \sum_{s=1}^t \ln (1 - (g_s - \epsilon) \theta_*)  \right] - \E[ \text{Regret}_t(\theta_*)   ]
\\ & = t \omega_*  - \E\left[ \text{Regret}_t(\theta_*)   \right],
\end{align*}
where the last equality holds by assuming that the random variables $(g_s)_{s\geq1 }$ are i.i.d. 

We now analyze the probability that the null has not been rejected by time $t$, which is when both the events
$\{ W_t^A < \frac{2}{\alpha}  \}$ and $\{ W_t^B < \frac{2}{\alpha}  \}$ hold.
We have
\begin{align}
 \mathbb{P}\left[ W_t^A < \frac{2}{\alpha} \right] 
 & = \mathbb{P}\left[  \ln (W_t^A) < \ln \left( \frac{2}{\alpha} \right) \right]\notag
\\ & = \mathbb{P}\left[  \ln (W_t^A) - \E[ \ln (W_t^A(\theta_*)) ]   
 < \ln\left( \frac{2}{\alpha} \right) - \E[ \ln (W_t^A(\theta_*)) ] 
 \right]\notag
\\ & = \mathbb{P}\left[  \ln(W_t^A(\theta_*)) - \text{Regret}_t(\theta_*)  - \E[ \ln (W_t^A(\theta_*)) ]  
 < \ln\left( \frac{2}{\alpha} \right) - \E[ \ln (W_t^A(\theta_*)) ] 
 \right]\notag
\\ & = \mathbb{P}\left[  \ln(W_t^A(\theta_*))  - \E[ \ln (W_t^A(\theta_*)) ]  
 <
 \text{Regret}_t(\theta_*) +
 \ln\left( \frac{2}{\alpha} \right) - t \omega_* 
 \right].  \label{intermediate-a}
\end{align}

Now we are going to show that
$\text{Regret}_t(\theta_*) +
 \ln\left( \frac{2}{\alpha} \right) - t \omega_* \leq \frac{-t \omega_*}{2}$ when $t$ is sufficiently large.  
 Then, from \eqref{intermediate-a}, we will have
$\mathbb{P}\left[  \ln(W_t^A(\theta_*))  - \E[ \ln (W_t^A(\theta_*)) ]  
 <
 \text{Regret}_t(\theta_*) +
 \ln\left( \frac{2}{\alpha} \right) - t \omega_* 
 \right] \leq 
 \mathbb{P}\left[  \ln(W_t^A(\theta_*))  - \E[ \ln (W_t^A(\theta_*)) ]  
 < - \frac{t}{2} \omega_* 
 \right]$.

From the regret bound of ONS \eqref{eq:regret_neglect}, it suffices to show that
\begin{align} \label{intermediate-b}
\frac{1}{2 \gamma} \ln \left(  1 + \sum_{s=1}^t \frac{(g_s-\epsilon)^2}{ (1-(g_s-\epsilon) \theta_s)^2 } \right) 
+ \frac{\gamma}{2} D_1^2
+ \ln\left( \frac{2}{\alpha} \right)
 \leq \frac{t}{2} \omega_*.
\end{align}
We will require the time $t$ to satisfy
\begin{equation} \label{condition:t}
t \geq \frac{2 \gamma D_1^2 }{ \omega_* },
\end{equation}
which leads to
$\frac{\gamma}{2} D_1^2 \leq \frac{t \omega_*}{4}$.
Therefore, to guarantee \eqref{intermediate-b}, it suffices to find $t$ such that
\begin{equation*}
\frac{t \omega_*}{4} \geq \ln\left( \frac{2}{\alpha} \right) +
\frac{1}{2\gamma} \ln \left(  1 + \sum_{s=1}^t \frac{(g_s-\epsilon)^2}{ (1-(g_s-\epsilon) \theta_s)^2 } \right).
\end{equation*}
We further note that
$\frac{(g_s-\epsilon)^2}{ (1-(g_s-\epsilon) \theta_s)^2 } \leq 
\frac{(d'_*)^2}{ (1 - (d_s+\epsilon)/2d_s)^2 } \leq c (d'_*)^2$, for some constant $c\geq \left(\frac{2d_s}{d_s-\epsilon}\right)^2$, which in turns implies a sufficient condition:
\begin{equation*}
\frac{t \omega_*}{4} \geq \ln\left( \frac{2}{\alpha} \right) +
\frac{1}{2\gamma} \ln \left(  1 + c t (d'_*)^2 \right).
\end{equation*}
The above condition together with \eqref{condition:t} suggests that
$\text{Regret}_t(\theta_*) +
 \ln\left( \frac{2}{\alpha} \right) - t \omega_* \leq \frac{-t \omega_*}{2}$ holds when
 \begin{equation} \label{t-condition}
t \gtrsim 
\frac{1}{\omega_*} \ln \left( \frac{ (d'_*)^{1/\gamma} }{\omega_* \alpha}  \right).
 \end{equation}

Under \eqref{t-condition}, we hence have
\begin{align} \label{e6}
 \mathbb{P}\left[ W_t^A < \frac{2}{\alpha} \right] 
 & \leq \mathbb{P}\left[  \ln(W_t^A(\theta_*))  - \E[ \ln (W_t^A(\theta_*)) ]  
 <  - \frac{t}{2} \omega_* 
 \right].  
\end{align}
Denote $\psi_t:= \ln ( 1 - (g_t-\epsilon) \theta_*) - \E[  \ln (1-(g_t-\epsilon) \theta_*) ] \in [ \ln (1/2) - \omega_*, \ln (3/2) - \omega_*    ] $.
We have that 
$\ln(W_t^A(\theta_*))  - \E[ \ln (W_t^A(\theta_*)) ]= \sum_{s=1}^t \psi_s$,
which is a sum of zero-mean i.i.d bounded random variables. Specifically, $\psi_t$ is a sub-Gaussian with parameter $\sigma :=\frac{1}{2}\ln (3)$.
By Hoeffding's inequality, we have
\begin{equation}
\mathbb{P}\left[  
\frac{1}{t} \sum_{s=1}^t \psi_s \leq - c \right]\leq \exp\left( - \frac{ t c^2 }{ 2\sigma^2 }  \right)  , \label{hoeff}
\end{equation}
for any constant $c>0$.
Then,
\begin{align*}
 \mathbb{P}\left[  \ln(W_t^A(\theta_*))  - \E[ \ln (W_t^A(\theta_*)) ]  
 <  - \frac{t}{2} \omega_* 
 \right]
 = 
 \mathbb{P}\left[  \frac{1}{t} \sum_{s=1}^t \psi_s 
 <  - \frac{1}{2} \omega_* 
 \right]
\leq \exp \left(  - \frac{(\omega_*)^2}{ 2 (\ln 3)^2 } t      \right),
\end{align*}
where we let $c \gets  \frac{\omega_*}{2}$ 
in \eqref{hoeff}.
Combining the above inequality and \eqref{e6},
 we obtain
 \begin{align} \label{ea1}
 \mathbb{P}\left[ W_t^A < \frac{2}{\alpha} \right] 
\leq 
\exp \left(  - \frac{(\omega^A_*)^2}{ 2 (\ln 3)^2 } t      \right).
 \end{align}
Similarly, we have 
 \begin{align} \label{eb1}
 \mathbb{P}\left[ W_t^B < \frac{2}{\alpha} \right] 
\leq 
\exp \left(  - \frac{(\omega^B_*)^2}{ 2 (\ln 3)^2 } t      \right),
 \end{align}
 where $\omega^B_*:= \E\left[ \ln \left(1-(-g-\epsilon) \theta^B_* \right) \right]$ for a benchmark $\theta^B_*$.

Let $H_t$ be the event that we stop at time $t$,
i.e., $H_t:= \{ W_t^A \geq \frac{2}{\alpha} \text{ or } W_t^B \geq \frac{2}{\alpha}    \}$.
Then, we deduce that
\begin{align*}
\mathbb{P}[ t = \infty ]
& =
\mathbb{P}\left[ \lim_{t \to \infty} \cap_{s \leq t} \neg H_s \right]
\\ & =  \lim_{t \to \infty}
\mathbb{P}\left[ \cap_{s \leq t} \neg H_s \right]
\\ & \leq 
 \lim_{t \to \infty}
\mathbb{P}\left[  \neg H_t \right]
\\ & = 
 \lim_{t \to \infty}
\mathbb{P}\left[  
W_t^A < \frac{2}{\alpha} \text{ and } W_t^B < \frac{2}{\alpha}
\right]
\\ & 
\leq 
 \lim_{t \to \infty}
\mathbb{P}\left[  
W_t^A < \frac{2}{\alpha} \right] 
+ 
\mathbb{P}\left[   W_t^B < \frac{2}{\alpha} \right]
\\ &
\leq
\lim_{t \to \infty}
 \exp \left(  - \frac{(\omega^A_*)^2}{ 2 (\ln 3)^2 } t      \right)
+ 
 \exp \left(  - \frac{(\omega^B_*)^2}{ 2 (\ln 3)^2 } t      \right)
\\ & = 0,
\end{align*}
where the last equality is by \eqref{ea1} and \eqref{eb1}. 
This shows that the test power is one.

\noindent
\textbf{3. Expected Rejection Time $\tau$.} 
Now we upper-bound the expected stopping time as follows.
Denote $t_*$ the time when
\eqref{ea1} and \eqref{eb1} starts to hold at any $t \geq t_*$. 
We have
\begin{align}
\E[\tau] & = \sum_{t=1}^{\infty} \mathbb{P}\left[ \tau > t \right]\notag
\\ & = \sum_{t=1}^{\infty}  \mathbb{P}\left[  \cap_{s \leq t} \neg H_s \right]\notag
\\ & \leq \sum_{t=1}^{\infty}  \mathbb{P}\left[  \neg H_t \right]\notag
\\ & \leq \sum_{t=1}^{\infty}  \left( \mathbb{P}\left[ 
W_t^A < \frac{2}{\alpha} \right] +  \mathbb{P}\left[  W_t^B < \frac{2}{\alpha} \right] \right)\notag
\\ & \leq t_* + \sum_{t=1}^{\infty}
 \exp \left(  - \frac{(\omega^A_*)^2}{ 2 (\ln 3)^2 } t      \right)
+ 
 \exp \left(  - \frac{(\omega^B_*)^2}{ 2 (\ln 3)^2 } t      \right)\notag
\\ & \leq t_* + \frac{1}{ \exp \left( \frac{ (\omega^A_*)^2}{ 2 (\ln 3)^2 } t  \right) - 1  }
+ \frac{1}{ \exp \left( \frac{(\omega^B_*)^2}{ 2 (\ln 3)^2 } t  \right) - 1  }\notag
\\ & \leq t_* + \frac{2 (\ln 3)^2}{ (\omega^A_*)^2 } + \frac{2 (\ln 3)^2}{ (\omega^B_*)^2 }, \label{t-nonasymp}
\end{align}
where the last inequality uses $\exp(z) \geq 1 + z$ for any $z \geq 0$.

To proceed, we recall the notation 
$\omega^A_*:= \E\left[ \ln (1-(g-\epsilon) \theta_*) \right]$.
We now let the benchmark $\theta_*$ to be the one that maximizes the expected wealth in one round, i.e., $\E\left[ \ln (1-(g-\epsilon) \theta) \right]$, in the decision space $\K'_*$.

By the inequality $\ln (1+c) \geq c - c^2$ for any $| c | \leq \frac{1}{2}$, we have
\begin{align*}
\omega^A_* = 
\max_{ \theta \in K'_* }
\E\left[ \ln \left(1-(g-\epsilon) \theta\right) \right] 
& \geq 
\max_{ \theta \in K'_* }
\E\left[ - \theta (g-\epsilon) - \theta^2 (g-\epsilon)^2     \right]
\\ & =
\max_{ \theta \in K'_* }  - \theta (\mu_x - \mu_y-\epsilon) - \theta^2 
\left( \mathrm{Var}[g-\epsilon] + (\mu_x - \mu_y -\epsilon)^2    \right),
\\ & 
\approx \frac{ (\Delta - \epsilon)^2 }{ 
\mathrm{Var}[ g -\epsilon] + (\Delta - \epsilon)^2 }
\\ & 
\approx  \frac{ (\Delta - \epsilon)^2 }{ 
(d'_*)^2 + (\Delta - \epsilon)^2 },
\end{align*}
where we note $\E[ g ] = \E[ \phi(x) - \phi(y) ] = \mu_x - \mu_y$,
$\mathrm{Var}[g-\epsilon]= \E[ (g-\epsilon)^2 ] - (\E[ (g-\epsilon) ])^2 $,
and $\mathrm{Var}[ g-\epsilon] \leq (d'_*)^2$.
Similarly, we have $\omega^B_* \approx 
 \frac{ (\Delta- \epsilon)^2 }{ 
(d'_*)^2 + (\Delta - \epsilon)^2 }$.

Combining \eqref{t-condition} and \eqref{t-nonasymp}, together with the expression of $\omega^A_*$ and $\omega^B_*$ above, we have
\begin{equation}
       \mathbb{E}[\tau] =
 \mathcal{O}
\left( \frac{ (d_*+\epsilon)^2 }{ (\Delta - \epsilon)^2 } \ln \left( \frac{ d_*+\epsilon  }{\alpha (\Delta -\epsilon) }  \right)
+  \frac{ (d_*+\epsilon)^4 }{ (\Delta - \epsilon)^4 } \right).
    \end{equation}

\section{Experimental Results of Detecting 2024 Olympic News or Machine-generated News}
\label{sec:app_experiment_olympic}

Our generation process for fake news is guided by \citet{Mitchell2023, Bao2023, su2023detectllm}. Specifically, we use the T5 tokenizer to process each human-written news article to retrieve the first 30 tokens as \{prefix\}. Then, we initiate the generation process by sending the following messages to the model service, such as: "You are a News writer. Please write an article with about 150 words starting exactly with \{prefix\}."  

\textbf{Experimental Results for Two Scenarios.} Figure~\ref{fig:case1_all_olympic} and Figure~\ref{fig:case2_all_olympic} show the results of detecting real Olympic news and news generated by Gemini-1.5-Flash, Gemini-1.5-Pro, and PaLM 2, when designating human-written text from the XSum dataset as $ x_t $ in Scenario 1 and Scenario 2, respectively. 
In Scenario 1, our algorithm consistently controls the FPRs below the significance level $ \alpha $ for all source models, scoring models, and score functions. This is because we set the true value of $ \Delta $ between two sequences of human texts to $ \epsilon $, which satisfies the null hypothesis condition, i.e., $ |\mu_x - \mu_y| \leq \epsilon $. This ensures that the wealth remains a supermartingale. Texts generated by PaLM 2 are detected almost immediately by most score functions within 100 time steps, as illustrated in Figure~\ref{fig:case1_palm2.neo2.7} and Figure~\ref{fig:case1_palm2.gemma}.
Conversely, fake Olympic news generated by Gemini-1.5-Pro often fails to be identified as LLM-generated within 500 time steps under certain score functions, as shown in Figure \ref{fig:case1_pro.neo2.7} and Figure \ref{fig:case1_pro.gemma}. Vertical lines in Figure \ref{fig:case1_flash.neo2.7}-\ref{fig:case1_pro.gemma} are displayed because the the $\Delta$ values for human texts and fake news, as shown in Table \ref{tab:case1_delta_olympic}, are smaller than the corresponding $\epsilon$ values. This means that the score discrepancies between fake news and XSum texts do not exceed the threshold necessary for rejecting $H_0$. Although under $H_1$, the $\Delta$ for Entropy when using Gemma-2B to score texts generated by Gemini-1.5-Pro is 0.2745, larger than the value of $\epsilon$ ($0.2690$), the discrepancy is too small to lead to a rejection of $H_0$ within 500 time steps.

According to Figure \ref{fig:case2_all_olympic}, Scenario 2 exhibits a similar trend to that observed in Scenario 1, where texts generated by PaLM 2 are quickly declared as originating from an LLM, while texts produced by Gemini-1.5-Pro are identified more slowly. In Scenario 2, Fast-DetectGPT consistently outperforms all other score functions when using Neo-2.7 as scoring model, as evidenced by the results in Figure \ref{fig:case2_flash.neo2.7}, \ref{fig:case2_pro.neo2.7} and \ref{fig:case2_palm2.neo2.7}. Only the score functions of supervised classifiers have FPRs slightly above the significance level $\alpha$. Although their average estimated values of $\epsilon$ in Table \ref{tab:case2_epsilon_olympic} are larger than the actual $\Delta$ value under $H_0$ in Table \ref{tab:case1_delta_olympic}, high FPRs often occur because most $\epsilon$ values estimated in 1000 repeated tests are smaller than the actual $\Delta$ values. When using Gemma-2B as the scoring model in Scenario 2, four score functions: Fast-DetectGPT, LRR, Likelihood, and DNA-GPT consistently maintain FPRs within the expected range $\alpha$. Likelihood is the fastest to reject $H_0$. However, estimated $\epsilon$ values of DetectGPT, NPR, and Entropy are smaller than real $\Delta$ values under $H_0$. Then, even when $H_0$ is true, the discrepancy between human texts exceed the estimated threshold $\epsilon$ for rejecting  $H_0$, which result in high FPRs, as shown in Figure \ref{fig:case2_flash.gemma}, \ref{fig:case2_pro.gemma} and \ref{fig:case2_palm2.gemma}.

\begin{figure}[htbp]
    \centering
    \begin{subfigure}{0.49\textwidth}
        \includegraphics[width=\linewidth]{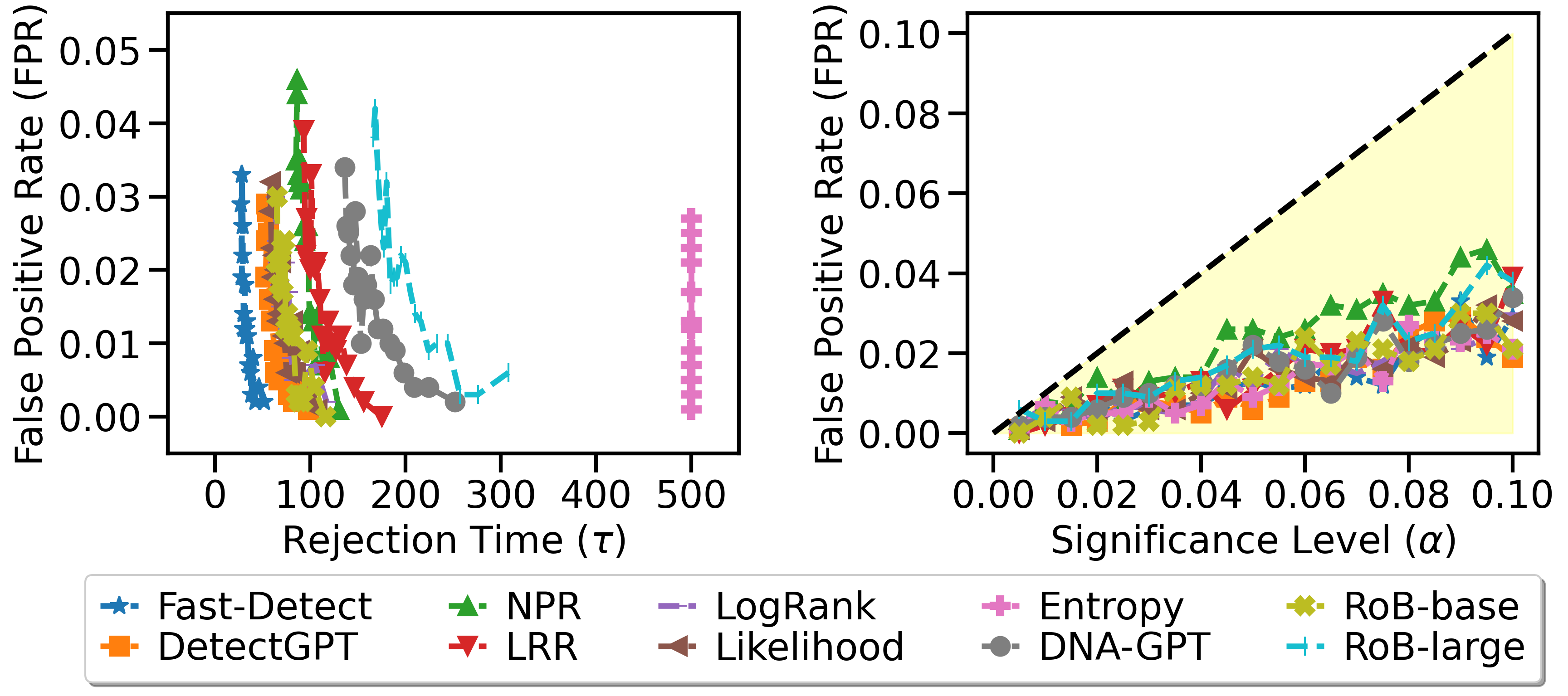}
        \caption{Results for detecting the source of $y_t$ (2024 Olympic news vs. news generated by Gemini-1.5-Flash), with human-written text $x_t$ sampled from XSum. The scoring model used is Neo-2.7.}
        \label{fig:case1_flash.neo2.7}
    \end{subfigure}\hfill
    \begin{subfigure}{0.49\textwidth}
        \includegraphics[width=\linewidth]{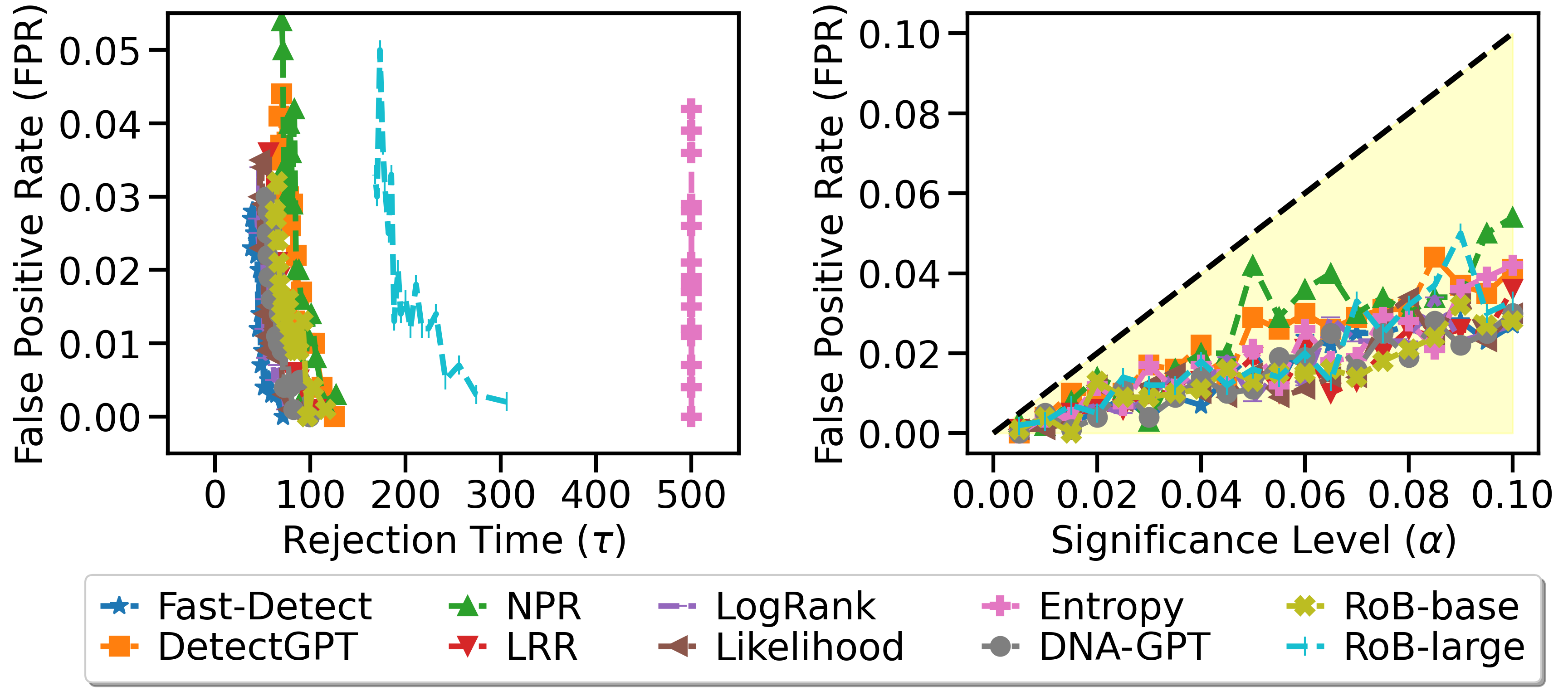}
        \caption{Results for detecting the source $y_t$ (2024 Olympic news vs. news generated by Gemini-1.5-Flash), with human-written text $x_t$ sampled from XSum. The scoring model used is Gemma-2B.}
        \label{fig:case1_flash.gemma}
    \end{subfigure}\hfill
    \begin{subfigure}{0.49\textwidth}
        \includegraphics[width=\linewidth]{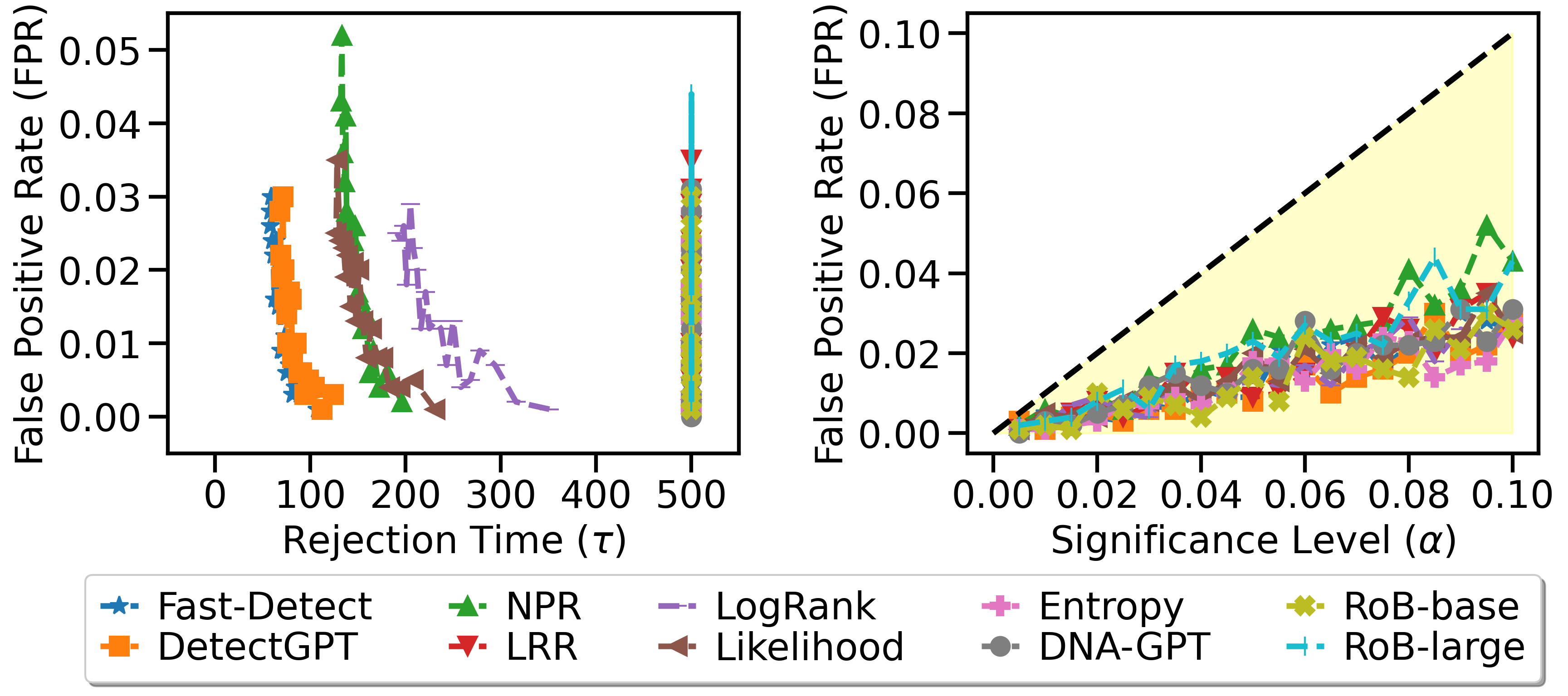}
        \caption{Results for detecting the source of $y_t$ (2024 Olympic news vs. news generated by Gemini-1.5-Pro), with human-written text $x_t$ sampled from XSum. The scoring model used is Neo-2.7.}
        \label{fig:case1_pro.neo2.7}
    \end{subfigure}\hfill
    \begin{subfigure}{0.49\textwidth}
        \includegraphics[width=\linewidth]{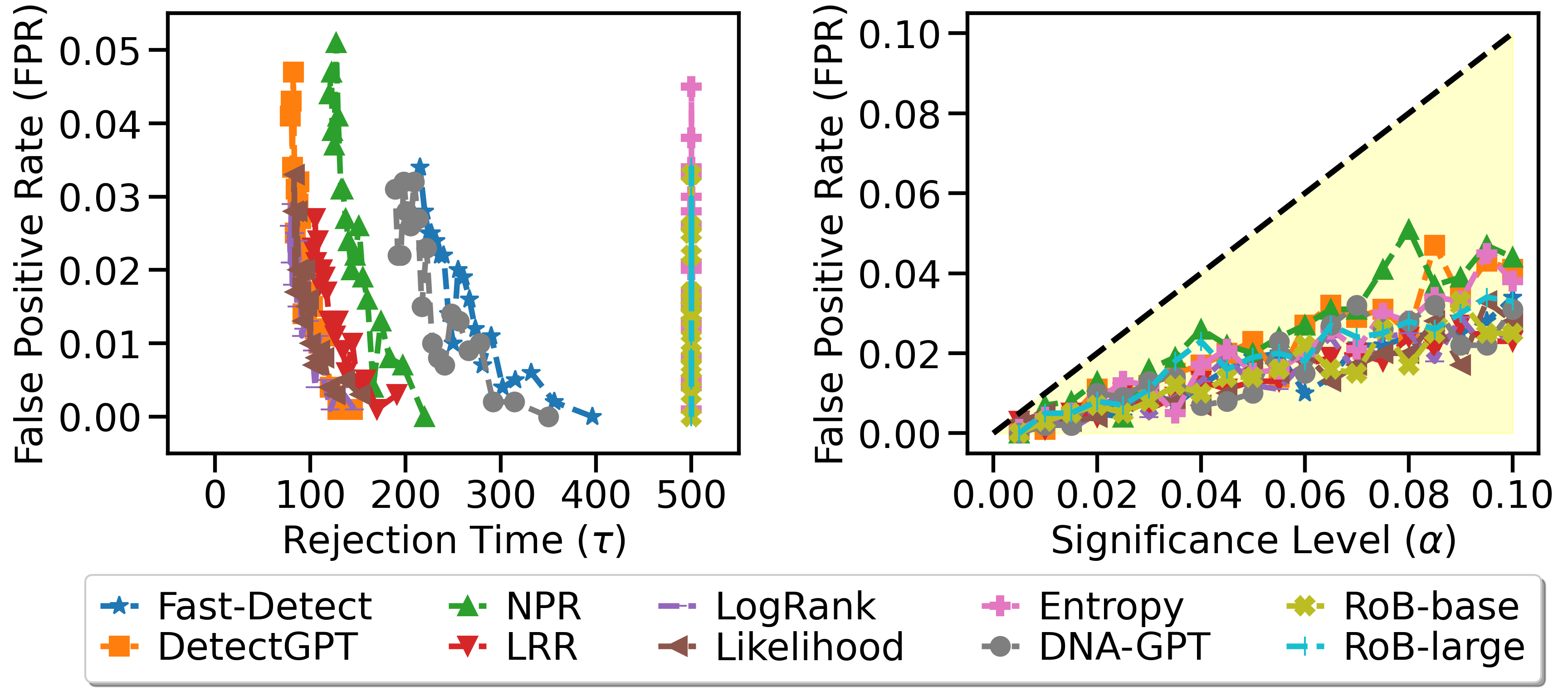}
        \caption{Results for detecting the source of $y_t$ (2024 Olympic news vs. news generated by Gemini-1.5-Pro), with human-written text $x_t$ sampled from XSum. The scoring model used is Gemma-2B.}
        \label{fig:case1_pro.gemma}
    \end{subfigure}\hfill
    \begin{subfigure}{0.49\textwidth}
        \includegraphics[width=\linewidth]{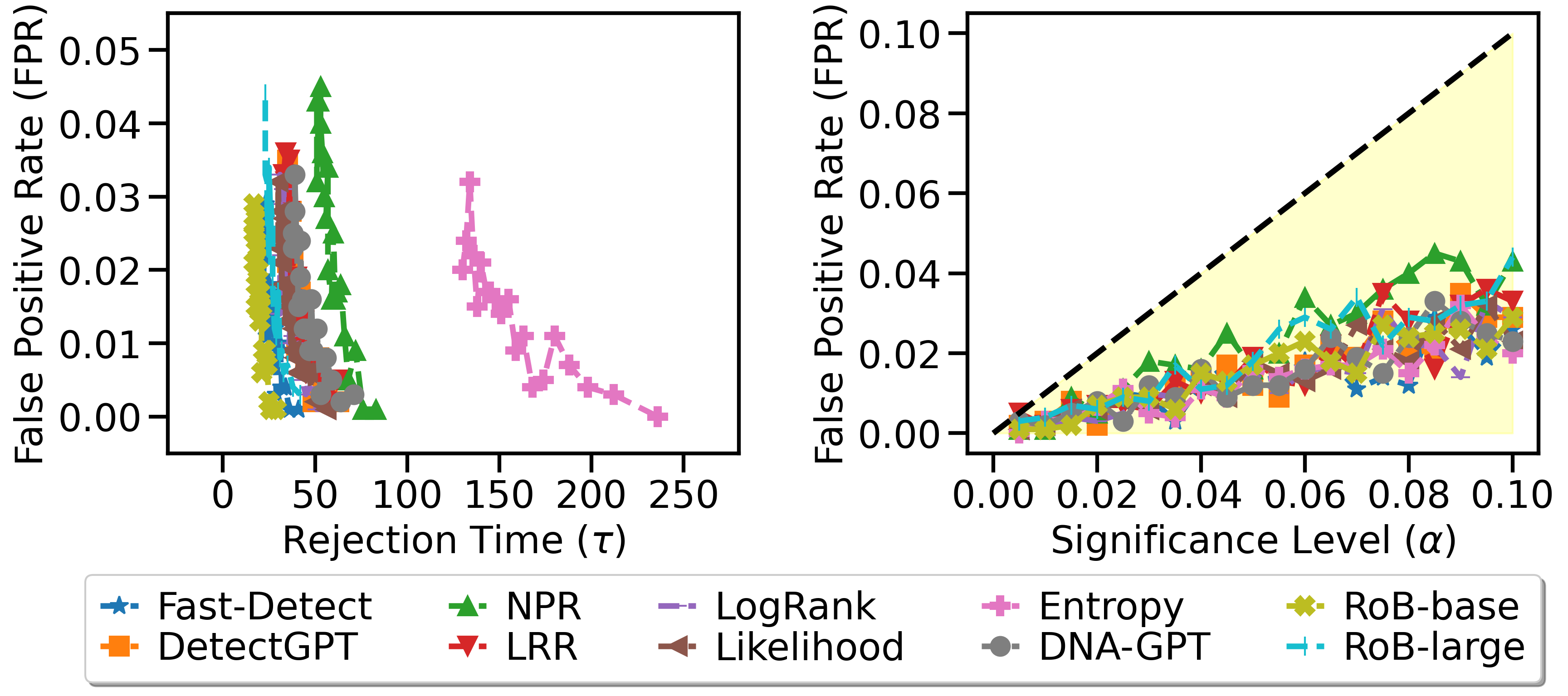}
        \caption{Results for detecting the source of $y_t$ (2024 Olympic news vs. news generated by PaLM 2), with human-written text $x_t$ sampled from XSum. The scoring model used is Neo-2.7.}
        \label{fig:case1_palm2.neo2.7}
    \end{subfigure}\hfill
    \begin{subfigure}{0.49\textwidth}
        \includegraphics[width=\linewidth]{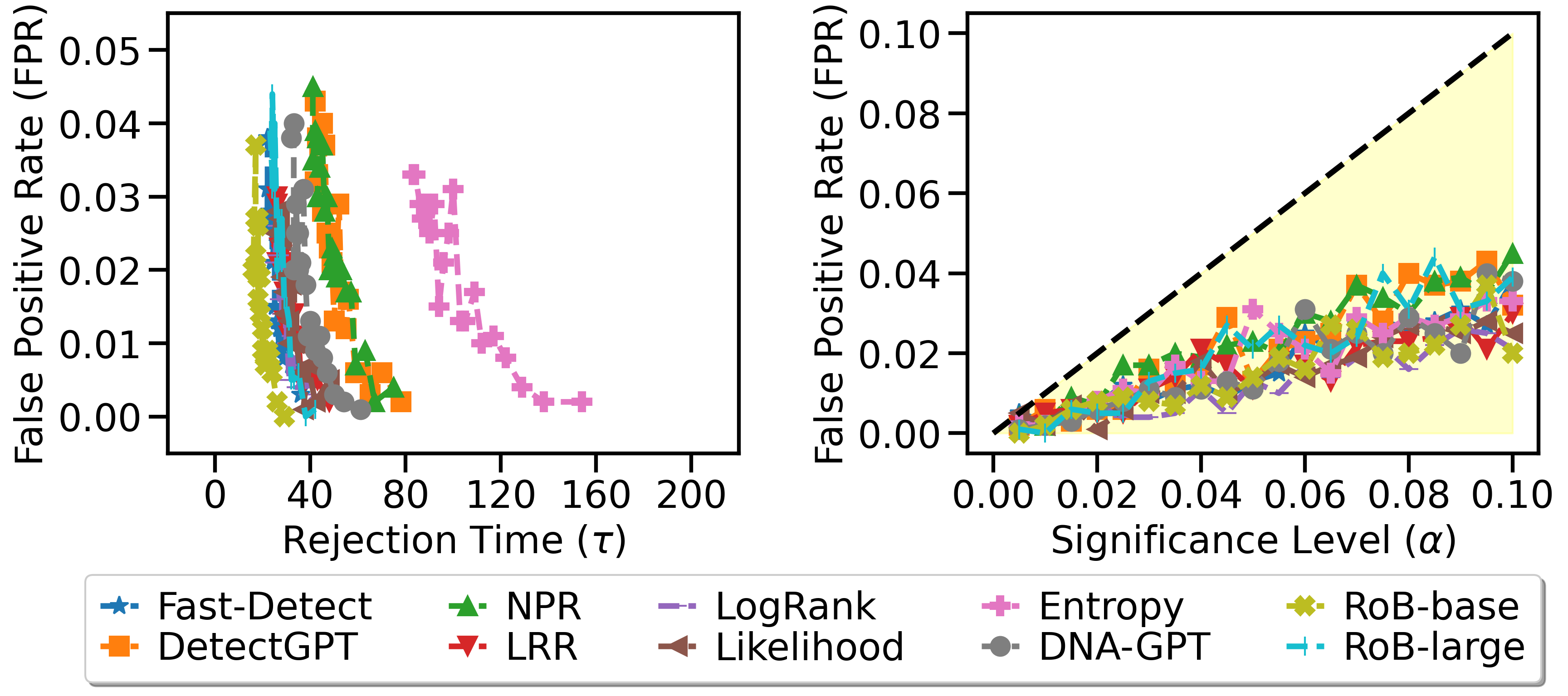}
        \caption{Results for detecting the source of $y_t$ (2024 Olympic news vs. news generated by PaLM 2), with human-written text $x_t$ sampled from XSum. The scoring model used is Gemma-2B.}
        \label{fig:case1_palm2.gemma}
    \end{subfigure}
    \caption{Results for detecting 2024 Olympic news and machine-generated news with Algorithm \ref{alg:alg3} for Scenario 1. We use 3 source models: Gemini-1.5-Flash, Gemini-1.5-Pro and PaLM 2 to generate fake news and 2 scoring models: Neo-2.7, Gemma-2B. The left column displays results using the Neo-2.7 scoring model, while the right column presents results using the Gemma-2B scoring model. Score functions of supervised classifiers (RoB-base and RoB-large) are independent of scoring models.}
    \label{fig:case1_all_olympic}
\end{figure}

\begin{figure}[htbp]
    \centering
    \begin{subfigure}{0.49\textwidth}
        \includegraphics[width=\linewidth]{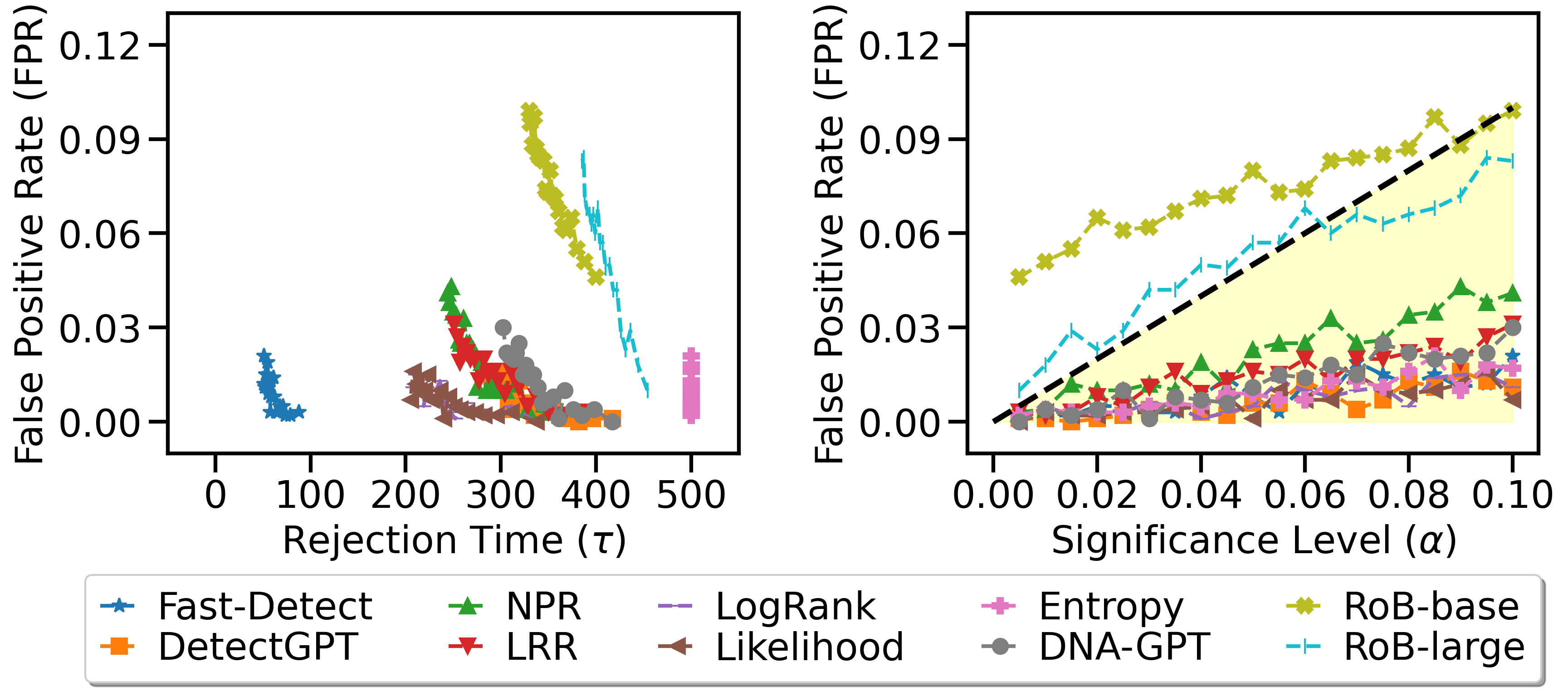}
        \caption{Results for detecting the source of $y_t$ (2024 Olympic news vs. news generated by Gemini-1.5-Flash), with human-written text $x_t$ sampled from XSum. The scoring model used is Neo-2.7.}
        \label{fig:case2_flash.neo2.7}
    \end{subfigure}\hfill
    \begin{subfigure}{0.49\textwidth}
        \includegraphics[width=\linewidth]{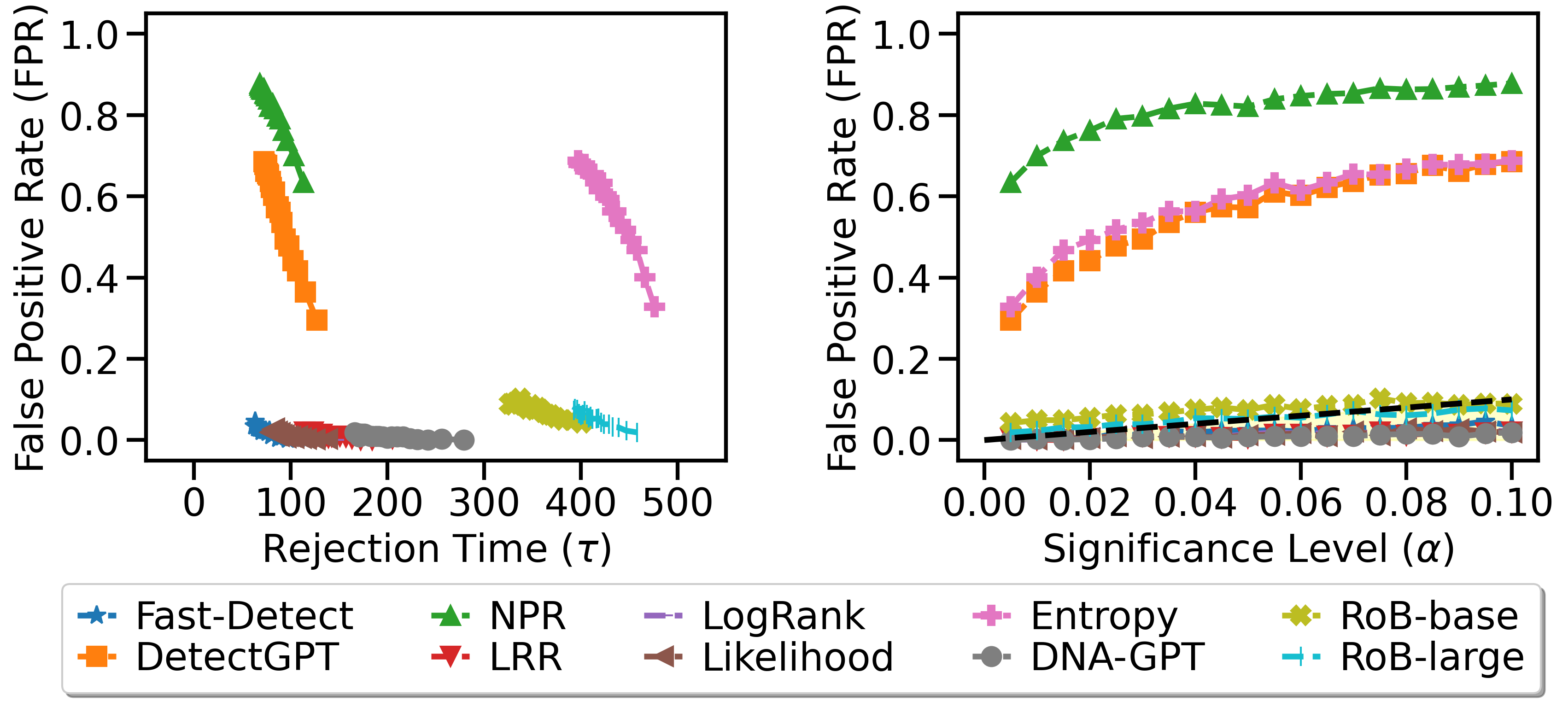}
        \caption{Results for detecting the source of $y_t$ (2024 Olympic news vs. news generated by Gemini-1.5-Flash), with human-written text $x_t$ sampled from XSum. The scoring model used is Gemma-2B.}
        \label{fig:case2_flash.gemma}
    \end{subfigure}\hfill
    \begin{subfigure}{0.49\textwidth}
        \includegraphics[width=\linewidth]{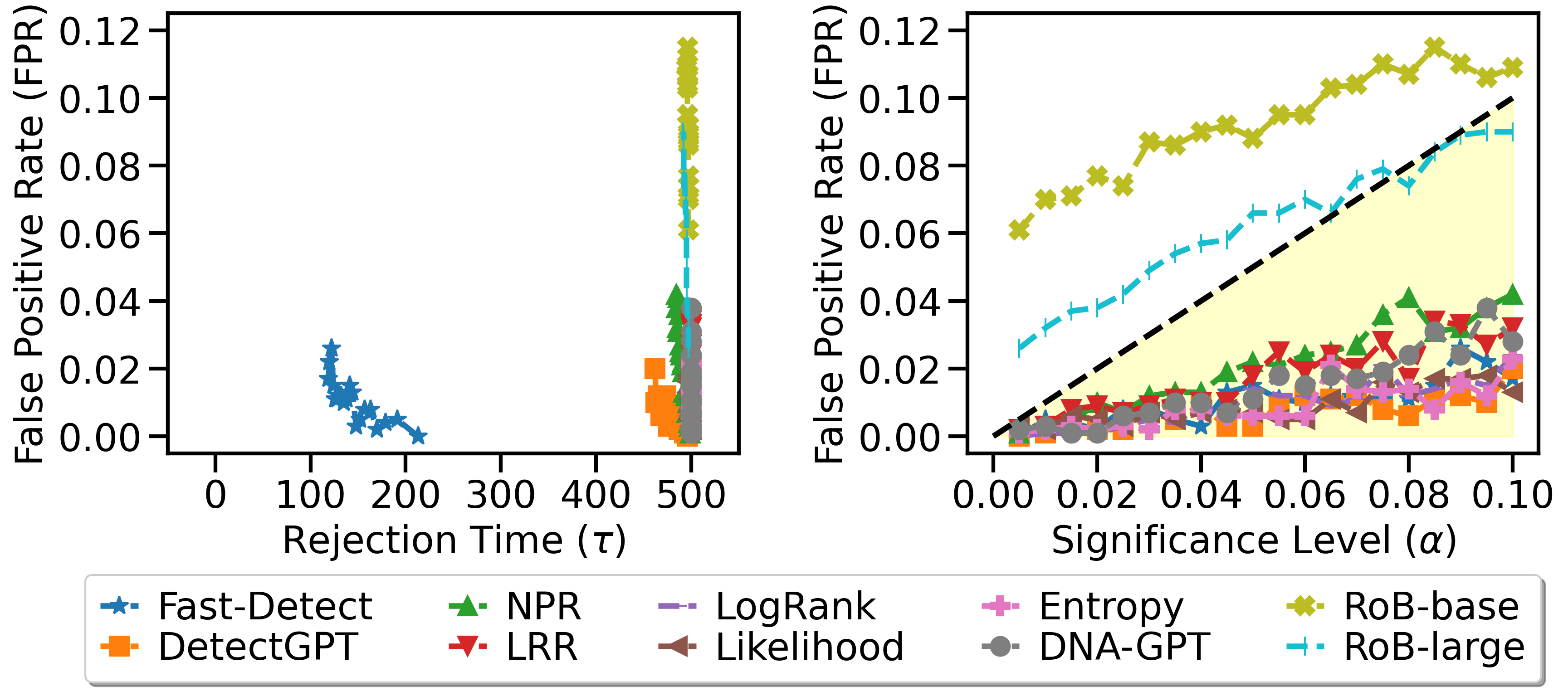}
        \caption{Results for detecting the source of $y_t$ (2024 Olympic news vs. news generated by Gemini-1.5-Pro), with human-written text $x_t$ sampled from XSum. The scoring model used is Neo-2.7.}
        \label{fig:case2_pro.neo2.7}
    \end{subfigure}\hfill
    \begin{subfigure}{0.49\textwidth}
        \includegraphics[width=\linewidth]{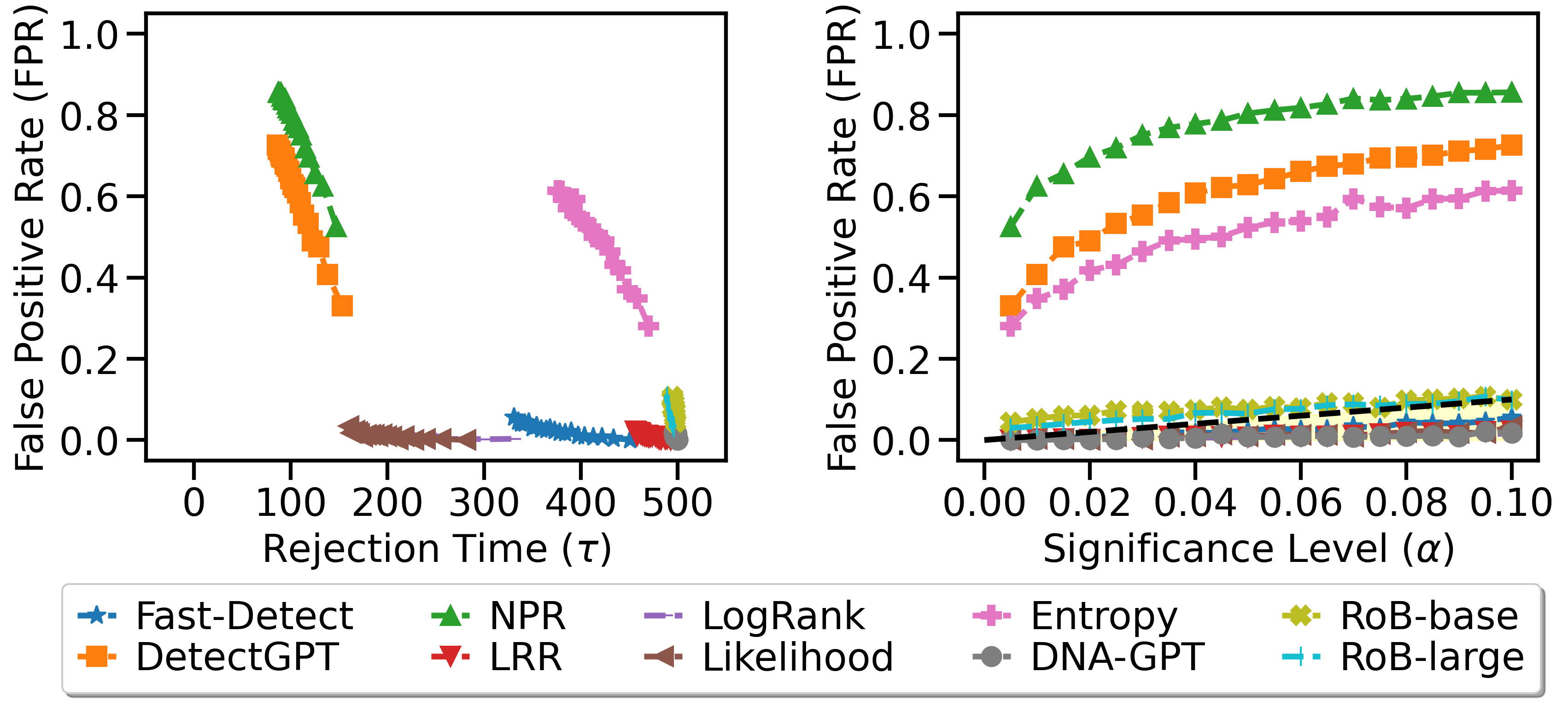}
        \caption{Results for detecting the source of $y_t$ (2024 Olympic news vs. news generated by Gemini-1.5-Pro), with human-written text $x_t$ sampled from XSum. The scoring model used is Gemma-2B.}
        \label{fig:case2_pro.gemma}
    \end{subfigure}\hfill
    \begin{subfigure}{0.49\textwidth}
        \includegraphics[width=\linewidth]{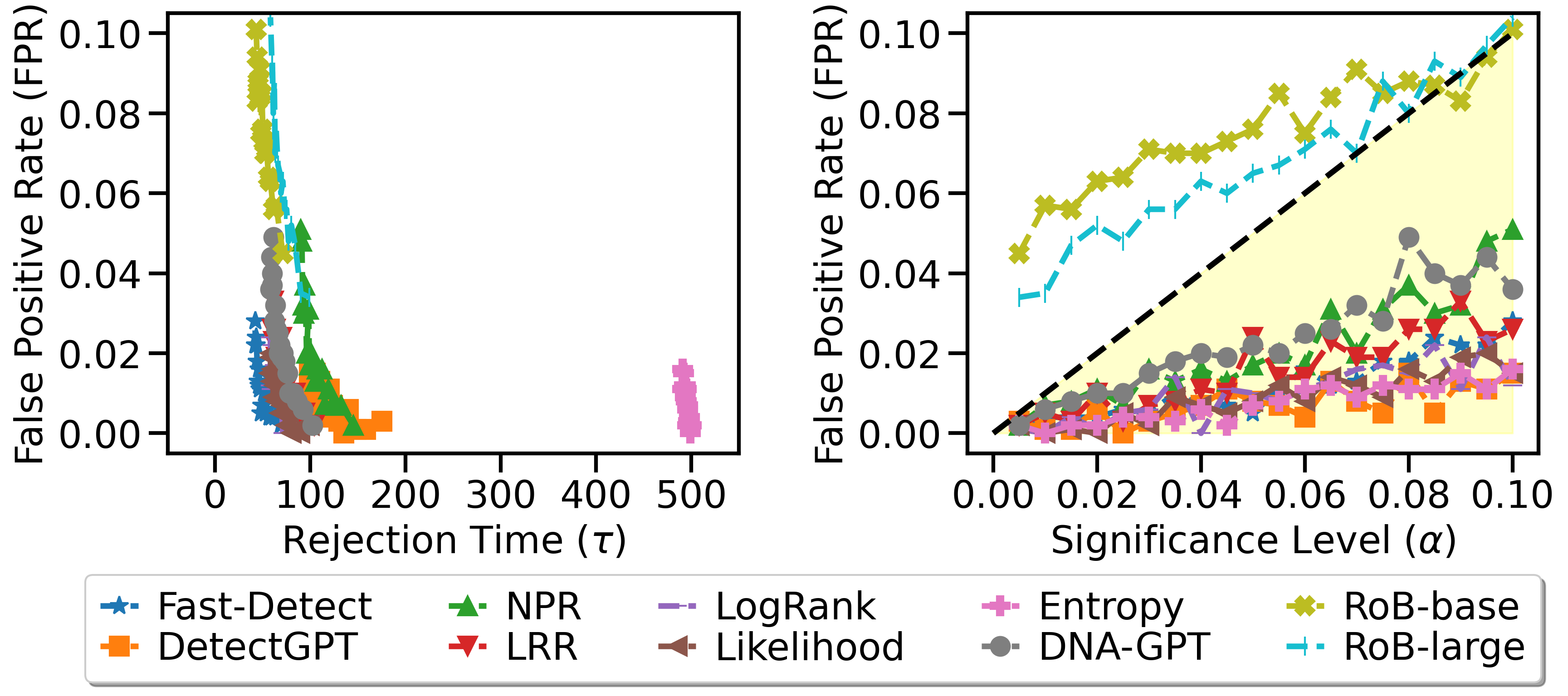}
        \caption{Results for detecting the source of $y_t$ (2024 Olympic news vs. news generated by PaLM 2), with human-written text $x_t$ sampled from XSum. The scoring model used is Neo-2.7.}
        \label{fig:case2_palm2.neo2.7}
    \end{subfigure}\hfill
    \begin{subfigure}{0.49\textwidth}
        \includegraphics[width=\linewidth]{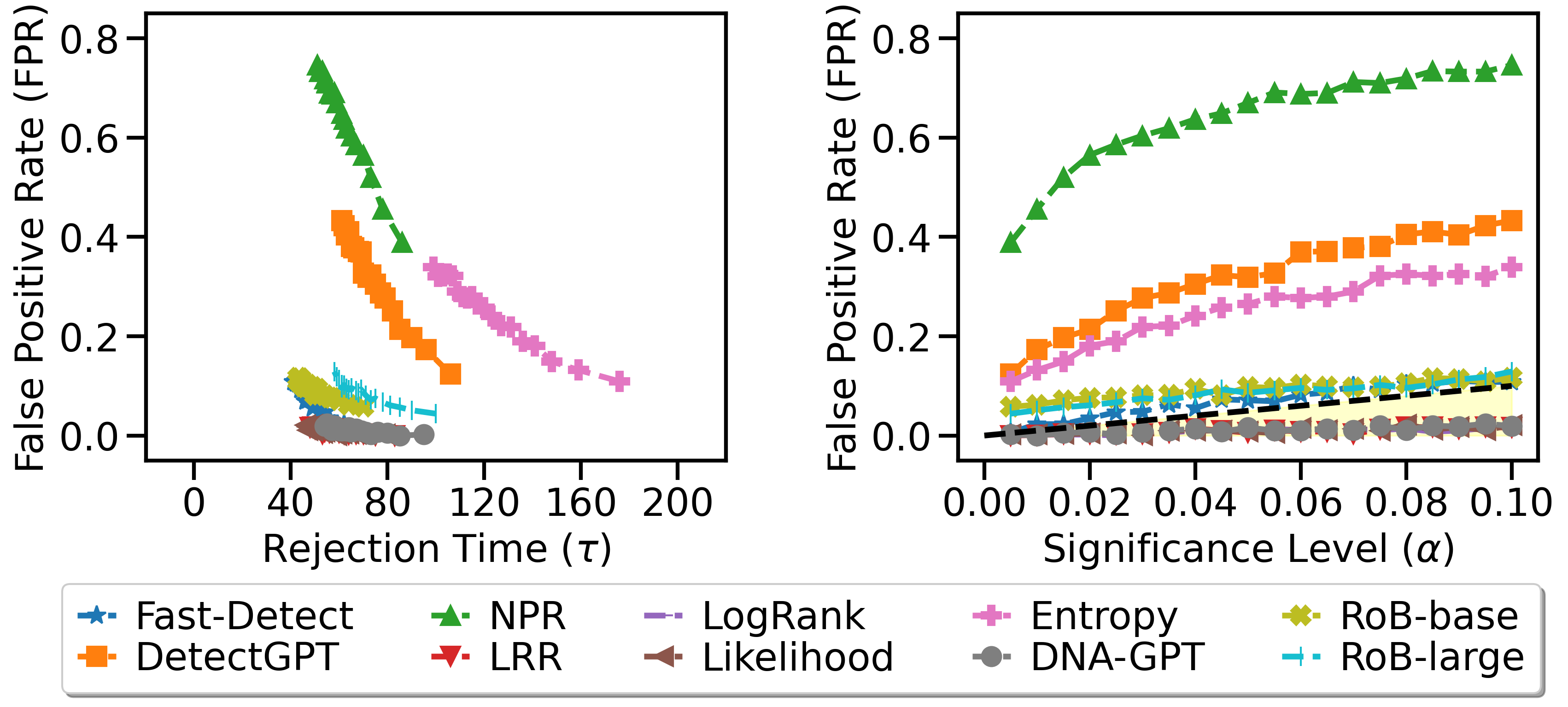}
        \caption{Results for detecting the source of $y_t$ (2024 Olympic news vs. news generated by PaLM 2), with human-written text $x_t$ sampled from XSum. The scoring model used is Gemma-2B.}
        \label{fig:case2_palm2.gemma}
    \end{subfigure}
    \caption{Results for detecting 2024 Olympic news and machine-generated news with our algorithm for Scenario 2. We use 3 source models: Gemini-1.5-Flash, Gemini-1.5-Pro and PaLM 2 to generate fake news and 2 scoring models: Neo-2.7, Gemma-2B. The left column displays results using the Neo-2.7 scoring model, while the right column presents results using the Gemma-2B scoring model. Score functions of supervised classifiers (RoB-base and RoB-large) are independent of scoring models.}
    \label{fig:case2_all_olympic}
\end{figure}

\begin{table}[!ht]
\caption{ Values of $\Delta$, which are calculated according to $\Delta=\left\lvert{\sum_{i=1}^{500}\phi(x_i)}/{500}-{\sum_{j=1}^{500} \phi(y_j)}/{500}\right\rvert$, where $\phi(x_i)$ is score of the $i$-th text $x_i$ from XSum, $\phi(y_i)$ is score of the $j$-th text $y_j$ from the source to be detected. Every two columns starting from the third column represent the $\Delta$ values under $H_1$ and $H_0$ for each test scenario. 
For instance, in calculating $\Delta$ for the third column, $y_j$ represents the $j$-th fake news generated by Gemini-1.5-Flash based on pre-tokens of Olympic 2024 news. For the fourth column, $y_j$ refers to the $j$-th 2024 Olympic news articles. Values in Column ``Human, Human" are also used to set $\epsilon$ values for tests in Scenario 1.
}

\label{tab:case1_delta_olympic}
\begin{center}
\footnotesize
\begin{tabular}{lccccccccc}
\toprule
\multirow{3}{*}{\textbf{Scoring Model}} & \multirow{3}{*}{\textbf{Score Function}} & \multicolumn{2}{c}{\textbf{XSum, Olympic}} & \multicolumn{2}{c}{\textbf{XSum, Olympic}} & \multicolumn{2}{c}{\textbf{XSum, Olympic}}\\
\cmidrule(lr){3-4} \cmidrule(lr){5-6} \cmidrule(lr){7-8}
 &  & Human, & Human, & Human, & Human, & Human, & Human,\\
 &  & 1.5-Flash & Human & 1.5-Pro & Human & PaLM 2 & Human \\
\midrule
\multirow{8}{*}{\textbf{Neo-2.7}}
&Fast-DetectGPT &2.4786 &0.3634 &1.2992 &0.3660 &3.6338 &0.4232\\
&DetectGPT      &0.3917 &0.0202 &0.3101 &0.0274 &0.6050 &0.0052\\
&NPR            &0.0232 &0.0014 &0.0155 &0.0015 &0.0398 &0.0005\\
&LRR            &0.1042 &0.0324 &0.0289 &0.0328 &0.2606 &0.0370\\
&Logrank        &0.2590 &0.0543 &0.1312 &0.0561 &0.4995 &0.0743\\
&Likelihood     &0.3882 &0.0618 &0.2170 &0.0652 &0.7641 &0.0948\\
&Entropy        &0.0481 &0.0766 &0.0067 &0.0728 &0.1878 &0.0483\\
&DNA-GPT        &0.1937 &0.0968 &0.0957 &0.1032 &0.4086 &0.1083\\ \hdashline
&RoBERTa-base   &0.2265 &0.0461 &0.0287 &0.0491 &0.6343 &0.0370 \\
&RoBERTa-large  &0.0885 &0.0240 &0.0249 &0.0250 &0.4197 &0.0281\\
\midrule
\multirow{8}{*}{\textbf{Gemma-2B}}
&Fast-DetectGPT & 2.1412    &0.5889 &0.9321 &0.5977 &3.7314 &0.6758\\
&DetectGPT      & 0.7146    &0.3538 &0.6193 &0.3530 &0.8403 &0.3360\\
&NPR            & 0.0632    &0.0254 &0.0477 &0.0249 &0.1005 &0.0232\\
&LRR            & 0.1604    &0.0129 &0.0825 &0.0112 &0.3810 &0.0038\\
&Logrank        & 0.3702    &0.0973 &0.2527 &0.0932 &0.5917 &0.0687\\
&Likelihood     & 0.6093    &0.1832 &0.4276 &0.1761 &0.9705 &0.1358\\
&Entropy        & 0.2668    &0.2743 &0.2745 &0.2690 &0.4543 &0.2347\\
&DNA-GPT        & 0.2279    &0.0353 &0.1144 &0.0491 &0.4072 &0.0681\\ \hdashline
&RoBERTa-base   & 0.2265    &0.0461 &0.0287 &0.0491 &0.6343 &0.0370\\
&RoBERTa-large  & 0.0885    &0.0240 &0.0249 &0.0250 &0.4197 &0.0281\\
\bottomrule
\end{tabular}
\end{center}
\end{table}

\begin{table}[!ht]
\caption{ Values of $d_t$ used in Scenario 1, for which we assume that the range of $|g_t|=\lvert\phi(x_t)-\phi(y_t)\rvert$ is known beforehand. Specifically, $d_t$ is calculated as $\max_{i,j\leq 500}\lvert \phi(x_i)-\phi(y_j)\rvert$, where $\phi(x_i)$ is the score of $i$-th XSum text and $\phi(y_j)$ is the score of the $j$-th text generated by Gemini-1.5-Flash, prompted by pre-tokens of 2024 Olympic news. This calculation ensures that $d_t\geq\lvert\phi(x_t)-\phi(y_t)\rvert$ for any time point $1\leq t\leq 500$. Every two columns starting from the third column represent the $d_t$ values for each $t$ used under $H_1$ and $H_0$ in each test scenario. For instance, the values in the third column  Similarly, the fourth column calculates the maximum difference between the scores of all XSum texts and texts sample of 2024 Olympic news.. The derived $d_t$ values is then used to define the domain of $\theta_t$ in our algorithm, where $\theta_t\in[-1/2d_t, 0]$. 
}

\label{tab:case1_dt_olympic}
\begin{center}
\footnotesize
\begin{tabular}{lccccccccc}
\toprule
\multirow{3}{*}{\textbf{Scoring Model}} & \multirow{3}{*}{\textbf{Score Function}} & \multicolumn{2}{c}{\textbf{XSum, Olympic}} & \multicolumn{2}{c}{\textbf{XSum, Olympic}} & \multicolumn{2}{c}{\textbf{XSum, Olympic}}\\
\cmidrule(lr){3-4} \cmidrule(lr){5-6} \cmidrule(lr){7-8}
 &  & Human, & Human, & Human, & Human, & Human, & Human, \\
 &  & 1.5-Flash & Human & 1.5-Pro & Human & PaLM 2 & Human \\
\midrule
\multirow{8}{*}{\textbf{Neo-2.7}}
&Fast-DetectGPT   & 7.6444 & 5.9956 & 6.5104 & 6.1546 & 9.1603 & 5.8870\\
&DetectGPT        & 2.3985 & 2.3102 & 2.1416 & 2.2683 & 2.6095 & 2.7447\\
&NPR              & 0.1500 & 0.1436 & 0.1295 & 0.1465 & 0.1975 & 0.1353\\
&LRR              & 0.8129 & 0.5877 & 0.6400 & 0.5875 & 0.9793 & 0.5421\\
&Logrank          & 1.5861 & 1.6355 & 1.4065 & 1.6355 & 1.7298 & 1.6355\\
&Likelihood       & 2.3004 & 2.4540 & 1.9491 & 2.4540 & 2.6607 & 2.5559\\
&Entropy          & 1.6523 & 1.6630 & 1.5890 & 1.6702 & 1.9538 & 1.6265\\
&DNA-GPT          & 1.5063 & 1.5425 & 1.3621 & 1.5649 & 1.5455 & 1.6348\\ \hdashline
&RoBERTa-base     & 0.9997 & 0.9995 & 0.9995 & 0.9995 & 0.9997 & 0.9996\\
&RoBERTa-large    & 0.9983 & 0.9856 & 0.8945 & 0.8945 & 0.9992 & 0.8608\\
\midrule
\multirow{8}{*}{\textbf{Gemma-2B}}
&Fast-DetectGPT   & 7.7651 & 6.5619 & 7.3119 & 6.4343 & 8.6156 & 6.5640\\
&DetectGPT        & 2.9905 & 2.5449 & 2.3846 & 2.4274 & 2.6807 & 2.7878\\
&NPR              & 0.3357 & 0.2196 & 0.3552 & 0.2318 & 0.4118 & 0.2403\\
&LRR              & 1.1189 & 0.7123 & 0.8780 & 0.7109 & 1.2897 & 0.7717\\
&Logrank          & 1.5731 & 1.6467 & 1.4713 & 1.6468 & 1.7538 & 1.6397\\
&Likelihood       & 2.4934 & 2.4229 & 2.3944 & 2.4228 & 2.8379 & 2.4694\\
&Entropy          & 1.9791 & 1.9117 & 1.8572 & 1.8854 & 2.1359 & 1.9210\\
&DNA-GPT          & 1.3214 & 1.4808 & 1.2891 & 1.4607 & 1.5014 & 1.6296\\ \hdashline
&RoBERTa-base     & 0.9997 & 0.9995 & 0.9995 & 0.9995 & 0.9997 & 0.9996\\
&RoBERTa-large    & 0.9983 & 0.9856 & 0.8945 & 0.8945 & 0.9992 & 0.8608\\
\bottomrule
\end{tabular}
\end{center}
\end{table}

To summarize, the FPRs can be controlled below the significance level $\alpha$ if the preset $\epsilon$ is greater than or equal to the actual absolute difference in mean scores between two sequences of human texts. However, if $\epsilon$ is greater than or is nearly equal to the $\Delta$ value for human text $x_t$ and machine-generated text $y_t$, it would be challenging for our algorithm to declare the source of $y_t$ as an LLM within a limited number of time steps under $H_1$. 

Moreover, we found that the rejection time is related to the relative magnitude of $\Delta-\epsilon$ and $d_t-\epsilon$. According to the definition of nonnegative wealth $W_t^A=W_{t-1}^A(1-\theta_t(g_t-\epsilon))$ or $W_t^B=W_{t-1}^B(1-\theta_t(-g_t-\epsilon))$, large $-\theta_t$ within the range $[0,1/2d_t]$ will result in large wealth which allows to quickly reach the threshold for wealth to correctly declare the unknown source as an LLM. Based on the previous proposition of the expected time upper bound for composite hypothesis, we guess that the actual rejection time in our experiment is probably related to the relative magnitude of $\Delta-\epsilon$ and $d_t-\epsilon$, where $d_t$ is a certain value for any $t$ in each test as shown in Table \ref{tab:case1_dt_olympic}. We define the relative magnitude as $\frac{\Delta-\epsilon}{d_t-\epsilon}$ and sort the score functions by this ratio from largest to smallest for Scenario 1, as displayed in the Rank column in Table \ref{tab:case1_delta_dt_ratio}. This ranking roughly corresponds to the chronological order of rejection shown in Figure \ref{fig:case1_all_olympic}. The quick declaration of an LLM source when $y_t$ is generated by PaLM 2 and the slower rejection of $H_0$ when $y_t$ is generated by Gemini-1.5-Pro in Figure \ref{fig:case1_all_olympic} could be attributed to the relatively larger and smaller values of $\frac{\Delta-\epsilon}{d_t-\epsilon}$, respectively. The negative ratios arise because $\Delta < \epsilon$, leading to the vertical lines in the figures. For Scenario 2, the values of $\frac{\Delta-\epsilon}{d_t-\epsilon}$ are provided in Table~\ref{tab:case2_delta_dt_ratio}, and the conclusions from Scenario 1 extend to Scenario 2.

Thus, we prefer a larger discrepancy between scores of human-written texts and machine-generated texts, which can increase $\Delta$. Furthermore, a smaller variation among scores of texts from the same source can reduce $\epsilon$. These properties facilitate a shorter rejection time under $H_1$.

\begin{table}[!ht]
\caption{ Values of the ratio $\frac{\Delta-\epsilon}{d_t-\epsilon}$ for Scenario 1, where $\Delta$ and $\epsilon$ are listed in Table \ref{tab:case1_delta_olympic}, $d_t$ are shown in Table \ref{tab:case1_dt_olympic}. We sort the score function according to the ratio from largest to smallest, as shown in the Rank column. This ranking roughly corresponds to the chronological order of rejection in in Figure \ref{fig:case1_all_olympic}.}
\label{tab:case1_delta_dt_ratio}
\begin{center}
\footnotesize
\begin{tabular}{lccccccccc}
\toprule
\multirow{2}{*}{\textbf{Scoring Model}} & \multirow{2}{*}{\textbf{Score Function}} & \multicolumn{2}{c}{\textbf{Human, 1.5-Flash}} & \multicolumn{2}{c}{\textbf{Human, 1.5-Pro}} & \multicolumn{2}{c}{\textbf{Human, PaLM 2}} \\
\cmidrule(lr){3-4} \cmidrule(lr){5-6} \cmidrule(lr){7-8}
 &  & Ratio & Rank & Ratio & Rank & Ratio & Rank\\
\midrule
\multirow{8}{*}{\textbf{Neo-2.7}}
&Fast-DetectGPT     &0.2905     &1  &0.1519     &1  &0.3675     &3\\
&DetectGPT          &0.1562     &3  &0.1337     &2  &0.2303     &7\\
&NPR                &0.1467     &4  &0.1093     &3  &0.1995     &9\\
&LRR                &0.0920     &7  &-0.0064    &8  &0.2373     &6\\
&Logrank            &0.1336     &6  &0.0556     &5  &0.2568     &5\\
&Likelihood         &0.1458     &5  &0.0806     &4  &0.2608     &4\\
&Entropy            &-0.0181    &10 &-0.0436    &10 &0.0732     &10\\
&DNA-GPT            &0.0687     &8  &-0.0060    &7  &0.2089     &8\\ \hdashline
&RoBERTa-base       &0.1892     &2  &-0.0215    &9  &0.6205     &1\\
&RoBERTa-large      &0.0662     &9  &-0.0001    &6  &0.4032     &2\\
\midrule
\multirow{8}{*}{\textbf{Gemma-2B}}
&Fast-DetectGPT     &0.2163     &1  &0.0498     &7  &0.3848     &3\\
&DetectGPT          &0.1368     &6  &0.1311     &1  &0.2151     &8\\
&NPR                &0.1218     &8  &0.0690     &5  &0.1989     &9\\
&LRR                &0.1334     &7  &0.0823     &4  &0.2933     &6\\
&Logrank            &0.1849     &3  &0.1157     &2  &0.3104     &4\\
&Likelihood         &0.1844     &4  &0.1134     &3  &0.3089     &5\\
&Entropy            &-0.0044    &10 &0.0035     &8  &0.1155     &10\\
&DNA-GPT            &0.1498     &5  &0.0527     &6  &0.2366     &7\\ \hdashline
&RoBERTa-base       &0.1892     &2  &-0.0215    &10 &0.6205     &1\\
&RoBERTa-large      &0.0662     &9  &-0.0001    &9  &0.4032     &2\\
\bottomrule
\end{tabular}
\end{center}
\end{table}

\begin{table}[!ht]
\caption{ Values of the ratio $\frac{\Delta-\epsilon}{d_t-\epsilon}$ for Scenario 2, where $\Delta$ and $\epsilon$ are listed in Table \ref{tab:case1_delta_olympic} and Table \ref{tab:case2_epsilon_olympic} respectively, $d_t$ are shown in Table \ref{tab:case2_dt_olympic}. We sort the scoring function according to the ratio from largest to smallest, as shown in the Rank column. This ranking roughly corresponds to the chronological order of rejection in in Figure \ref{fig:case2_all_olympic}.}
\label{tab:case2_delta_dt_ratio}
\begin{center}
\footnotesize
\begin{tabular}{lccccccccc}
\toprule
\multirow{2}{*}{\textbf{Scoring Model}} & \multirow{2}{*}{\textbf{Score Function}} & \multicolumn{2}{c}{\textbf{Human, 1.5-Flash}} & \multicolumn{2}{c}{\textbf{Human, 1.5-Pro}} & \multicolumn{2}{c}{\textbf{Human, PaLM 2}} \\
\cmidrule(lr){3-4} \cmidrule(lr){5-6} \cmidrule(lr){7-8}
&  & Ratio & Rank & Ratio & Rank & Ratio & Rank\\
\midrule
\multirow{8}{*}{\textbf{Neo-2.7}}
&Fast-DetectGPT      &0.1895    &1  &0.0874     &1  &0.2420     &2\\
&DetectGPT           &0.0427    &7  &0.0103     &2  &0.1024     &9\\
&NPR                 &0.0592    &2  &0.0076     &3  &0.1268     &8\\
&LRR                 &0.0474    &5  &-0.0653    &7  &0.1568     &7\\
&Logrank             &0.0576    &4  &-0.0218    &5  &0.1687     &4\\
&Likelihood          &0.0582    &3  &-0.0132    &4  &0.1680     &5\\
&Entropy             &-0.0883   &10 &-0.1162    &10 &-0.0051    &10\\
&DNA-GPT             &0.0405    &8  &-0.0298    &6  &0.1636     &6\\ \hdashline
&RoBERTa-base        &0.0461    &6  &-0.1042    &9  &0.2713     &1\\
&RoBERTa-large       &0.0092    &9  &-0.0942    &8  &0.1814     &3\\
\midrule
\multirow{8}{*}{\textbf{Gemma-2B}}
&Fast-DetectGPT      &0.1562    &1  &0.0368     &5  &0.2444     &2\\
&DetectGPT           &0.1351    &3  &0.1147     &2  &0.1552     &8\\
&NPR                 &0.1492    &2  &0.1167     &1  &0.1961     &6\\
&LRR                 &0.0867    &6  &0.0117     &7  &0.1959     &5\\
&Logrank             &0.1205    &5  &0.0607     &4  &0.2086     &4\\
&Likelihood          &0.1267    &4  &0.0702     &3  &0.2144     &3\\
&Entropy             &0.0270    &9  &0.0293     &6  &0.1017     &10\\ 
&DNA-GPT             &0.0713    &7  &-0.0144    &8  &0.1747     &9\\ \hdashline
&RoBERTa-base        &0.0462    &8  &-0.1007    &9  &0.2737     &1\\
&RoBERTa-large       &0.0060    &10 &-0.1013    1&0 &0.1803     &7\\
\bottomrule
\end{tabular}
\end{center}
\end{table}

\begin{table}[!ht]
\caption{ Average values of $\epsilon$ used in Scenario 2 estimated by 20 texts in sequence of human-written text $x_t$. Every two columns starting from the third column represent the $\epsilon$ values used for $H_1$ and $H_0$ for each test scenario. For instance, the third column calculates $\epsilon$ for tests between XSum text and Gemini-1.5-Flash-generated text sequences by scoring 20 XSum texts, dividing them into two equal groups, and then doubling the average absolute mean difference between these groups across 1000 random shuffles. The fourth column follows the same method to determine the $\epsilon$ value for tests between XSum texts and 2024 Olympic news. This is calculated as $\epsilon=2\cdot\frac{1}{1000}\sum_{n=1}^{1000}\left\lvert{\sum_{i=1}^{10}\phi(x_i^{(n)}})/{10}-{\sum_{i=11}^{20} \phi(x_i^{(n)}})/{10}\right\rvert$, where $\phi(x_i^{(n)})$ denotes the score of the $i$-th text after the $n$-th random shuffling of 20 text scores.}
\label{tab:case2_epsilon_olympic}
\begin{center}
\footnotesize
\begin{tabular}{lccccccccc}
\toprule
\multirow{3}{*}{\textbf{Scoring Model}} & \multirow{3}{*}{\textbf{Score Function}} & \multicolumn{2}{c}{\textbf{XSum, Olympic}} & \multicolumn{2}{c}{\textbf{XSum, Olympic}} & \multicolumn{2}{c}{\textbf{XSum, Olympic}}\\
\cmidrule(lr){3-4} \cmidrule(lr){5-6} \cmidrule(lr){7-8}
 &  & Human, & Human, & Human, & Human, & Human, & Human,\\
 &  & 1.5-Flash & Human & 1.5-Pro & Human & PaLM 2 & Human\\
\midrule
\multirow{8}{*}{\textbf{Neo-2.7}}
&Fast-DetectGPT   & 0.6357  & 0.6371  & 0.6395  & 0.6417  & 0.6415  & 0.6426\\
&DetectGPT        & 0.2781  & 0.2807  & 0.2840  & 0.2870  & 0.2851  & 0.2847\\
&NPR              & 0.0141  & 0.0141  & 0.0145  & 0.0146  & 0.0140  & 0.0141\\
&LRR              & 0.0665  & 0.0666  & 0.0674  & 0.0676  & 0.0652  & 0.0649\\
&Logrank          & 0.1634  & 0.1632  & 0.1624  & 0.1624  & 0.1597  & 0.1590\\
&Likelihood       & 0.2448  & 0.2450  & 0.2455  & 0.2426  & 0.2419  & 0.2416\\
&Entropy          & 0.1994  & 0.1978  & 0.2022  & 0.2000  & 0.1973  & 0.2004\\
&DNA-GPT          & 0.1404  & 0.1394  & 0.1321  & 0.1314  & 0.1345  & 0.1320\\ \hdashline
&RoBERTa-base     & 0.1438  & 0.1459  & 0.1463  & 0.1496  & 0.1261  & 0.1206\\
&RoBERTa-large    & 0.0768  & 0.0787  & 0.0732  & 0.0740  & 0.0821  & 0.0853\\
\midrule
\multirow{8}{*}{\textbf{Gemma-2B}}
&Fast-DetectGPT   & 0.6806  & 0.6785  & 0.6749  & 0.6726  & 0.6781  & 0.6825\\
&DetectGPT        & 0.2731  & 0.2725  & 0.2685  & 0.2664  & 0.2822  & 0.2840\\
&NPR              & 0.0170  & 0.0168  & 0.0171  & 0.0171  & 0.0169  & 0.0169\\
&LRR              & 0.0724  & 0.0728  & 0.0732  & 0.0735  & 0.0725  & 0.0719\\
&Logrank          & 0.1547  & 0.1542  & 0.1567  & 0.1550  & 0.1556  & 0.1521\\
&Likelihood       & 0.2438  & 0.2414  & 0.2472  & 0.2473  & 0.2427  & 0.2429\\
&Entropy          & 0.2082  & 0.2088  & 0.2120  & 0.2092  & 0.2076  & 0.2078\\
&DNA-GPT          & 0.1354  & 0.1355  & 0.1317  & 0.1325  & 0.1278  & 0.1286\\ \hdashline
&RoBERTa-base     & 0.1438  & 0.1433  & 0.1437  & 0.1448  & 0.1198  & 0.1208\\
&RoBERTa-large    & 0.0807  & 0.0800  & 0.0750  & 0.0743  & 0.0831  & 0.0823\\
\bottomrule
\end{tabular}
\end{center}
\end{table}

\begin{table}[!ht]
\caption{ Average values of $d_t$ used in Scenario 2 estimated by using the first 10 texts of each sequence. Every two columns starting from the third column represent the $d_t$ values used for $H_1$ and $H_0$ for each test scenario. Specifically, for the third column, we get the first 10 samples from XSum and the first 10 observed texts generated by Gemini-1.5-Flash. We then calculate the maximum difference between $\phi(x_i)$ and $\phi(y_j)$ for any $1\leq i\leq 10$, $1\leq j\leq 10$. We double this maximum value to estimate $d_t$ value, i.e., $d_t=2\cdot\max_{i,j\leq 10}\lvert \phi(x_i)-\phi(y_j)\rvert$.  The fourth column follows a similar calculation for detecting 2024 Olympic news with XSum texts, where $\phi(x_i)$ represents score of $i$-th text from XSum, $\phi(y_j)$ denotes the score of the $j$-th text of 2024 Olympic news.}
\label{tab:case2_dt_olympic}
\begin{center}
\footnotesize
\begin{tabular}{lccccccccc}
\toprule
\multirow{3}{*}{\textbf{Scoring Model}} & \multirow{3}{*}{\textbf{Score Function}} & \multicolumn{2}{c}{\textbf{XSum, Olympic}} & \multicolumn{2}{c}{\textbf{XSum, Olympic}} & \multicolumn{2}{c}{\textbf{XSum, Olympic}}\\
\cmidrule(lr){3-4} \cmidrule(lr){5-6} \cmidrule(lr){7-8}
 &  & Human, & Human, & Human, & Human, & Human, & Human, \\
 &  & 1.5-Flash & Human & 1.5-Pro & Human & PaLM 2 & Human\\
\midrule
\multirow{8}{*}{\textbf{Neo-2.7}}
&Fast-DetectGPT   & 10.3586 & 6.6788  & 8.1840  & 6.7517  & 13.0086 & 6.8456\\
&DetectGPT        & 2.9396  & 2.6967  & 2.8232  & 2.7183  & 3.4076  & 2.7313\\
&NPR              & 0.1680  & 0.1358  & 0.1464  & 0.1397  & 0.2178  & 0.1337\\
&LRR              & 0.8620  & 0.6418  & 0.6572  & 0.6298  & 1.3115  & 0.6294\\
&Logrank          & 1.8230  & 1.6746  & 1.5966  & 1.6676  & 2.1740  & 1.6807\\
&Likelihood       & 2.7089  & 2.5370  & 2.4016  & 2.4975  & 3.3493  & 2.5529\\
&Entropy          & 1.9123  & 1.9560  & 1.8840  & 1.9841  & 2.0712  & 1.9249\\
&DNA-GPT          & 1.4570  & 1.5696  & 1.3530  & 1.5374  & 1.8097  & 1.6042\\ \hdashline
&RoBERTa-base     & 1.9404  & 1.1136  & 1.2745  & 1.1795  & 1.9993  & 1.0390\\
&RoBERTa-large    & 1.3443  & 0.6548  & 0.5857  & 0.5713  & 1.9435  & 0.6014\\
\midrule
\multirow{8}{*}{\textbf{Gemma-2B}}
&Fast-DetectGPT   & 10.0337 & 7.4171  & 7.6691  & 7.4983  & 13.1693 & 7.6051\\
&DetectGPT & 3.5422  & 3.0998  & 3.3271  & 3.0398  & 3.8780  & 3.1375\\
&NPR     & 0.3264  & 0.2326  & 0.2795  & 0.2346  & 0.4432  & 0.2270\\
&LRR              & 1.0874  & 0.7253  & 0.8615  & 0.7283  & 1.6474  & 0.7234\\
&Logrank          & 1.9435  & 1.6850  & 1.7377  & 1.6759  & 2.2468  & 1.6162\\
&Likelihood       & 3.1277  & 2.6890  & 2.8156  & 2.6686  & 3.6366  & 2.5868\\
&Entropy          & 2.3749  & 2.4105  & 2.3417  & 2.3869  & 2.6341  & 2.3166\\
&DNA-GPT          & 1.4319  & 1.3889  & 1.3308  & 1.3798  & 1.7274  & 1.4109\\ \hdashline
&RoBERTa-base     & 1.9343  & 1.1536  & 1.2867  & 1.1800  & 1.9993  & 0.9913\\
&RoBERTa-large    & 1.3832  & 0.6636  & 0.5695  & 0.5501  & 1.9499  & 0.6488\\
\bottomrule
\end{tabular}
\end{center}
\end{table}

Based on the results, when we know the actual value of $d_t$ and $\epsilon$, our algorithm is very effective. When we estimate their values based on the previous samples that we get, the algorithm can still exhibit a good performance for most score functions. It can be inferred that the rejection time and FPRs are effected by the score function $\phi(\cdot)$ and the scoring model that we select. If the configuration can further amplify the score discrepancy between human-written texts and machine-generated texts, the rejection time can be shortened. If the the score discrepancy between human texts is small, the value FPR will be low.  

\textbf{Comparisons with Baselines.} Permutation test is a fixed-time test. The goal is to test whether the source of text $y_t$ is the same as that of the human-written text $x_t$, i.e., whether their scores are from the same distribution. If we choose the mean value as the test statistic, the null hypothesis is that the means are equal ($H_0: \mu_x=\mu_y$) with a batch size of $k$. Once we have generated $k$ samples from these two sources, we conduct the test. When $H_0$ is true, the samples are drawn from the same distribution, which means that the observed discrepancy between the two batches of scores is supposed to be minimal. The test determines whether this difference between the sample means is large enough to reject $H_0$ at a significance level. Specifically, we compare the p-value with the significance level $\alpha$ for each batch in an uncorrected test, and $\alpha/2^j$ for the $j$-th batch in a corrected test. The permutation test is conducted as below: 

(1) Calculate the observed mean difference $\Delta= \left\lvert\sum_{i=1}^k \phi(x_i)/k- \sum_{j=1}^k \phi(y_j)/k\right\rvert$ of these two batches, and assume that $H_0$ is true; 

(2) Combine these two sequences into one dataset, reshuffle the data, and divide it into two new groups. This is the permutation operation. Calculate the sampled absolute mean difference for the $n$-th permutation, $\tilde{\Delta}^{(n)}= \lvert\sum_{i=1}^k \phi(\tilde{x}_i^{(n)})/k- \sum_{j=1}^k\phi(\tilde{y}_j^{(n)})/k\rvert$; 

(5) Repeat step (2) for a sufficient number of permutations ($n=2,000$ in our test); 

(6) Calculate the p-value, which is the proportion of permutations where $\tilde{\Delta}^{(n)}>\Delta$, relative to the total number of permutations ($2,000$). If the p-value is greater than the significance level $\alpha$, we retain $H_0$; otherwise we reject $H_0$. 

Proceed to the next batch if $H_0$ is retained in this batch test, and continue the above process until 
$H_0$ is rejected or all data are tested.

In the experiment, we consider the composite hypothesis testing, which means the null hypothesis is $H_0: \lvert\mu_x-\mu_y\rvert \leq\epsilon$. If we still use the above permutation test, it will become much easier for $\Delta\geq\tilde{\Delta}^{(n)}$ to hold, even when $H_0$ is actually true. This would result in significantly higher FPRs. Thus, we only check p-values when the observed $\Delta$ exceeds the estimated $\epsilon$. The rejection time and FPRs that we plot are the average values across 1000 repeated runs under each significance level $\alpha$.

Permutation test is very time-consuming, because if we have m samples for each group, then we will get $m/k$ batches, each batch need to conduct the above steps.

As observed in Figure \ref{fig:case1_all_olympic_baseline}, the permutation tests without correction always have higher FPRs under $H_0$ than that with corrected significance levels. This phenomenon aligns with the fact that without significance level adjustments, it is impossible to control the type-I error. Permutation tests with large batch sizes demonstrate relatively low FPRs, which are approximately equal to 0. However, this seemingly excellent performance is due to the fact that the preset $\epsilon$ values are much larger than the actual absolute difference in mean scores between sequences of XSum texts and 2024 Olympic news. This discrepancy results in fewer or no p-value checks, thus sustaining $H_0$. Consequently, the FPRs are nearly identical across each significance level $\alpha$. Permutation tests are sensitive to the discrepancies between two sequences, and the $\Delta$ value tends to change more with smaller batch sizes due to variation among scores of texts from the same source. Although a permutation test can reject $H_0$ after the first batch test, it consistently exhibit an FPR greater than $\alpha$. Moreover, even if we set the value of $\epsilon$ based on a much larger sample size, rather than from estimates derived from a few points, there will still be variance between the $\Delta$ value calculated in batches and the preset $\epsilon$. As long as $\Delta$ calculated by a batch of samples is greater than the preset $\epsilon$, the permutation test is likely to have a large FPR under $H_0$. Compared to fixed-time methods, our method can use parameter estimates based on just a few points to ensure faster rejection and lower FPRs. It can save time with high acuuracy especially when there is no prior knowledge of the threshold $\epsilon$ for composite hypotheses.

\begin{figure}[htbp]
    \centering
    \begin{subfigure}{0.49\textwidth}
        \includegraphics[width=\linewidth]{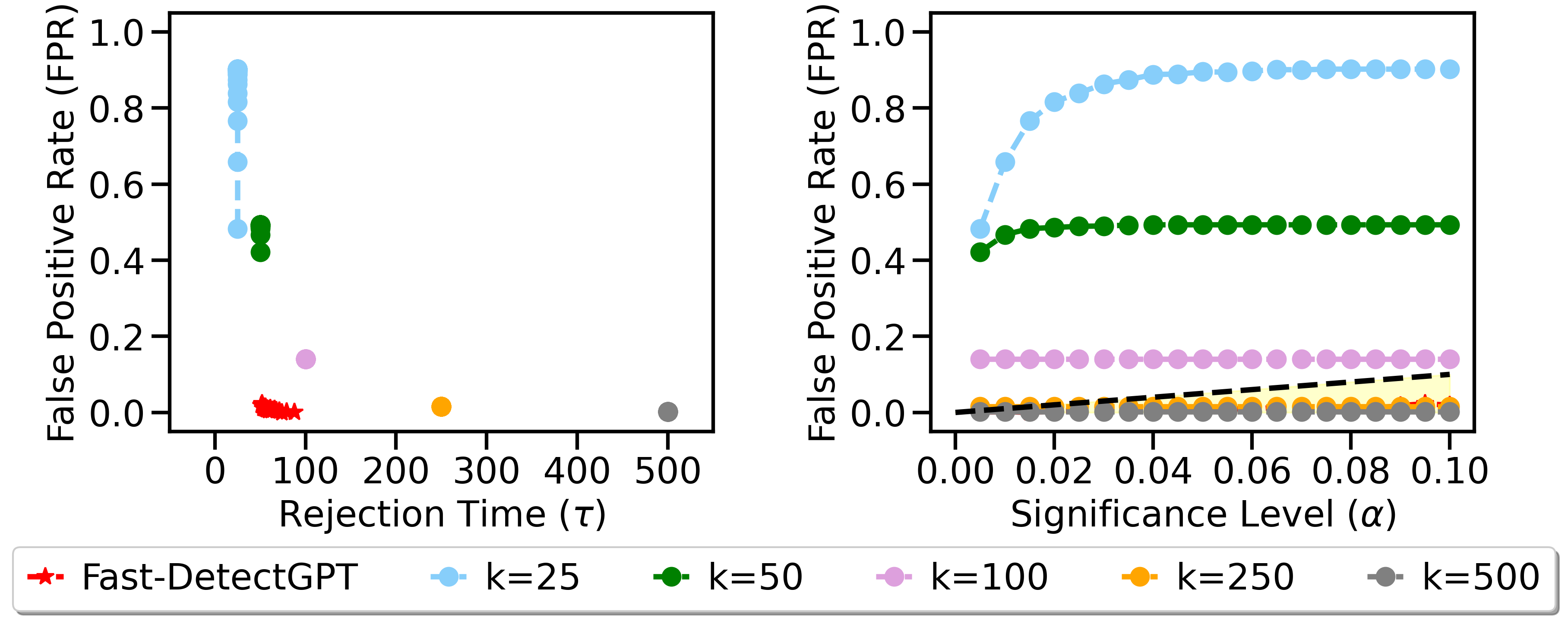}
        \caption{Comparisons between our method and the permutation test without correction, $y_t$ are 2024 Olympic news or news generated by Gemini-1.5-Flash, with human-written text $x_t$ sampled from XSum. The scoring function used is Neo-2.7.}
        \label{fig:baseline_flash.neo2.7.perm}
    \end{subfigure}\hfill
    \begin{subfigure}{0.49\textwidth}
        \includegraphics[width=\linewidth]{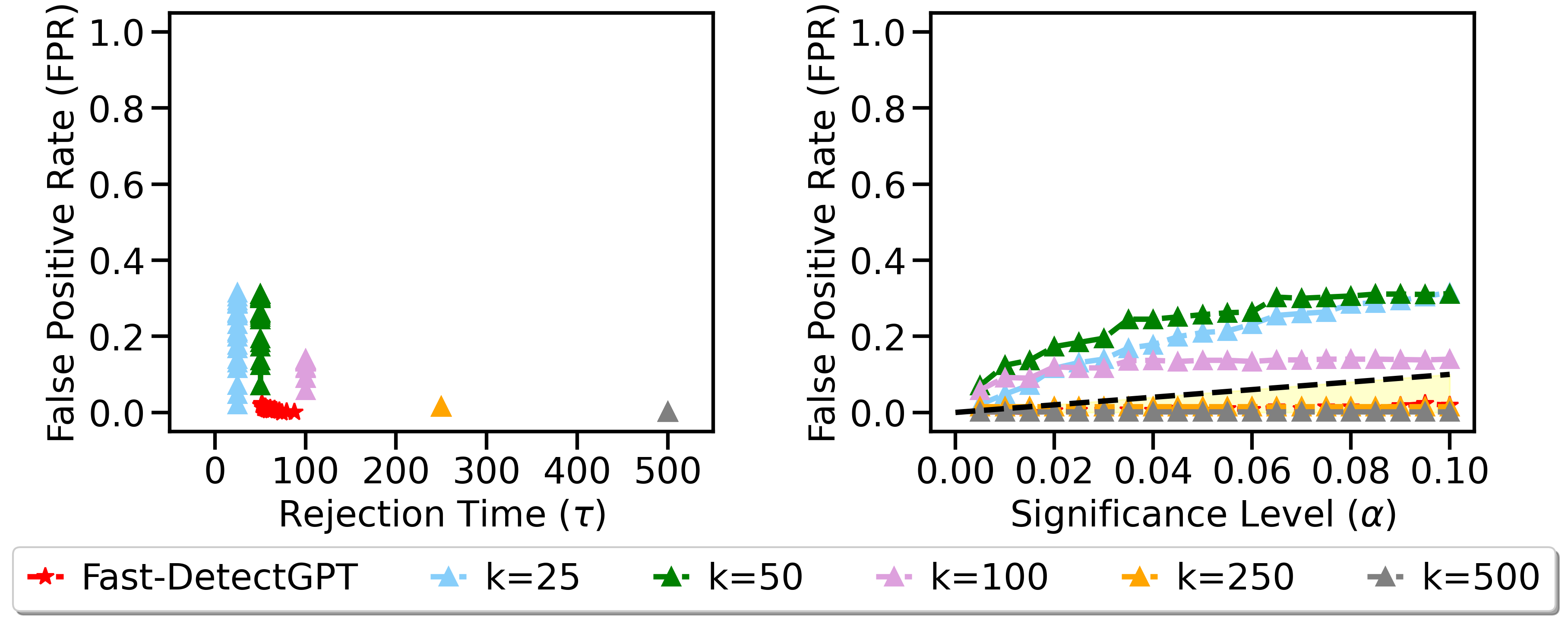}
        \caption{Comparisons between our method and the permutation test with correction, $y_t$ are 2024 Olympic news or news generated by Gemini-1.5-Flash, with human-written text $x_t$ sampled from XSum. The scoring function used is Neo-2.7.}
        \label{fig:baseline_flash.neo2.7.permb}
    \end{subfigure}\hfill
    \begin{subfigure}{0.49\textwidth}
        \includegraphics[width=\linewidth]{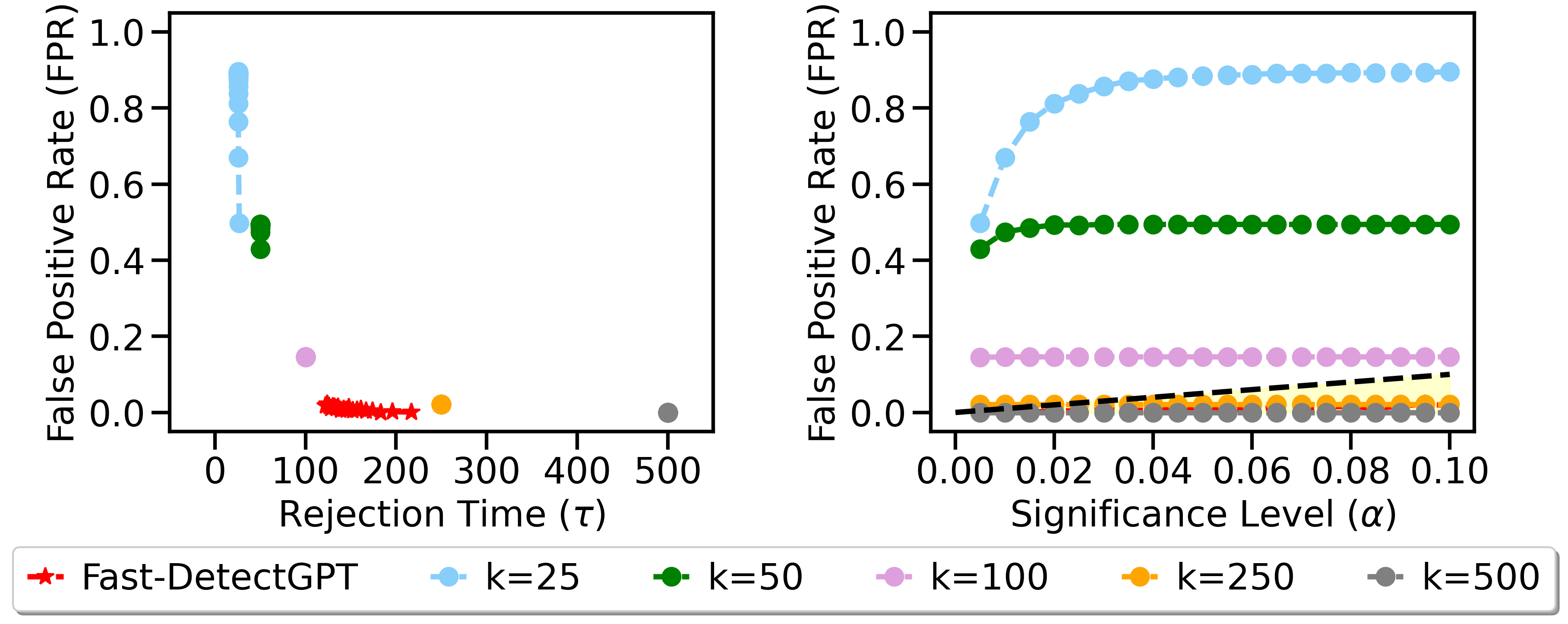}
        \caption{Comparisons between our method and the permutation test without correction, $y_t$ are 2024 Olympic news or news generated by Gemini-1.5-Pro, with human-written text $x_t$ sampled from XSum. The scoring function used is Neo-2.7.}
        \label{fig:baseline_pro.neo2.7.perm}
    \end{subfigure}\hfill
    \begin{subfigure}{0.49\textwidth}
        \includegraphics[width=\linewidth]{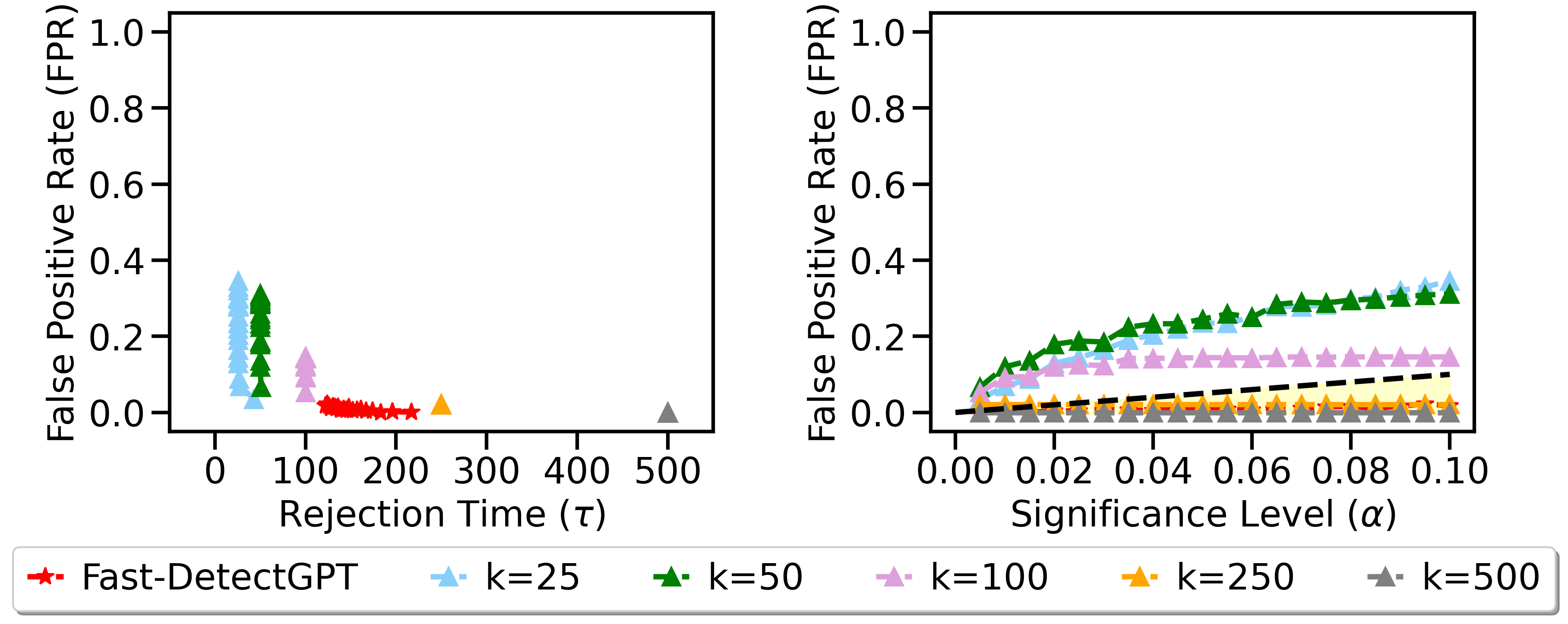}
        \caption{Comparisons between our method and the permutation test with correction, $y_t$ are 2024 Olympic news or news generated by Gemini-1.5-Pro, with human-written text $x_t$ sampled from XSum. The scoring function used is Neo-2.7.}
        \label{fig:baseline_pro.neo2.7.permb}
    \end{subfigure}\hfill
    \begin{subfigure}{0.49\textwidth}
        \includegraphics[width=\linewidth]{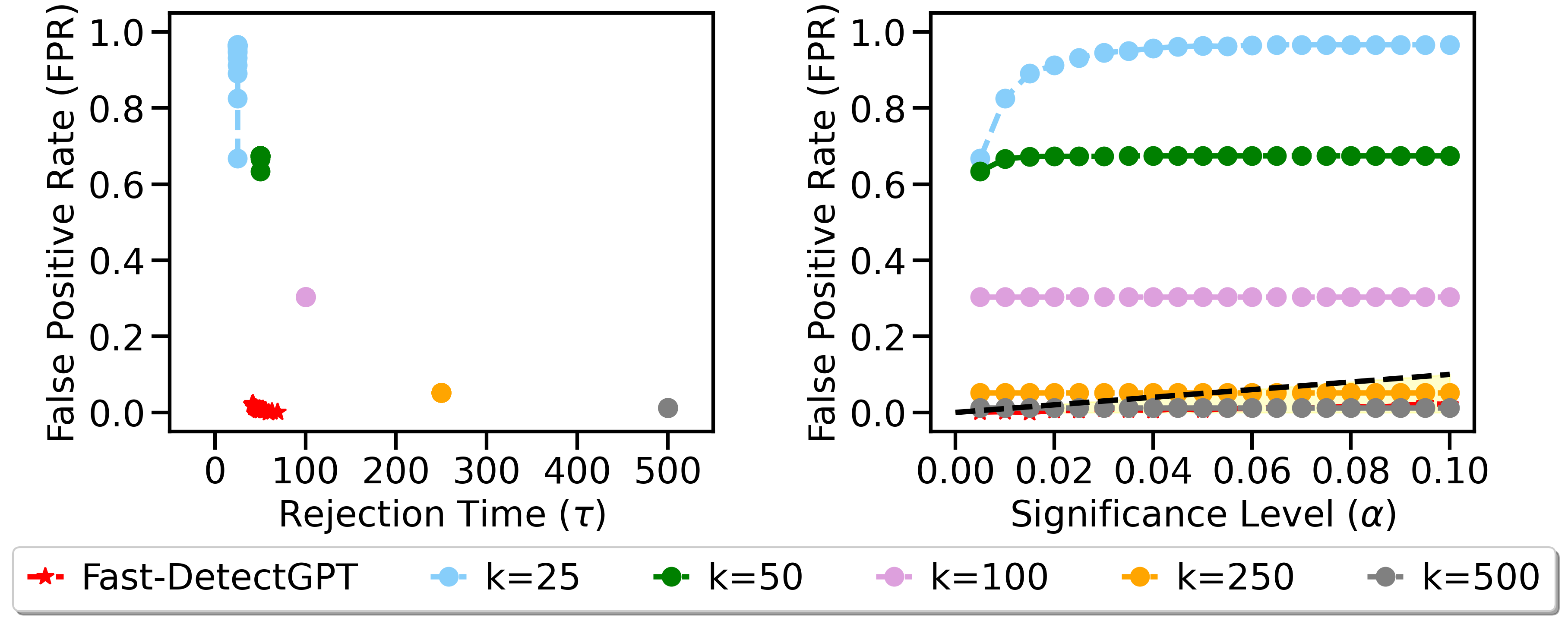}
        \caption{Comparisons between our method and the permutation test without correction, $y_t$ are 2024 Olympic news or news generated by PaLM 2 with human-written text $x_t$ sampled from XSum. The scoring function used is Neo-2.7.}
        \label{fig:baseline_palm2.neo2.7.perm}
    \end{subfigure}\hfill
    \begin{subfigure}{0.49\textwidth}
        \includegraphics[width=\linewidth]{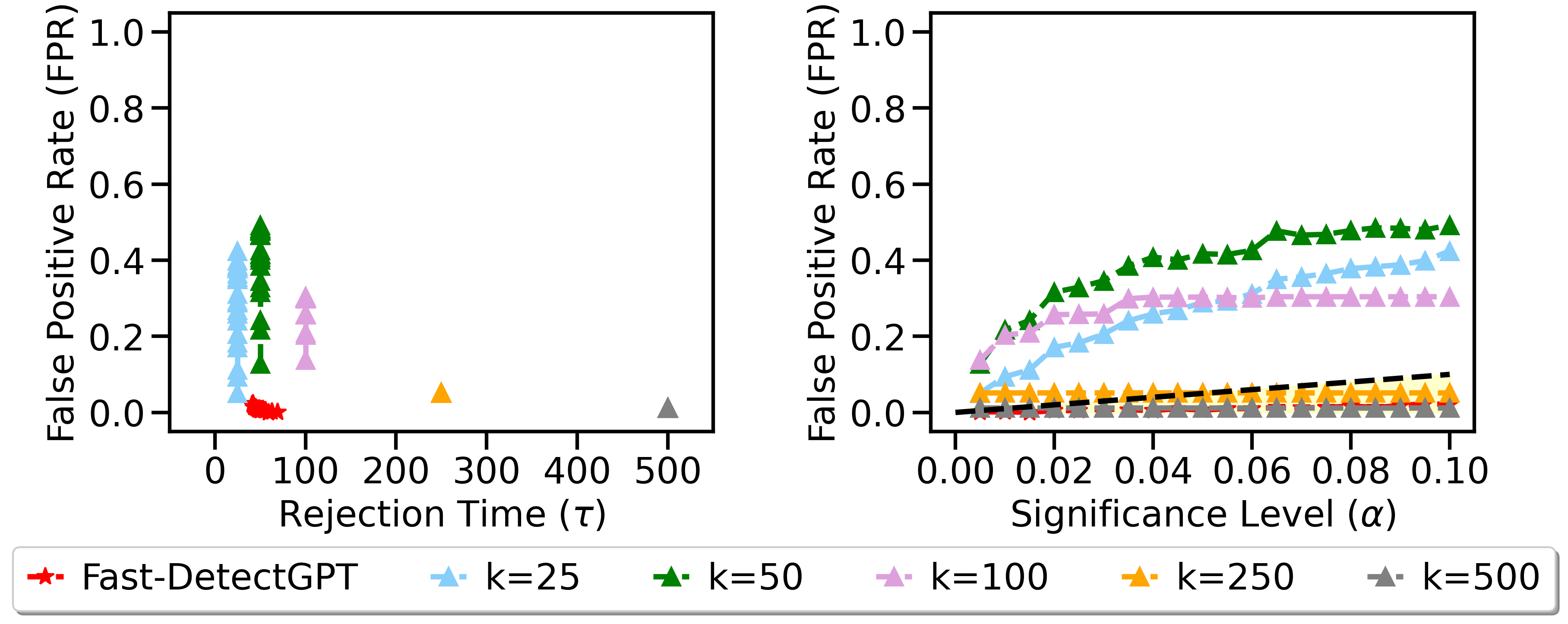}
        \caption{Comparisons between our method and the permutation test with correction, $y_t$ are 2024 Olympic news or news generated by PaLM 2 with human-written text $x_t$ sampled from XSum. The scoring function used is Neo-2.7.}
        \label{fig:baseline_palm2.neo2.7.permb}
    \end{subfigure}
    \caption{Comparisons between our method and two baselines for detecting 2024 Olympic news and machine-generated news. Fake news are generated by 3 source models: Gemini-1.5-Flash, Gemini-1.5-Pro and PaLM 2. The scoring model used is Neo-2.7 with the score function of Fast-DetectGPT. We consider five batch sizes: $k=25, 50, 100, 250, 500$. The left column displays results from the permutation test without correction, while the right column presents results of permutation test with corrected significance levels $\alpha$ for each batch test.}
    \label{fig:case1_all_olympic_baseline}
\end{figure}

\section{Experimental Results of Detecting Texts from Three Domains}
\label{sec:app_experiment_bao}

We also test on the dataset of \citet{Bao2023} to explore the influence of text domains on the detection result of our method. In this experiment, we only consider Scenario 1. We let $x_t$ be human-written text from three datasets following \citet{Mitchell2023}, each chosen to represent a typical LLMs application scenario. Specifically, we incorporate news articles sourced from the XSum dataset~\citep{Narayan2018}, stories from Reddit WritingPrompts dataset~\citep{Fan2018} and long-form answers written by human experts from the PubMedQA dataset~\citep{Jin2019}. Then, the capability of our algorithm is evaluated by detecting the source of texts $y_t$ originated from the above source models or human datasets. Source models involved in this experiment are GPT-3 ~\citep{brown2020language}, ChatGPT ~\citep{openai2022chatgpt}, and GPT-4 ~\citep{openai2023gpt4} while the scoring model is Neo-2.7~\citep{Black2021GPTNeoLS}. The perturbation function for DetectGPT and NPR is T5-11B, and the sampling model for Fast-DetectGPT is GPT-J-6B. 

Figure~\ref{fig:bao_all.same.avg} and Figure~\ref{fig:bao_all.diff.avg} present the averaged results when the two text streams are from the same domain and different domains, respectively—that is, when both sequences are sampled from or prompted using the same dataset, or different datasets. 
Specifically, we compute the average rejection time and false positive rate (FPR) under each significance level $ \alpha $, aggregated over three LLMs (GPT-3, ChatGPT, GPT-4) and three domain configurations (three same-domain or three different-domain).
A comparison of the two figures reveals that detection is faster when the prepared texts and the texts under evaluation originate from the same domain. For all score functions, our algorithm consistently controls FPRs below the target significance level $ \alpha $, and the corresponding average rejection times remain below 150.
Among all functions, Fast-DetectGPT is the fastest to reject $H_0$, with average value of $\tau$ being around 40 for same-domain texts and about 80 for different-domain texts.

\begin{figure}[htbp]
    \centering
    \begin{subfigure}{0.49\textwidth}
        \includegraphics[width=\linewidth]{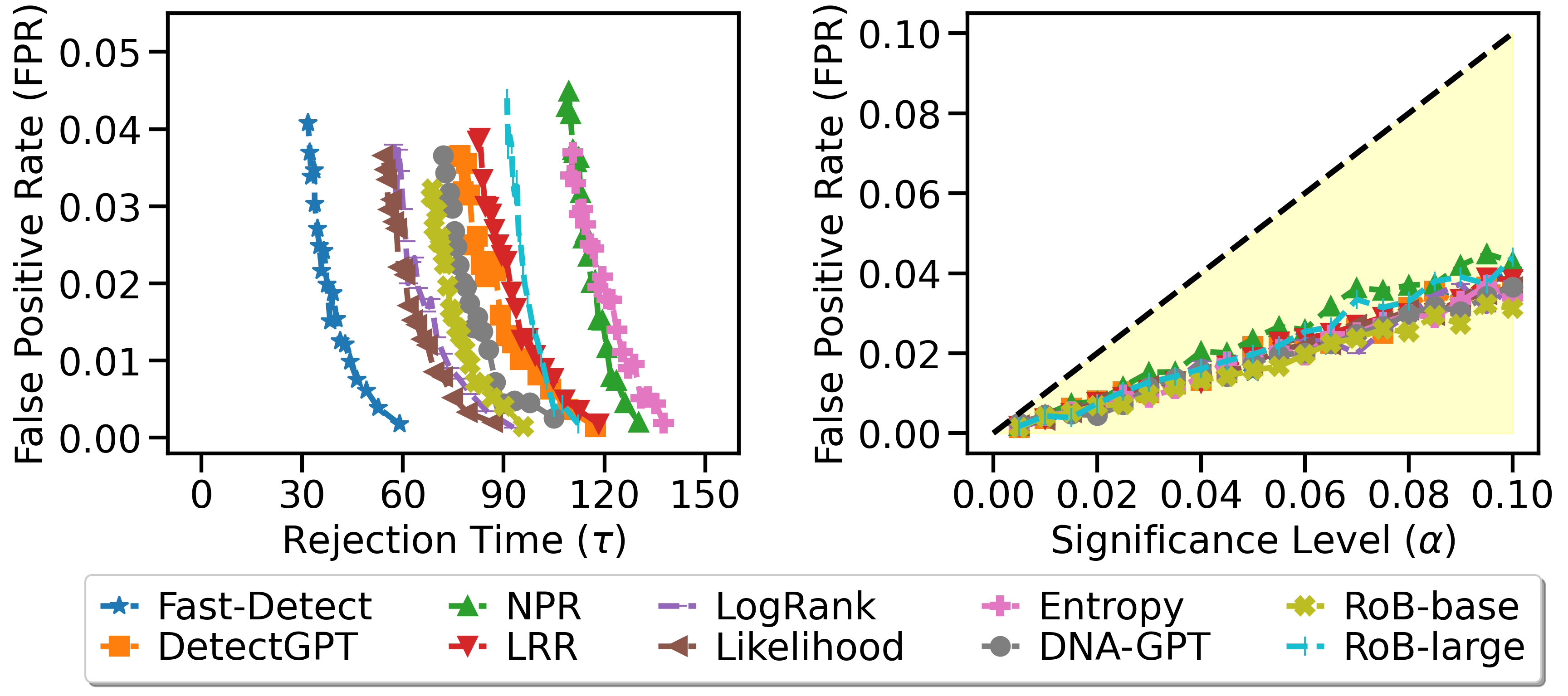}
        \caption{Averaged test results with texts $x_t$ and $y_t$ from the same domain across three source models and three text domains. The scoring model used is Neo-2.7.}
        \label{fig:bao_all.same.avg}
    \end{subfigure}\hfill
    \begin{subfigure}{0.49\textwidth}
        \includegraphics[width=\linewidth]{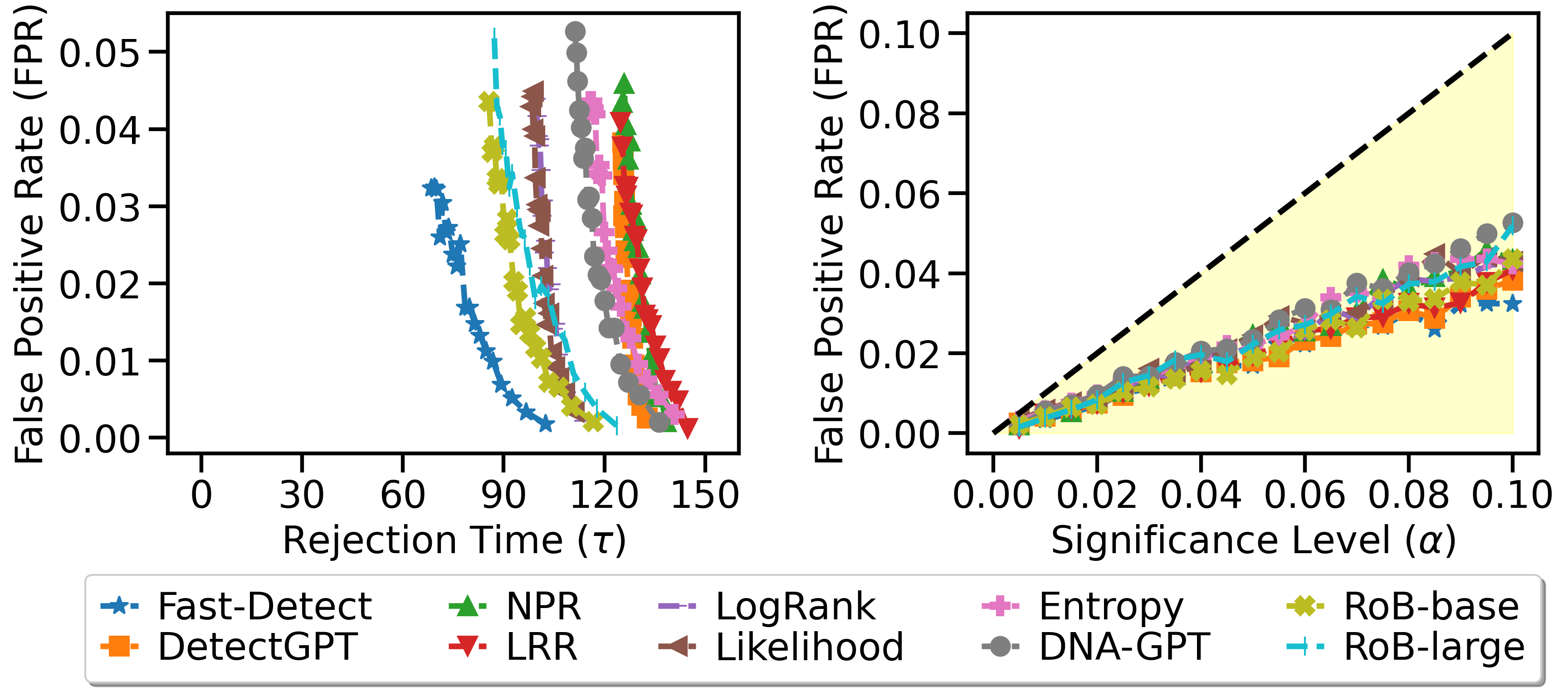}
        \caption{Averaged test results with texts $x_t$ and $y_t$ from different domains across three source models and three text domains. The scoring model used is Neo-2.7.}
        \label{fig:bao_all.diff.avg}
    \end{subfigure}
    \caption{Average results of Scenario 1. There are three source models: GPT-3, ChatGPT, and GPT-4, and three domains. For (a), two text sequences are both of XSum, Writing or PubMed dataset. For (b), two sequences are of XSum and Writing, XSum and PubMed, Writing and PubMed. Each sequence has 150 samples, which means the time budget is $T=150$. The left subfigure in (a) and (b) shows the average rejection time under $H_1$ versus. the averaged FPRs under $H_0$ under each significance level $\alpha$. Thus, plots closer to the left bottom corner are preferred, which indicate correct detection of an LLM with shorter rejection times and lower FPRs. In the right subfigure of each panel, the black dashed line along with the shaded area illustrates the expected FPR, consistently maintained below the significance level $\alpha$.}
    \label{fig:bao_avg.all}
\end{figure}

Specifically, according to Figure \ref{fig:bao_all.same} , when texts are both sampled from PubMedQA datasets, the behaviour of most score functions are worse than that for texts from XSum and WritingPrompts. Specifically, it costs more time for them to reject $H_0$. There are more vertical lines in Figure \ref{fig:bao_xsum.xsum.gpt-4}, \ref{fig:bao_writing.pubmed.gpt-4} and \ref{fig:bao_pubmed.pubmed.gpt-4}, which means texts generated by GPT-4 are more challenging for our algorithm when using certain score functions such as RoBERTa-base/large to detect before $T=150$. 

Figure \ref{fig:bao_all.diff.avg} illustrates the performance of our algorithm with different score functions when detecting two streams of different-domain texts. All functions can guarantee FPRs below the $\alpha$-significance level. When $x_t$ is from XSum and $y_t$ is generated by GPT-4 based on Writing, only Fast-DetectGPT can declare the source of $y_t$ as an LLM before $T=150$. Another interesting phenomenon is that when $y_t$ is generated by GPT-3 based on PubMed, tests with most score functions fail to successfully identify its source as an LLM before the time budget expires, regardless of the domain of the human text used as $x_t$ for detection. Only the score functions of two supervised classifiers consistently reject $H_0$ before $150$, which is shown as Figure \ref{fig:bao_pubmed.pubmed.davinci}, \ref{fig:bao_xsum.pubmed.davinci} and \ref{fig:bao_writing.pubmed.davinci}. 

\begin{figure}[htbp]
    \centering
    \begin{subfigure}{0.45\textwidth}
        \includegraphics[width=\linewidth]{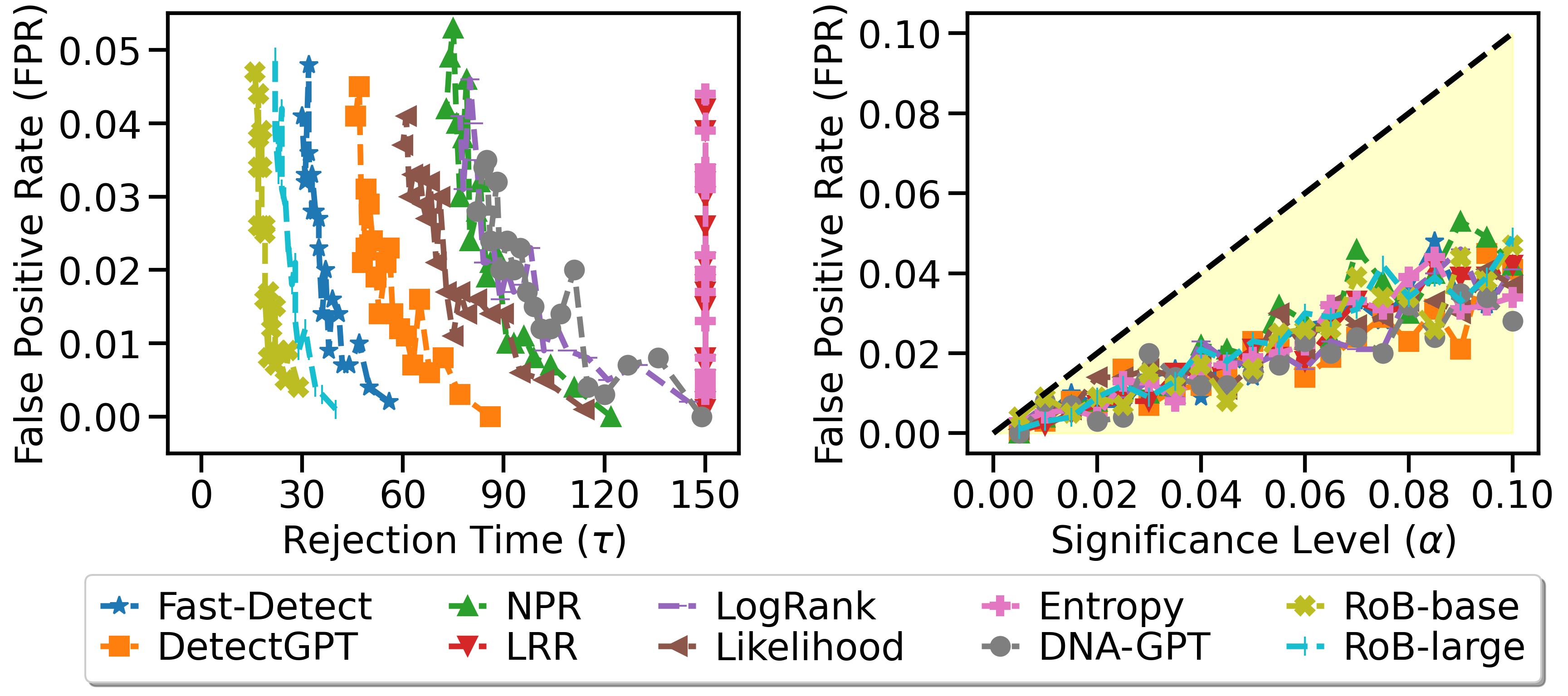}
        \caption{Same domain: $x_t$ is sampled from XSum, $y_t$ is from XSum or generated by GPT-3 based on XSum.}
        \label{fig:bao_xsum.xsum.davinci}
    \end{subfigure}\hfill
    \begin{subfigure}{0.45\textwidth}
        \includegraphics[width=\linewidth]{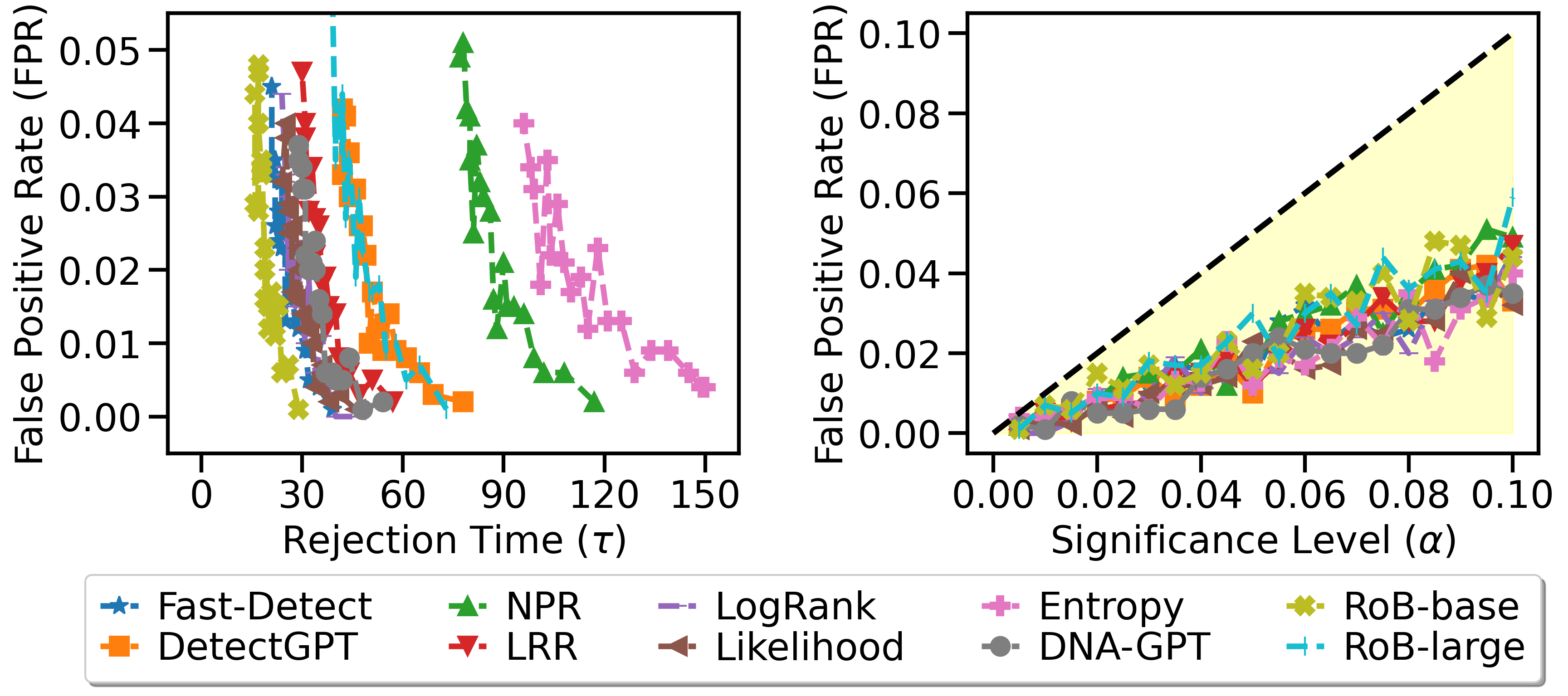}
        \caption{Same domain: $x_t$ is sampled from XSum, $y_t$ is from XSum or generated by ChatGPT based on XSum.}
        \label{fig:bao_xsum.xsum.gpt-3.5}
    \end{subfigure}\hfill
    \begin{subfigure}{0.45\textwidth}
        \includegraphics[width=\linewidth]{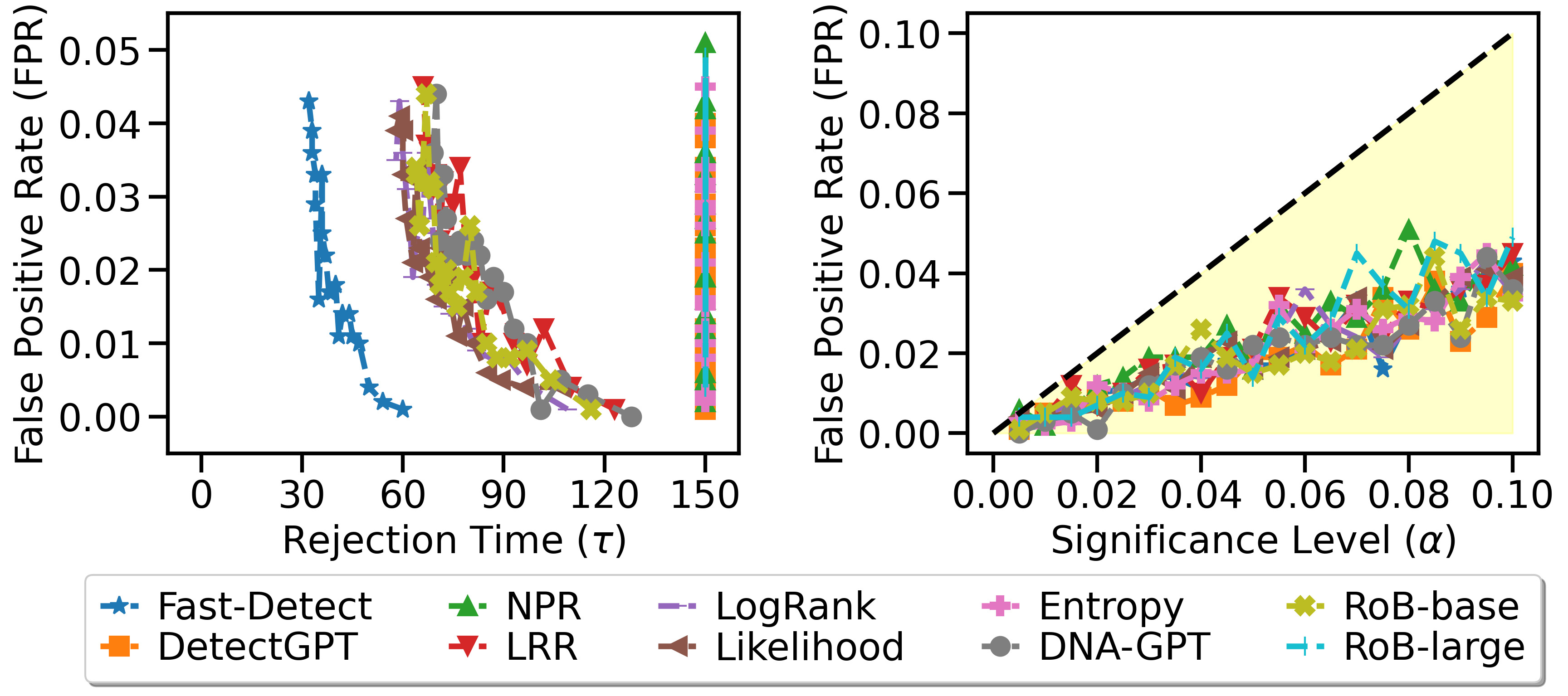}
        \caption{Same domain: $x_t$ is sampled from XSum, $y_t$ is from XSum or generated by GPT-4 based on XSum.}
        \label{fig:bao_xsum.xsum.gpt-4}
    \end{subfigure}\hfill
    \begin{subfigure}{0.45\textwidth}
        \includegraphics[width=\linewidth]{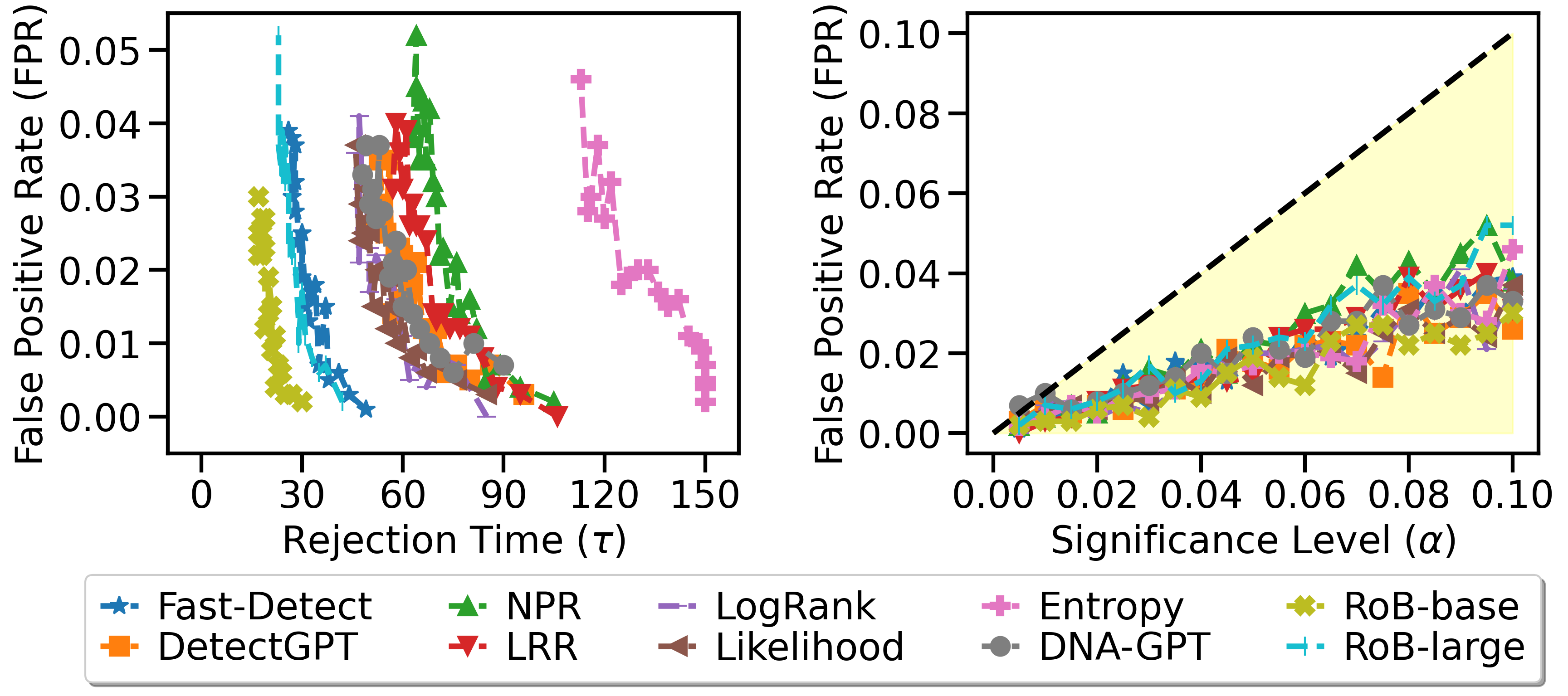}
        \caption{Same domain: $x_t$ is sampled from Writing, $y_t$ is from Writing or generated by GPT-3 based on Writing.}
        \label{fig:bao_writing.writing.davinci}
    \end{subfigure}\hfill
    \begin{subfigure}{0.45\textwidth}
        \includegraphics[width=\linewidth]{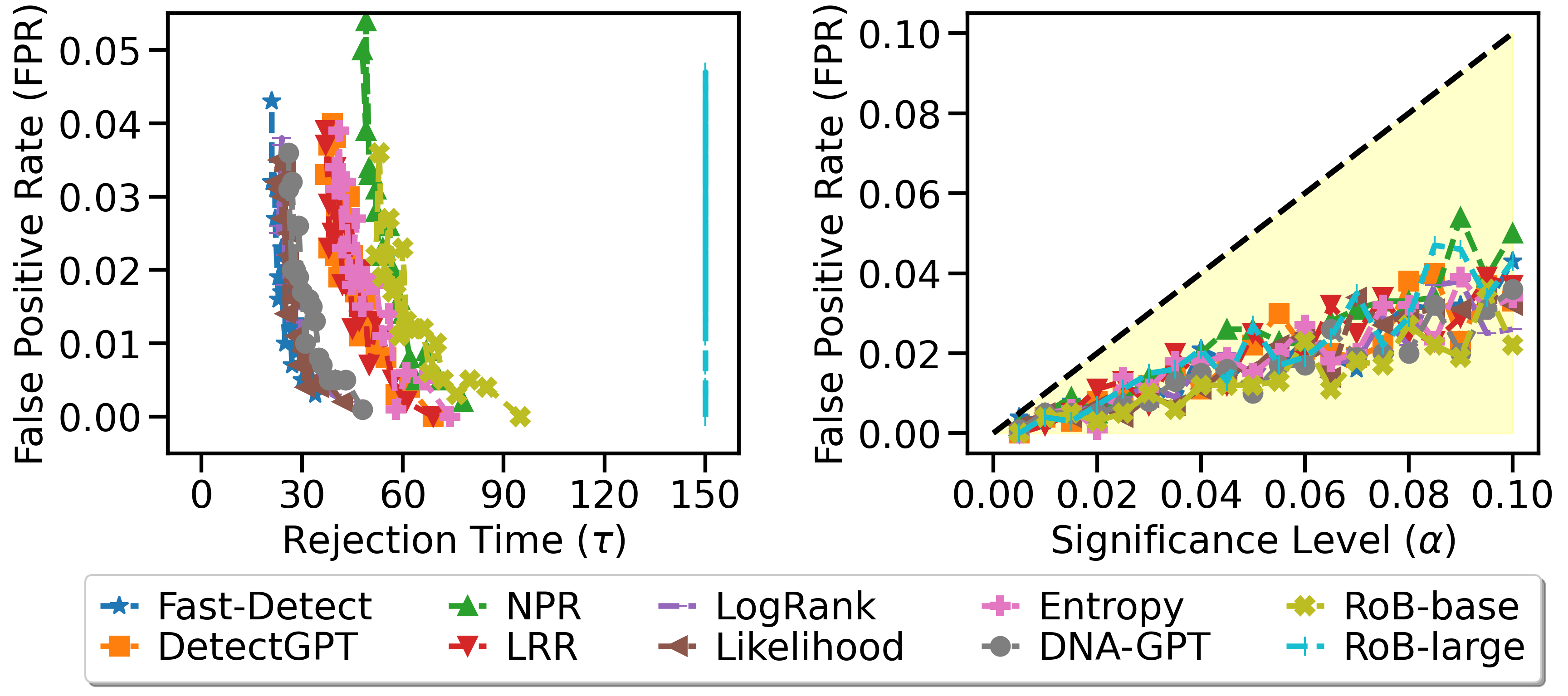}
        \caption{Same domain: $x_t$ is sampled from Writing, $y_t$ is from Writing or generated by ChatGPT based on Writing.}
        \label{fig:bao_writing.writing.gpt-3.5}
    \end{subfigure}\hfill
    \begin{subfigure}{0.45\textwidth}
        \includegraphics[width=\linewidth]{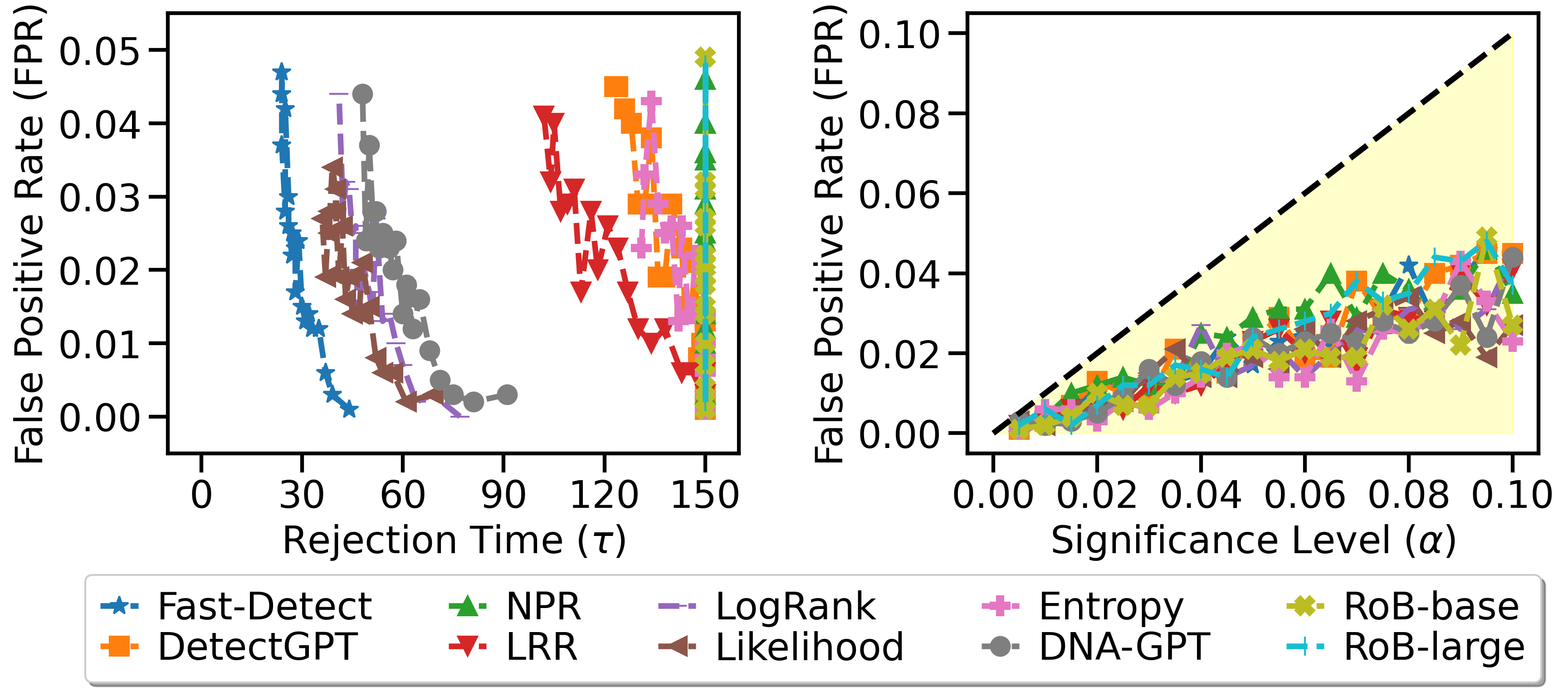}
        \caption{Same domain: $x_t$ is sampled from Writing, $y_t$ is from Writing or generated by GPT-4 based on Writing.}
        \label{fig:bao_writing.writing.gpt-4}
    \end{subfigure}\hfill
   \begin{subfigure}{0.45\textwidth}
        \includegraphics[width=\linewidth]{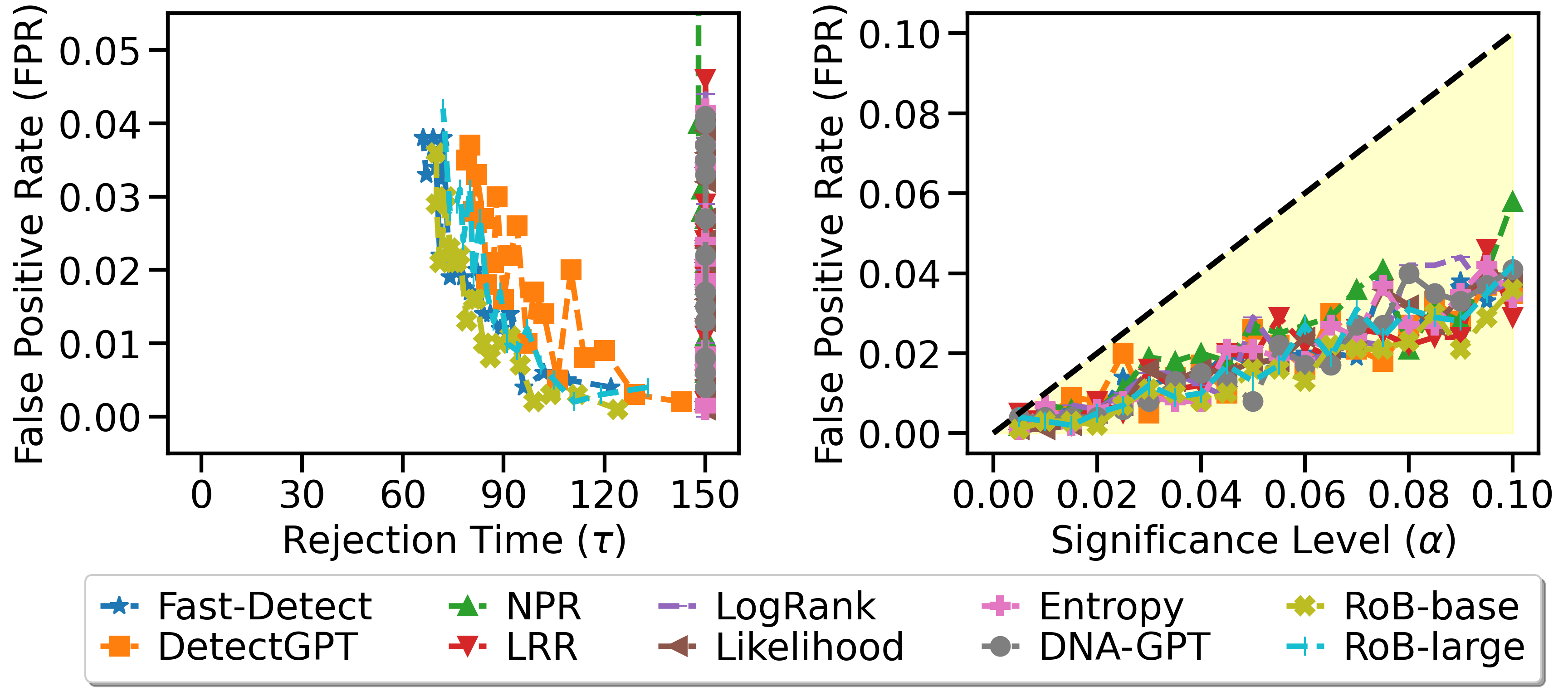}
        \caption{Same domain: $x_t$ is sampled from PubMed, $y_t$ is from PubMed or generated by GPT-3 based on PubMed.}
        \label{fig:bao_pubmed.pubmed.davinci}
    \end{subfigure}\hfill
    \begin{subfigure}{0.45\textwidth}
        \includegraphics[width=\linewidth]{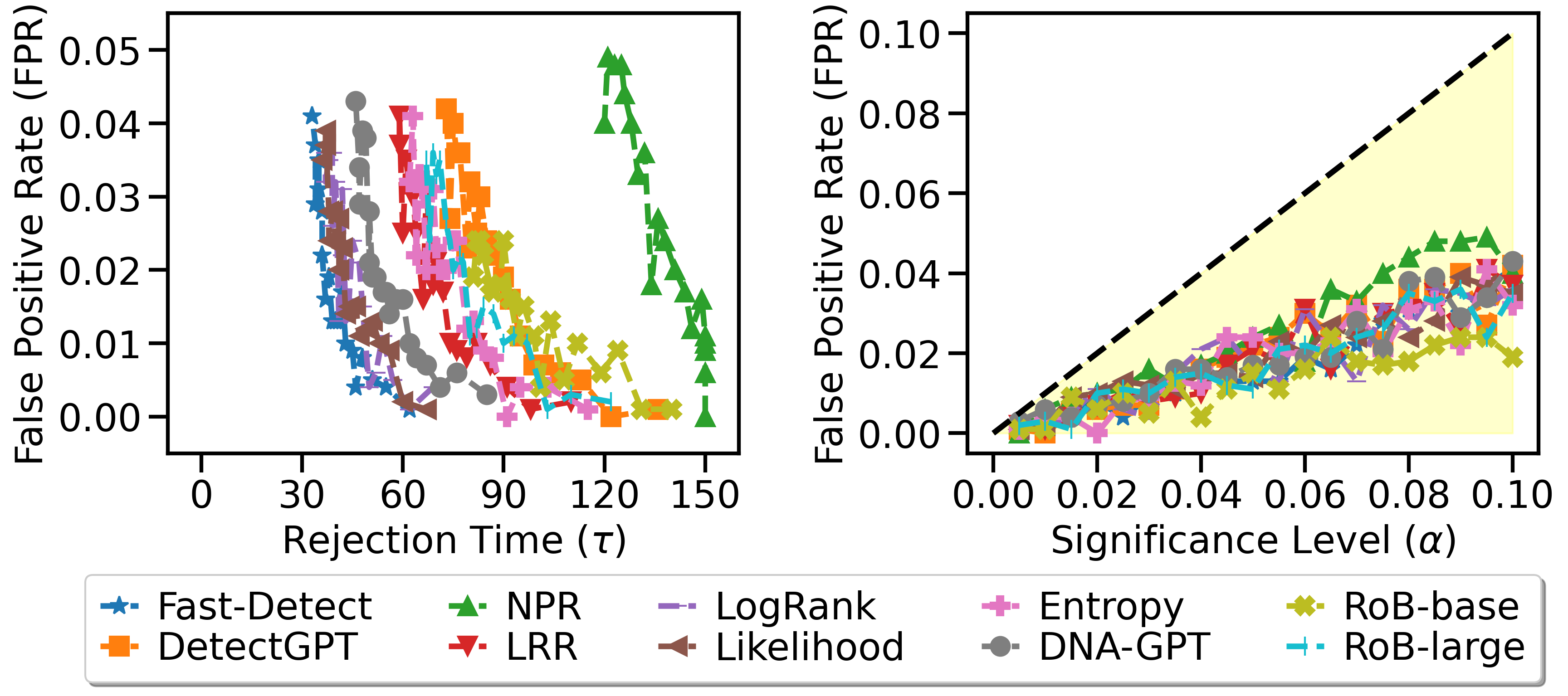}
        \caption{Same domain: $x_t$ is sampled from PubMed, $y_t$ is from PubMed or generated by ChatGPT based on PubMed.}
        \label{fig:bao_pubmed.pubmed.gpt-3.5}
    \end{subfigure}\hfill
    \begin{subfigure}{0.45\textwidth}
        \includegraphics[width=\linewidth]{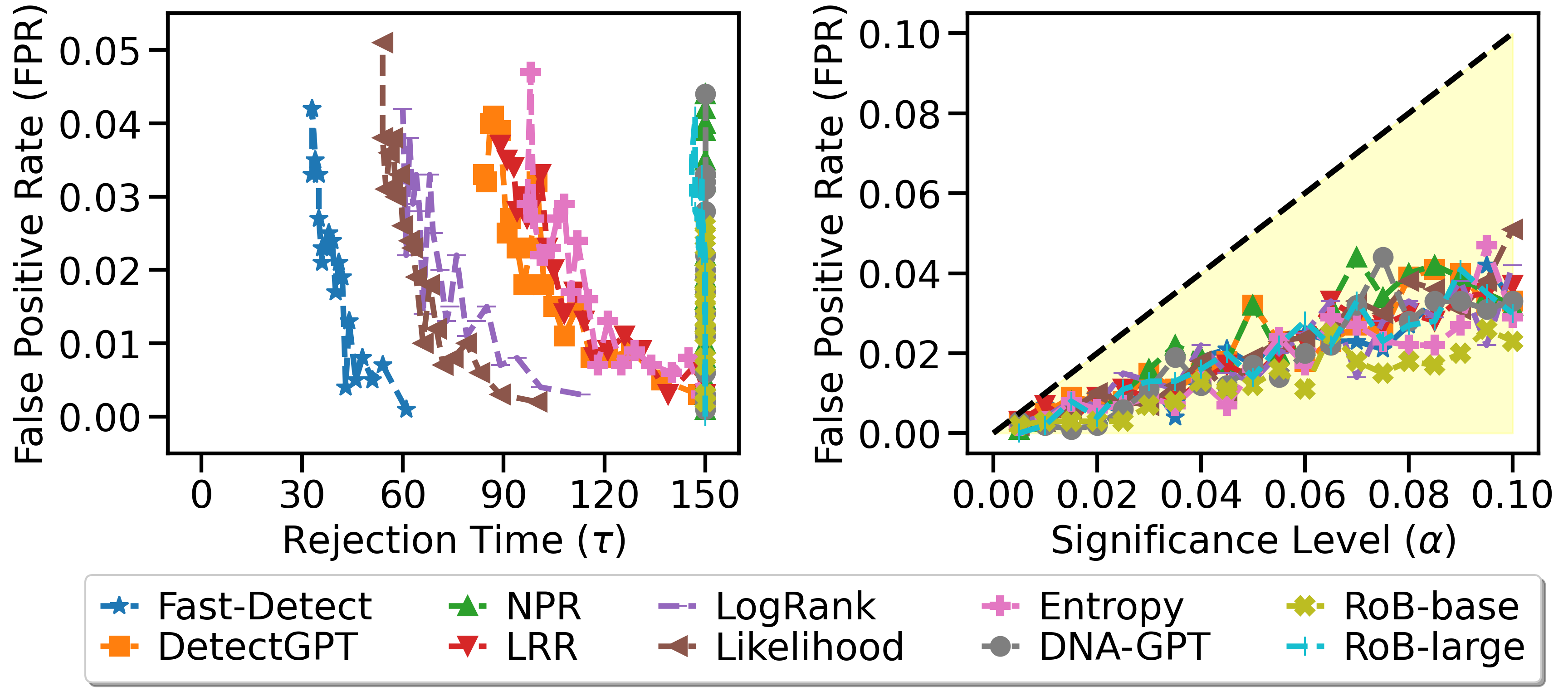}
        \caption{Same domain: $x_t$ is sampled from PubMed, $y_t$ is from PubMed or generated by GPT-4 based on PubMed.}
        \label{fig:bao_pubmed.pubmed.gpt-4}
    \end{subfigure}\hfill
    \begin{subfigure}{0.45\textwidth} 
        \phantomcaption 
    \end{subfigure}
    \caption{Test results: mean rejection times (under $H_1$) and FPRs (under $H_0$) under each significance level $\alpha$ using 10 score functions, with texts $x_t$ and $y_t$ from the same domain. There are 3 source models: GPT-3, ChatGPT and GPT-4. Scoring model: Neo-2.7.}
    \label{fig:bao_all.same}
\end{figure}

\begin{figure}[htbp]
    \centering
    \begin{subfigure}{0.45\textwidth}
        \includegraphics[width=\linewidth]{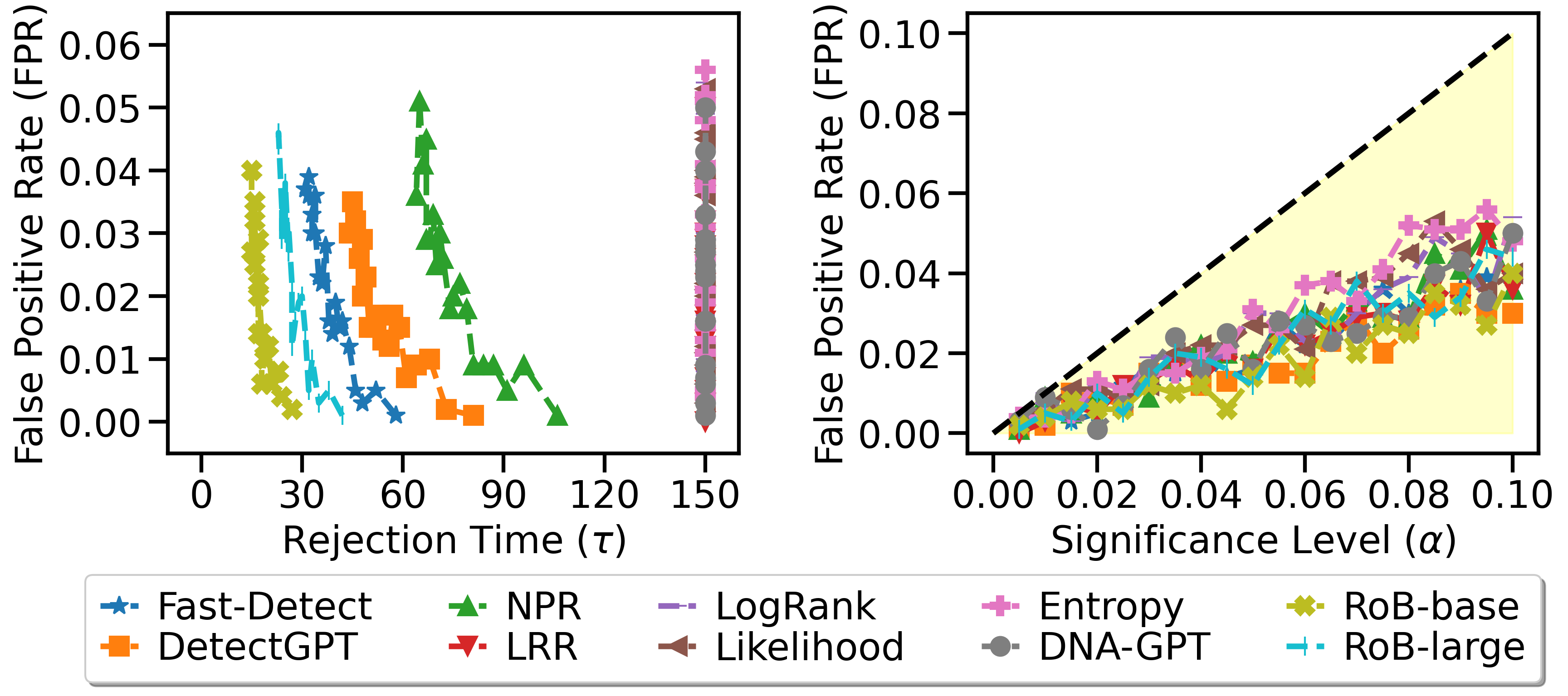}
        \caption{Different domains: $x_t$ is sampled from XSum, $y_t$ is from Writing or generated by GPT-3 based on Writing.}
        \label{fig:bao_xsum.writing.davinci}
    \end{subfigure}\hfill
    \begin{subfigure}{0.45\textwidth}
        \includegraphics[width=\linewidth]{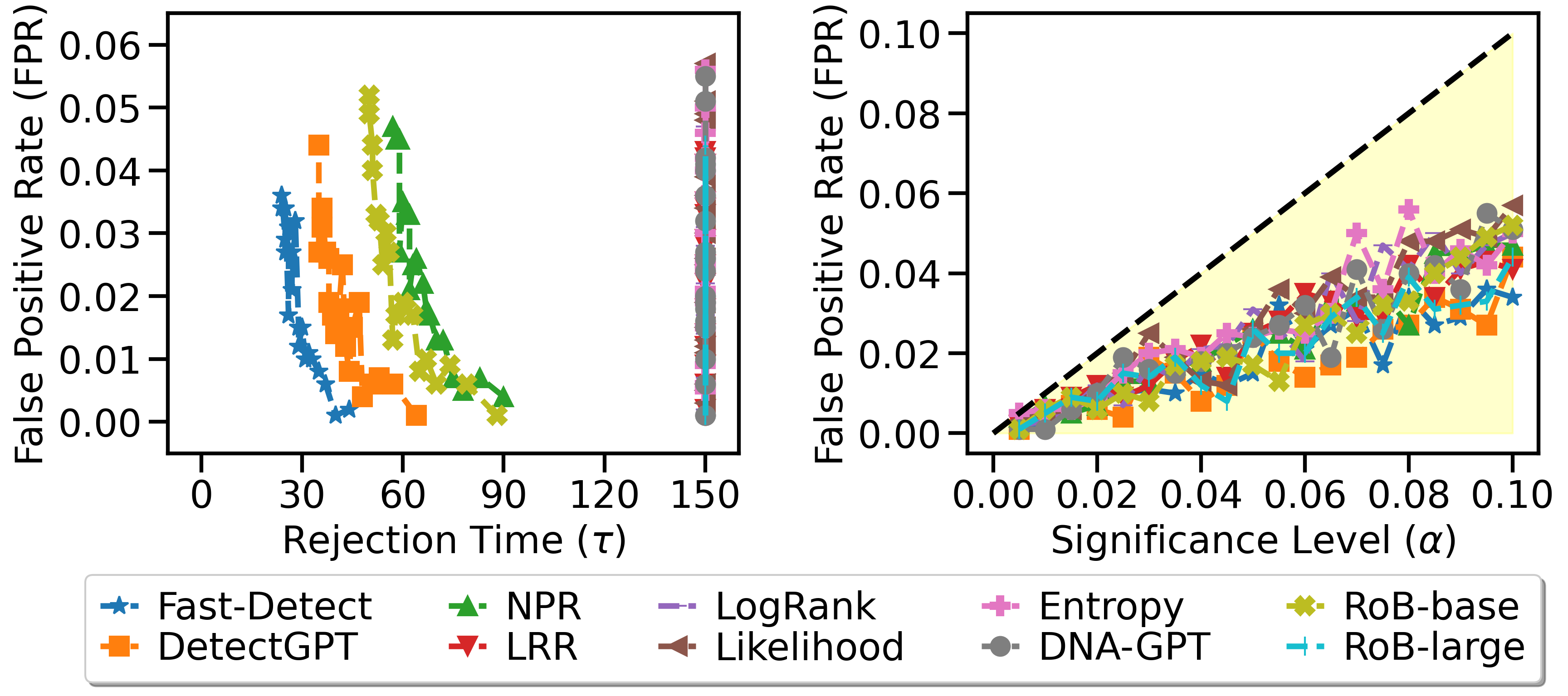}
        \caption{Different domains: $x_t$ is sampled from XSum, $y_t$ is from Writing or generated by ChatGPT based on Writing.}
        \label{fig:bao_xsum.writing.gpt-3.5}
    \end{subfigure}\hfill
    \begin{subfigure}{0.45\textwidth}
        \includegraphics[width=\linewidth]{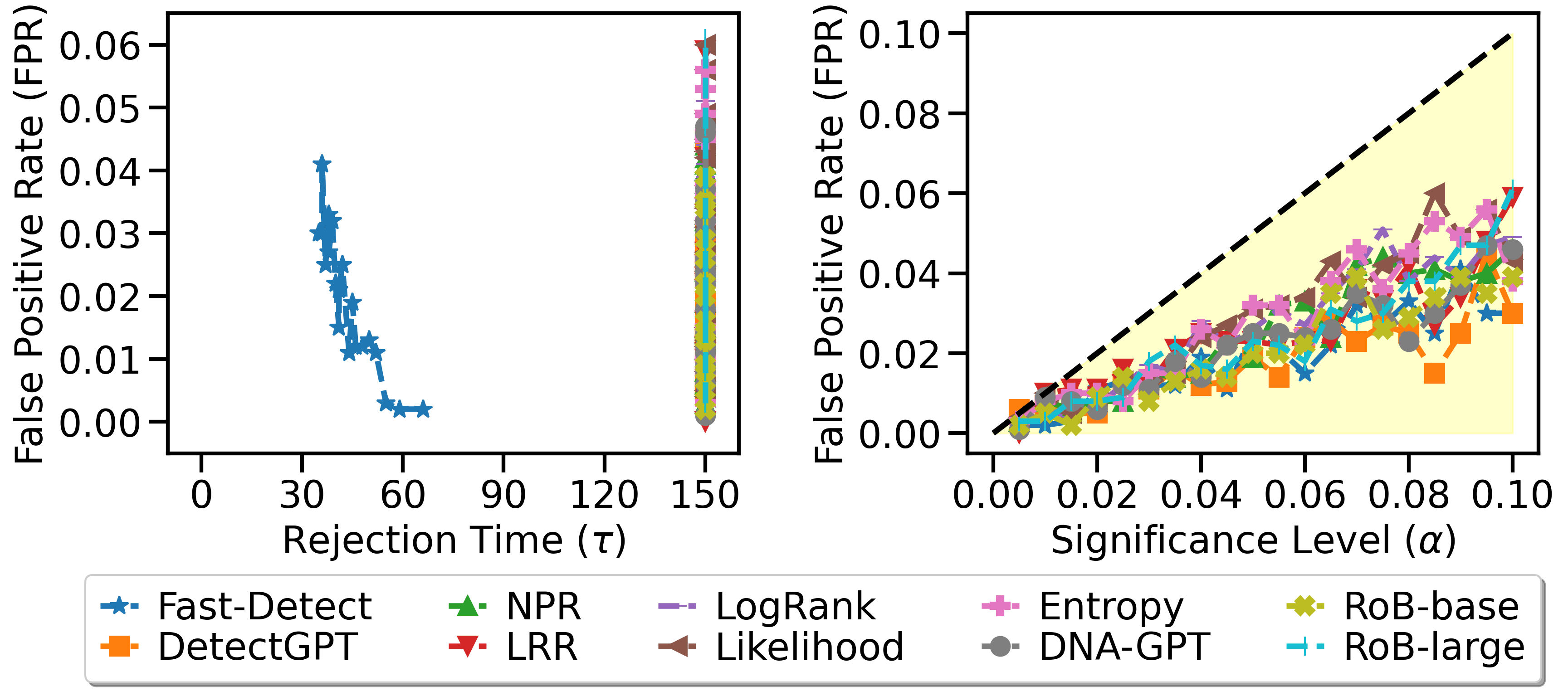}
        \caption{Different domains: $x_t$ is sampled from XSum, $y_t$ is from Writing or generated by GPT-4 based on Writing.}
        \label{fig:bao_xsum.writing.gpt-4}
    \end{subfigure}\hfill
    \begin{subfigure}{0.45\textwidth}
        \includegraphics[width=\linewidth]{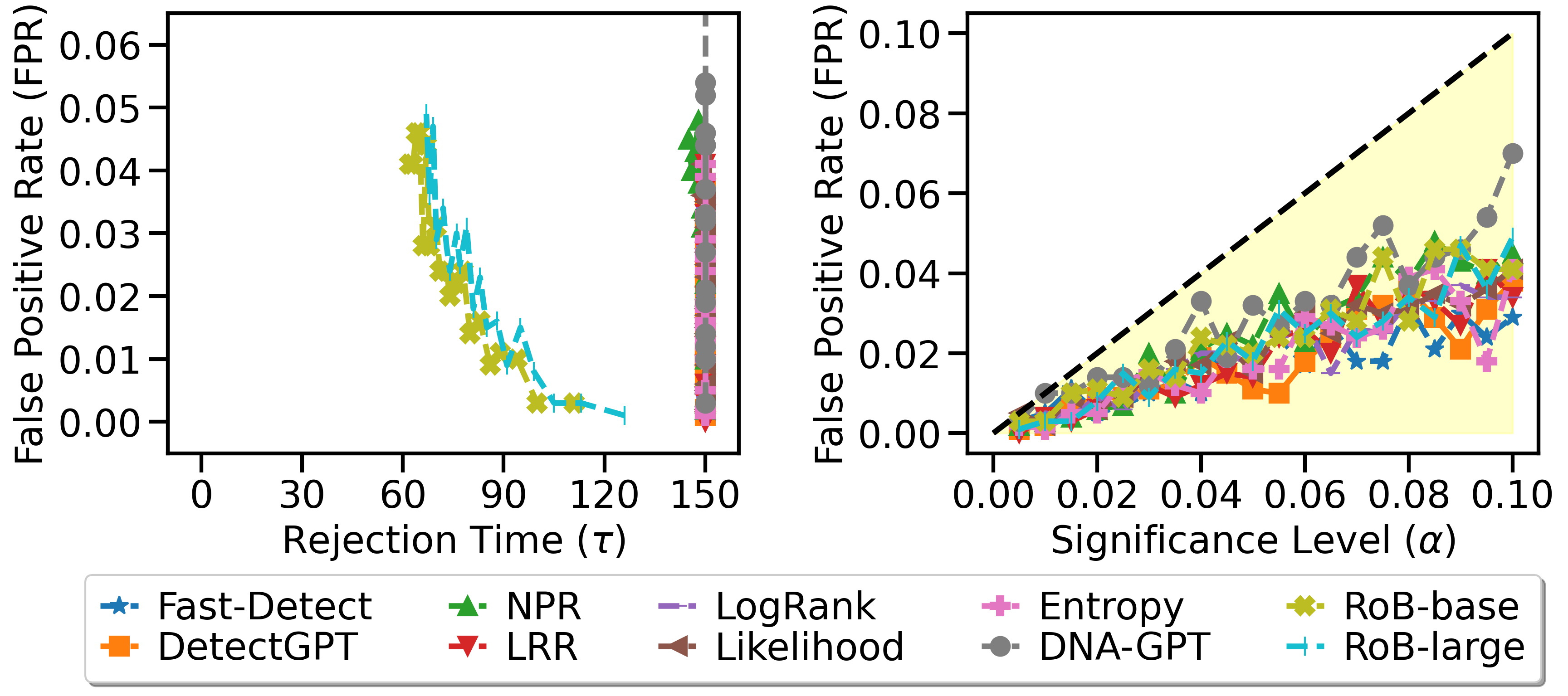}
        \caption{Different domains: $x_t$ is sampled from XSum, $y_t$ is from PubMed or generated by GPT-3 based on PubMed.}
        \label{fig:bao_xsum.pubmed.davinci}
    \end{subfigure}\hfill
    \begin{subfigure}{0.45\textwidth}
        \includegraphics[width=\linewidth]{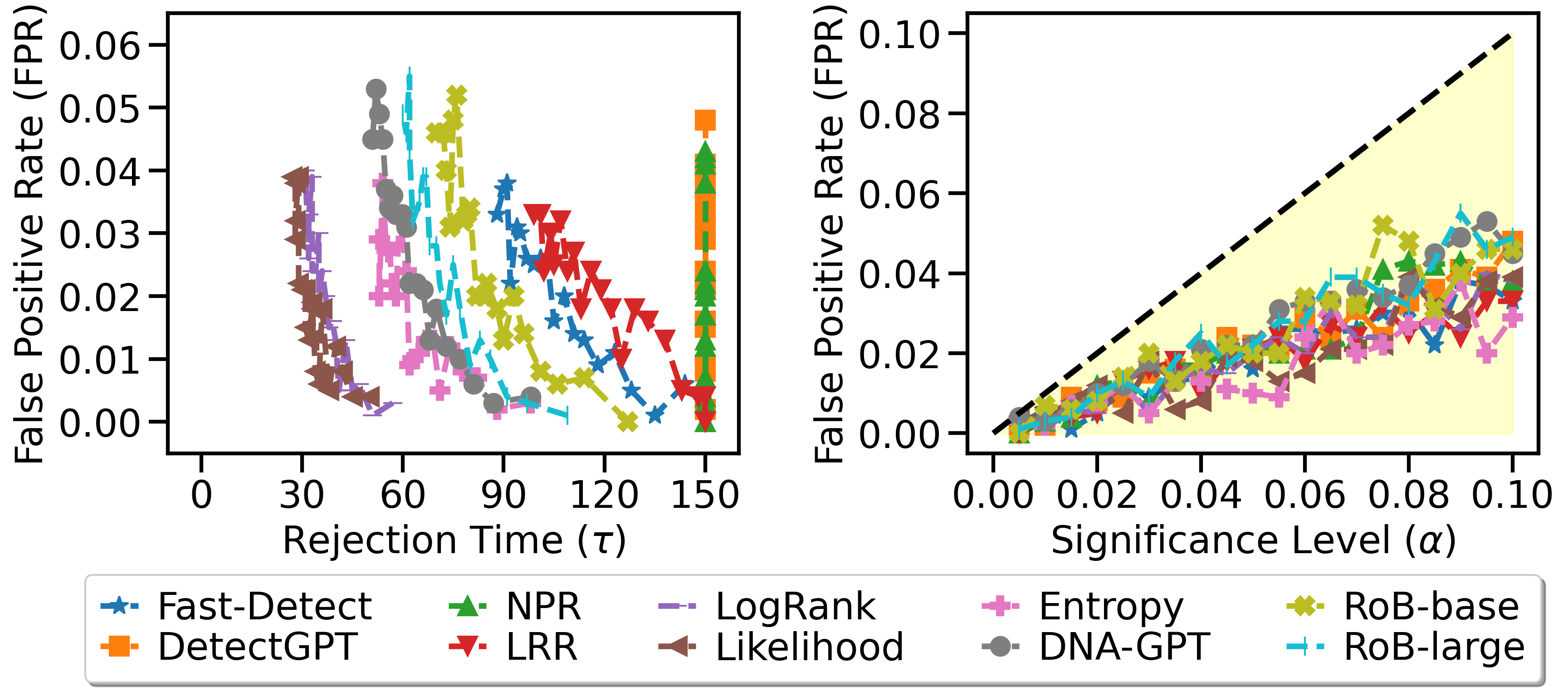}
        \caption{Different domains: $x_t$ is sampled from XSum, $y_t$ is from PubMed or generated by ChatGPT based on PubMed.}
        \label{fig:bao_xsum.pubmed.gpt-3.5}
    \end{subfigure}\hfill
    \begin{subfigure}{0.45\textwidth}
        \includegraphics[width=\linewidth]{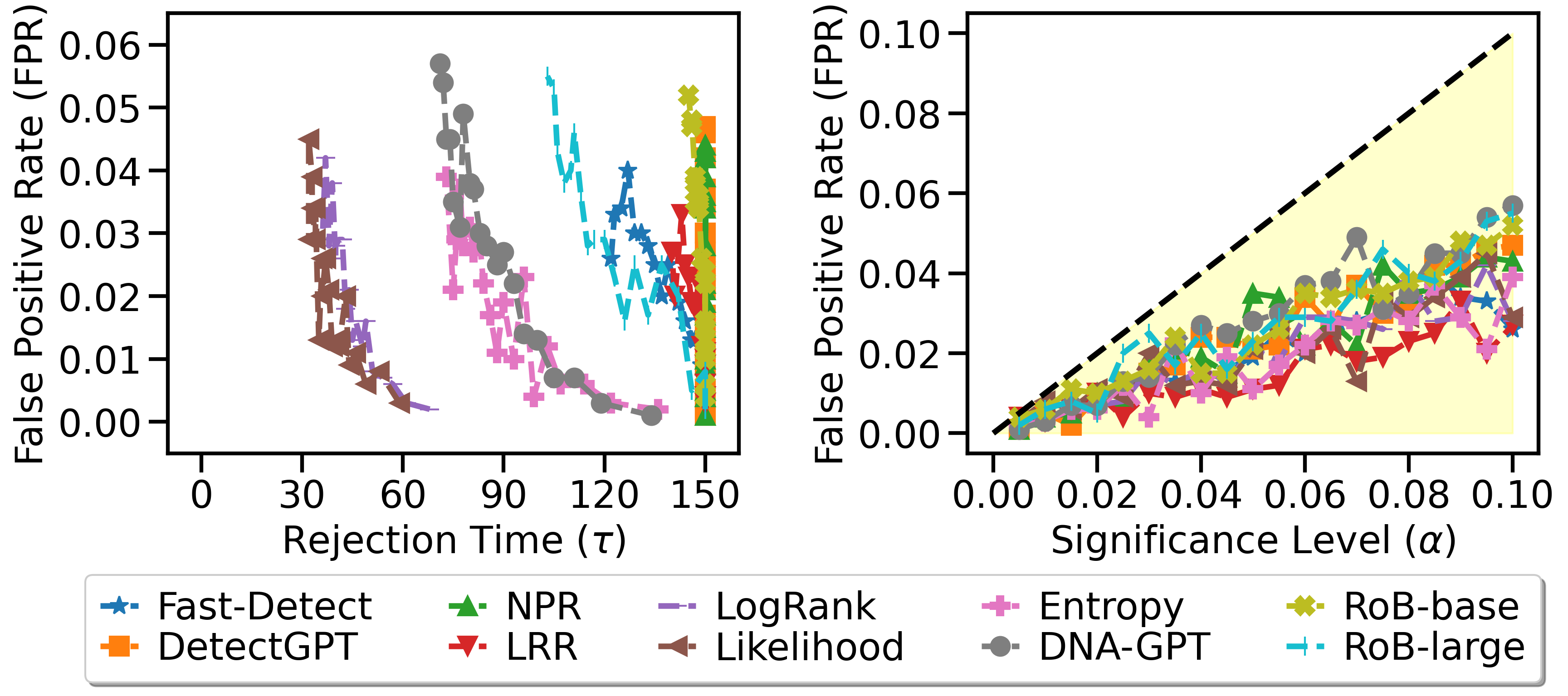}
        \caption{Different domains: $x_t$ is sampled from XSum, $y_t$ is from PubMed or generated by GPT-4 based on PubMed.}
        \label{fig:bao_xsum.pubmed.gpt-4}
    \end{subfigure}\hfill
   \begin{subfigure}{0.45\textwidth}
        \includegraphics[width=\linewidth]{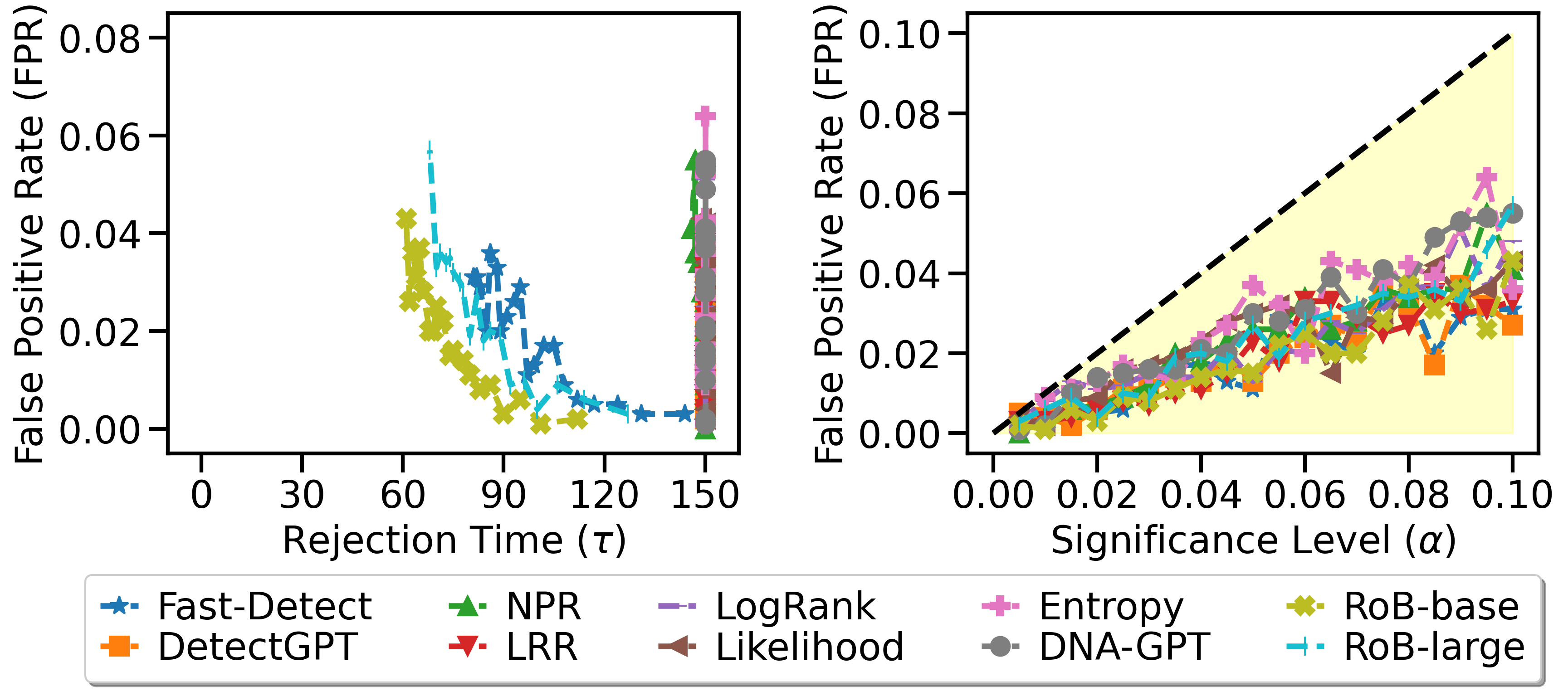}
        \caption{Different domains: $x_t$ is sampled from Writing, $y_t$ is from PubMed or generated by GPT-3 based on PubMed.}
        \label{fig:bao_writing.pubmed.davinci}
    \end{subfigure}\hfill
    \begin{subfigure}{0.45\textwidth}
        \includegraphics[width=\linewidth]{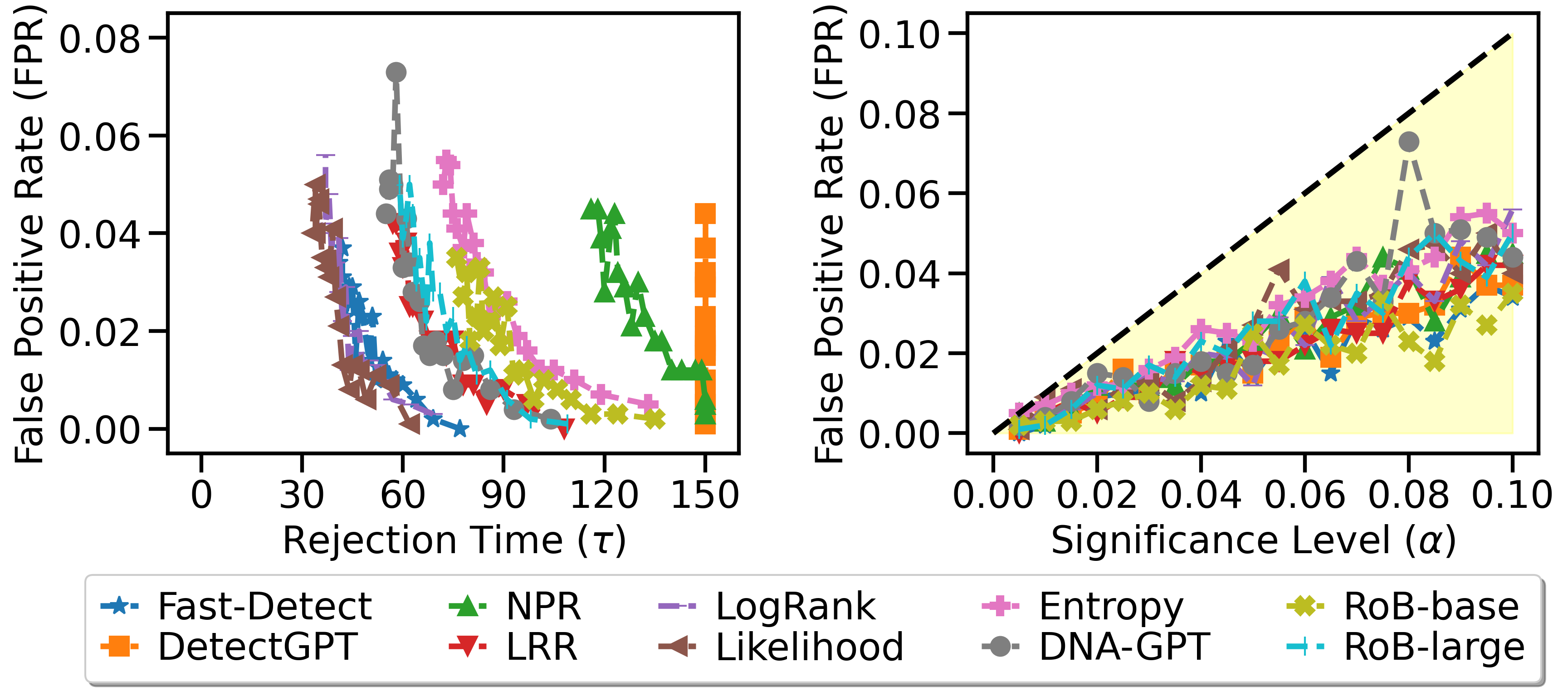}
        \caption{Different domains: $x_t$ is sampled from Writing, $y_t$ is from PubMed or generated by ChatGPT based on PubMed.}
        \label{fig:bao_writing.pubmed.gpt-3.5}
    \end{subfigure}\hfill
    \begin{subfigure}{0.45\textwidth}
        \includegraphics[width=\linewidth]{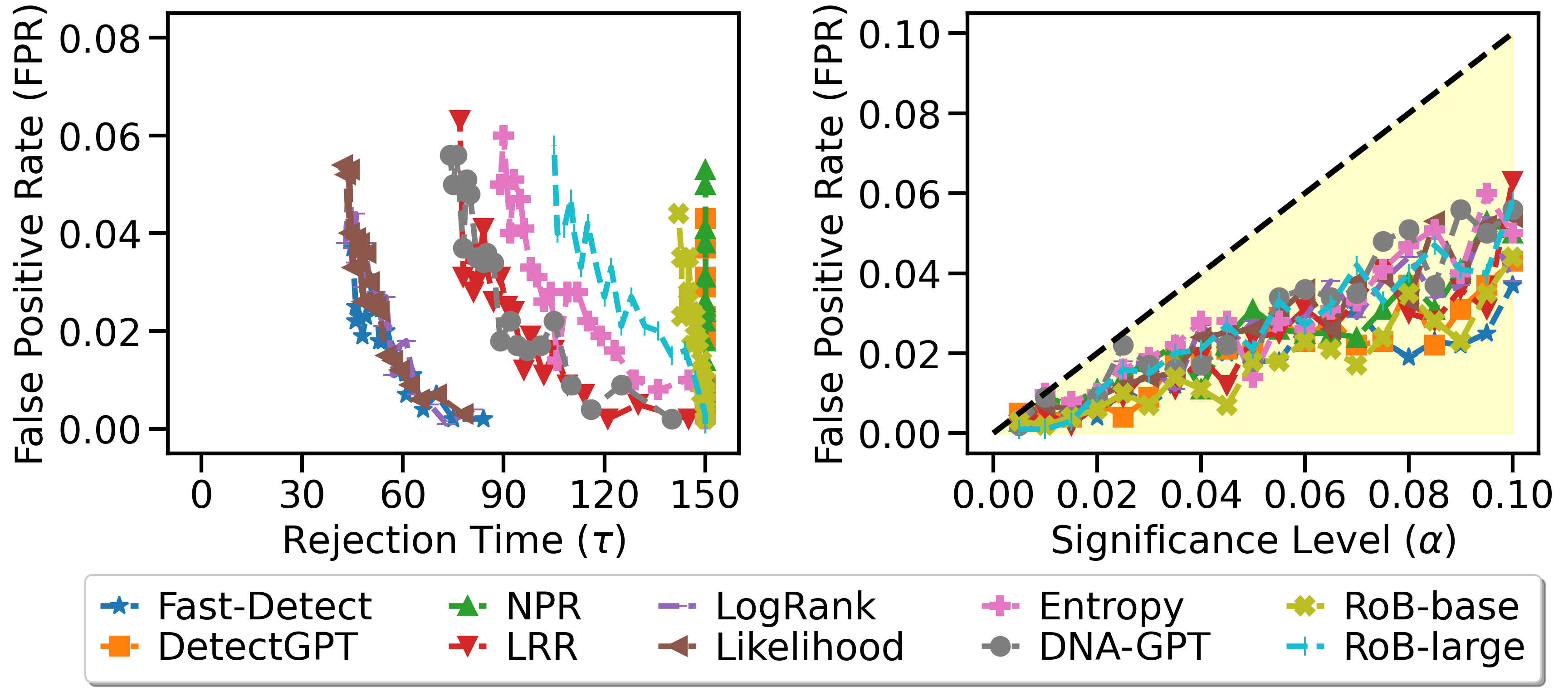}
        \caption{Different domains: $x_t$ is sampled from Writing, $y_t$ is from PubMed or generated by GPT-4 based on PubMed.}
        \label{fig:bao_writing.pubmed.gpt-4}
    \end{subfigure}\hfill
    \begin{subfigure}{0.45\textwidth} 
        \phantomcaption 
    \end{subfigure}
    \caption{Test results: mean rejection times (under $H_1$) and FPRs (under $H_0$) under each significance level $\alpha$ using 10 score functions, with texts $x_t$ and $y_t$ from different domains. There are 3 source models: GPT-3, ChatGPT and GPT-4. Scoring model: Neo-2.7.}
    \label{fig:bao_all.same}
\end{figure}

The parameter $\theta_t$ is chosen from the range $[-1/2d_t, 0]$. The value of $d_t$ for any $t$ is equal to the maximum absolute difference between two sequences of scores for each test, as can be seen in Table \ref{tab:dt_same_domain} for the same-domain texts and Table \ref{tab:dt_same_domain} for the different-domain texts. We get the absolute difference $\Delta$ between the scores $\phi(x_t)$ and $\phi(y_t)$ for texts from the same domain and different domains involved in our experiments, as shown in the Table \ref{tab:delta_same_domain} and Table \ref{tab:delta_diff_domain} respectively. Averaged values of $\Delta$ across three source models (GPT-3, ChatGPT and GPT-4) when texts $x_t$ and $y_t$ are from the same domain and different domains are presented in Table \ref{tab:abs_mean_same} and \ref{tab:abs_mean_diff}, respectively. In practice,  we can also select the value of $d_t$ and $\epsilon$ based on the hint of the bound $d_t$ or by the previous observed samples.

Since we let $\epsilon$ equal to the actual $\Delta$ value of two sequences of human texts, which ensures FPRs of all score functions for each significance level $\alpha$ remain below $\alpha$. As we have mentioned previously, our algorithm can ensure a nonnegative supermartingale wealth under $H_0$ and thus control the type-I error. Besides, the relative magnitude of $\Delta-\epsilon$ and $d_t-\epsilon$ would influence the rejection time under $H_1$.  

\begin{table}[H]
\caption{Average Values of $\Delta$ derived by using Neo-2.7 as the scoring model and various score functions to texts from the same domain across three source models (GPT-3, ChatGPT and GPT-4). Every two columns starting from the third column represent the average $\Delta$ values across three LLMs under $H_1$ and $H_0$ in each test scenario. For instance, the third column presents the average absolute difference in mean scores between 150 Xsum texts and 150 texts generated by LLMs based on XSum texts. The fourth column illustrates the average $\Delta$ value between 150 XSum texts and 150 XSum texts. Values in Column ``Human, Human" are also used to set $\epsilon$ value for tests.}
\label{tab:abs_mean_same}
\begin{center}
\footnotesize
\begin{tabular}{lcccccccc}
\toprule
\multirow{3}{*}{\textbf{Scoring Model}} & \multirow{3}{*}{\textbf{Score Function}} & \multicolumn{2}{c}{\textbf{XSum, XSum}} & \multicolumn{2}{c}{\textbf{Writing, Writing}} & \multicolumn{2}{c}{\textbf{PubMed, PubMed}}\\
\cmidrule(lr){3-4} \cmidrule(lr){5-6} \cmidrule(lr){7-8}  
&& Human, & Human, & Human, & Human, & Human, & Human, \\
&& LLMs & Human & LLMs & Human & LLMs & Human \\
\midrule
\multirow{8}{*}{\textbf{Neo-2.7}}
&Fast-DetectGPT & 2.2235 & 0.0513 & 2.5690 & 0.0138 & 1.0891 & 0.0179 \\
&DetectGPT      & 0.4048 & 0.0562 & 0.4985 & 0.0121 & 0.2206 & 0.0061\\
&NPR            & 0.0200 & 0.0025 & 0.0276 & 0.0001 & 0.0135 & 0.0018 \\
&LRR            & 0.0919 & 0.0106 & 0.1002 & 0.0047 & 0.0714 & 0.0075\\
&Logrank        & 0.2710 & 0.0303 & 0.4082 & 0.0037 & 0.2707 & 0.0257\\
&Likelihood     & 0.4384 & 0.0420 & 0.6518 & 0.0145 & 0.4604 & 0.0354\\
&Entropy        & 0.1017 & 0.0344 & 0.2393 & 0.0086 & 0.2135 & 0.0243\\
&DNA-GPT        & 0.1917 & 0.0317 & 0.2852 & 0.0232 & 0.5531 & 0.1456\\ \hdashline
&RoBERTa-base   & 0.4585 & 0.0142 & 0.2825 & 0.0186 & 0.1330 & 0.0143 \\
&RoBERTa-large  & 0.2165 & 0.0090 & 0.1387 & 0.0057 & 0.1088 & 0.0072\\
\bottomrule
\end{tabular}
\end{center}
\end{table}

\begin{table}[htbp]
\caption{Average Values of $\Delta$ derived by using different score functions to texts from the defferent domains across three source models (GPT-3, ChatGPT and GPT-4). Neo-2.7 is the scoring model used for the first eight score functions. Every two columns starting from the third column represent the average $\Delta$ values across three LLMs under $H_1$ and $H_0$ in each test scenario. For instance, the third column presents the average absolute difference in mean scores between 150 Xsum texts and 150 texts generated by LLMs based on Writing texts. The fourth column illustrates the average $\Delta$ value between 150 XSum texts and 150 Writing texts.}
\label{tab:abs_mean_diff}
\begin{center}
\footnotesize
\begin{tabular}{lcccccccc}
\toprule
\multirow{3}{*}{\textbf{Scoring Model}} & \multirow{3}{*}{\textbf{Score Function}} & \multicolumn{2}{c}{\textbf{XSum, Writing}} & \multicolumn{2}{c}{\textbf{XSum, PubMed}} & \multicolumn{2}{c}{\textbf{Writing, PubMed}}\\
\cmidrule(lr){3-4} \cmidrule(lr){5-6} \cmidrule(lr){7-8}  
&& Human, & Human, & Human, & Human, & Human, & Human, \\
&& LLMs & Human & LLMs & Human & LLMs & Human \\
\midrule
\multirow{8}{*}{\textbf{Neo-2.7}}
&Fast-DetectGPT & 2.2848 & 0.2841 & 0.7267 & 0.3624 & 1.0108 & 0.0783 \\
&DetectGPT & 0.4828 & 0.0342 & 0.0786 & 0.2992 & 0.0630 & 0.2835\\
&NPR & 0.0246 & 0.0030 & 0.0115 & 0.0031 & 0.0145 & 0.0012\\
&LRR & 0.0428 & 0.0665 & 0.0572 & 0.0219 & 0.1094 & 0.0447\\
&Logrank & 0.1050 & 0.3149 & 0.2429 & 0.0278 & 0.5579 & 0.2872\\
&Likelihood & 0.1603 & 0.5025 & 0.4548 & 0.0081 & 0.9573 & 0.4969\\
&Entropy & 0.2071 & 0.4464 & 0.2778 & 0.1154 & 0.7242 & 0.5617\\
&DNA-GPT & 0.1301 & 0.1551 & 3.0447 & 2.4916 & 3.1997 & 2.6467\\ \hdashline
&RoBERTa-base & 0.3284 & 0.0459 & 0.3304 & 0.1974 & 0.2844 & 0.1514\\
&RoBERTa-large & 0.1351 & 0.0073 & 0.1962 & 0.0874 & 0.1889 & 0.0801\\
\bottomrule
\end{tabular}
\end{center}
\end{table}

\begin{table}[H]
\caption{ Values of $d_t$ with the assumption that we know the range of $|g_t|=\lvert\phi(x_t)-\phi(y_t)\rvert$ beforehand. Every two columns starting from the third column represent the $d_t$ values for each $t$ used under $H_1$ and $H_0$ in each test scenario.}
\label{tab:dt_same_domain}
\begin{center}
\footnotesize
{
\begin{tabular}{lccccccccc}
\toprule
\multirow{3}{*}{\textbf{Text Domain}} & \multirow{3}{*}{\textbf{Score Functions}} & \multicolumn{2}{c}{\textbf{Test1}} & \multicolumn{2}{c}{\textbf{Test2}} & \multicolumn{2}{c}{\textbf{Test3}}\\
\cmidrule(lr){3-4} \cmidrule(lr){5-6} \cmidrule(lr){7-8} 
 &  & Human, & Human, & Human, & Human, & Human, & Human,\\
&  & GPT-3 & Human & ChatGPT & Human & GPT-4 & Human \\
\midrule
\multirow{10}{*}{\textbf{XSum, Xsum}} 
&Fast-DetectGPT & 7.4812 & 6.0978 & 7.9321 & 5.8211 & 7.1870 & 5.9146\\
&DetectGPT      & 2.7997 & 2.3369 & 2.4485 & 2.4753 & 2.0792 & 2.1686\\
&NPR            & 0.2025 & 0.1469 & 0.1418 & 0.1449 & 0.1184 & 0.1248\\
&LRR            & 0.5798 & 0.5798 & 0.6088 & 0.4623 & 0.5913 & 0.5412\\
&Logrank        & 1.3841 & 1.4601 & 1.1775 & 1.0302 & 1.4912 & 1.4399\\
&Likelihood     & 2.1195 & 2.1755 & 1.8610 & 1.5937 & 2.2947 & 2.1624\\
&Entropy        & 2.0369 & 1.9155 & 1.4552 & 1.3976 & 1.9421 & 1.8986\\
&DNA-GPT        & 1.0847 & 0.9399 & 0.9219 & 0.8006 & 1.0386 & 0.9783\\ \cdashline{2-10}
&RoBERTa-base   & 0.9997 & 0.9977 & 0.9997 & 0.9977 & 0.9997 & 0.9977\\
&RoBERTa-large  & 0.9970 & 0.8191 & 0.9944 & 0.3471 & 0.9862 & 0.8191\\
\midrule
\multirow{10}{*}{\textbf{Writing, Writing}}
&Fast-DetectGPT & 8.0598 & 6.5768 & 8.4128 & 5.9812 & 6.6396 & 6.5999\\
&DetectGPT      & 3.5032 & 3.1609 & 3.2897 & 3.1502 & 2.7128 & 3.3487\\
&NPR            & 0.2103 & 0.1500 & 0.1920 & 0.1356 & 0.1286 & 0.1502\\
&LRR            & 0.6487 & 0.5483 & 0.7428 & 0.5115 & 0.5417 & 0.4754\\
&Logrank        & 2.1419 & 1.4950 & 1.5963 & 1.4485 & 1.6212 & 1.4024\\
&Likelihood     & 3.3565 & 2.0134 & 2.4280 & 1.8810 & 2.3618 & 2.0046\\
&Entropy        & 2.8300 & 1.6976 & 1.6677 & 1.4732 & 2.0601 & 1.5292\\
&DNA-GPT        & 1.2223 & 1.2879 & 1.1989 & 1.0239 & 1.3639 & 1.2719\\ \cdashline{2-10}
&RoBERTa-base   & 0.9997 & 0.9997 & 0.9997 & 0.9997 & 0.9996 & 0.9997\\
&RoBERTa-large  & 0.9992 & 0.9172 & 0.9456 & 0.5028 & 0.9172 & 0.9172\\
\midrule
\multirow{10}{*}{\textbf{Pubmed, Pubmed}}
&Fast-DetectGPT & 5.6200 & 4.7132 & 5.8150 & 4.8065 & 4.6136 & 4.6400\\
&DetectGPT      & 2.0692 & 1.8394 & 2.1635 & 2.3610 & 1.5185 & 2.0677\\
&NPR            & 0.1888 & 0.2020 & 0.1867 & 0.2185 & 0.2028 & 0.2173\\
&LRR            & 0.6811 & 0.7180 & 0.7433 & 0.6000 & 0.8885 & 0.7180\\
&Logrank        & 1.8434 & 2.4121 & 1.9131 & 1.7634 & 2.6000 & 2.3949\\
&Likelihood     & 2.8480 & 3.4780 & 3.1426 & 2.8756 & 3.8419 & 3.5055\\
&Entropy        & 2.0549 & 2.2002 & 2.0570 & 1.8376 & 2.4877 & 2.3156\\
&DNA-GPT        & 5.1276 & 4.7710 & 4.3683 & 4.7319 & 4.9899 & 4.8031\\ \cdashline{2-10}
&RoBERTa-base   & 0.9982 & 0.9962 & 0.9984 & 0.9953 & 0.9995 & 0.9963\\
&RoBERTa-large  & 0.9688 & 0.8863 & 0.8860 & 0.8863 & 0.8702 & 0.8863\\
\bottomrule
\end{tabular}}
\end{center}
\end{table}

\begin{table}[H]
\caption{ Values of $d_t$ with the assumption that we know the range of $|g_t|=\lvert\phi(x_t)-\phi(y_t)\rvert$ beforehand. Every two columns starting from the third column represent the $d_t$ values for each $t$ used under $H_1$ and $H_0$ in each test scenario. }
\label{tab:dt_diff_domain}
\begin{center}
\footnotesize
\begin{tabular}{lccccccccc}
\toprule
\multirow{3}{*}{\textbf{Text Domain}} & \multirow{3}{*}{\textbf{Score Function}} & \multicolumn{2}{c}{\textbf{Test1}} & \multicolumn{2}{c}{\textbf{Test2}} & \multicolumn{2}{c}{\textbf{Test3}}\\
\cmidrule(lr){3-4} \cmidrule(lr){5-6} \cmidrule(lr){7-8}
 &  & Human, & Human, & Human, & Human, & Human, & Human,\\
 &  & GPT-3 & Human & ChatGPT & Human & GPT-4 & Human\\
\midrule
\multirow{10}{*}{\textbf{XSum, Writing}} 
&Fast-DetectGPT & 7.7260 & 6.2430 & 8.0558 & 6.0578 & 7.3268 & 6.1780\\
&DetectGPT      & 3.1711 & 2.8288 & 2.8371 & 2.6976 & 2.5561 & 2.6884\\
&NPR            & 0.2001 & 0.1398 & 0.1736 & 0.1571 & 0.1249 & 0.1352\\
&LRR            & 0.5817 & 0.5355 & 0.6395 & 0.6469 & 0.4837 & 0.5354\\
&Logrank        & 1.8576 & 1.6371 & 1.4423 & 1.7444 & 1.0620 & 1.5484\\
&Likelihood     & 3.1515 & 2.2793 & 2.0097 & 2.3806 & 1.5890 & 2.3587\\
&Entropy        & 2.6910 & 2.0533 & 1.8358 & 2.0545 & 1.5863 & 1.7798\\
&DNA-GPT        & 1.0351 & 1.3620 & 0.8960 & 1.1271 & 0.7700 & 1.1035\\ \cdashline{2-10}
&RoBERTa-base   & 0.9997 & 0.9920 & 0.9997 & 0.9997 & 0.9996 & 0.9997\\
&RoBERTa-large  & 0.9992 & 0.9172 & 0.9455 & 0.3471 & 0.2952 & 0.5028\\
\midrule
\multirow{10}{*}{\textbf{XSum, Pubmed}}
&Fast-DetectGPT & 5.7796 & 5.2418 & 6.0477 & 5.5643 & 5.2864 & 5.5884\\
&DetectGPT      & 1.9280 & 2.1249 & 2.2991 & 2.7521 & 2.0123 & 2.8121\\
&NPR            & 0.1529 & 0.1813 & 0.1607 & 0.1745 & 0.1608 & 0.1609\\
&LRR            & 0.7144 & 0.6422 & 0.6765 & 0.5376 & 0.6952 & 0.5247\\
&Logrank        & 1.6718 & 2.2405 & 1.3408 & 1.6702 & 1.3285 & 1.5756\\
&Likelihood     & 2.4632 & 3.1031 & 2.3094 & 2.5604 & 2.2453 & 2.3964\\
&Entropy        & 2.2528 & 2.3452 & 1.9397 & 1.6543 & 1.9994 & 1.8273\\
&DNA-GPT        & 6.6172 & 6.2606 & 5.6482 & 6.0117 & 6.3382 & 6.0307\\ \cdashline{2-10}
&RoBERTa-base   & 0.9983 & 0.9964 & 0.9984 & 0.9976 & 0.9995 & 0.9976\\
&RoBERTa-large  & 0.9688 & 0.8191 & 0.8610 & 0.8863 & 0.8703 & 0.8863\\
\midrule
\multirow{10}{*}{\textbf{Writing, Pubmed}}
&Fast-DetectGPT & 4.9091 & 5.3870 & 6.3816 & 5.4647 & 5.8265 & 5.3231\\
&DetectGPT      & 2.7326 & 2.9296 & 2.5926 & 3.0456 & 2.3285 & 3.1283\\
&NPR            & 0.1546 & 0.1829 & 0.1609 & 0.1927 & 0.1733 & 0.1588\\
&LRR            & 0.7289 & 0.6567 & 0.8526 & 0.7093 & 0.8070 & 0.6364\\
&Logrank        & 1.7772 & 2.0524 & 2.0617 & 1.8547 & 1.8669 & 1.6617\\
&Likelihood     & 2.6541 & 2.8230 & 3.1136 & 2.6684 & 3.0234 & 2.6871\\
&Entropy        & 2.4478 & 2.5402 & 2.6542 & 2.3688 & 2.3984 & 2.2263\\
&DNA-GPT        & 7.0524 & 6.6958 & 6.0130 & 6.3766 & 6.6221 & 6.3146\\ \cdashline{2-10}
&RoBERTa-base   & 0.9983 & 0.9964 & 0.9996 & 0.9995 & 0.9996 & 0.9996\\
&RoBERTa-large  & 0.9688 & 0.9172 & 0.8610 & 0.8863 & 0.8703 & 0.8863\\
\bottomrule
\end{tabular}
\end{center}
\end{table}

\begin{table}[H]
\caption{ Values of $\Delta$ derived by using Neo-2.7 as the scoring model for the first eight score functions to score texts from the same domain. There are  three source models: GPT-3, ChatGPT and GPT-4. Every two columns starting from the third column represent the $\Delta$ values under $H_1$ and $H_0$ in each test scenario. }
\label{tab:delta_same_domain}
\begin{center}
\footnotesize
\begin{tabular}{lccccccccc}
\toprule
\multirow{3}{*}{\textbf{Text Domain}} & \multirow{3}{*}{\textbf{Score Function}} & \multicolumn{2}{c}{\textbf{Test1}} & \multicolumn{2}{c}{\textbf{Test2}} & \multicolumn{2}{c}{\textbf{Test3}}\\
\cmidrule(lr){3-4} \cmidrule(lr){5-6} \cmidrule(lr){7-8}
 &  & Human, & Human, & Human, & Human, & Human, & Human,\\
&  & GPT-3 & Human & ChatGPT & Human & GPT-4 & Human\\
\midrule
\multirow{10}{*}{\textbf{XSum, Xsum}} 
&Fast-DetectGPT & 1.9598 & 0.0770 & 2.9471 & 0.0106 & 1.7638 & 0.0664\\
&DetectGPT & 0.5656 & 0.0729 & 0.4816 & 0.0114 & 0.1672 & 0.0844\\
&NPR & 0.0305 & 0.0031 & 0.0248 & 0.0006 & 0.0047 & 0.0037\\
&LRR & 0.0349 & 0.0159 & 0.1568 & 0.0032 & 0.0839 & 0.0127\\
&Logrank & 0.1938 & 0.0455 & 0.3794 & 0.0077 & 0.2397 & 0.0378\\
&Likelihood & 0.3488 & 0.0630 & 0.5960 & 0.0109 & 0.3704 & 0.0521\\
&Entropy & 0.0586 & 0.0517 & 0.1456 & 0.0117 & 0.1009 & 0.0400\\
&DNA-GPT & 0.1579 & 0.0476 & 0.2742 & 0.0287 & 0.1432 & 0.0188\\ \cdashline{2-8}
&RoBERTa-base & 0.5939 & 0.0210 & 0.5946 & 0.0002 & 0.1871 & 0.0212\\
&RoBERTa-large & 0.3941 & 0.0133 & 0.2037 & 0.0001 & 0.0516 & 0.0135\\
\midrule
\multirow{10}{*}{\textbf{Writing, Writing}}
&Fast-DetectGPT & 2.3432 & 0.0204 & 3.1805 & 0.0206 & 2.1831 & 0.0002\\
&DetectGPT & 0.5752 & 0.0181 & 0.6980 & 0.0093 & 0.2223 & 0.0088\\
&NPR & 0.0330 & 0.0001 & 0.0415 & 0.0001 & 0.0083 & 0.0001\\
&LRR & 0.0979 & 0.0062 & 0.1512 & 0.0009 & 0.0516 & 0.0070\\
&Logrank & 0.3737 & 0.0055 & 0.5407 & 0.0026 & 0.3103 & 0.0028\\
&Likelihood & 0.5915 & 0.0217 & 0.8511 & 0.0046 & 0.5126 & 0.0171\\
&Entropy & 0.2260 & 0.0130 & 0.3428 & 0.0036 & 0.1490 & 0.0093\\
&DNA-GPT & 0.2441 & 0.0348 & 0.3745 & 0.0155 & 0.2370 & 0.0193\\ \cdashline{2-8}
&RoBERTa-base & 0.6070 & 0.0279 & 0.2184 & 0.0093 & 0.0220 & 0.0186\\
&RoBERTa-large & 0.3845 & 0.0085 & 0.0143 & 0.0033 & 0.0175 & 0.0052\\
\midrule
\multirow{10}{*}{\textbf{Pubmed, Pubmed}}
&Fast-DetectGPT & 0.7397 & 0.0025 & 1.3683 & 0.0243 & 1.1594 & 0.0268\\
&DetectGPT & 0.2417 & 0.0092 & 0.2470 & 0.0051 & 0.1729 & 0.0041\\
&NPR & 0.0146 & 0.0015 & 0.0175 & 0.0012 & 0.0084 & 0.0028\\
&LRR & 0.0101 & 0.0070 & 0.1099 & 0.0042 & 0.0944 & 0.0112\\
&Logrank & 0.0282 & 0.0293 & 0.4115 & 0.0092 & 0.3725 & 0.0385\\
&Likelihood & 0.0771 & 0.0411 & 0.6922 & 0.0121 & 0.6119 & 0.0532\\
&Entropy & 0.0766 & 0.0316 & 0.2939 & 0.0048 & 0.2701 & 0.0364\\
&DNA-GPT & 0.3518 & 0.1309 & 0.8721 & 0.0875 & 0.4354 & 0.2184\\ \cdashline{2-8}
&RoBERTa-base & 0.1859 & 0.0078 & 0.1666 & 0.0137 & 0.0466 & 0.0215\\
&RoBERTa-large & 0.1318 & 0.0077 & 0.1294 & 0.0031 & 0.0652 & 0.0108\\
\bottomrule
\end{tabular}
\end{center}
\end{table}

\begin{table}[H]
\caption{ Values of $\Delta$ derived by using Neo-2.7 as the scoring model for the first eight score functions to score texts from different domains. There are three source models: GPT-3, ChatGPT and GPT-4. Every two columns starting from the third column represent the $\Delta$ values under $H_1$ and $H_0$ in each test scenario.}
\label{tab:delta_diff_domain}
\begin{center}
\footnotesize
\begin{tabular}{lccccccccc}
\toprule
\multirow{3}{*}{\textbf{Text Domain}} & \multirow{3}{*}{\textbf{Score Functions}} & \multicolumn{2}{c}{\textbf{Test1}} & \multicolumn{2}{c}{\textbf{Test2}} & \multicolumn{2}{c}{\textbf{Test3}}\\
\cmidrule(lr){3-4} \cmidrule(lr){5-6} \cmidrule(lr){7-8}
 &  & Human, & Human, & Human, & Human, & Human, & Human,\\
&  & GPT-3 & Human & ChatGPT & Human & GPT-4 & Human\\
\midrule
\multirow{10}{*}{\textbf{XSum, Writing}} 
&Fast-DetectGPT & 2.1206 & 0.2430 & 2.8602 & 0.2996 & 1.8737 & 0.3096\\
&DetectGPT      & 0.6211 & 0.0278 & 0.6617 & 0.027  & 0.1657 & 0.0478\\
&NPR            & 0.0323 & 0.0008 & 0.0377 & 0.0038 & 0.0039 & 0.0044\\
&LRR            & 0.0391 & 0.0526 & 0.0757 & 0.0747 & 0.0137 & 0.0723\\
&Logrank        & 0.0893 & 0.2899 & 0.2081 & 0.33   & 0.0175 & 0.3249\\
&Likelihood     & 0.1362 & 0.4771 & 0.3281 & 0.5184 & 0.0166 & 0.5121\\
&Entropy        & 0.1843 & 0.4233 & 0.1228 & 0.462  & 0.3142 & 0.4539\\
&DNA-GPT        & 0.1279 & 0.1509 & 0.1953 & 0.1638 & 0.0672 & 0.1505\\ \cdashline{2-8}
&RoBERTa-base   & 0.6513 & 0.0163 & 0.2744 & 0.0653 & 0.0596 & 0.0562\\
&RoBERTa-large  & 0.3789 & 0.0029 & 0.0253 & 0.0078 & 0.0011 & 0.0112\\
\midrule
\multirow{10}{*}{\textbf{XSum, Pubmed}}
&Fast-DetectGPT & 0.4179 & 0.3244 & 0.9937 & 0.3989 & 0.7686 & 0.3639\\
&DetectGPT      & 0.0098 & 0.2424 & 0.0723 & 0.3245 & 0.1538 & 0.3308\\
&NPR            & 0.0148 & 0.0017 & 0.0133 & 0.0029 & 0.0064 & 0.0048\\
&LRR            & 0.0214 & 0.0184 & 0.0868 & 0.0273 & 0.0633 & 0.0199\\
&Logrank        & 0.0349 & 0.0226 & 0.3819 & 0.0388 & 0.312  & 0.0219\\
&Likelihood     & 0.1195 & 0.0014 & 0.6838 & 0.0205 & 0.5612 & 0.0024\\
&Entropy        & 0.0782 & 0.1232 & 0.4019 & 0.1032 & 0.3533 & 0.1197\\
&DNA-GPT        & 2.8511 & 2.6302 & 3.2363 & 2.4517 & 3.0468 & 2.3930\\ \cdashline{2-8}
&RoBERTa-base   & 0.3711 & 0.193 & 0.3591  & 0.2063 & 0.2609 & 0.1928\\
&RoBERTa-large  & 0.2087 & 0.0846 & 0.2165 & 0.0902 & 0.1633 & 0.0873\\
\midrule
\multirow{10}{*}{\textbf{Writing, Pubmed}}
&Fast-DetectGPT & 0.6609 & 0.0813 & 1.2933 & 0.0993 & 1.0782 & 0.0543\\
&DetectGPT & 0.0376 & 0.2702 & 0.0453 & 0.2974 & 0.1060 & 0.2830\\
&NPR & 0.0156 & 0.0025 & 0.0170 & 0.0008 & 0.0108 & 0.0004\\
&LRR & 0.0312 & 0.0342 & 0.1614 & 0.0474 & 0.1356 & 0.0525\\
&Logrank & 0.3248 & 0.2673 & 0.7119 & 0.2912 & 0.6369 & 0.3030\\
&Likelihood & 0.5966 & 0.4785 & 1.2021 & 0.4978 & 1.0733 & 0.5145\\
&Entropy & 0.5015 & 0.5465 & 0.8638 & 0.5651 & 0.8072 & 0.5735\\
&DNA-GPT & 3.002 & 2.7812 & 3.4000 & 2.6155 & 3.1972 & 2.5435\\ \cdashline{2-8}
&RoBERTa-base & 0.3548 & 0.1767 & 0.2939 & 0.1410 & 0.2046 & 0.1366\\
&RoBERTa-large & 0.2057 & 0.0817 & 0.2088 & 0.0825 & 0.1521 & 0.0761\\
\bottomrule
\end{tabular}
\end{center}
\end{table}

\section{Addressing Parameter Estimation Challenges and Mixed LLMs or Sources Tasks} \label{exp:new}

\textbf{Estimate Parameters Based on More Samples.}
Better parameter estimation can enhance the performance of our algorithm. For instance, in the previous experiments, we used the first $10$ samples of each sequence to estimate parameters $d_t$ and $\epsilon$. Here, we give an example to show that using a larger sample size for estimation could possibly yield better results. Specifically, using $20$ samples from each sequence for estimation, with test begins at the $21$-th time step, could lead to improved algorithm performance.

\begin{figure}[t]
    \centering
    \begin{subfigure}{0.49\textwidth}
        \includegraphics[width=\linewidth]{figs/case2/neo2.7.avg.png}
        \caption{Averaged test results for estimating parameters based on first $10$ samples.}
    \end{subfigure}\hfill
    \begin{subfigure}{0.49\textwidth}
        \includegraphics[width=\linewidth]{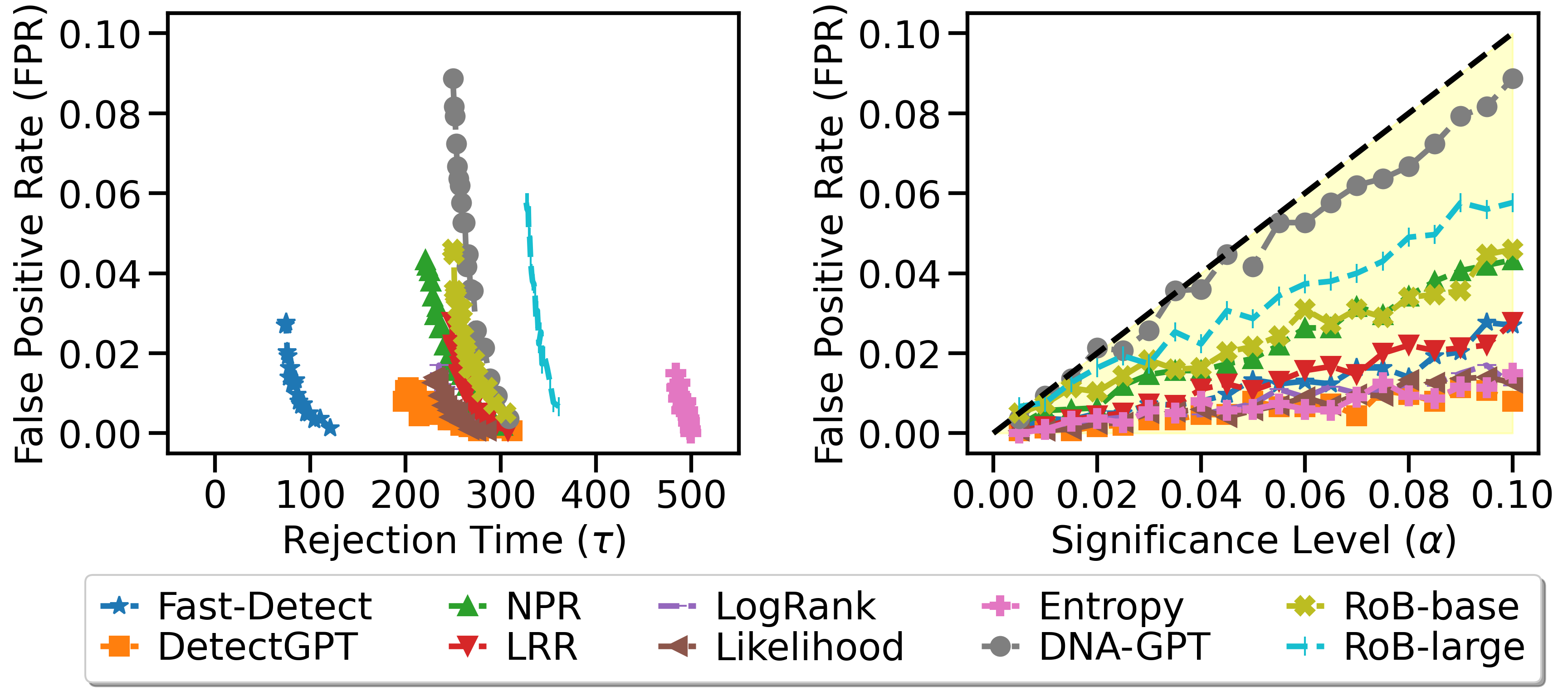}
        \caption{Averaged test results for estimating parameters based on first $20$ samples.}
    \end{subfigure}
    \caption{
Comparison of different durations in the initial stage for parameter estimation in Scenario 2. Here, text $x_t$ is sampled from XSum and $y_t$ is from 2024 Olympic news or machine-generated news, across three source models. The scoring model is Neo-2.7. Subfigure~(b) suggests that a longer duration for parameter estimation may lead to improved results. We emphasize that the test begins at $t = 21$.
 }\label{fig:compare_more_samples}
\end{figure}

Figure~\ref{fig:compare_more_samples} demonstrates that when parameters are estimated with more samples from the initial time steps, all score functions maintain False Positive Rates (FPRs) below the specified significance levels. Additionally, almost all functions identify the LLM source more quickly compared to when fewer samples are used for estimation. This example indicates the potential benefits of using more extensive data for parameter estimation in enhancing the effectiveness of our approach.

\textbf{Tasks for a Mixture of LLMs or Sources.} Our method can be extended to additional tasks. Its fundamental goal in sequential hypothesis testing is to determine whether texts from an unknown source originate from the same distribution as those in a prepared human text dataset, where we consider mean value as the statistical metric.

Even if texts from the LLM source are produced by various LLMs, they still satisfy the alternative hypothesis, which means that our statistical guarantees remain valid and the algorithm could continue to perform effectively. The results are illustrated in Figure \ref{fig:mixllms}.

When the unknown source publishes both human-written and LLM-generated texts, our method can effectively address this scenario. Here, the null hypothesis assumes that all texts from the unknown source are human-written. In contrast, the alternative hypothesis posits that not all texts are human-written, which indicates the presence of texts generated by LLMs. Figure \ref{fig:mixsource} demonstrates that our algorithm, equipped with nearly all score functions, consistently performs well in this new context.

\begin{figure}[t]
    \centering
    \begin{subfigure}{0.49\textwidth}
        \includegraphics[width=\linewidth]{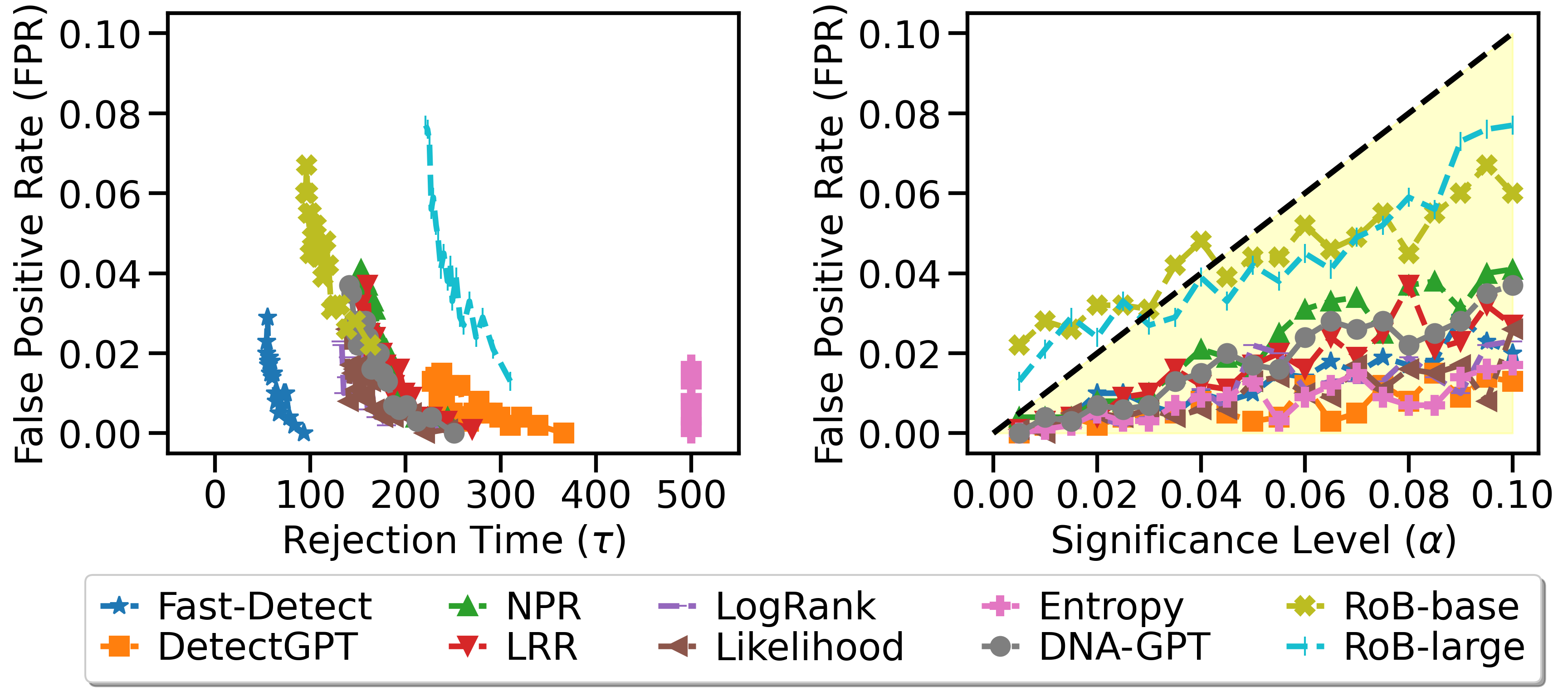}
        \caption{Task 1: When the LLM source posts texts generated by different LLMs (under $H_1$). Specifically, the sequence consists of $100$ texts generated by Gemini-1.5-Pro, $200$ texts generated by Gemini-1.5-Flash, and $200$ texts generated by PaLM 2.}
        \label{fig:mixllms}
    \end{subfigure}\hfill
    \begin{subfigure}{0.49\textwidth}
        \includegraphics[width=\linewidth]{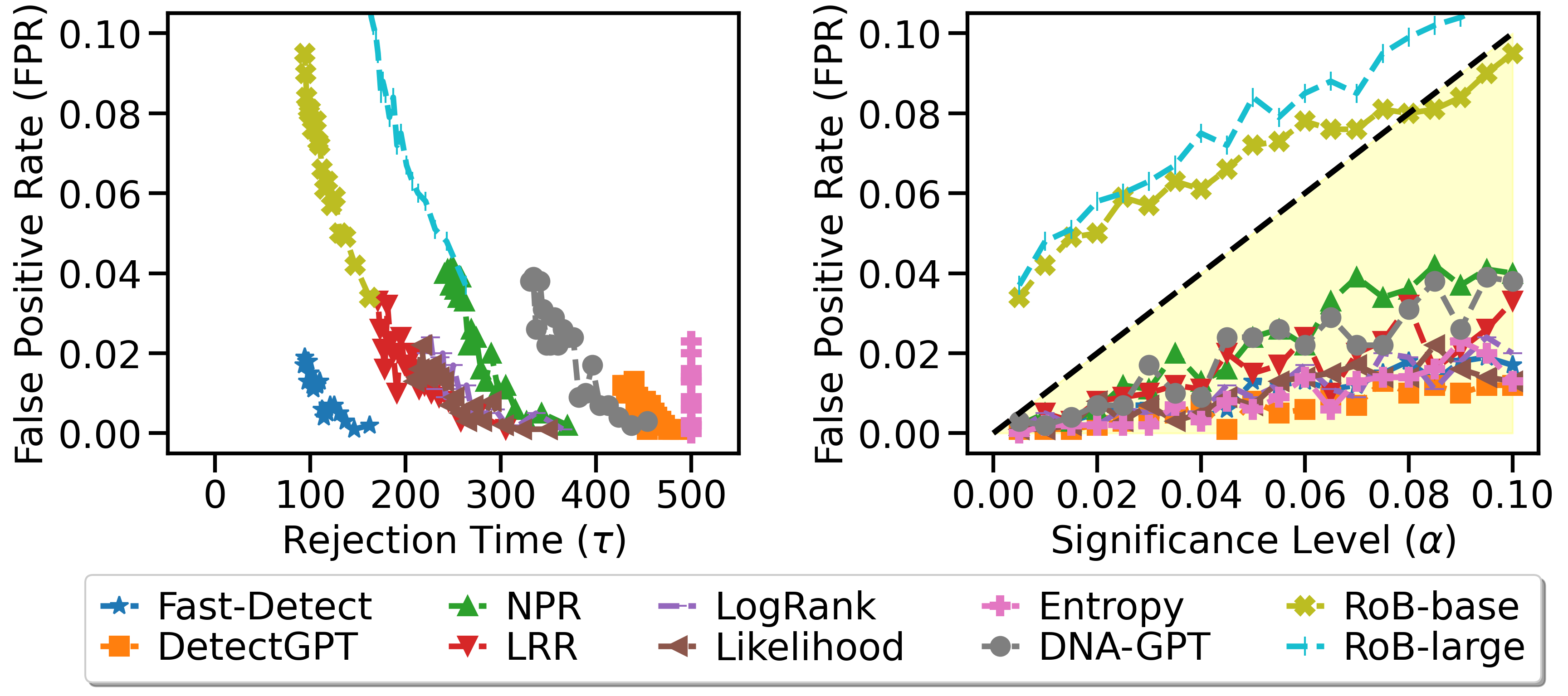}
        \caption{Task 2: When the unknown source posts a mixture of human-written texts and LLM-generated texts (under $H_1$). Specifically, the sequence consists of $200$ texts written by human, and $300$ texts generated by PaLM 2.}
        \label{fig:mixsource}
    \end{subfigure}
    \caption{\textbf{(Extension to other settings)} (a) Results when the sequence of texts $y_t$ are produced by various LLMs instead of a single one.  (b) Results under the setting that the null hypothesis corresponds to the case that all the texts from the unknown source are human-written, while the alternative hypothesis $H_1$ corresponds to the one that not all $y_t$ are human-written. }
\end{figure}

\end{document}